\newif\ifcomment
\newmdtheoremenv{theo}{Theorem}
\def\eqref#1{equation~\ref{#1}}
\def\1{\bm{1}}
\algrenewcommand\algorithmicrequire{\textbf{Require:}}
\algrenewcommand\algorithmicensure{\textbf{Output:}}
\algnewcommand\algorithmicinput{\textbf{Input:}}
\algnewcommand\Input{\item[\algorithmicinput]}
\algnewcommand\algorithmichyper{\textbf{Hyper-parameters:}}
\algnewcommand\Hyper{\item[\algorithmichyper]}
\def\eps{{\epsilon}}
\def\supp{{\mathrm{supp}}}
\def\tpi{{\tilde{\pi}}}
\DeclareMathAlphabet{\mathsfit}{\encodingdefault}{\sfdefault}{m}{sl}
\SetMathAlphabet{\mathsfit}{bold}{\encodingdefault}{\sfdefault}{bx}{n}
\def\gA{{\mathcal{A}}}
\def\gB{{\mathcal{B}}}
\def\gC{{\mathcal{C}}}
\def\gD{{\mathcal{D}}}
\def\gE{{\mathcal{E}}}
\def\gF{{\mathcal{F}}}
\def\gG{{\mathcal{G}}}
\def\gL{{\mathcal{L}}}
\def\gM{{\mathcal{M}}}
\def\gO{{\mathcal{O}}}
\def\gP{{\mathcal{P}}}
\def\gQ{{\mathcal{Q}}}
\def\gS{{\mathcal{S}}}
\def\gX{{\mathcal{X}}}
\def\sE{{\mathbb{E}}}
\def\sN{{\mathbb{N}}}
\def\sR{{\mathbb{R}}}
\def\sT{{\mathbb{T}}}
\def\sV{{\mathbb{V}}}
\def\proj{{\mathrm{Proj}}}
\def\ubar#1{{\underline{#1}}}
\def\vol{{\mathrm{Vol}}}
\def\unif{{\mathrm{Uniform}}}
\def\criticmodule{{\fbox{\!CriticCompute\!}~}}
\newcommand{\KL}{D_{\mathrm{KL}}}
\newcommand{\subopt}{\mathrm{SubOpt}}
\DeclareMathOperator*{\argmax}{arg\,max}
\DeclareMathOperator*{\argmin}{arg\,min}
\DeclareMathOperator*{\arginf}{arg\,inf}
\newcommand{\keepcomment}{0}%
\newcommand{\keeprevise}{0}%
\definecolor{light-gray}{gray}{0.5}
\newcommand{\thanh}[1]{
    \IfEqCase{\keepcomment}{
        {1}{\textcolor{blue}{\emph{[\textbf{thanh}: #1]}}}
        {0}{}
    }
}
\newcommand{\raman}[1]{
    \IfEqCase{\keepcomment}{
        {1}{\textcolor{red}{\emph{[\textbf{Raman}: #1]}}}
        {0}{}
    }
}
\newcommand{\comment}[1]{
\IfEqCase{\keepcomment}{
        {1}{\textcolor{red}{#1}}
        {0}{}
    }
}
\newcommand{\revise}[1]{
    \IfEqCase{\keeprevise}{
        {1}{\textcolor{orange}{#1 }}
        {0}{#1 }
    }
}
\newcommand{\erase}[1]{
    \IfEqCase{\keeprevise}{
        {1}{\textcolor{red}{\st{#1} }}
        {0}{}
    }
}
\newtheorem{defn}{Definition}
\newtheorem{assumption}{Assumption}[section]
\newtheorem{theorem}{Theorem}
\newtheorem{lemma}{Lemma}[section]
\newtheorem{prop}{Proposition}
\def\paragraph{\@startsection{paragraph}{4}%
  \z@\z@{-\fontdimen2\font}%
  {\normalfont\bfseries}}
\title[Data Diversity, Posterior Sampling, and Beyond]{On Sample-Efficient Offline Reinforcement Learning: Data Diversity, Posterior Sampling, and Beyond}
\author[Thanh Nguyen-Tang \& Raman Arora]{Thanh Nguyen-Tang \& Raman Arora \\ D\lowercase{epartment of} C\lowercase{omputer} S\lowercase{cience}\\
  J\lowercase{ohns} H\lowercase{opkins} U\lowercase{niversity}, B\lowercase{altimore}, MD 21218 }
\begin{document}

\maketitle

\begin{abstract}

  We seek to understand what facilitates sample-efficient learning from historical datasets for sequential decision-making, a problem that is popularly known as offline reinforcement learning (RL). Further, we are interested in algorithms that enjoy sample efficiency while leveraging (value) function approximation. In this paper, we address these fundamental questions by (i) proposing a notion of data diversity that subsumes the previous notions of coverage measures in offline RL and (ii) using this notion to {unify} three distinct classes of offline RL algorithms based on version spaces (VS), regularized optimization (RO), and posterior sampling (PS). We establish that VS-based, RO-based, and PS-based algorithms, under standard assumptions, achieve \emph{comparable} sample efficiency, which recovers the state-of-the-art sub-optimality bounds for finite and linear model classes with the standard assumptions. This result is surprising, given that the prior work suggested an unfavorable sample complexity of the RO-based algorithm compared to the VS-based algorithm, whereas posterior sampling is rarely considered in offline RL due to its explorative nature. Notably, our proposed model-free PS-based algorithm for offline RL is {novel}, with sub-optimality bounds that are {frequentist} (i.e., worst-case) in nature.

\end{abstract}

\section{Introduction}
Learning from previously collected experiences is a vital capability for reinforcement learning (RL) agents, offering a broader scope of applications compared to online RL. This is particularly significant in domains where interacting with the environment poses risks or high costs. However, effectively extracting valuable policies from historical datasets remains a considerable challenge, especially in high-dimensional spaces where the ability to generalize across various scenarios is crucial. In this paper, our objective is to comprehensively examine the efficiency of offline RL in the context of (value) function approximation. We aim to analyze this within the broader framework of general data collection settings.

The problem of learning from historical datasets for sequential decision-making, commonly known as \emph{offline RL} or \emph{batch RL}, originated in the early 2000s~\citep{ernst2005tree,DBLP:conf/colt/AntosSM06,lange2012batch} and has recently regained significant attention~\citep{levine2020offline,uehara2022review}. In offline RL, where direct interaction with environments is not possible, our goal is to learn an effective policy by leveraging pre-collected datasets, typically obtained from different policies known as \emph{behavior policies}. The sample efficiency of an offline RL algorithm is measured by the sub-optimality of the policies it executes compared to a ``good''  comparator policy, which may or may not be an optimal policy. Due to the lack of exploration inherent in offline RL, designing an algorithm with low sub-optimality requires employing the fundamental principle of \emph{pessimistic extrapolation}. This means that the agent extrapolates from the offline data while considering the worst-case scenarios that are consistent with that data. Essentially, the \emph{diversity} present in the offline data determines the agent's ability to construct meaningful extrapolations. Hence, a suitable notion of data diversity plays a crucial role in offline RL.

To address the issue of data diversity, several prior methods have made the assumption that the offline data is \emph{uniformly diverse} -- this implies that the data should cover the entire trajectory space with some probability that is bounded from below~\citep{DBLP:journals/jmlr/MunosS08,chen2019information,nguyen-tang2022on}. %
This assumption is often too strong and not feasible in many practical scenarios. In more recent approaches~\citep{jin2021pessimism,xie2021bellman,uehara2021pessimistic,Chen2022OfflineRL,rashidinejad2022optimal}, the stringent assumption of uniform diversity has been relaxed to only require \emph{partial} diversity in the offline data. Various measures have been proposed to capture this partial diversity, such as single-policy concentrability coefficients~\citep{DBLP:conf/uai/LiuSAB19,rashidinejad2021bridging,yin2021towards}, relative condition numbers~\citep{agarwal2021theory,uehara2021pessimistic}, and Bellman residual ratios~\citep{xie2021bellman}, and \revise{Bellman error transfer coefficient~\citep{song2022hybrid}}. These measures aim to quantify the extent to which the data captures diverse states and behaviors. However, it should be noted that in some practical scenarios, these measures may become excessive or may not hold at all.

In terms of algorithmic approaches, existing sample-efficient offline RL algorithms explicitly construct pessimistic estimates of models or value functions to effectively learn from datasets with partial diversity. This is typically achieved through the construction of lower confidence bounds (LCBs)~\citep{jin2021pessimism,rashidinejad2021bridging} or version spaces (VS) \citep{xie2021bellman,zanette2021provable}. LCB-based algorithms incorporate a bonus term subtracted from the value estimates to enforce pessimism across all state-action pairs and stages. However, it has been observed that LCB-based algorithms tend to impose unnecessarily aggressive pessimism, leading to sub-optimal bounds~\citep{zanette2021provable}. On the other hand, VS-based algorithms search through the space of consistent hypotheses to identify the one with the smallest value in the initial states. These algorithms have demonstrated state-of-the-art bounds~\citep{zanette2021provable,xie2021bellman}. 

In contrast to LCB-based and VS-based algorithms, regularized (minimax) optimization (RO) and posterior sampling (PS) are more amenable to tractable implementations but are relatively new in the offline RL literature. The RO-based algorithm initially introduced by~\cite[Algorithm~1]{xie2021bellman} incorporates pessimism implicitly through a regularization term that promotes pessimism in the initial state. This approach eliminates the need for an intractable search over the version space. However,~\cite{xie2021bellman} demonstrate that the RO-based algorithm exhibits a significantly slower sub-optimality rate than standard VS-based algorithms. Specifically, the RO-based algorithm achieves a sub-optimality rate of $K^{-1/3}$, whereas VS-based algorithms achieve a faster rate of $K^{-1/2}$, where $K$ represents the number of episodes in the offline data.
\begin{table}
    \centering
    \def\arraystretch{2.1}%
    \resizebox{\textwidth}{!}{
    \begin{tabular}{|l|l|c|}
        \hline 
       \textbf{Algorithms}  & \textbf{Sub-optimality Bound} & \textbf{Data}  \\
       \hline
       VS in \citep{xie2021bellman} & $H b \sqrt{ \textcolor{orange}{C_2(\pi)} \cdot \textcolor{red}{\ln (|\gF| \cdot |\Pi^{all}|)} } \cdot \textcolor{blue}{K^{-1/2}}  $ &I\\ 
       \hline 
       RO in \citep{xie2021bellman} & $H b \sqrt{\textcolor{orange}{C_2(\pi)}} \cdot \sqrt[3]{ \textcolor{red}{\ln (|\gF| \cdot |\Pi^{soft}(T)|)}} \cdot \textcolor{blue}{K^{-1/3}} + Hb / \sqrt{T}$ & I\\ 
       \hline 
       MBPS in \citep{uehara2021pessimistic} & $H b \sqrt{\textcolor{orange}{C^{\text{Bayes}}} \cdot \textcolor{red}{\ln |\gM|}} \cdot \textcolor{blue}{K^{-1/2}}$ (Bayesian) & I\\ 
        
       \hline
      \cellcolor{lightgray} VS in \Cref{algorithm: vsc}  & \cellcolor{lightgray} $H b \sqrt{\textcolor{orange}{\gC(\pi; 1/\sqrt{K})} \cdot \textcolor{red}{\ln (|\gF| \cdot |\Pi^{soft}(T)|)} } \cdot \textcolor{blue}{K^{-1/2}} + Hb / \sqrt{T}$ & \cellcolor{lightgray} A \\
      \hline 
        \cellcolor{lightgray} RO in \Cref{algorithm: roc}  & \cellcolor{lightgray} $H b \sqrt{\textcolor{orange}{\gC(\pi; 1/\sqrt{K})} \cdot \textcolor{red}{\ln (|\gF| \cdot |\Pi^{soft}(T)|)} } \cdot \textcolor{blue}{K^{-1/2}} + Hb / \sqrt{T}$ & \cellcolor{lightgray} A \\ 
        \hline
       \cellcolor{lightgray} MFPS in \Cref{algorithm: psc}  & \cellcolor{lightgray} $H b \sqrt{\textcolor{orange}{\gC(\pi; 1/\sqrt{K})} \cdot \textcolor{red}{\ln (|\gF| \cdot |\Pi^{soft}(T)|)} } \cdot \textcolor{blue}{K^{-1/2}} + Hb / \sqrt{T}$ (frequentist) & \cellcolor{lightgray} A \\
       \hline
    \end{tabular}
    }
    \vspace{8pt}
    \caption{Comparison of our bounds with SOTA bounds for offline RL under partial coverage and function approximation, where gray cells mark our contributions. \textbf{Algorithms}: VS = version space, RO = regularized optimization, MBPS = model-based posterior sampling, and MFPS = model-free posterior sampling. \textbf{Sub-optimality bound}: $K$ = \#number of episodes, $\pi$ = an \emph{arbitrary} comparator policy, $H$ = horizon, $b$ = boundedness, $T$ = the number of algorithmic updates, $\ln |\gF|, \ln|\Pi^{soft}(T)|, \ln|\Pi^{all}|, \ln|\gM|$: complexity measures of some value function class $\gF$, ``induced'' policy class $\Pi^{soft}(T)$, the class of all comparator policies $\Pi^{all}$, and model class $\gM$, where typically $\Pi^{soft}(T) \subset \Pi^{all}, \forall T$. \textbf{Data}: I = independent episodes, A = adaptively collected data. Here $\gC(\pi; 1/\sqrt{K})$ and $C_2(\pi)$ are some measures of extrapolation from the offline data to target policy $\pi$.
    }
    \label{tab: bound comparison}
\end{table}

On the other hand, posterior sampling (PS)~\citep{Thompson1933ONTL, russo2014learning}, a popular and successful method in online RL, is rarely explored in the context of offline RL. PS involves sampling from a constructed posterior distribution over the model or value function and acting accordingly. However, PS is less commonly considered in offline RL due to its explorative nature, which stems from the randomness of the posterior distribution. This randomness is well-suited for addressing the exploration challenge in online RL tasks~\citep{zhang2022feel, DBLP:conf/nips/DannMZZ21, zhong2022posterior, agarwal2022non}. The only work that considers PS for offline RL is~\cite{uehara2021pessimistic}, where they maintain a posterior distribution over Markov decision process (MDP) models. However, this model-based PS approach is limited to small-scale problems where computing the optimal policy from an MDP model is computationally feasible. In addition, this work only provides a weak form of guarantees via Bayesian bounds.

In the context of (value) function approximation, achieving sample-efficient offline RL relies on certain conditions that facilitate effective learning. The identification of the minimum condition required for sample efficiency, as well as the algorithms that can exploit such conditions, is an important research question that we aim to address here. We advance our understanding by making the following contributions: (I) {We introduce a new notion of data diversity that subsumes and expands almost all the prior distribution shift measures in offline RL}, and (II) {We show that all VS-based, RO-based and PS-based algorithms are in fact (surprisingly) competitive to each other, i.e., under standard assumptions, they achieve the same sub-optimality bounds (up to constant and log factors)}. We summarize our key results in comparison with related work in \Cref{tab: bound comparison}. Our results further expand the class of sample-efficient offline RL problems (\Cref{fig: relation of coverage notions}) and provide more choices of offline RL algorithms with competitive guarantees and tractable approximations for practitioners to choose from. 

For establishing (II), we need to construct concrete VS-based, RO-based and PS-based algorithms. While the key components of the VS-based and RO-based algorithms appear in the literature~\citep{xie2021bellman}, we propose a novel, a first-of-its-kind,  model-free posterior sampling algorithm for offline RL. The algorithm contains two new ingredients:  a pessimistic prior that encourages pessimistic value functions when being sampled from the posterior distribution and integration of 
posterior sampling with the actor-critic framework that incrementally updates the learned policy.

\paragraph{Overview of Techniques.} 
Our analysis method presents a ``decoupling'' argument tailored for the batch setting, drawing inspiration from recent decoupling arguments in the online RL setting~\citep{foster2021statistical, Jin2021BellmanED, zhang2022feel, DBLP:conf/nips/DannMZZ21, zhong2022posterior, agarwal2022non}. The core idea behind our decoupling argument is to establish a relationship between the Bellman error under any comparator policy $\pi$ and the squared Bellman error under the behavior policy. This relationship is mediated through our novel concept of data diversity, denoted as $\gC(\pi; \eps_c)$, which is defined in detail in ~\Cref{defn: distribution mismatch measure}. This allows to separate the sub-optimality of a learned policy into two main sources of errors: the extrapolation error, which captures the out-of-distribution (OOD) generalization from the behavior policy to a target policy, and the in-distribution error, which focuses on generalization within the same behavior distribution. The OOD error is effectively managed by controlling the data diversity $\gC(\pi; \eps_c)$, while the in-distribution error is carefully addressed by utilizing the algorithmic structures and the martingale counterpart to Bernstein's inequality (i.e., Freedman’s inequality).

In the process of bounding the in-distribution error of our proposed PS algorithm that we built upon the technique of \cite{DBLP:conf/nips/DannMZZ21}, we correct a non-rigorous argument of \cite{DBLP:conf/nips/DannMZZ21} (which we discuss in detail in \Cref{para: non-rigorous argument of Dann}) and develop a new technical argument to handle the statistical dependence induced by the data-dependent target policy in the actor-critic framework. 
Our new argument carefully incorporates the uniform convergence argument into the in-expectation bounds of PS. We give a detailed description of this argument in \Cref{section: proof of key support theorem for ps}. As an immediate application, our technique fixes a non-rigorous mistake involving how to handle the statistical dependence induced by the min player in the self-play posterior sampling algorithm of \cite{xiong2022self}.

\section{Background and problem formulation}
\subsection{Episodic time-inhomogenous Markov decision process}

Let $\gS$ and $\gA$ denote  Lebesgue-measurable state and action spaces (possibly infinite), respectively. Let $\gP(\gS)$ denote the space of all probability distributions over $\gS.$ We consider an episodic time-inhomogeneous Markov decision process $M = (\gS, \gA, P, r, H)$, where $P = \{P_h\}_{h \in [H]} \in \{\gS \times \gA \rightarrow \gP(\gS)\}^H$ are the transition probabilities (where $[H] := \{1, \ldots, H\}$), $r = \{r_h\}_{h \in [H]} \in \{\gS \times \gA \rightarrow \sR\}^H$ is the mean reward functions, and $H \in \sN$ is the length of the horizon for each episode. 
 
For any policy $\pi = \{\pi_h\}_{h \in [H]} \in \{\gS \rightarrow \gP(\gA)\}^H$, the action-value functions and the value functions under policy $\pi$ are defined, respectively, as
    $Q^{\pi}_{h,M}(s,a) = \sE_{\pi}[ \sum_{i=h}^{H} r_i(s_i,a_i) \mid (s_h,a_h) = (s,a)]$, and  $V^{\pi}_{h,M}(s) = \sE_{\pi}[ \sum_{i=h}^{H} r_i(s_i,a_i) | s_h = s]$.
Here $\sE_{\pi}[\cdot]$ denotes the expectation with respect to the randomness of the trajectory $(s_h,a_h, \ldots, s_H, a_H)$, with $a_i \sim \pi_i(\cdot|s_i)$ and $s_{i+1} \sim P_i(\cdot|s_i,a_i)$ for all $i$. 
For any policy $\pi$, we define the visitation density probability functions $d^{\pi}_{M} = \{d^{\pi}_{h,M}\}_{h \in [H]} \in \{\gS \times \gA \rightarrow \sR_{+}\}^H$ as $d^{\pi}_h(s,a) := \frac{d\Pr((s_h,a_h) = (s,a)| \pi, M)}{d \rho(s,a)}$ where $\rho$ is the Lebesgue measure on $\gS \times \gA$ and $\Pr((s_h,a_h) = (s,a)| \pi, M)$ is the probability of policy $\pi$ reaching state-action pair $(s,a)$ at timestep $h$. 

The Bellman operator $\sT_h^{\pi}$ is defined as 
    $[\sT_h^{\pi} Q](s,a) := r_h(s,a) +  \sE_{s' \sim P_h(\cdot|s,a), a' \sim \pi_{h+1}(\cdot|s')} \left[ Q(s',a') \right]$,
for any $Q: \gS \times \gA \rightarrow \sR$. Let $\pi^*$ be an optimal policy, i.e., $Q^{\pi^*}_h(s,a) \geq Q^{\pi}_h(s,a), \forall (s,a,h,\pi) \in \gS \times \gA \times [H] \times \Pi^{all}$, where $\Pi^{all} := \{\gS \rightarrow \gP(\gA)\}^H$ is the set of all possible policies. 
For simplicity, we assume that the initial state $s_1$ is deterministic across all episodes.\footnote{This assumption is merely for the sake of clean presentation which does not affect any results.} We also assume that there is some $b > 0$ such that for any trajectory $(s_1, a_1, r_1, \ldots, s_H, a_H, r_H)$ generated under any policy, $|r_h| \leq b, \forall h$ and $|\sum_{h =1}^H r_h | \leq b$ almost surely.\footnote{Note that we allow the reward samples to be negative.} 

This boundedness assumption is standard and subsumes the boundedness conditions in the previous works, e.g., \cite{zanette2021provable} set $b = 1$ and \cite{jin2021pessimism} use $b=H$ (and further assume that $r_h \in [0,1], \forall h$).\footnote{We can replace the condition $|r_h| \leq b, \forall h$ with $1$-sub-Gaussian condition: $r_h \sim R_h(s_h,a_h)$ wherein $R_h(s_h,a_h)$ is sub-Gaussian with mean $r_h(s_h,a_h)$ -- which replaces $b$ in our main theorems by $b + \ln(KH/\delta)$.} Without loss of generality, we assume that $b \geq 1$. 

\paragraph{Additional notation.} For any $u: \gS \times \gA \rightarrow \sR$ and any $\pi: \gS \rightarrow \gP(\gA)$, we overload the notation $u(s, \pi) := \sE_{a \sim \pi(\cdot|s)}\left[u(s,a)\right]$. For any $f: \gS \times \gA \rightarrow \sR$, denote the supremum norm $\|f\|_{\infty} = \max_{(s,a) \in \gS \times \gA} |f(s,a)|$. We write $\sE[g]^2 := (\sE[g])^2$. For a probability measure $\nu$ on some measurable space $(\Omega, \gB)$, we denote by $\supp(\nu)$ the support of $\nu$, $\supp(\nu) := \{B \in \gB: \nu(B) > 0\}$. We denote $x \lesssim y$ to mean that $x = \gO(y)$.

\subsection{Offline data generation}

Denote the pre-collected dataset by $\gD := \{(s^t_h,a^t_h, r^t_h)\}_{h \in [H]}^{t \in [K]}$, where $s^t_{h+1} \sim P_h(\cdot| s^t_h, a^t_h)$ and $\sE[r^t_h | s^t_h, a^t_h] = r_h(s^t_h, a^t_h)$. We consider the adaptively collected data setting where the offline data is collected by \emph{time-varying} behavior policies $\{\mu^k\}_{k \in [K]}$, concretely, defined as follows. 

\begin{defn}[Adaptively collected data\footnote{It is essentially the ``measurability'' condition in~\cite{zanette2021provable} and ``compliance'' condition in~\cite{jin2021pessimism}.}]
     $\mu^k$ is a function of $\{(s^i_h,a^i_h, r^i_h)\}_{h \in [H]}^{i \in [k-1]},~\forall k \in [K]$. 
    
    \label{defn: adaptive collected data}
\end{defn}
For simplicity, we denote $\mu = \frac{1}{K} \sum_{k=1}^K \mu^k$, $d^{\mu} = \frac{1}{K} \sum_{k=1}^K d^{\mu^k}$, and $\sE_{\mu}[\cdot] = \frac{1}{K} \sum_{k=1}^K \sE_{\mu^k}[\cdot]$. The setting of adaptively collected data covers a common practice where the offline data is collected by using some adaptive experimentation \citep{zhan2021policy}. When $\mu^1 = \cdots = \mu^K$, it recovers the setting of independent episodes in \cite{duan2020minimax}.

\paragraph{Value sub-optimality.}
The goodness of a learned policy $\hat{\pi} = \hat{\pi}(\gD)$ against a comparator policy $\pi$ for the underlying MDP $M$ is measured by the (value) sub-optimality defined as
\begin{align}
    \subopt_{\pi}^M(\hat{\pi}) := V_1^{\pi}(s_1) - V_1^{\hat{\pi}}(s_1). 
    \label{eq: sub-optimality definition}
\end{align}
Whenever the context is clear, we drop $M$ in $Q^{\pi}_M$, $V^{\pi}_M$, $d^{\pi}_{M}$, and $ \subopt_{\pi}^M(\hat{\pi})$. 

\subsection{Policy and function classes}
\label{section: function approximation}
Next, we define the policy space and the action-value function space over which we optimize the value sub-optimality. We consider a (Cartesian product) function class $\gF = \gF_1 \times \cdots \times \gF_H \in \{\gS \times \gA \rightarrow [-b,b]\}^H$.

The function class $\gF$ induces the following (Cartesian product) policy class  $\Pi^{soft} (T) = \Pi_1^{soft} (T) \times \cdots \times \Pi_H^{soft} (T)$, where 
    $\Pi^{soft}_h (T):= \{\pi_h(a|s) \propto \exp ( \eta \sum_{i=1}^t g_i(s,a) ):  t \in [T],  g_i \in \gF_h, \forall i \in [t], \eta \in [0,1] \}$
for any $T \in \sN$. 

The motivation for the induced policy class $\Pi^{soft}(T)$ is from the soft policy iteration (SPI) update where we incrementally update the policy. 

We now discuss a set of assumptions that we impose on the policy and function classes. 

\begin{assumption}[Approximate realizability] There exist $\{\xi_h\}_{h \in [H]}$ where $\xi_h \geq 0$ such that, 
\begin{align*}
   \sup_{T \in \sN, \pi \in \Pi^{soft}(T), (s_h, a_h) \in \supp(d^{\mu}_h)} \inf_{f \in \gF} |f_h(s_h,a_h) - Q^{\pi}_h(s_h,a_h)| \leq \xi_h, ~~\forall h \in [H].
\end{align*}

\label{assumption: realizability}
\end{assumption}
\Cref{assumption: realizability} establishes that $\gF$ can realize $Q^{\pi}$ for any $\pi \in \Pi^{soft}(T)$ up to some error $\xi \in \sR^H$ in the supremum norm over the $\mu$-feasible state-action pairs. It strictly generalizes the assumption   in~\cite{zanette2021provable} which restricts $\xi_h = 0,~\forall h$ (i.e., assume realizability) 
and the assumption in \cite{xie2021bellman} which constrains the approximation error under any feasible state-action distribution.

The realizability in \emph{value} functions alone is known to be insufficient for sample-efficient offline RL \citep{wang2020statistical}; thus, one needs to impose a stronger assumption for polynomial sample complexity of model-free methods.\footnote{A stronger form of realizability is sufficient for polynomial sample complexity, e.g.,  realizability for a density ratio w.r.t. the behavior state-action distribution in dual-primal methods \citep{zhan2022offline,Chen2022OfflineRL,rashidinejad2022optimal} or realizability for the underlying MDP in model-based methods \citep{uehara2021pessimistic}. Instead, we pursue model-free value-based methods.} In this paper, we impose an assumption on the closedness of the Bellman operator. 

\begin{assumption}[General Restricted Bellman Closedness]
 There exists $\nu \in \sR^H$ such that
\begin{align*}
    \sup_{T \in \sN, f_{h+1} \in \gF_{h+1}, \tilde{\pi} \in \Pi^{soft}(T)} \inf_{f'_h \in \gF_h}\| f'_h - \sT_h^{\tilde{\pi}} f_{h+1} \|_{\infty} \leq \nu_h,~~\forall h \in [H]. 
\end{align*}

\label{assumption: approximate Bellman completeness}
\end{assumption}

\Cref{assumption: approximate Bellman completeness} ensures that the value function space $\gF$ and the induced policy class $\Pi^{soft}(T)$ for any $T \in \sN$ are closed under the Bellman operator up to some error $\nu \in \sR^H$ in the supremum norm. This assumption is a direct generalization of the Linear Restricted Bellman Closedness in \cite{zanette2020learning} from a linear function class to a general function class. As remarked by \cite{zanette2021provable}, the Linear Restricted Bellman Closedness is already strictly more general than the low-rank MDPs \citep{yang2019sample,jin2020provably}.

\subsection{Effective sizes of policy and function classes}
\label{section: effective sizes of function class and policy class}
When the function class and the policy class have finite elements, we use their cardinality $|\gF_h|$ and $|\Pi_h^{soft}(T)|$ to measure their sizes \citep{jiang2017contextual,xie2021bellman}. When they have infinite elements, we use log-covering numbers, defined as 
\begin{align*}
    d_{\gF}(\eps) := \max_{h \in [H]} \ln N(\eps; \gF_h, \|\cdot\|_{\infty}), \text{ and } d_{\Pi}(\eps, T) := \max_{h \in [H]}\ln N(\eps; \Pi^{soft}_h(T), \| \cdot \|_{1,\infty}),
\end{align*}
where $\|\pi - \pi'\|_{1, \infty} = \sup_{s \in \gS}\int_{\gA}|\pi(a|s) - \pi'(a|s)| d \rho(a)$ for any $\pi, \pi' \in \{\gS \rightarrow \gP(\gA)\}$ and $N(\eps; \gX, \|\cdot\|)$ denotes the covering number of a pseudometric space $(\gX, \|\cdot\|)$ with metric $\| \cdot \|$ \citep[e.g. Definition~4.1]{zhang_2023}. 

We also define a complexity measure that depends on a prior distribution $p_0$ over $\gF$ that we employ to favor certain regions of the function space. Our notion, presented in \Cref{defn: complexity measure for canonical posterior}, is simply a direct adaptation of a similar notation of \cite{DBLP:conf/nips/DannMZZ21} to the actor-critic setting.
\begin{defn}
For any function $f' \in \gF_{h+1}$ and any policy $\tilde{\pi} \in \Pi^{all}$, we define $\gF^{\tilde{\pi}}_h(\eps; f') := \{f \in \gF_h: \|f - \sT^{\tilde{\pi}}_h f' \|_{\infty} \leq \eps\}$, for any $\eps \geq 0$, and subsequently define 
\begin{align*}
    d_0(\eps) :=\!\!\!\!\!\!\!\!\! \sup_{T \in \sN, f \in \gF, \tilde{\pi} \in \Pi^{soft}(T)} \sum_{h=1}^H \ln \frac{1}{p_{0,h}(\gF_h^{\tilde{\pi}}(\eps; f_{h+1}))}, d'_0(\eps) := \!\!\!\!\!\!\!\!\!\sup_{T \in \sN, \tilde{\pi} \in \Pi^{soft}(T)} \sum_{h=1}^H \ln \frac{1}{p_{0,h}(\gF_h^{\tilde{\pi}}(\eps; Q_{h+1}^{\tilde{\pi}})) }. 
\end{align*}
\label{defn: complexity measure for canonical posterior}
\end{defn}

The quantity $d_0(\eps)$ and $d'_0(\eps)$ measures the concentration of the prior $p_0$ over all functions $f \in \gF$ that are $\eps$-close (element-wise) under $\sT^{\tilde{\pi}}$ and $\eps$-close (element-wise) to $Q^{\tilde{\pi}}_h$, respectively. If a stronger version of \Cref{assumption: realizability} is met, i.e., $Q^{\tilde{\pi}}_h \in \gF_h, \forall \tilde{\pi} \in \Pi^{all}_h, h \in [H]$, we have $d'_0(\eps) \leq d_0(\eps), \forall \eps$. For the finite function class $\gF$ and an uninformative prior $p_{0,h}(f_h) = 1 / |\gF_h|$, under a stronger version of \Cref{assumption: approximate Bellman completeness}, i.e., $\nu_h = 0, \forall h$, we have $d_0(\eps) \leq \sum_{h=1}^H \ln |\gF_h| = \ln |\gF|.$
For a parametric model, where each $f_h = f_h^{\theta}$ is represented by a $d$-dimensional parameter $\theta \in \Omega_h^{\theta} \subset \sR^d$, a prior over $\Omega^{\theta}$ induces a prior over $\gF$. If each $\Omega^{\theta}_h$ is compact, we can generally assume the prior that satisfies $\sup_{\theta} \ln \frac{1}{p_{0,h}(\theta': \| \theta - \theta' \| \leq \eps)} \leq d \ln(c_0/\eps)$ for some constant $c_0$. If $f_h = f_h^{\theta}$ is Lipschitz in $\theta$, we can assume that $\sup_{\theta} \ln \frac{1}{p_{0,h}(\theta': \| \theta - \theta' \| \leq \eps)} \leq c_1 d \ln(c_2/\eps)$ for some constants $c_1, c_2$. Overall, we can assume that 
    $d_0(\eps) \leq c_1 H d \ln(c_2/\eps)$. A similar discussion can be found in \cite{DBLP:conf/nips/DannMZZ21}. 
    
\section{Algorithms}
\label{section: algorithms}
\begin{algorithm}[H]
\begin{algorithmic}[1]
\Require Offline data $\gD$, function class $\gF$, learning rate $\eta > 0$, and iteration number $T$
\State Uniform policy $\pi^1 = \{\pi^1_h\}_{h \in [H]}$
\For{$t = 1, \ldots, T$}
\State  $\ubar{Q}^t = \criticmodule(\pi^t, \gD, \gF, \ldots) $ 
\label{GOPO: critic compute}

\State $\pi^{t+1}_h(a|s) \propto \pi^t_h(a|s) \exp ( \eta \ubar{Q}^t_h(s,a) ), \forall (s,a,h) $
\label{GOPO: soft policy update}
\EndFor
\Ensure A randomized policy $\hat{\pi}$ as a uniform distribution over $\{\pi^t\}_{t \in [T]}$
\end{algorithmic}
\caption{GOPO$(\gD, \gF, \eta, T, \criticmodule)$: Generic Offline Policy Optimization Framework}
\label{algorithm: generic framework}
\end{algorithm}
Next, we present concrete instances of PS-based, RO-based, and VS-based algorithms. The RO-based and VS-based algorithms presented here are slight refinements of their original versions in~\cite {xie2021bellman}. The PS-based algorithm is novel. All three algorithms resemble the actor-critic style update, inspired by \cite{zanette2021provable}. We refer to this generic framework as GOPO (Generic Offline Policy Optimization) presented in \Cref{algorithm: generic framework}. At each round $t$, a critic estimates the value $\ubar{Q}_h^t$ of the actor (i.e., policy $\pi^t$) using the procedure \criticmodule on \Cref{GOPO: critic compute}, and the actor improves the policy using a multiplicative weights update~\citep{v008a006} (\Cref{GOPO: soft policy update}). After $T$ iterations, GOPO  returns a policy $\hat{\pi}$ that is sampled uniformly from the set of the obtained policies $\{\pi^t\}_{t \in [T]}$.

To incorporate the pessimism principle, a critic should generate pessimistic estimates of the value of the actor $\pi^t$ in \Cref{GOPO: critic compute}. This is where the three approaches differ -- each invokes a different method to compute the critic. Here, we provide a detailed description of the critic module for each approach. To aid the presentation, we introduce the total temporal difference (TD) loss $\hat{L}_{\tilde{\pi}}$, defined as 
    $\hat{L}_{\tilde{\pi}}(f_h,f_{h+1}) := \sum_{k=1}^K  l_{\tilde{\pi}}(f_h,f_{h+1}; z^k_h)$,
where $z_h :=$ $ (s_h, a_h, r_h, s_{h+1})$, $z^k_h := (s^k_h, a^k_h, r^k_h, s^k_{h+1})$, and $ l_{\tilde{\pi}}(f_h,f_{h+1}; z_h) := (f_h(s_h,a_h) - r_h - f_{h+1}(s_{h+1}, \tilde{\pi}))^2$. 

\paragraph{Version space-based critic (VSC) (\Cref{algorithm: vsc}).} Given the actor $\pi^t$, at each step $h \in [H]$, VSC directly maintains a local regression constraint using the offline data: $\hat{L}_{\pi^t}(f_h,f_{h+1}) \leq  \inf_{g \in \gF}   \hat{L}_{\pi^t}(g_h,f_{h+1}) + \beta$, where $\beta$ is a confidence parameter and $\hat{L}_{\pi^t}(\cdot, \cdot)$ is serving as a proxy to the squared Bellman residual at step $h$. By taking the function that minimizes the initial value, VSC then finds the most pessimistic value function $\ubar{Q}^t$ from the version space $\gF(\beta; \pi^t) \subseteq \gF$. In general, the constrained optimization in \Cref{VSC: constrained optimization} is computationally intractable. Note that a minimax variant of GOPO+VSC first appeared in \cite{xie2021bellman}, where they directly perform an (intractable) search over the policy space, instead of using the multiplicative weights algorithm (\Cref{GOPO: soft policy update}) of \Cref{algorithm: generic framework}. 
\begin{algorithm}[H]
    \begin{algorithmic}[1]
        \State $$\gF(\beta; \pi^t) := \{f \in \gF:    \hat{L}_{\pi^t}(f_h,f_{h+1}) \leq \inf_{g \in \gF}   \hat{L}_{\pi^t}(g_h,f_{h+1}) + \beta, \forall h \in [H] \}$$
        \State $$\ubar{Q}^t \in \argmin_{f \in \gF(\beta; \pi^t)} f_1(s_1, \pi^t)$$
        \label{VSC: constrained optimization}
        \Ensure $\ubar{Q}^t$
    \caption{VSC($\gD, \gF, \pi^t, \beta$): Version Space-based Critic}
        \label{algorithm: vsc}
    \end{algorithmic}
    \end{algorithm}

\paragraph{Regularized optimization-based critic (ROC) (\Cref{algorithm: roc}).} Instead of solving the global constrained optimization in VSC, ROC solves $\arginf_{f \in \gF} \left\{\lambda f_1(s_1, \pi^t_1) + \gL_{\pi^t}(f) \right\}$, where $\lambda$ is a regularization parameter and $\gL_{\pi^t}(f)$, defined in \Cref{ROC: loss function in ROC} of \Cref{algorithm: roc}. Note that in ROC, pessimism is implicitly encouraged through the regularization term $\lambda f_1(s_1, \pi^t)$. We remark that, unlike VSC, ROC admits tractable approximations that use adversarial training and work competitively in practice \citep{cheng2022adversarially}. Note that a discounted variant of GOPO-ROC first appears in \citep{xie2021bellman} in discounted MDPs.

\begin{algorithm}[H]
    \begin{algorithmic}[1]
        \State  $$\gL_{\pi^t}(f) :=  \sum_{h=1}^H   \hat{L}_{\pi^t}(f_h,f_{h+1})  - \inf_{g \in \gF} \sum_{h=1}^H   \hat{L}_{\pi^t}(g_h,f_{h+1})$$
        \label{ROC: loss function in ROC}
        \State   $$\ubar{Q}^t \leftarrow \arginf_{f \in \gF} \left\{\lambda f_1(s_1,\pi^t) + \gL_{\pi^t}(f) \right\} $$
        \label{ROC: regularized optimization}
        \Ensure $\ubar{Q}^t$
    \caption{ROC($\gD, \gF, \pi^t, \lambda$): Regularized Optimization-based Critic}
        \label{algorithm: roc}
    \end{algorithmic}
\end{algorithm}
\paragraph{Posterior sampling-based critic (PSC) in \Cref{algorithm: psc}.} Instead of solving a regularized minimax optimization, PSC samples the value function $\ubar{Q}^t_h$ from the data posterior $\hat{\pi}(f| \gD, \pi^t) \propto \tilde{p}_0(f) \cdot p(\gD | f, \pi^t)$, where $\tilde{p}_0(f)$ is the prior over $\gF$ and $p(\gD | f, \pi^t)$ is the likelihood function of the offline data $\gD$. To formulate the likelihood function $p(\gD | f, \pi^t)$, we make use of the squared TD error $\hat{L}_{\pi^t}(\cdot, \cdot)$ and normalization method in \citep{DBLP:conf/nips/DannMZZ21} to construct an unbiased proxy of the squared Bellman errors. In particular, $p(\gD|f, \pi^t) = \prod_{h \in [H]} \frac{\exp( -\gamma \hat{L}_{\pi^t}(f_h, f_{h+1}))}{\sE_{f'_h \sim p_{0,h}} \exp( -\gamma \hat{L}_{\pi^t}(f'_h, f_{h+1}))}$, where $\gamma$ is a learning rate and $p_0$ is an (unregularized) prior over $\gF$. A value function sampled from the posterior with this likelihood function is encouraged to have small squared TD errors. The key ingredient in our algorithmic design is the ``pessimistic'' prior $\tilde{p}_0(f) = \exp ( - \lambda f_1(s_1, \pi_1) ) p_0(f)$ where we add a new regularization term $\exp ( - \lambda f_1(s_1, \pi_1) )$, with $\lambda$ being a regularization parameter -- which is inspired by the optimistic prior in the online setting \citep{zhang2022feel,DBLP:conf/nips/DannMZZ21}. This pessimistic prior encourages the value function sampled from the posterior to have a small value in the initial state, implicitly enforcing pessimism. We remark that PSC requires a sampling oracle and expectation oracle (to compute the normalization term in the posterior distribution), which could be amenable to tractable approximations, including replacing expectation oracle with a sampling oracle \citep{agarwal2022non} while the sampling oracle can be implemented via first-order sampling methods \citep{welling2011bayesian} or ensemble methods \citep{osband2016deep}.

\begin{algorithm}[H]
    \begin{algorithmic}[1]
        \State  
    $$\ubar{Q}^t \sim \hat{p}(f| \gD, \pi^t) \propto\exp \left( - \lambda f_1(s_1,\pi^t) \right) p_0(f) \prod_{h \in [H]} \frac{\exp\left( -\gamma \hat{L}_{\pi^t}(f_h, f_{h+1})\right)}{\sE_{f'_h \sim p_{0,h}} \exp\left( -\gamma \hat{L}_{\pi^t}(f'_h, f_{h+1})\right)}$$
    \label{line:psc posterior distribution}
    \Ensure $\ubar{Q}^t$
    \caption{PSC$(\gD, \gF, \pi^t, \lambda, \gamma, p_0)$: Posterior Sampling-based Critic}
    \label{algorithm: psc}
    \end{algorithmic}
\end{algorithm}

\section{Main results}

\label{section: main results}

In this section, we shall present the upper bounds of the sub-optimality of the policies executed by GOPO-VSC, GOPO-ROC, and GOPO-PSC. Our upper bounds are expressed in terms of a new notion of data diversity. 

\subsection{Data diversity} 

We now introduce the key notion of data diversity for offline RL. Since the offline learner does not have direct access to the trajectory of a comparator policy $\pi \in \Pi^{all}$, they can only observe partial information about the goodness of $\pi$ channeled through the ``transferability'' with the behavior policy $\mu$. The transferability from $\mu$ to $\pi$ depends on how \emph{diverse} the offline data induced by $\mu$ can be in supporting the extrapolation to $\pi$. Many prior works require uniform diversity where $\mu$ covers all feasible scenarios of all comparator policies $\pi$. The data diversity can be essentially captured by how well the Bellman error under the state-action distribution induced by $\mu$ can predict the counterpart quantity under the state-action distribution induced by $\pi$.

Our notion of data diversity, which is inspired by the notion of task diversity in transfer learning literature \citep{tripuraneni2020theory,watkins2023optimistic}, essentially encodes the ratio of some proxies of expected Bellman errors induced by $\mu$ and $\pi$, and is defined as follows. 

\begin{defn}
For any comparator policy $\pi \in \Pi^{all}$, we measure the data diversity of the behavior policy $\mu$ with respect to a target policy $\pi$ by 
\begin{align}
    \gC(\pi; \eps) := \max_{h \in [H]}  \chi_{(\gF_h - \gF_h)}(\eps; d^{\pi}_h,d^{\mu}_h) , \forall \eps \geq 0,
    \label{eq: distribution mismatch measure}
\end{align}
where $\gF_h - \gF_h$ is the Minkowski difference between the function class $\gF_h$ and itself, i.e., $\gF_h - \gF_h := \{f_h - f'_h: f_h, f'_h \in \gF\}$, and $\chi_{\gQ}(\eps; q,p)$ is the discrepancy between distributions $q$ and $p$ under the witness of function class $\gQ$ defined as
\begin{align*}
    \chi_{\gQ}(\eps; q,p) = \inf \left\{C \geq 0: (\sE_q[g])^2 \leq C \cdot \sE_p [g^2] + \eps, \forall g \in \gQ \right\}
\end{align*}
with $\gQ$ being a function class and $p$ and $q$ being two distributions over the same domain.  
\label{defn: distribution mismatch measure}
\end{defn}
Up to a small additive error $\eps$, a finite $\gC(\pi; \eps)$ ensures that a proxy of the Bellman error under the $\pi$-induced state-action distribution is controlled by that under the $\mu$-induced state-action distribution. Despite the abstraction in the definition of this data diversity, it is \emph{always} upper bounded by the single-policy concentrability coefficient \citep{DBLP:conf/uai/LiuSAB19,rashidinejad2021bridging} and the relative condition number \citep{agarwal2021theory,uehara2022representation,uehara2021pessimistic} that are both commonly used in many prior offline RL works. We further discuss our data diversity measure in more detail in \Cref{section: offline learning guarantees}.

\subsection{Offline learning guarantees}
\label{section: offline learning guarantees}
We now utilize data diversity to give learning guarantees of the considered algorithms for extrapolation to an arbitrary comparator policy $\pi \in \Pi^{all}$. To aid the representation, in all of the following theorems we are about to present, we shall 
set $\eta = \sqrt{\frac{\ln \vol(\gA)}{4(e-2) b^2 T}}$ in \Cref{algorithm: generic framework}, where $\vol(\gA)$ is the volume of the action set $\gA$ (e.g., $\vol(\gA) = |\gA|$ for finite $\gA$), and define, for simplicity, the misspecification errors $\zeta_{msp} :=  K \sum_{h=1}^H \left(  \nu_h^2 + b \nu_h \right)$, $\tilde{\zeta}_{msp} := \zeta_{msp} + b K \sum_{h=1}^H \xi_h$, $\bar{\nu} := \sum_{h=1}^H \nu_h$, the optimization error $\zeta_{opt} := H b \sqrt{ T^{-1} \ln \vol(\gA)}$, and the complexity measures $\tilde{d}_{opt}(\eps, T) := \max\{d_{\gF}(\eps), d_{\Pi}(\eps, T)\}$, and $\tilde{d}_{ps}(\eps, T) := \max\{ d_{\gF}(\eps), d_{\Pi}(\eps, T), \frac{d_0(\eps)}{\gamma H b^2}, \frac{d'_0(\eps)}{\gamma H b^2} \}$.

\begin{theorem}[Guarantees for GOPO-VSC]
Let $\hat{\pi}^{vs}$ be the output of \Cref{algorithm: generic framework} invoked with \criticmodule being VSC($\gD, \gF, \pi^t, \beta$) (\Cref{algorithm: vsc}) with $\beta = \gO(b^2 K \eps + b K \max_{h \in [H]} \xi_h + H b^2 \max\{\tilde{d}_{opt}(\eps, T), \ln (H/\delta)\})$. 
Fix any $\delta \in (0,1]$. Under \Cref{assumption: realizability}-\ref{assumption: approximate Bellman completeness}, with probability at least $1 - 2 \delta$ (over the randomness of the offline data), for any $\eps, \eps_c, \lambda > 0$, and any $\pi \in \Pi^{all}$, we have
\begin{align*}
    \sE \left[\subopt_{\pi}(\hat{\pi}^{vs}) | \gD \right] &\lesssim \frac{  H b^2 \cdot \max\{\tilde{d}_{opt}(\eps, T), \ln (H/\delta)\} + b^2 K H \eps + \tilde{\zeta}_{msp} }{\lambda} + \frac{\lambda H \cdot \gC(\pi; \eps_c)}{2 K} \\
    &+ H \eps_c +  \xi_1 + \bar{\nu} + \zeta_{opt}.
\end{align*}
\label{theorem: gopo-vsc}
\end{theorem}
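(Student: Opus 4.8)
The plan is to bound $\sE[\subopt_\pi(\hat\pi^{vs})\mid\gD]$ by decomposing the sub-optimality of each actor--critic iterate into an optimization term, a pessimism term, and an extrapolation term, and then to ``decouple'' the extrapolation term from an in-distribution squared-Bellman-error term through the data diversity $\gC(\pi;\eps_c)$. Since VSC is deterministic given $\gD$ and $\hat\pi^{vs}$ is uniform over $\{\pi^t\}_{t\in[T]}$, the conditional expectation equals $\frac1T\sum_{t=1}^T\big(V_1^\pi(s_1)-V_1^{\pi^t}(s_1)\big)$, so it suffices to bound this average. Writing $\gE_h^t:=\ubar{Q}_h^t-\sT_h^{\pi^t}\ubar{Q}_{h+1}^t$ for the critic's Bellman residual against its own policy, I would first prove the exact identity
\begin{align*}
V_1^\pi(s_1)-V_1^{\pi^t}(s_1)=\sum_{h=1}^H\sE_{d_h^\pi}\big[\langle\ubar{Q}_h^t(s_h,\cdot),\pi_h(\cdot\mid s_h)-\pi_h^t(\cdot\mid s_h)\rangle\big]-\sum_{h=1}^H\sE_{d_h^\pi}[\gE_h^t]+\sum_{h=1}^H\sE_{d_h^{\pi^t}}[\gE_h^t],
\end{align*}
obtained from the advantage-form performance-difference lemma (substituting $\ubar{Q}^t$ for $Q^{\pi^t}$) together with the telescoping simulation-lemma identity $\ubar{Q}_1^t(s_1,\tilde\pi)-V_1^{\tilde\pi}(s_1)=\sum_h\sE_{d_h^{\tilde\pi}}[\ubar{Q}_h^t-\sT_h^{\tilde\pi}\ubar{Q}_{h+1}^t]$ applied at $\tilde\pi=\pi^t$.

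Next I would dispatch the two outer terms. The first term is a multiplicative-weights regret handled \emph{state-wise and deterministically}: for each fixed $s_h$, $\{\pi_h^t(\cdot\mid s_h)\}_t$ is exactly the Hedge iterate on gains $\ubar{Q}_h^t(s_h,\cdot)\in[-b,b]$, so the standard regret bound at the prescribed $\eta=\sqrt{\ln\vol(\gA)/(4(e-2)b^2T)}$, summed over $h$ and averaged over $t$, contributes $\zeta_{opt}=Hb\sqrt{T^{-1}\ln\vol(\gA)}$. The third term equals $\ubar{Q}_1^t(s_1,\pi^t)-V_1^{\pi^t}(s_1)$ and is the \emph{pessimism} term; here I invoke the defining property of VSC, namely that $\ubar{Q}^t$ minimizes $f_1(s_1,\pi^t)$ over $\gF(\beta;\pi^t)$. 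On a good event I would show that the approximate realizer $\tilde f$ of $Q^{\pi^t}$ from \Cref{assumption: realizability} is feasible for $\gF(\beta;\pi^t)$, so that $\ubar{Q}_1^t(s_1,\pi^t)\le\tilde f_1(s_1,\pi^t)\le V_1^{\pi^t}(s_1)+\xi_1$; hence this term is at most $\xi_1$. Calibrating $\beta$ so that $\tilde f$ is feasible consumes the first of the two $\delta$'s.

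The heart of the argument is the second (extrapolation) term $-\sum_h\sE_{d_h^\pi}[\gE_h^t]$. By \Cref{assumption: approximate Bellman completeness} there is $g_h\in\gF_h$ with $\|g_h-\sT_h^{\pi^t}\ubar{Q}_{h+1}^t\|_\infty\le\nu_h$, so $\bar\gE_h^t:=\ubar{Q}_h^t-g_h\in\gF_h-\gF_h$ and $|\sE_{d_h^\pi}[\gE_h^t]|\le|\sE_{d_h^\pi}[\bar\gE_h^t]|+\nu_h$, the $\nu_h$'s summing to $\bar\nu$. Applying the witnessed-discrepancy inequality of \Cref{defn: distribution mismatch measure} to $\bar\gE_h^t$ gives
\begin{align*}
\big(\sE_{d_h^\pi}[\bar\gE_h^t]\big)^2\le\gC(\pi;\eps_c)\,\sE_{d_h^\mu}\big[(\bar\gE_h^t)^2\big]+\eps_c,
\end{align*}
and AM--GM with weight $\lambda$ turns $\sqrt{\gC(\pi;\eps_c)\,\sE_{d_h^\mu}[(\bar\gE_h^t)^2]}$ into $\tfrac{\lambda\gC(\pi;\eps_c)}{2K}+\tfrac{K}{2\lambda}\sE_{d_h^\mu}[(\bar\gE_h^t)^2]$; summing over $h$ yields the stated $\tfrac{\lambda H\gC(\pi;\eps_c)}{2K}$ together with the slack $H\eps_c$. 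To close the argument I would bound the in-distribution term $\sum_h K\sE_{d_h^\mu}[(\bar\gE_h^t)^2]$ via the empirical TD loss: because in conditional expectation the regression noise in $l_{\pi^t}$ does not depend on $f_h$ and cancels in the excess loss, the quantity $\hat{L}_{\pi^t}(\ubar{Q}_h^t,\ubar{Q}_{h+1}^t)-\inf_{g}\hat{L}_{\pi^t}(g_h,\ubar{Q}_{h+1}^t)$ is an unbiased proxy for $K\sE_{d_h^\mu}[(\gE_h^t)^2]$ up to the $\nu_h$-bias and a uniform-concentration deviation; combined with version-space membership (which caps this excess loss by $\beta$) this controls the summed squared Bellman error by $Hb^2\max\{\tilde d_{opt}(\eps,T),\ln(H/\delta)\}+b^2KH\eps+\tilde\zeta_{msp}$, and dividing by $\lambda$ produces the first term of the bound.

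The hard part is the uniform concentration underpinning both the feasibility of $\tilde f$ and the empirical-to-population conversion. Because $\gD$ is \emph{adaptively collected} (\Cref{defn: adaptive collected data}) and each target policy $\pi^t$ is itself data-dependent, I cannot fix $\pi^t$ in advance; instead I would prove the loss concentration \emph{uniformly} over $\gF\times\Pi^{soft}(T)$ using Freedman's (martingale--Bernstein) inequality, discretizing $\gF_h$ and $\Pi_h^{soft}(T)$ at scale $\eps$ (whence $d_\gF(\eps)$, $d_\Pi(\eps,T)$, the combined $\tilde d_{opt}(\eps,T)$, and the $b^2K\eps$ discretization slack) and union-bounding over the two events and over $h\in[H]$ (the $\ln(H/\delta)$). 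Since every realized $\pi^t$ lies in $\Pi^{soft}(T)$, the uniform bound applies to it in spite of its data-dependence; this is exactly the point that, for the posterior-sampling variant, would demand the more careful treatment flagged in the introduction, but for VSC a clean uniform bound suffices. Gathering $\zeta_{opt}$, the pessimism contribution $\xi_1$, the closedness slack $\bar\nu$, the diversity term, the slack $H\eps_c$, and the in-distribution term, all holding simultaneously with probability at least $1-2\delta$, completes the proof.
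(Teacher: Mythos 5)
Your proposal is correct and follows essentially the same route as the paper's proof: the same three-way decomposition (your extended value-difference identity is algebraically identical to the paper's induced-MDP decomposition in \Cref{lemma: error decomposition}, since the induced-MDP regret term reduces, via \Cref{lemma: exact value function under the induced MDP} and the performance-difference lemma, to exactly your state-wise Hedge term), the same pessimism argument via feasibility of the realizability projection in $\gF(\beta;\pi^t)$, the same decoupling through $\gC(\pi;\eps_c)$ with AM--GM, and the same Freedman-plus-covering uniform concentration over $\gF\times\Pi^{soft}(T)$ to handle the data-dependent actor and cap the in-distribution squared Bellman error by $\beta$. No gaps beyond the level of detail expected in a sketch.
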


\begin{theorem}[Guarantees for GOPO-ROC]
Let $\hat{\pi}^{ro}$ be the output of \Cref{algorithm: generic framework} invoked with \criticmodule being ROC($\gD, \gF, \pi^t, \lambda$) (\Cref{algorithm: roc}). 
Fix any $\delta \in (0,1]$. Under \Cref{assumption: realizability}-\ref{assumption: approximate Bellman completeness}, with probability at least $1 - 2 \delta$ (over the randomness of the offline data), for any $\eps, \eps_c, \lambda > 0$, and any $\pi \in \Pi^{all}$, we have
\begin{align*}
    \sE \left[\subopt_{\pi}(\hat{\pi}^{ro})| \gD \right] &\lesssim \frac{H b^2 \cdot \max\{\tilde{d}_{opt}(\eps, T), \ln \frac{H}{\delta}\} + b^2 K H \eps + \tilde{\zeta}_{msp}}{\lambda} + \frac{\lambda H \cdot \gC(\pi; \eps_c)}{2 K} \\
    &+ H \eps_c +  \xi_1 + \bar{\nu} + \zeta_{opt}. 
\end{align*}
\label{theorem: gopo-roc}
\end{theorem}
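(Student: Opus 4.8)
Since the bound stated for GOPO-ROC coincides exactly with that of GOPO-VSC (\Cref{theorem: gopo-vsc}), the plan is to reuse the common actor--critic scaffold and swap in only the critic-specific estimate. First I would condition on $\gD$ and use the uniform-comparator decomposition induced by the multiplicative-weights actor update: since $\hat{\pi}^{ro}$ is uniform over $\{\pi^t\}_{t\in[T]}$, $\sE[\subopt_\pi(\hat{\pi}^{ro})\mid\gD] = \frac{1}{T}\sum_t (V_1^\pi(s_1)-V_1^{\pi^t}(s_1))$. Writing $\gE^t_h := \ubar{Q}^t_h - \sT^{\pi^t}_h \ubar{Q}^t_{h+1}$ for the Bellman residual of the $t$-th critic with respect to its own policy $\pi^t$, a value-difference (telescoping) identity together with the performance-difference lemma rearranges $V_1^\pi(s_1)-V_1^{\pi^t}(s_1)$ into three pieces: an out-of-distribution (OOD) term $-\sum_h \sE_{d^\pi_h}[\gE^t_h]$ evaluated under the \emph{comparator} occupancy $d^\pi_h$, the mirror-descent regret $\sum_h \sE_{d^\pi_h}[\langle \ubar{Q}^t_h, \pi_h - \pi^t_h\rangle]$, and a pessimism term $C_t := \ubar{Q}^t_1(s_1,\pi^t)-V_1^{\pi^t}(s_1) = \sum_h \sE_{d^{\pi^t}_h}[\gE^t_h]$. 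Averaging over $t$ and invoking the experts bound with the prescribed $\eta=\sqrt{\ln\vol(\gA)/(4(e-2)b^2T)}$ converts the regret into $\zeta_{opt}$. This step is identical to VSC and never inspects the critic.

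The only genuinely ROC-specific ingredient is controlling $C_t$ and the residuals via optimality of $\ubar{Q}^t$ for the regularized objective. Letting $\tilde{Q}^{\pi^t}\in\gF$ be an approximate realizer of $Q^{\pi^t}$ from \Cref{assumption: realizability}, the fact that $\ubar{Q}^t$ minimizes $\lambda f_1(s_1,\pi^t)+\gL_{\pi^t}(f)$ gives the single inequality
\[
\lambda\big(\ubar{Q}^t_1(s_1,\pi^t) - \tilde{Q}^{\pi^t}_1(s_1,\pi^t)\big) + \gL_{\pi^t}(\ubar{Q}^t) \le \gL_{\pi^t}(\tilde{Q}^{\pi^t}),
\]
from which I extract two facts at once: because $\gL_{\pi^t}\ge 0$, the initial value is pushed down enough that $C_t \lesssim \gL_{\pi^t}(\tilde{Q}^{\pi^t})/\lambda + \xi_1$ (pessimism); rearranged the other way, the empirical excess TD loss obeys $\gL_{\pi^t}(\ubar{Q}^t) \le \gL_{\pi^t}(\tilde{Q}^{\pi^t}) + \lambda\big(\tilde{Q}^{\pi^t}_1(s_1,\pi^t) - \ubar{Q}^t_1(s_1,\pi^t)\big)$. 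The extra $\lambda(\tilde{Q}^{\pi^t}_1-\ubar{Q}^t_1)$ is exactly what forced the slower $K^{-1/3}$ rate in \cite{xie2021bellman}, and disposing of it is the heart of the argument.

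The concentration step is then shared verbatim with VSC: using Freedman's inequality on the martingale induced by the adaptively collected data, together with a uniform-convergence argument over $\gF_h,\gF_{h+1}$ \emph{and} over the data-dependent policy class $\Pi^{soft}(T)$ (hence the complexity $\tilde{d}_{opt}(\eps,T)$), I would show that for every $t$, $K\sum_h \sE_{d^\mu_h}[(\gE^t_h)^2] \lesssim \gL_{\pi^t}(\ubar{Q}^t) + Hb^2\max\{\tilde{d}_{opt}(\eps,T),\ln(H/\delta)\} + b^2KH\eps + \zeta_{msp}$, and likewise $\gL_{\pi^t}(\tilde{Q}^{\pi^t}) \lesssim \tilde{\zeta}_{msp} + (\text{same concentration terms})$, where the completeness slack of \Cref{assumption: approximate Bellman completeness} produces $\zeta_{msp}$ and the realizability slack of \Cref{assumption: realizability} produces the $bK\sum_h\xi_h$ in $\tilde{\zeta}_{msp}$. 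Crucially, $\gE^t_h$ lies in $\gF_h-\gF_h$ up to an element-wise $\nu_h$ error, which is precisely the witness class in \Cref{defn: distribution mismatch measure}, so data diversity applies to the OOD term: $|\sE_{d^\pi_h}[\gE^t_h]| \le \sqrt{\gC(\pi;\eps_c)\,\sE_{d^\mu_h}[(\gE^t_h)^2] + \eps_c} + \nu_h$.

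Finally I would assemble the pieces and resolve the obstacle by \emph{cancellation}. Applying $\sqrt{xy}\le \tfrac{\lambda x}{2K}+\tfrac{Ky}{2\lambda}$ to the diversity bound yields the OOD term $\tfrac{\lambda H\gC(\pi;\eps_c)}{2K}$ plus $\tfrac{K\sum_h\sE_{d^\mu_h}[(\gE^t_h)^2]}{2\lambda}$; substituting the concentration bound turns the latter into $\tfrac{\gL_{\pi^t}(\ubar{Q}^t)+\text{conc}}{2\lambda}$, and I replace $\gL_{\pi^t}(\ubar{Q}^t)$ by the rearranged optimality inequality. The resulting $+\tfrac{1}{2}(\tilde{Q}^{\pi^t}_1-\ubar{Q}^t_1)$ then cancels against the $C_t\le(\ubar{Q}^t_1-\tilde{Q}^{\pi^t}_1)+\xi_1$ pessimism term up to sign, so the potentially $\Theta(\lambda b)$ excess loss never survives; the leftover is $\lesssim \gL_{\pi^t}(\tilde{Q}^{\pi^t})/\lambda$, already of the target order. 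Collecting terms and averaging over $t$ gives $\tfrac{Hb^2\max\{\tilde{d}_{opt}(\eps,T),\ln(H/\delta)\}+b^2KH\eps+\tilde{\zeta}_{msp}}{\lambda} + \tfrac{\lambda H\gC(\pi;\eps_c)}{2K} + H\eps_c + \xi_1 + \bar{\nu} + \zeta_{opt}$ (after reparametrizing the diversity slack $\sqrt{\eps_c}\mapsto\eps_c$ and summing the per-stage $\nu_h$ into $\bar{\nu}$), as claimed. I expect the main difficulty to be this cancellation --- making rigorous that the regularization-induced excess TD loss is paid back by the pessimism it buys, which is what lets ROC match VSC at the $K^{-1/2}$ rate --- together with the uniform concentration over the data-dependent $\Pi^{soft}(T)$ under adaptive sampling, which relies on the corrected dependence-handling argument flagged in the introduction.
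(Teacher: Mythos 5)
Your proposal is correct and follows essentially the same route as the paper's proof: the same error decomposition into OOD Bellman error, initial-state gap, and induced-MDP regret (handled by the multiplicative-weights lemma), the same Freedman-plus-covering concentration lemmas (uniform over $\gF$ and the data-dependent $\Pi^{soft}(T)$) relating $\sE_{\mu}$-squared Bellman errors to $\gL_{\pi^t}(\ubar{Q}^t)$ and bounding $\gL_{\pi^t}$ at the realizer, and the same decoupling via $\gC(\pi;\eps_c)$. Your two-step ``pessimism plus excess-loss cancellation'' is algebraically the same use of ROC optimality as the paper's one-shot substitution of the joint regularized objective $\gL_{\pi^t}(f)+\lambda \Delta_1 f_1(s_1,\pi^t_1)$ evaluated at $\proj_{\gF}(Q^{\pi^t})$ --- the paper's constants (the factor $2$ in the concentration lemma against the $\tfrac{K}{2\lambda}$ AM-GM weight) make your cancellation exact, and in any case the leftover term has the sign controlled by pessimism, as you note.
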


\begin{theorem}[Guarantees for GOPO-PSC]
Let $\hat{\pi}^{ps}$ be the output of \Cref{algorithm: generic framework} invoked with \criticmodule being PSC$(\gD, \gF, \pi^t, \lambda, \gamma, p_0)$ (\Cref{algorithm: psc}). Under \Cref{assumption: approximate Bellman completeness}, for any $\gamma \in [0, \frac{1}{144 (e-2) b^2}]$, and $\eps,\eps_c, \delta, \lambda > 0$, and any $\pi \in \Pi^{all}$, we have
\begin{align*}
    \sE \left[ \subopt_{\pi}(\hat{\pi}^{ps}) \right] &\lesssim \frac{\gamma H b^2 \cdot \max\{ \tilde{d}_{ps}(\eps, T), \ln \frac{\ln K b^2}{\delta} \} + \gamma b^2 K H \cdot \max\{ \eps, \delta \} + \gamma \zeta_{msp}}{\lambda} \\
    &+ \frac{\lambda H \cdot \gC(\pi; \eps_c)}{K \gamma} + H \eps_c + \eps + \bar{\nu} + \zeta_{opt}.
\end{align*}
\label{theorem: gopo-psc}
\end{theorem}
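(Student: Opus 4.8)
The plan is to reduce the guarantee to a per-round statement and then run the paper's decoupling argument. Since $\hat{\pi}^{ps}$ is drawn uniformly from $\{\pi^t\}_{t \in [T]}$, we have $\sE[\subopt_\pi(\hat{\pi}^{ps})] = \frac{1}{T}\sum_{t=1}^T \sE[\subopt_\pi(\pi^t)]$, so it suffices to control the averaged per-round sub-optimality. For each $t$ I would apply a value-difference decomposition around the critic $\ubar{Q}^t$: writing the Bellman residual $\gE_h^{\tilde{\pi}}(f) := f_h - \sT_h^{\tilde{\pi}} f_{h+1}$ and using the telescoping identity $f_1(s_1,\tilde{\pi}) - V_1^{\tilde{\pi}}(s_1) = \sum_h \sE_{d^{\tilde{\pi}}_h}[\gE_h^{\tilde{\pi}}(f)]$ together with the performance-difference lemma, the sub-optimality $V_1^{\pi}(s_1) - V_1^{\pi^t}(s_1)$ splits into a mirror-descent term $\langle \ubar{Q}^t_h, \pi_h - \pi^t_h\rangle$, an extrapolation term weighted by $d^\pi_h$, and an in-distribution term weighted by $d^{\pi^t}_h$. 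A key preliminary step is that the comparator's operator residual $\gE_h^{\pi}(\ubar{Q}^t)$ differs from the current-policy residual $\gE_h^{\pi^t}(\ubar{Q}^t)$ only by the policy-difference quantity $\sE_{s'}[\langle \ubar{Q}^t_{h+1}(s',\cdot), \pi^t_{h+1} - \pi_{h+1}\rangle]$, which I fold into the optimization analysis; this ensures the diversity conversion below is applied to $\gE^{\pi^t}$, the residual the critic genuinely controls.

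The optimization term (including this policy-difference correction) is bounded by the regret of the multiplicative-weights update in \Cref{GOPO: soft policy update}; with the prescribed $\eta = \sqrt{\ln\vol(\gA)/(4(e-2)b^2 T)}$ this contributes $\zeta_{opt}$. For the extrapolation term I would invoke \Cref{defn: distribution mismatch measure}: since \Cref{assumption: approximate Bellman completeness} places $\sT_h^{\pi^t}\ubar{Q}^t_{h+1}$ within $\nu_h$ of $\gF_h$, the residual $\gE_h^{\pi^t}(\ubar{Q}^t)$ is (up to the additive misspecification $\bar{\nu}$) a witness in $\gF_h - \gF_h$, so that $(\sE_{d^\pi_h}[\gE_h^{\pi^t}(\ubar{Q}^t)])^2 \le \gC(\pi;\eps_c)\,\sE_{d^\mu_h}[\gE_h^{\pi^t}(\ubar{Q}^t)^2] + \eps_c$. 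A Young's inequality with the free parameter $\lambda$ then trades this against the $\mu$-weighted squared residual, producing the term $\tfrac{\lambda H \gC(\pi;\eps_c)}{K\gamma}$ and the additive $H\eps_c$, while the $\mu$-weighted squared residual is carried into the first fraction.

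The crux is bounding the in-distribution squared Bellman residual $\sE_{d^\mu_h}[\gE_h^{\pi^t}(\ubar{Q}^t)^2]$ in expectation over both the posterior draw and the data. Here I would exploit the normalized likelihood of \Cref{line:psc posterior distribution}, whose normalizer $\sE_{f'_h \sim p_{0,h}}\exp(-\gamma \hat{L}_{\pi^t}(f'_h,f_{h+1}))$ turns the empirical TD loss into an unbiased proxy for the true squared Bellman error, following the construction of \cite{DBLP:conf/nips/DannMZZ21}. A Gibbs variational / free-energy (log-partition) argument then controls the posterior-averaged loss by the prior-concentration complexities $d_0(\eps)$ and $d'_0(\eps)$ of \Cref{defn: complexity measure for canonical posterior} divided by the temperature $\gamma$ (explaining their appearance inside $\tilde{d}_{ps}$, with $d_0$ encoding completeness-in-posterior and $d'_0$ encoding realizability-in-posterior, so that only \Cref{assumption: approximate Bellman completeness} is needed as an explicit hypothesis), and Freedman's inequality converts the empirical loss into the population squared residual under $\mu$ at the cost of the covering term $d_{\gF}(\eps)$ and the misspecification $\zeta_{msp}$. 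The pessimistic prior factor $\exp(-\lambda f_1(s_1,\pi^t))$ is precisely what couples this estimate to the $\lambda$ of the extrapolation trade-off, yielding the $\tfrac{1}{\lambda}$-scaled first fraction with its leading $\gamma$.

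The main obstacle is the statistical dependence induced by the data-dependent target policy $\pi^t$: because $\pi^t$ is a function of $\ubar{Q}^{1:t-1}$ and hence of $\gD$, both the operator $\sT_h^{\pi^t}$ and the loss $\hat{L}_{\pi^t}$ are data-dependent, which breaks the martingale/decoupling structure that a naive adaptation of \cite{DBLP:conf/nips/DannMZZ21} would silently assume (this is exactly the non-rigorous step flagged in the overview). To repair it I would avoid conditioning on a fixed target policy and instead run a uniform-convergence argument over the entire induced class $\Pi^{soft}(T)$, covering it at scale $\eps$ in the $\|\cdot\|_{1,\infty}$ metric, which introduces the $d_{\Pi}(\eps,T)$ term in $\tilde{d}_{ps}$, and weave this uniform bound directly into the in-expectation posterior estimate so that the random $\pi^t$ is handled simultaneously across all rounds. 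Assembling the three decomposition pieces, selecting the covering scale $\eps$, and balancing $\lambda$ against $\gamma$ then yields the stated bound.
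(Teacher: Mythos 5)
Your proposal is correct and follows essentially the same route as the paper's proof: your telescoped three-term decomposition is algebraically identical to the paper's induced-MDP decomposition (\Cref{lemma: error decomposition}) plus the multiplicative-weights regret bound (\Cref{lemma: bounding value difference under soft policy update}), your diversity-plus-Young's-inequality step is exactly the decoupling argument (\Cref{lemma: decoupling argument}), and your Gibbs/log-partition treatment of the pessimistic posterior, repaired by uniform convergence over $\Pi^{soft}(T)$ to handle the data-dependent $\pi^t$, is precisely the content of \Cref{theorem: expected pessimistic squared Bellman error}. The only cosmetic deviation is that the paper performs the empirical-to-population conversion of the (non-negative) squared Bellman errors via the localization lemma \Cref{lemma:improved_online_to_batch} inside \Cref{lemma: empirical Bellman error to expected Bellman error}, whereas you invoke Freedman's inequality; since the squared errors satisfy the requisite Bernstein-type variance condition, this yields the same multiplicative bound and is not a gap.
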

\vspace{-30pt}
Our results provide a family of upper bounds on the sub-optimality of each of $\{\hat{\pi}^{vs}, \hat{\pi}^{ro}, \hat{\pi}^{ps}\}$, indexed by our choices of the comparator $\pi$ with the data diversity $\gC(\pi; \eps_c)$, additive (extrapolation) error $\eps_c$, the discretization level $\eps$ in log-covering numbers, the ``failure'' probability $\delta$, and other algorithm-dependent parameters ($\lambda$ for $\hat{\pi}^{ro}$ and ($\lambda$, $\gamma$) for $\hat{\pi}^{ps}$). Note that the optimization error $\zeta_{opt}$ captures the error rate of the actor and can be made arbitrarily small with large iteration number $T$ whereas $\zeta_{msp}$, $\tilde{\zeta}_{msp}$, $\bar{\nu}$, and $\xi_1$ are simply misspecification errors aggregated over all stages. Also, note that our bound does not scale with the complexity of the comparator policy class $\Pi^{all}$.

We next highlight the key characteristics of our main results in comparison with existing work. 

\paragraph{(I) Tight characterization of data diversity.} Our bounds in all the above theorems are expressed in terms of $\gC(\pi; \eps_c)$.
Several remarks are in order. First, $\gC(\pi; \eps_c)$ is a \emph{non-increasing} function of $\eps_c$; thus
$\gC(\pi; \eps_c)$ is always smaller or at least equal to $\gC(\pi; 0)$. In fact, it is possible that $\gC(\pi; 0) = \infty$  yet $\gC(\pi; \eps_c) < \infty$ for some $\eps > 0$. For instance, if there exists $g \in \gQ$ such that $g(x) = 0, \forall x \in \supp(p)$ and $\{x: g(x) \neq 0\}$ has a positive measure under $q$, then $\chi_{\gQ}(0;q,p) = \infty$ while $\chi_{\gQ}( \sup_{g \in \gQ}\sE_q[g]^2;q,p) = 0$. Second, $\gC(\pi; 0)$ is always bounded from above by (often substantially smaller than) the \emph{single-policy concentrability coefficient} between the $\pi$-induced and $\mu$-induced state-action distribution \citep{DBLP:conf/uai/LiuSAB19,rashidinejad2021bridging}, which been used extensively in recent offline RL works \citep{yin2021towards,nguyen2021offline,nguyen2022instance,jin2022policy,zhan2022offline,nguyentang2023,zhao2023local}. This is essentially because $d^{\pi}$ can cover the region that is not covered by $d^{\mu}$ but still the integration of functions in $\gF_h - \gF_h$ over two distributions are close to each other. Third, $\gC(\pi; 0)$ is always upper bounded by the \emph{relative condition numbers} used in \citep{agarwal2021theory,uehara2022representation,uehara2021pessimistic}. Our data diversity at $\eps=0$ is similar to the notion of distribution mismatch in \citep{duan2020minimax,ji2022sample} \revise{and the Bellman error transfer coefficient of ~\citet{song2022hybrid}}, though our notion is motivated by transfer learning and discovered naturally from our decoupling argument. 
Our data diversity measure at $\eps=0$ is smaller than the Bellman residual ratio measure used in \cite{xie2021bellman} (follows using Jensen's inequality). Finally, the concurrent work of \cite{di2023pessimistic} proposed a notion of $D^2$-divergence to capture the data disparity of a data point to the offline data. Our data diversity is in general less restricted as we only need to ensure the diversity between two data distributions (of the target policy and the behavior policy), not necessarily between each of their individual data points.

In summary, $\gC(\pi; \eps_c)$, to the best of our knowledge, provides the tightest characterization of distribution mismatch compared to the prior data coverage notions. We sketch the relationships of the discussed notions in \Cref{fig: relation of coverage notions}, where with our data diversity notion, we show that the scenarios for the offline data in which offline RL is learnable are enlarged compared to the picture depicted by the prior data coverage notions.

\begin{figure}
  \begin{center}
    \includegraphics[width=0.36\textwidth]{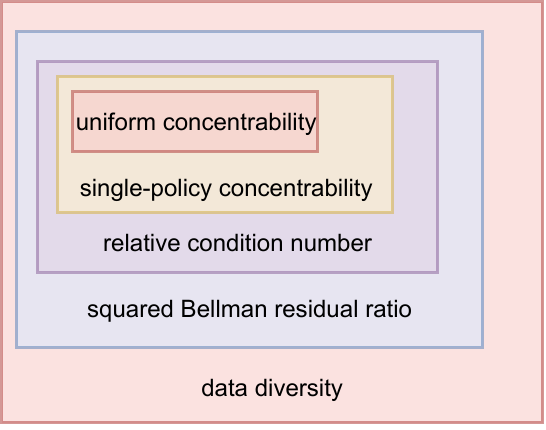}
  \end{center}
  \caption{The relations of sample-efficient offline RL classes under different data coverage measures. Given the same MDP and a target policy (e.g., an optimal policy of the MDP), each data coverage measure induces a corresponding set of behavior policies (represented by the rectangle labelled by the data coverage measure) from which the target policy is offline-learnable.} 
  \label{fig: relation of coverage notions}
\end{figure}

\paragraph{(II) Competing with all comparator policies simultaneously.} Similar to some recent results in offline RL, our offline RL algorithms compete with all comparator policies that are supported by offline data in some sense. In particular, the choice of the comparator $\pi$ provides the flexibility to \emph{automatically} compete with the best policy within a certain diversity level of our choice. For instance, if we want to limit the level $\gC(\pi; \eps_c) \leq C$ for some arbitrary $C > 0$, our bound automatically competes with $\pi = \argmax_{\pi \in \Pi^{all}} \{V_1^{\pi}(s): \gC(\pi; \eps_c) \leq C\}$. 
This is immensely meaningful since the offline data might not support extrapolation to an optimal policy in practice. 
\paragraph{(III) State-of-the-art bounds for \emph{standard} assumptions.} 
We compare our bounds with other recent guarantees of similar assumptions.\footnote{Recent primal-dual methods achieve favorable guarantees for offline RL. However, these guarantees are not directly comparable to the guarantees of our value-based methods due to a different set of assumptions. Nonetheless, we make a detailed discussion in \Cref{subection: compare with primal-dual methods}.} To ease comparison, we assume for simplicity, that there is no misspecification, i.e., $\nu_h = \xi_h = 0, \forall h \in [H]$, and $T \geq K \ln \vol(\gA)$, and we minimize the bounds in \Cref{theorem: gopo-roc} and \Cref{theorem: gopo-psc} with respect to $\lambda$. The three theorems can then be simplified into a unified result presented in  \Cref{prop: simplified results in a unified form}.

\begin{prop}[A unified guarantee for VS, RO and PS] Under \Cref{assumption: realizability}-\ref{assumption: approximate Bellman completeness} with no misspecification, i.e., $\nu_h = \xi_h = 0, \forall h \in [H]$,
$\forall \hat{\pi} \in \{\hat{\pi}^{vs}, \hat{\pi}^{ro}, \hat{\pi}^{ps}\}$, $\sE[\subopt_{\pi}(\hat{\pi})] = \tilde{\gO} (Hb \sqrt{\tilde{d}(1/K, T) \cdot \gC(\pi; 1/\sqrt{K})/K} + \xi_{opt})$, where $\tilde{d}(1/K, T) = \tilde{d}_{opt}(1/K, T)$ if $\hat{\pi} \in \{\hat{\pi}^{vs}, \hat{\pi}^{ro}\}$ and $\tilde{d}(1/K, T) = \tilde{d}_{ps}(1/K, T)$ if $\hat{\pi} = \hat{\pi}^{ps}$. In addition,
\begin{itemize}
    \item If $\gF_h$ and $\Pi^{soft}_h(T)$ have finite elements for all $h \in [H]$, we have $\tilde{d}(1/K,T) = \gO(\max_{h \in [H]} \max \{ \ln |\gF_h|, \ln |\Pi_h^{soft}(T)| \})$; 
    \item If $\gF_h = \{(s,a) \mapsto \langle \phi_h(s,a), w \rangle : \| w\|_2 \leq b\}$  is a linear model, where $\phi_h: \gS \times \gA \rightarrow \sR^d$ is a known feature map and w.l.o.g. $\max_{h} \|\phi_h\|_{\infty} \leq 1$, we have $\tilde{d}(1/K, T) = \gO(d \log(1 + K T b)), \forall T$.
\end{itemize}
\label{prop: simplified results in a unified form}
\end{prop}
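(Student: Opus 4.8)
The plan is to derive \Cref{prop: simplified results in a unified form} as a direct corollary of the three main guarantees \Cref{theorem: gopo-vsc}, \Cref{theorem: gopo-roc}, and \Cref{theorem: gopo-psc} by substituting the no-misspecification conditions and then making one coordinated choice of the free parameters. First, setting $\nu_h=\xi_h=0$ for all $h$ kills every misspecification quantity: $\zeta_{msp}=\tilde{\zeta}_{msp}=\bar{\nu}=\xi_1=0$. I would then fix the discretization level $\eps=1/K$ and the extrapolation error $\eps_c=1/\sqrt{K}$ in all three bounds, and additionally $\delta=1/K$ in the PS bound so that $\max\{\eps,\delta\}=1/K$. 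With $\eps=1/K$, the additive term $b^2KH\eps=b^2H$ is dominated by the leading complexity term $Hb^2\max\{\tilde{d}(\cdot),\ln(H/\delta)\}$, and the remaining factors $\ln(H/\delta)$ and $\ln(\tfrac{\ln Kb^2}{\delta})$ are $\mathrm{polylog}(K,H,b)$ and absorbed into $\tilde{\gO}$.

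Second, each simplified bound now has the shape $A/\lambda + B\lambda + (\text{lower order})$, which I would optimize over $\lambda>0$ by AM--GM to obtain a leading term $2\sqrt{AB}$ at $\lambda=\sqrt{A/B}$. For VS and RO, $A=Hb^2\tilde{d}_{opt}(1/K,T)$ and $B=H\gC(\pi;1/\sqrt{K})/(2K)$, giving $\gO\!\big(Hb\sqrt{\tilde{d}_{opt}(1/K,T)\,\gC(\pi;1/\sqrt{K})/K}\big)$. For PS, $A=\gamma Hb^2\tilde{d}_{ps}(1/K,T)$ and $B=H\gC(\pi;1/\sqrt{K})/(K\gamma)$, and the key observation is that the learning-rate factor $\gamma$ \emph{cancels} in the product $AB$, yielding the same $\gO\!\big(Hb\sqrt{\tilde{d}_{ps}(1/K,T)\,\gC(\pi;1/\sqrt{K})/K}\big)$; this cancellation is the conceptual heart of the result, as it is exactly what renders the PS rate insensitive to $\gamma$ within its admissible range $[0,\tfrac{1}{144(e-2)b^2}]$ and hence competitive with VS/RO. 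The residual additive terms $H\eps_c=H/\sqrt{K}$, $\eps=1/K$, and $\zeta_{opt}=Hb\sqrt{T^{-1}\ln\vol(\gA)}$ are each $\gO(Hb/\sqrt{K})$ once $T\geq K\ln\vol(\gA)$ (using $b\geq 1$), so they collapse into the single optimization-error term reported in the statement. For VS/RO the theorems control $\sE[\,\cdot\mid\gD]$ on a $(1-2\delta)$-event; converting to the unconditional $\sE[\,\cdot\,]$ costs only $\gO(b\delta)=\gO(b/K)$ via the boundedness $|\subopt_\pi|\leq 2b$, again lower order.

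Third, for the two specializations I would bound $\tilde{d}(1/K,T)$ explicitly. In the finite case, covering numbers are dominated by cardinalities, so $d_{\gF}(\eps)\leq\max_h\ln|\gF_h|$ and $d_{\Pi}(\eps,T)\leq\max_h\ln|\Pi^{soft}_h(T)|$; for PS I would additionally invoke the remark after \Cref{defn: complexity measure for canonical posterior} that the uninformative prior $p_{0,h}=1/|\gF_h|$ with $\nu_h=0$ gives $d_0(\eps),d'_0(\eps)\leq\ln|\gF|$, and then take $\gamma=\Theta(1/b^2)$ so that $d_0(\eps)/(\gamma Hb^2)=\gO(\max_h\ln|\gF_h|)$ — the one place where the admissible $\gamma$-range and the complexity term interact, and I would verify the two are compatible. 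In the linear case, an $\eps$-cover of $\gF_h$ reduces to an $(\eps/\sqrt{d})$-cover of the norm-$b$ weight ball in $\sR^d$ (using $\|\phi_h\|_\infty\leq 1\Rightarrow\|\phi_h\|_2\leq\sqrt{d}$), giving $d_{\gF}(1/K)=\gO(d\log(1+Kb\sqrt{d}))$.

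The step I expect to be the main obstacle is the covering number of the \emph{induced softmax policy class} in the linear case. Here the logits $\eta\sum_{i=1}^t g_i(s,a)$ are linear with aggregated weight $\eta\sum_i w_i$ of norm at most $Tb$; I would cover this aggregated weight vector and then transfer an $\|\cdot\|_\infty$-cover of the logits into an $\|\cdot\|_{1,\infty}$-cover of the policies through the Lipschitzness of $\softmax$ in its logits, yielding $d_{\Pi}(1/K,T)=\gO(d\log(1+KTb))$ and explaining the $T$- and $K$-dependence of the final $\gO(d\log(1+KTb))$ bound. The delicacy lies in controlling this softmax transfer uniformly (possibly over a continuous action set) and in confirming that aggregating up to $T$ functions inflates only the effective weight norm to $Tb$ — so the dimension dependence remains $d$ and $T$ enters solely inside the logarithm.
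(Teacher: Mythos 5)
Your proposal follows essentially the same route as the paper's own proof: zero out all misspecification terms, fix $\eps = 1/K$, $\eps_c = 1/\sqrt{K}$ (and $\delta = 1/K$ for PS), optimize the $A/\lambda + B\lambda$ structure over $\lambda$ by AM--GM, exploit the cancellation of $\gamma$ in the product $AB$ for PS while pushing $\gamma$ to its maximal admissible value $\tfrac{1}{144(e-2)b^2}$ so that $d_0(\eps)/(\gamma H b^2) = \gO(d_0(\eps)/H)$, and finally instantiate the complexity measures for the finite and linear classes. Two of your deviations are benign or even slightly more careful than the paper: you explicitly convert the conditional high-probability bounds of \Cref{theorem: gopo-vsc} and \Cref{theorem: gopo-roc} into unconditional expectations at a cost of $\gO(b\delta)$ (a step the paper glosses over), and you re-derive the linear covering numbers via a softmax-Lipschitz transfer where the paper simply cites \cite[Lemma~6]{zanette2021provable}; the extra $\sqrt{d}$ inside your logarithms (from using $\|\phi_h\|_\infty \le 1 \Rightarrow \|\phi_h(s,a)\|_2 \le \sqrt{d}$) is absorbed by $\tilde{\gO}$.

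The one genuine gap is in the linear-model bullet as applied to $\hat{\pi}^{ps}$. There $\tilde{d}_{ps}(1/K,T)$ contains the prior-dependent terms $d_0(1/K)/(\gamma H b^2)$ and $d'_0(1/K)/(\gamma H b^2)$, and your linear-case argument bounds only $d_{\gF}$ and $d_{\Pi}$. You treated the analogous issue in the finite case (uninformative prior gives $d_0(\eps), d'_0(\eps) \le \ln|\gF|$), but for the linear class one additionally needs the prior concentration property discussed in \Cref{section: effective sizes of function class and policy class}, namely a prior on the compact parameter ball satisfying $\sup_{\theta} \ln \frac{1}{p_{0,h}(\theta': \|\theta - \theta'\| \le \eps)} \le c_1 d \ln(c_2/\eps)$, which yields $d'_0(\eps) \le d_0(\eps) \le c_1 H d \ln(c_2/\eps)$; combined with $\gamma = \Theta(1/b^2)$ this gives $d_0(1/K)/(\gamma H b^2) = \gO(d \ln(c_2 K))$. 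Without this step, the claim $\tilde{d}_{ps}(1/K,T) = \gO(d\log(1+KTb))$ is not established for posterior sampling — this is exactly how the paper closes the linear case.
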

\Cref{prop: simplified results in a unified form} essentially asserts that VS-based, RO-based, and PS-based algorithms obtain comparable guarantees for offline RL in the realizable case. We now compare our results to related work in various instantiating of function classes. 

\paragraph{Compared with \cite{xie2021bellman} when the function class is finite.} In this case, the analysis of the VS-based algorithms and RO-based algorithms of \cite{xie2021bellman} give the bounds that in our setting can be translated\footnote{\cite{xie2021bellman} consider discounted MDP and a \emph{restricted} policy class for the comparator class.} into: {$Hb \sqrt{ \max_{h} \ln(|\gF_h| |\Pi_h^{all}|) \cdot C_2(\pi) / K}$} and {$Hb \sqrt{ C_2(\pi) } \sqrt[3]{ \max_{h} \ln(|\gF_h| |\Pi_h^{soft}(T)|)/ K} + H b / \sqrt{T}$}, respectively, where {$C_2(\pi) := \max_{h \in [H], \tilde{\pi} \in \Pi^{all}, f \in \gF} \frac{\|f_h  - \sT^{\tilde{\pi}}_{h} f_{h+1}\|^2_{2,\pi}}{\|f_h  - \sT^{\tilde{\pi}}_{h} f_{h+1}\|^2_{2,\mu}}$}. Instead, our bounds for both the VS-based and RO-based algorithms are {$Hb \sqrt{ \max_{h} \ln(|\gF_h| |\Pi_h^{soft}(T)|) \cdot \gC(\pi; 1/\sqrt{K}) / K} + H b / \sqrt{T}$}. 
We improve upon the results of \cite{xie2021bellman} on several fronts. First, our diversity measures $\gC(\pi; 1/K)$ is always smaller than their measure $C_2(\pi)$, since $\gC(\pi; 1/\sqrt{K}) \leq \gC(\pi; 0) \leq C_2(\pi)$.  Second, for the VS-based algorithm, $\Pi^{soft}(T) \subset \Pi^{all}, \forall T$, our bound is always tighter. In fact, $|\Pi^{all}|$ is arbitrarily large that bounds depending on this quantity is vacuous. Third, for the RO-based algorithm, the rates in terms of $K$ in the bound of \cite{xie2021bellman} are slower than that in our bound. Specifically, if $\Pi_h^{soft}(T) = \tilde{\mathcal{O}}_T(1)$, then these rates are $K^{-1/3}$ vs $K^{-1/2}$ (with an optimal choice of $T = K$ for both bounds). If  we consider the worst case that $\Pi_h^{soft}(T) = \mathcal{O}(T \log |\gF_h|)$, then these rates are $K^{-1/5}$ vs $K^{-1/4}$ (with an optimal choice of $T = K^{2/5}$ and $T = \sqrt{K}$ in the respective bounds). 
Finally, our results hold under the general adaptively collected data rather than their independent episode setting. We summarize the bounds in the finite function class cases in \Cref{tab: bound comparison}, and give comparisons for the linear model cases in \Cref{tab: sub-optimality bounds in linear function classes}.

\paragraph{Compared with LCB-based algorithms.} When $\gF_h$ is a $d$-dimensional linear model with feature maps $\{\phi_h\}_{h \in [H]}$, our bounds reduce into {$H b \sqrt{d \cdot K^{-1} \cdot \gC(\pi; 1/\sqrt{K})}$} (\Cref{prop: simplified results in a unified form}), which matches the order of (and potentially tighter than) the bound in \cite{zanette2021provable}, since $\gC(\pi; 1/\sqrt{K})$ is always smaller (or at least equal to) than the relative condition number {$\max_{h}\sup_{x \in \sR^d} \frac{x^T \sE_{\pi}[\phi_h(s_h,a_h) \phi_h(s_h,a_h)^T] x}{x^T \sE_{\mu}[\phi_h(s_h,a_h) \phi_h(s_h,a_h)^T] x}$}. Compared with the bound of LCB-based algorithms in \cite{jin2021pessimism}, we improve a factor $\sqrt{d}$ and holds under the more general \Cref{assumption: approximate Bellman completeness} which includes low-rank MDPs. In a more refined analysis \citep{xiong2022nearly}, the LCB-based algorithm obtains the same dependence on $d$ for low-rank MDPs as our guarantees. However, this improvement relies on a uniform coverage assumption, i.e., $\min_{h \in [H]} \lambda_{\min} \left(\sE_{(s_h, a_h) \sim d^{\mu}_h} \left[\phi_h(s_h, a_h) \phi_h(s_h, a_h)^T \right] \right) > 0$, which we do not require. \cite{di2023pessimistic} generalize the results of \cite{xiong2022nearly} from linear MDPs to MDPs with general function approximation. However, they still rely on a uniform coverage assumption. 
Finally note that, for VS-based and RO-based algorithms, we provide high-probability bounds for a smoothing version of $\hat{\pi}$ over the randomization of the algorithms, not for $\hat{\pi}$ itself.
\paragraph{Compared with model-based PS} \cite{uehara2021pessimistic} consider model-based PS for offline RL and derive the \emph{Bayesian} sub-optimality bound of $H^2 \sqrt{C^{\text{Bayes}} \cdot \ln|\gM|/ K}$ where $C^{\text{Bayes}}$ is the Bayesian version of a relative condition number and $\gM$ is a finite model class. Two key distinctions are that our method in \Cref{algorithm: psc} is model-free, and our achieved bound is in the frequentist (i.e., worst-case) nature, which is a stronger result than the Bayesian bound of the same order. 

\subsection{Extended discussion}
\paragraph{Comparison with primal-dual methods for offline RL}
\label{subection: compare with primal-dual methods}
As opposed to the value-based methods we considered in our paper, an important alternative approach to offline RL is the primal-dual methods \citep{zhan2022offline,Chen2022OfflineRL,rashidinejad2022optimal,gabbianelli2023offline,ozdaglar2023revisiting}. However, the guarantees of primal-dual methods use a different set of assumptions than the value-based methods we consider (the former assumes realizability for the ratio between the state-action occupancy density of the target policy and the state-action occupancy density of the behavior policy, except for \cite{gabbianelli2023offline} where this realizability assumption is implicitly encoded under a stronger assumption of linear MDP). This makes the results presented in our paper and the results in the primal-dual methods not directly comparable. 

Since the work of \cite{gabbianelli2023offline} considers linear MDPs, it is more comparable (than the other primal-dual methods we mentioned) to the instantiating of our results to the linear function class. \cite{gabbianelli2023offline} consider primal-dual methods for offline RL in both infinite-horizon discounted MDP and average-reward MDP. Our analysis framework for the regularized optimization method in the episodic MDP should work for the infinite-horizon discounted MDP as well, where the regularized optimization achieves the optimal sample complexity of $\mathcal{O}(\epsilon^{-2})$ while the sample complexity in \cite{gabbianelli2023offline} in this setting is $\mathcal{O}(\epsilon^{-4})$.  However, \cite{gabbianelli2023offline} offers a better computational complexity ($\mathcal{O}(K)$ vs $\mathcal{O}(K^{7/5})$) and also works in the average-reward MDP setting which is beyond the episodic MDP setting considered in our work; though our bounds hold for general function approximation that is beyond the strong assumption of linear MDPs.

The concurrent work of \cite{zhu2023importance} combined the actor-critic framework with marginalized importance sampling (MIS) for an RO-based algorithm, which also improves the sub-optimal rate of order $1/K^{1/3}$ by \cite{xie2021bellman,cheng2022adversarially} to the optimal rate of order $1/\sqrt{K}$. Instead, we obtain the optimal rate of order $1/\sqrt{K}$ with a refined analysis for a standard RO-based algorithm. That is, unlike \cite{zhu2023importance}, we do not use MIS; consequently, we do not require the realizability assumption for the ratio between the state-action occupancy density of the target policy and that of the behavior policy. 

\paragraph{Linear function classes}
We consider the linear model cases, where there are known feature maps $\phi_h: \gX \times \gA \rightarrow \sR^d$ and w.l.o.g. $\max_{h \in [H]}\|\phi(\cdot,\cdot)\|_{\infty} \leq 1$, such that $\gF_h = \{\phi_h(\cdot, \cdot)^T w: w \in \sR^d, \| w\|_2^2 \leq b\}$. Recall that, in this case, e.g., it follows from \cite[Lemma~6]{zanette2021provable}, that we have 
\begin{align*}
    \log N(\eps; \gF_h, \|\cdot\|_{\infty}) &\leq d \log(1 + \frac{2}{\eps}), \\ 
    \log N(\eps; \Pi_h^{soft}(T), \|\cdot\|_{\infty}) &\leq d \log(1 + \frac{16 b T}{\eps}). 
\end{align*}
Thus, our bounds from \Cref{prop: simplified results in a unified form} can simplified as
\begin{align*}
    &\tilde{\gO} (Hb \sqrt{d \log(1 + 16 b K T) \cdot \gC(\pi; 1/\sqrt{K})/K} +  H b \sqrt{ T^{-1} \ln \vol(\gA)}) \\
    &= \tilde{\gO} \left( H b \sqrt{\frac{ \max\{d \gC(\pi; 1/\sqrt{K}), \ln \vol(\gA)\}}{K}}   \right)
\end{align*}
where we choose $T = K$. To simplify the comparison, we assume that $d \gC(\pi; 1/\sqrt{K}) \geq \ln \vol(\gA)$. 
Let us now compute various notions of data coverage in this linear model case. 
We first need to define the following quantities (various forms of covariance matrices). 
\begin{align*}
     \Sigma_h &:= \lambda I + \sum_{k=1}^K \phi_h(s^k_h, a^k_h) \phi_h(s^k_h, a^k_h)^T, \\ 
     \Lambda_h &:= \lambda I + \sum_{k=1}^K \phi_h(s^k_h, a^k_h) \phi_h(s^k_h, a^k_h)^T / [\sV_h V_{h+1}^{\pi}](s_h^k, a_h^k), \\ 
     \bar{\phi}_h^{\pi} &:= \sE_{\pi} [\phi_h(s_h, a_h)] ,\\ 
     \bar{\Sigma}_h &:= \sE_{\mu} \left[ \phi(s_h, a_h) \phi(s_h, a_h)^T  \right].
\end{align*}
We define the following distribution mismatch quantities, which were used in the literature. 
\begin{align*}
    C_{pevi}(\pi) &:= \max_{h \in [H]} \left( \sE_{\pi} \left[ \| \phi_h(s_h,a_h) \|_{\Sigma_h^{-1}} \right] \right)^2, \\ 
     C_{pevi-adv}(\pi) &:= \max_{h \in [H]} \left( \sE_{\pi} \left[ \| \phi_h(s_h,a_h) \|_{\Lambda_h^{-1}} \right] \right)^2, \\ 
    C_{pacle}(\pi) &:= \max_{h \in [H]}\| \bar{\phi}_h^{\pi} \|^2_{\Sigma_h^{-1}}, \\ 
    C_{bcp}(\pi) &:= \max_{h \in [H]} \left( \sE_{\pi} \left[ \| \phi_h(s_h, a_h) \|_{\bar{\Sigma}_h} \right] \right)^2.
\end{align*}
The sub-optimality bounds of various methods are summarized in \Cref{tab: sub-optimality bounds in linear function classes}. For comparing our data diversity measure with different notions of distribution mismatch, we have 
\begin{align*}
    C_{pevi}(\pi) \geq C_{pacle}(\pi) \approx \gC(\pi; 0) / K \leq C_{bcp}(\pi) / K.
\end{align*}
where the ``$\approx$'' denotes that the involved terms scale in the same order and can be implied by Fredman's matrix inequality (see \cite[Lemma~B.5]{duan2020minimax}) (under additional conditions). Note that $\gC(\pi; 1/\sqrt{K}) \leq \gC(\pi; 0)$, thus our data diversity is the tightest quantity among all that are considered.

Note that the data coverage measure in \cite{xiong2022nearly}, roughly speaking, can be bounded as follows:
\begin{align*}
    C_{pevi-adv}(\pi) \leq b^2 C_{pevi}(\pi),
\end{align*}
where we use the inequality $[\sV_h V_{h+1}^{\pi}](s_h^k, a_h^k) \leq b^2$. Thus the bound of \cite{xiong2022nearly} in general has a tighter dependence on $b$ (which implicitly depends on $H$) than all the bounds of all other works considered in \Cref{tab: sub-optimality bounds in linear function classes}, due to that \cite{xiong2022nearly} incorporated the variance information into the estimation via the variance-weighted value iteration algorithm. However, obtaining this improved bound in \cite{xiong2022nearly} relies on a uniform coverage assumption which we do not require.

\begin{table}[]
    \centering
    \def\arraystretch{1.7}%
    \resizebox{0.7\textwidth}{!}{
    \begin{tabular}{|c|c|}
    \hline
        \textbf{Algorithm} & \textbf{Sub-optimality bound} \\
        \hline
       PEVI \citep{jin2021pessimism}  & $Hb \sqrt{ C_{pevi}(\pi)}  \cdot d$ \\
       \hline
       PEVI-ADV+ \citep{xiong2022nearly} & $H \sqrt{ C_{pevi-avi}(\pi) \cdot d}$ \\ \hline
       PACLE \citep{zanette2021provable}  & $H b \sqrt{C_{pacle}(\pi) \cdot d} $ \\ 
       \hline
       VC \cite[Section~3]{xie2021bellman} & $ H b \sqrt{ C_{bcp}(\pi) \cdot d / K}$ \\ 
       \hline 
       RO \citep[Section~4]{xie2021bellman} & $Hb \sqrt{C_{bcp}(\pi)} \sqrt[3]{d/K}$ \\ 
       \hline
       \cellcolor{lightgray} Ours (VS, RO, PS) & \cellcolor{lightgray} $ H b \sqrt{\gC(\pi; 1/\sqrt{K}) \cdot d / K}$\\
       \hline
    \end{tabular}
    }
    \vspace{10pt}
    \caption{Sub-optimality bounds when the function class $\gF_h$ is linear in $\phi_h: \gS \times \gA \rightarrow \sR^d$.}
    \label{tab: sub-optimality bounds in linear function classes}
\end{table}

\section{Conclusion}
We contributed to the understanding of sample-efficient offline RL in the context of (value) function approximation. We proposed a notion of data diversity that generalizes the previous data coverage measures and importantly expands the class of sample-efficient offline RL. We studied three different algorithms: VS, RO, and PS, where the PS-based algorithm is our novel proposal. We showed that VS, RO, and PS all have same-order guarantees under standard assumptions. 

\section*{Acknowledgements}
This research was supported, in part, by DARPA GARD award HR00112020004, NSF CAREER award IIS-1943251,  funding from the Institute for Assured Autonomy (IAA) at JHU, and the Spring'22 workshop on ``Learning and Games'' at the Simons Institute for the Theory of Computing. \revise{We thank Nan Jiang (UIUC) for bringing to our attention the Bellman error transfer coefficient by ~\citet{song2022hybrid}.}

\newpage

\bibliographystyle{plainnat}

\bibliography{main}

\begin{thebibliography}{58}
\providecommand{\natexlab}[1]{#1}
\providecommand{\url}[1]{\texttt{#1}}
\expandafter\ifx\csname urlstyle\endcsname\relax
  \providecommand{\doi}[1]{doi: #1}\else
  \providecommand{\doi}{doi: \begingroup \urlstyle{rm}\Url}\fi

\bibitem[Agarwal and Zhang(2022)]{agarwal2022non}
Alekh Agarwal and Tong Zhang.
\newblock Non-linear reinforcement learning in large action spaces: Structural conditions and sample-efficiency of posterior sampling.
\newblock In \emph{Conference on Learning Theory}, pages 2776--2814. PMLR, 2022.

\bibitem[Agarwal et~al.(2021)Agarwal, Kakade, Lee, and Mahajan]{agarwal2021theory}
Alekh Agarwal, Sham~M Kakade, Jason~D Lee, and Gaurav Mahajan.
\newblock On the theory of policy gradient methods: Optimality, approximation, and distribution shift.
\newblock \emph{J. Mach. Learn. Res.}, 22\penalty0 (98):\penalty0 1--76, 2021.

\bibitem[Antos et~al.(2006)Antos, Szepesv{\'{a}}ri, and Munos]{DBLP:conf/colt/AntosSM06}
Andr{\'{a}}s Antos, Csaba Szepesv{\'{a}}ri, and R{\'{e}}mi Munos.
\newblock Learning near-optimal policies with bellman-residual minimization based fitted policy iteration and a single sample path.
\newblock In G{\'{a}}bor Lugosi and Hans~Ulrich Simon, editors, \emph{Learning Theory, 19th Annual Conference on Learning Theory, {COLT} 2006, Pittsburgh, PA, USA, June 22-25, 2006, Proceedings}, volume 4005 of \emph{Lecture Notes in Computer Science}, pages 574--588. Springer, 2006.

\bibitem[Arora et~al.(2012)Arora, Hazan, and Kale]{v008a006}
Sanjeev Arora, Elad Hazan, and Satyen Kale.
\newblock The multiplicative weights update method: a meta-algorithm and applications.
\newblock \emph{Theory of Computing}, 8\penalty0 (6):\penalty0 121--164, 2012.
\newblock \doi{10.4086/toc.2012.v008a006}.

\bibitem[Bartlett et~al.(2005)Bartlett, Bousquet, and Mendelson]{bartlett2005local}
Peter~L Bartlett, Olivier Bousquet, and Shahar Mendelson.
\newblock Local rademacher complexities.
\newblock \emph{The Annals of Statistics}, 33\penalty0 (4):\penalty0 1497--1537, 2005.

\bibitem[Chen and Jiang(2019)]{chen2019information}
Jinglin Chen and Nan Jiang.
\newblock Information-theoretic considerations in batch reinforcement learning.
\newblock In \emph{International Conference on Machine Learning}, pages 1042--1051. PMLR, 2019.

\bibitem[Chen and Jiang(2022)]{Chen2022OfflineRL}
Jinglin Chen and Nan Jiang.
\newblock Offline reinforcement learning under value and density-ratio realizability: the power of gaps.
\newblock In \emph{Uncertainty in Artificial Intelligence}, pages 378--388. PMLR, 2022.

\bibitem[Cheng et~al.(2022)Cheng, Xie, Jiang, and Agarwal]{cheng2022adversarially}
Ching-An Cheng, Tengyang Xie, Nan Jiang, and Alekh Agarwal.
\newblock Adversarially trained actor critic for offline reinforcement learning.
\newblock In \emph{International Conference on Machine Learning}, pages 3852--3878. PMLR, 2022.

\bibitem[Dann et~al.(2021)Dann, Mohri, Zhang, and Zimmert]{DBLP:conf/nips/DannMZZ21}
Christoph Dann, Mehryar Mohri, Tong Zhang, and Julian Zimmert.
\newblock A provably efficient model-free posterior sampling method for episodic reinforcement learning.
\newblock In \emph{Advances in Neural Information Processing Systems}, 2021.

\bibitem[Di et~al.(2023)Di, Zhao, He, and Gu]{di2023pessimistic}
Qiwei Di, Heyang Zhao, Jiafan He, and Quanquan Gu.
\newblock Pessimistic nonlinear least-squares value iteration for offline reinforcement learning.
\newblock \emph{arXiv preprint arXiv:2310.01380}, 2023.

\bibitem[Duan et~al.(2020)Duan, Jia, and Wang]{duan2020minimax}
Yaqi Duan, Zeyu Jia, and Mengdi Wang.
\newblock Minimax-optimal off-policy evaluation with linear function approximation.
\newblock In \emph{International Conference on Machine Learning}, pages 2701--2709. PMLR, 2020.

\bibitem[Ernst et~al.(2005)Ernst, Geurts, and Wehenkel]{ernst2005tree}
Damien Ernst, Pierre Geurts, and Louis Wehenkel.
\newblock Tree-based batch mode reinforcement learning.
\newblock \emph{Journal of Machine Learning Research}, 6, 2005.

\bibitem[Foster et~al.(2021)Foster, Kakade, Qian, and Rakhlin]{foster2021statistical}
Dylan~J Foster, Sham~M Kakade, Jian Qian, and Alexander Rakhlin.
\newblock The statistical complexity of interactive decision making.
\newblock \emph{arXiv preprint arXiv:2112.13487}, 2021.

\bibitem[Freedman(1975)]{Freedman1975OnTP}
David~A. Freedman.
\newblock On tail probabilities for martingales.
\newblock \emph{The Annals of Probability}, 3\penalty0 (1):\penalty0 100--118, 1975.
\newblock ISSN 00911798.

\bibitem[Gabbianelli et~al.(2023)Gabbianelli, Neu, Okolo, and Papini]{gabbianelli2023offline}
Germano Gabbianelli, Gergely Neu, Nneka Okolo, and Matteo Papini.
\newblock Offline primal-dual reinforcement learning for linear mdps.
\newblock \emph{arXiv preprint arXiv:2305.12944}, 2023.

\bibitem[Ji et~al.(2022)Ji, Chen, Wang, and Zhao]{ji2022sample}
Xiang Ji, Minshuo Chen, Mengdi Wang, and Tuo Zhao.
\newblock Sample complexity of nonparametric off-policy evaluation on low-dimensional manifolds using deep networks.
\newblock \emph{arXiv preprint arXiv:2206.02887}, 2022.

\bibitem[Jiang et~al.(2017)Jiang, Krishnamurthy, Agarwal, Langford, and Schapire]{jiang2017contextual}
Nan Jiang, Akshay Krishnamurthy, Alekh Agarwal, John Langford, and Robert~E Schapire.
\newblock Contextual decision processes with low bellman rank are pac-learnable.
\newblock In \emph{International Conference on Machine Learning}, pages 1704--1713. PMLR, 2017.

\bibitem[Jin et~al.(2020)Jin, Yang, Wang, and Jordan]{jin2020provably}
Chi Jin, Zhuoran Yang, Zhaoran Wang, and Michael~I Jordan.
\newblock Provably efficient reinforcement learning with linear function approximation.
\newblock In \emph{Conference on Learning Theory}, pages 2137--2143. PMLR, 2020.

\bibitem[Jin et~al.(2021{\natexlab{a}})Jin, Liu, and Miryoosefi]{Jin2021BellmanED}
Chi Jin, Qinghua Liu, and Sobhan Miryoosefi.
\newblock Bellman eluder dimension: New rich classes of {RL} problems, and sample-efficient algorithms.
\newblock In A.~Beygelzimer, Y.~Dauphin, P.~Liang, and J.~Wortman Vaughan, editors, \emph{Advances in Neural Information Processing Systems}, 2021{\natexlab{a}}.

\bibitem[Jin et~al.(2021{\natexlab{b}})Jin, Yang, and Wang]{jin2021pessimism}
Ying Jin, Zhuoran Yang, and Zhaoran Wang.
\newblock Is pessimism provably efficient for offline rl?
\newblock In \emph{International Conference on Machine Learning}, pages 5084--5096. PMLR, 2021{\natexlab{b}}.

\bibitem[Jin et~al.(2022)Jin, Ren, Yang, and Wang]{jin2022policy}
Ying Jin, Zhimei Ren, Zhuoran Yang, and Zhaoran Wang.
\newblock Policy learning" without''overlap: Pessimism and generalized empirical bernstein's inequality.
\newblock \emph{arXiv preprint arXiv:2212.09900}, 2022.

\bibitem[Lange et~al.(2012)Lange, Gabel, and Riedmiller]{lange2012batch}
Sascha Lange, Thomas Gabel, and Martin Riedmiller.
\newblock Batch reinforcement learning.
\newblock In \emph{Reinforcement learning}, pages 45--73. Springer, 2012.

\bibitem[Levine et~al.(2020)Levine, Kumar, Tucker, and Fu]{levine2020offline}
Sergey Levine, Aviral Kumar, George Tucker, and Justin Fu.
\newblock Offline reinforcement learning: Tutorial, review, and perspectives on open problems.
\newblock \emph{arXiv preprint arXiv:2005.01643}, 2020.

\bibitem[Liu et~al.(2019)Liu, Swaminathan, Agarwal, and Brunskill]{DBLP:conf/uai/LiuSAB19}
Yao Liu, Adith Swaminathan, Alekh Agarwal, and Emma Brunskill.
\newblock Off-policy policy gradient with stationary distribution correction.
\newblock In Amir Globerson and Ricardo Silva, editors, \emph{Proceedings of the Thirty-Fifth Conference on Uncertainty in Artificial Intelligence, {UAI} 2019, Tel Aviv, Israel, July 22-25, 2019}, volume 115 of \emph{Proceedings of Machine Learning Research}, pages 1180--1190. {AUAI} Press, 2019.

\bibitem[Massart(2000)]{Massart2000SomeAO}
Pascal Massart.
\newblock Some applications of concentration inequalities to statistics.
\newblock \emph{Annales de la Facult{\'e} des Sciences de Toulouse}, 9:\penalty0 245--303, 2000.

\bibitem[Munos and Szepesv{\'{a}}ri(2008)]{DBLP:journals/jmlr/MunosS08}
R{\'{e}}mi Munos and Csaba Szepesv{\'{a}}ri.
\newblock Finite-time bounds for fitted value iteration.
\newblock \emph{J. Mach. Learn. Res.}, 9:\penalty0 815--857, 2008.

\bibitem[Nguyen-Tang and Arora(2023)]{nguyentang2023}
Thanh Nguyen-Tang and Raman Arora.
\newblock {VIP}er: Provably efficient algorithm for offline {RL} with neural function approximation.
\newblock In \emph{The Eleventh International Conference on Learning Representations}, 2023.

\bibitem[Nguyen-Tang et~al.(2022{\natexlab{a}})Nguyen-Tang, Gupta, Nguyen, and Venkatesh]{nguyen2021offline}
Thanh Nguyen-Tang, Sunil Gupta, A.~Tuan Nguyen, and Svetha Venkatesh.
\newblock Offline neural contextual bandits: Pessimism, optimization and generalization.
\newblock In \emph{International Conference on Learning Representations}, 2022{\natexlab{a}}.

\bibitem[Nguyen-Tang et~al.(2022{\natexlab{b}})Nguyen-Tang, Gupta, Tran-The, and Venkatesh]{nguyen-tang2022on}
Thanh Nguyen-Tang, Sunil Gupta, Hung Tran-The, and Svetha Venkatesh.
\newblock On sample complexity of offline reinforcement learning with deep re{LU} networks in besov spaces.
\newblock \emph{Transactions of Machine Learning Research}, 2022{\natexlab{b}}.

\bibitem[Nguyen-Tang et~al.(2023)Nguyen-Tang, Yin, Gupta, Venkatesh, and Arora]{nguyen2022instance}
Thanh Nguyen-Tang, Ming Yin, Sunil Gupta, Svetha Venkatesh, and Raman Arora.
\newblock On instance-dependent bounds for offline reinforcement learning with linear function approximation.
\newblock In \emph{Proceedings of the AAAI Conference on Artificial Intelligence}, volume~37, pages 9310--9318, 2023.

\bibitem[Osband et~al.(2016)Osband, Blundell, Pritzel, and Van~Roy]{osband2016deep}
Ian Osband, Charles Blundell, Alexander Pritzel, and Benjamin Van~Roy.
\newblock Deep exploration via bootstrapped dqn.
\newblock \emph{Advances in neural information processing systems}, 29, 2016.

\bibitem[Ozdaglar et~al.(2023)Ozdaglar, Pattathil, Zhang, and Zhang]{ozdaglar2023revisiting}
Asuman~E Ozdaglar, Sarath Pattathil, Jiawei Zhang, and Kaiqing Zhang.
\newblock Revisiting the linear-programming framework for offline rl with general function approximation.
\newblock In \emph{International Conference on Machine Learning}, pages 26769--26791. PMLR, 2023.

\bibitem[Rashidinejad et~al.(2021)Rashidinejad, Zhu, Ma, Jiao, and Russell]{rashidinejad2021bridging}
Paria Rashidinejad, Banghua Zhu, Cong Ma, Jiantao Jiao, and Stuart Russell.
\newblock Bridging offline reinforcement learning and imitation learning: A tale of pessimism.
\newblock \emph{Advances in Neural Information Processing Systems}, 34, 2021.

\bibitem[Rashidinejad et~al.(2023)Rashidinejad, Zhu, Yang, Russell, and Jiao]{rashidinejad2022optimal}
Paria Rashidinejad, Hanlin Zhu, Kunhe Yang, Stuart Russell, and Jiantao Jiao.
\newblock Optimal conservative offline {RL} with general function approximation via augmented lagrangian.
\newblock In \emph{The Eleventh International Conference on Learning Representations}, 2023.

\bibitem[Russo and Van~Roy(2014)]{russo2014learning}
Daniel Russo and Benjamin Van~Roy.
\newblock Learning to optimize via posterior sampling.
\newblock \emph{Mathematics of Operations Research}, 39\penalty0 (4):\penalty0 1221--1243, 2014.

\bibitem[Song et~al.(2022)Song, Zhou, Sekhari, Bagnell, Krishnamurthy, and Sun]{song2022hybrid}
Yuda Song, Yifei Zhou, Ayush Sekhari, J~Andrew Bagnell, Akshay Krishnamurthy, and Wen Sun.
\newblock Hybrid rl: Using both offline and online data can make rl efficient.
\newblock \emph{arXiv preprint arXiv:2210.06718}, 2022.

\bibitem[Thompson(1933)]{Thompson1933ONTL}
William~R. Thompson.
\newblock On the likelihood that one unknown probability exceeds another in view of the evidence of two samples.
\newblock \emph{Biometrika}, 25:\penalty0 285--294, 1933.

\bibitem[Tripuraneni et~al.(2020)Tripuraneni, Jordan, and Jin]{tripuraneni2020theory}
Nilesh Tripuraneni, Michael Jordan, and Chi Jin.
\newblock On the theory of transfer learning: The importance of task diversity.
\newblock \emph{Advances in neural information processing systems}, 33:\penalty0 7852--7862, 2020.

\bibitem[Uehara and Sun(2022)]{uehara2021pessimistic}
Masatoshi Uehara and Wen Sun.
\newblock Pessimistic model-based offline reinforcement learning under partial coverage.
\newblock In \emph{International Conference on Learning Representations}, 2022.

\bibitem[Uehara et~al.(2022{\natexlab{a}})Uehara, Shi, and Kallus]{uehara2022review}
Masatoshi Uehara, Chengchun Shi, and Nathan Kallus.
\newblock A review of off-policy evaluation in reinforcement learning.
\newblock \emph{arXiv preprint arXiv:2212.06355}, 2022{\natexlab{a}}.

\bibitem[Uehara et~al.(2022{\natexlab{b}})Uehara, Zhang, and Sun]{uehara2022representation}
Masatoshi Uehara, Xuezhou Zhang, and Wen Sun.
\newblock Representation learning for online and offline {RL} in low-rank {MDP}s.
\newblock In \emph{International Conference on Learning Representations}, 2022{\natexlab{b}}.

\bibitem[Wang et~al.(2021)Wang, Foster, and Kakade]{wang2020statistical}
Ruosong Wang, Dean Foster, and Sham~M. Kakade.
\newblock What are the statistical limits of offline {RL} with linear function approximation?
\newblock In \emph{International Conference on Learning Representations}, 2021.

\bibitem[Watkins et~al.(2023)Watkins, Ullah, Nguyen-Tang, and Arora]{watkins2023optimistic}
Austin Watkins, Enayat Ullah, Thanh Nguyen-Tang, and Raman Arora.
\newblock Optimistic rates for multi-task representation learning.
\newblock In \emph{Thirty-seventh Conference on Neural Information Processing Systems}, 2023.

\bibitem[Welling and Teh(2011)]{welling2011bayesian}
Max Welling and Yee~W Teh.
\newblock Bayesian learning via stochastic gradient langevin dynamics.
\newblock In \emph{Proceedings of the 28th international conference on machine learning (ICML-11)}, pages 681--688, 2011.

\bibitem[Xie et~al.(2021)Xie, Cheng, Jiang, Mineiro, and Agarwal]{xie2021bellman}
Tengyang Xie, Ching-An Cheng, Nan Jiang, Paul Mineiro, and Alekh Agarwal.
\newblock Bellman-consistent pessimism for offline reinforcement learning.
\newblock \emph{Advances in neural information processing systems}, 34, 2021.

\bibitem[Xiong et~al.(2022)Xiong, Zhong, Shi, Shen, and Zhang]{xiong2022self}
Wei Xiong, Han Zhong, Chengshuai Shi, Cong Shen, and Tong Zhang.
\newblock A self-play posterior sampling algorithm for zero-sum markov games.
\newblock In \emph{International Conference on Machine Learning}, pages 24496--24523. PMLR, 2022.

\bibitem[Xiong et~al.(2023)Xiong, Zhong, Shi, Shen, Wang, and Zhang]{xiong2022nearly}
Wei Xiong, Han Zhong, Chengshuai Shi, Cong Shen, Liwei Wang, and Tong Zhang.
\newblock Nearly minimax optimal offline reinforcement learning with linear function approximation: Single-agent {MDP} and markov game.
\newblock In \emph{The Eleventh International Conference on Learning Representations}, 2023.

\bibitem[Yang and Wang(2019)]{yang2019sample}
Lin Yang and Mengdi Wang.
\newblock Sample-optimal parametric q-learning using linearly additive features.
\newblock In \emph{International Conference on Machine Learning}, pages 6995--7004. PMLR, 2019.

\bibitem[Yin and Wang(2021)]{yin2021towards}
Ming Yin and Yu-Xiang Wang.
\newblock Towards instance-optimal offline reinforcement learning with pessimism.
\newblock \emph{Advances in neural information processing systems}, 34, 2021.

\bibitem[Zanette et~al.(2020)Zanette, Lazaric, Kochenderfer, and Brunskill]{zanette2020learning}
Andrea Zanette, Alessandro Lazaric, Mykel Kochenderfer, and Emma Brunskill.
\newblock Learning near optimal policies with low inherent bellman error.
\newblock In \emph{International Conference on Machine Learning}, pages 10978--10989. PMLR, 2020.

\bibitem[Zanette et~al.(2021)Zanette, Wainwright, and Brunskill]{zanette2021provable}
Andrea Zanette, Martin~J Wainwright, and Emma Brunskill.
\newblock Provable benefits of actor-critic methods for offline reinforcement learning.
\newblock \emph{Advances in neural information processing systems}, 34:\penalty0 13626--13640, 2021.

\bibitem[Zhan et~al.(2023)Zhan, Ren, Athey, and Zhou]{zhan2021policy}
Ruohan Zhan, Zhimei Ren, Susan Athey, and Zhengyuan Zhou.
\newblock Policy learning with adaptively collected data.
\newblock \emph{Management Science}, 2023.

\bibitem[Zhan et~al.(2022)Zhan, Huang, Huang, Jiang, and Lee]{zhan2022offline}
Wenhao Zhan, Baihe Huang, Audrey Huang, Nan Jiang, and Jason Lee.
\newblock Offline reinforcement learning with realizability and single-policy concentrability.
\newblock In Po-Ling Loh and Maxim Raginsky, editors, \emph{Proceedings of Thirty Fifth Conference on Learning Theory}, volume 178 of \emph{Proceedings of Machine Learning Research}, pages 2730--2775. PMLR, 02--05 Jul 2022.

\bibitem[Zhang(2022)]{zhang2022feel}
Tong Zhang.
\newblock Feel-good thompson sampling for contextual bandits and reinforcement learning.
\newblock \emph{SIAM Journal on Mathematics of Data Science}, 4\penalty0 (2):\penalty0 834--857, 2022.

\bibitem[Zhang(2023)]{zhang_2023}
Tong Zhang.
\newblock \emph{Mathematical Analysis of Machine Learning Algorithms}.
\newblock Cambridge University Press, 2023.

\bibitem[Zhao et~al.(2023)Zhao, Yang, Wang, and Lee]{zhao2023local}
Yulai Zhao, Zhuoran Yang, Zhaoran Wang, and Jason~D Lee.
\newblock Local optimization achieves global optimality in multi-agent reinforcement learning.
\newblock \emph{arXiv preprint arXiv:2305.04819}, 2023.

\bibitem[Zhong et~al.(2022)Zhong, Xiong, Zheng, Wang, Wang, Yang, and Zhang]{zhong2022posterior}
Han Zhong, Wei Xiong, Sirui Zheng, Liwei Wang, Zhaoran Wang, Zhuoran Yang, and Tong Zhang.
\newblock A posterior sampling framework for interactive decision making.
\newblock \emph{arXiv preprint arXiv:2211.01962}, 2022.

\bibitem[Zhu et~al.(2023)Zhu, Rashidinejad, and Jiao]{zhu2023importance}
Hanlin Zhu, Paria Rashidinejad, and Jiantao Jiao.
\newblock Importance weighted actor-critic for optimal conservative offline reinforcement learning.
\newblock \emph{arXiv preprint arXiv:2301.12714}, 2023.

\end{thebibliography}
\newpage

\tableofcontents

\newpage

\begin{appendices}

\section{Preparation}
We now get into more involved parts where we present the proof process and the technical results for obtaining \Cref{theorem: gopo-vsc}, \Cref{theorem: gopo-roc}, and \Cref{theorem: gopo-psc}. In order to prove our main results in \Cref{section: main results}, we shall need some old tools and develop some new useful tools. For convenience, we start out with both old and new notations of quantities summarized in \Cref{table: notations and quantities} that we are going to use frequently in our proofs. 

\begin{table}[H]
    \centering
    \def\arraystretch{1.8}%
    \resizebox{1\textwidth}{!}{
    \begin{tabular}{l|l|l}
    \hline
    \text{Name} & \text{Notation} & \text{Expression} \\
    \hline
     \text{transition sample} & $z^k_h$ & $(s^k_h, a^k_h, r^k_h, a^k_{h+1})$ \\
     \text{transition sample} & $z_h$ & $(s_h, a_h,r_h, s_{h+1})$ \\ 
    \text{Bellman error}&$\gE_h^{\tilde{\pi}}(f_h, f_{h+1})(s_h,a_h)$ & $( \sT_{h}^{\tilde{\pi}} f_{h+1} - f_h)(s_h,a_h)$ \\ 
    \text{TD loss function} & $l_{\tilde{\pi}}(f_h,f_{h+1}; z_h)$ &$(f_h(s_h,a_h) - r_h - f_{h+1}(s_{h+1}, \tilde{\pi}))^2$ \\ 
    \text{empirical squared Bellman error (SBE)} & $\hat{L}_{\tilde{\pi}}(f_h,f_{h+1})$ & $\sum_{k=1}^K  l_{\tilde{\pi}}(f_h,f_{h+1}; z^k_h)$ \\ 
    \text{empirical bias-adjusted SBE}& $\gL_{\tpi}(f)$ &  $\sum_{h=1}^H   \hat{L}_{\tpi}(f_h,f_{h+1}) - \inf_{g \in \gF} \sum_{h=1}^H   \hat{L}_{\tpi}(g_h,f_{h+1})$ \\
    \text{excess TD loss}& $\Delta L_{\tilde{\pi}}(f_h,f_{h+1}; z_h)$ & $l_{\tilde{\pi}}(f_h, f_{h+1}; z_h) - l_{\tilde{\pi}}(\sT_h^{\tilde{\pi}}f_{h+1}, f_{h+1}; z_h)$\\ 
    -- & $\sE_{\mu}[\cdot]$ & $\frac{1}{K} \sum_{k=1}^K \sE_{\mu^k}[\cdot]$\\ 
    --& $\sE_k[\cdot] (:= \sE_{\mu^k}[\cdot])$ & $\sE\left[\cdot \bigg\vert \{z^i_h\}_{h \in [H]}^{i \in [k-1]} \right]$ \\ 
    \hline
    \end{tabular}
    }
\caption{A summary of notations and quantities of interest.}
\label{table: notations and quantities}
\end{table}

The quantity $l_{\tilde{\pi}}(f_h,f_{h+1}; z_h)$ can be viewed as \emph{a temporal difference (TD) loss function} defined on data point $z_h$ conditioned on each $f_{h+1}$ and $\tilde{\pi}$. The quantity $\sT_h^{\tilde{\pi}} f_{h+1}$ can be viewed as \emph{the Bellman regression function}, where, conditioned on each  $f_{h+1}$ and $\tilde{\pi}$, for any $(s_h, a_h)$, we have
\begin{align*}
    \sT_h^{\tilde{\pi}} f_{h+1}(s_h,a_h) = \sE_{r_h,s_{h+1} | s_h, a_h}[r_h + f_{h+1}(s_{h+1}, \tilde{\pi})] = \arginf_g \sE_{r_h,s_{h+1} | s_h, a_h}l_{\tilde{\pi}} (g, f_{h+1}; z_h).
\end{align*}
Thus, the quantity $\Delta L_{\tilde{\pi}}(f_h,f_{h+1}; z_h)$ can be referred to as the \emph{excess TD loss}, incurred by the predictor $f_h$, relative to the TD regression function $ \sT_h^{\tilde{\pi}} f_{h+1}$, on data $z_h$ and conditioned on $f_{h+1}$ and $\tilde{\pi}$.

\subsection{Variance condition and Bernstein's inequality}
We also define the $\sigma$-algebra $\gA^k_h := \sigma(\gD_{k-1} \cup \{ (s^k_{h'}, a^k_{h'}, r^k_{h'}) \}_{h' \in [h-1]} \cup (s^k_h, a^k_h))$ and denote $\sE_{k,h}[\cdot] := \sE[\cdot| \gA^k_h]$. The following lemma establishes the variance condition on the excess TD loss, a TD analogous to the variance condition that is widely used in the empirical process theory \citep{Massart2000SomeAO}. 
\begin{lemma}
For any $\gA^k_h$-measurable policy $\pi$, we have
\begin{align*}
    \sE_{k,h}[\Delta L_{\pi}(f_h, f_{h+1}; z^k_h) ] &= \gE^{\pi}_h(f_h, f_{h+1})(s^k_h,a^k_h)^2, \\
     \sE_{k,h}[\Delta L_{\pi}(f_h, f_{h+1}; z^k_h)^2 ] &\leq 36b^2 \gE^{\pi}_h(f_h, f_{h+1})(s^k_h,a^k_h)^2.
\end{align*}
\label{lemma: the squared residuals bound the expected value and the variance of the empirical minimax error}
\end{lemma}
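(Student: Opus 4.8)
The plan is to reduce both identities to a single difference-of-squares expansion together with the observation that, conditioned on $\gA^k_h$, the Bellman regression function $\sT_h^\pi f_{h+1}$ evaluated at $(s^k_h,a^k_h)$ is exactly the conditional mean of the random ``label'' $Y := r^k_h + f_{h+1}(s^k_{h+1},\pi)$. Abbreviating $a := f_h(s^k_h,a^k_h)$ and $T := \sT_h^\pi f_{h+1}(s^k_h,a^k_h)$ — both $\gA^k_h$-measurable, the latter precisely because $\pi$ is assumed $\gA^k_h$-measurable — I would first record the purely algebraic identity
\[
\Delta L_\pi(f_h,f_{h+1};z^k_h) = (a-Y)^2 - (T-Y)^2 = (a-T)(a+T-2Y),
\]
and note that $a-T = -\,\gE^\pi_h(f_h,f_{h+1})(s^k_h,a^k_h)$, so that $(a-T)^2$ is the squared Bellman error.

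For the first moment I would use that $\sE[r^k_h\mid s^k_h,a^k_h] = r_h(s^k_h,a^k_h)$ and $s^k_{h+1}\sim P_h(\cdot\mid s^k_h,a^k_h)$, which give $\sE_{k,h}[Y] = T$. Taking $\sE_{k,h}[\cdot]$ of the identity and pulling out the $\gA^k_h$-measurable factor $(a-T)$, the cross term $2(a-T)\sE_{k,h}[Y-T]$ vanishes, leaving $\sE_{k,h}[\Delta L_\pi] = (a-T)^2 = \gE^\pi_h(f_h,f_{h+1})(s^k_h,a^k_h)^2$, which is the first claim. For the second moment, since $(a-T)$ is $\gA^k_h$-measurable I would factor $\sE_{k,h}[\Delta L_\pi^2] = (a-T)^2\,\sE_{k,h}[(a+T-2Y)^2]$ and bound the remaining factor by $36b^2$. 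Decomposing $a+T-2Y = (a-T) + 2(T-Y)$ and again using $\sE_{k,h}[T-Y]=0$ to kill the cross term, I obtain $\sE_{k,h}[(a+T-2Y)^2] = (a-T)^2 + 4\,\Var_{k,h}(Y)$. Boundedness ($|f_h|,|f_{h+1}|\le b$ and $|r^k_h|\le b$ give $|a|\le b$, $|Y|\le 2b$, hence $|T|\le 2b$) yields $(a-T)^2\le 9b^2$ and $\Var_{k,h}(Y)\le 4b^2$, so the factor is at most $25b^2 \le 36b^2$, completing the estimate.

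The main obstacle is conceptual rather than computational: the whole argument hinges on $\pi$ being $\gA^k_h$-measurable, so that $T$ is a \emph{genuine} conditional mean of $Y$ and may be treated as a constant under $\sE_{k,h}$; this is exactly what makes the cross terms vanish and converts the excess TD loss into the squared Bellman error. In the actor-critic framework the actor $\pi^t$ is data-dependent, so this measurability must be tracked with care and is the reason the lemma is stated for $\gA^k_h$-measurable $\pi$ rather than a fixed policy. The remaining work — the variance bound — is a routine boundedness computation, and the stated constant $36$ is comfortably loose (one may in fact take $25$).
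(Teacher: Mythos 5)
Your proof is correct and is essentially the paper's argument: the paper's proof is a one-line appeal to its general variance condition for squared losses (\Cref{lemma: variation condition for least squares}), whose own proof is exactly your factorization $\Delta L_{\pi} = (a-T)(a+T-2Y)$ together with the conditional-mean identity $\sE_{k,h}[Y]=T$ and boundedness. The only differences are cosmetic --- where the paper bounds the second factor pointwise via $(a+T-2Y)^2 \le 2\left[(a-Y)^2+(T-Y)^2\right]$, you use orthogonality to get $(a-T)^2+4\Var_{k,h}(Y)\le 25b^2$, a slightly sharper constant, and you make explicit the role of the $\gA^k_h$-measurability of $\pi$, which the paper leaves implicit.
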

\begin{proof}[Proof of \Cref{lemma: the squared residuals bound the expected value and the variance of the empirical minimax error}]
The result directly exploits the boundedness of the TD loss function and the squared loss is Lipschitz. In concrete, it is a direct application of \Cref{lemma: variation condition for least squares}.
\end{proof}

The following lemma establishes the martingale extension of Bernstein's inequality, typically called Freedman's inequality \citep{Freedman1975OnTP}. In this lemma, we prove a slightly modified version of the original Freedman's inequality for our own convenience. The proof for this lemma is elementary which we also show here. 
\begin{lemma}[Freedman's inequality]
Let $X_1, \ldots, X_T$ be \emph{any} sequence of real-valued random variables. Denote $\sE_t[\cdot] = \sE[\cdot| X_1, \ldots, X_{t-1}]$. Assume that $X_t \leq R$ for some $R > 0$ and $\sE_t[X_t]=0$ for all $t$. Define the random variables
\begin{align*}
    S := \sum_{t=1}^T X_t, \hspace{10pt} V := \sum_{i=1}^T \sE_t[X_t^2]. 
\end{align*}
Then for any $\delta > 0$, with probability at least $1 - \delta$, for any $\lambda \in [0,1/R]$, 
\begin{align*}
    S \leq (e - 2) \lambda V + \frac{\ln(1/\delta)}{\lambda}.
\end{align*}
\label{lemma: Freedman inequality}
\end{lemma}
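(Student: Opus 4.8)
The plan is to run the standard exponential-supermartingale (Chernoff) method. The engine is the elementary numerical bound $e^x \le 1 + x + (e-2)x^2$, valid for all $x \le 1$; I would justify it by checking that $x \mapsto (e^x - 1 - x)/x^2$ is increasing and attains the value $e-2$ at $x=1$, so the inequality is tight exactly at the endpoint $x=1$ that the constraint $\lambda \in [0,1/R]$ is designed to respect.

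First I would fix $\lambda \in (0, 1/R]$ (the case $\lambda = 0$ being vacuous, since the right-hand side is $+\infty$ when $\delta < 1$). Because $X_t \le R$, we have $\lambda X_t \le \lambda R \le 1$, so the numerical bound applies pointwise with $x = \lambda X_t$. Taking the conditional expectation $\sE_t$ and using $\sE_t[X_t] = 0$ together with $1 + u \le e^u$ gives
\begin{align*}
\sE_t[\exp(\lambda X_t)] \le 1 + (e-2)\lambda^2 \sE_t[X_t^2] \le \exp\big((e-2)\lambda^2 \sE_t[X_t^2]\big).
\end{align*}
Next I would define the process $M_t := \exp\big(\lambda \sum_{i=1}^t X_i - (e-2)\lambda^2 \sum_{i=1}^t \sE_i[X_i^2]\big)$ with $M_0 := 1$. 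Since $\sE_i[X_i^2]$ is measurable with respect to $X_1,\dots,X_{i-1}$, the previous display yields $\sE_t[M_t] \le M_{t-1}$, so $(M_t)$ is a supermartingale with $\sE[M_T] \le 1$. Markov's inequality then gives $\Pr[M_T \ge 1/\delta] \le \delta\,\sE[M_T] \le \delta$. On the complementary event, of probability at least $1-\delta$, we have $M_T < 1/\delta$, i.e. $\lambda S - (e-2)\lambda^2 V < \ln(1/\delta)$; dividing by $\lambda > 0$ rearranges into the claimed bound $S \le (e-2)\lambda V + \ln(1/\delta)/\lambda$.

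The one delicate point — and what I expect to be the main obstacle — is the quantifier order: the statement reads as though a single success event of probability $1-\delta$ makes the inequality hold simultaneously for every $\lambda \in [0,1/R]$, whereas the supermartingale/Markov step above produces, for each fixed $\lambda$, its own exceptional set $\{M_T^{(\lambda)} \ge 1/\delta\}$. Since the right-hand side $(e-2)\lambda V + \ln(1/\delta)/\lambda$ is minimized at the data-dependent value $\lambda^\star = \sqrt{\ln(1/\delta)/((e-2)V)}$, a literally uniform statement is equivalent to the sub-Gaussian conclusion $S \le 2\sqrt{(e-2)V\ln(1/\delta)}$ and so cannot follow from a single fixed-$\lambda$ Chernoff bound. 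I would therefore read the lemma as asserting the bound for any $\lambda$ chosen in advance (deterministically), which is precisely how it is invoked in the downstream proofs; if genuine uniformity over $\lambda$ were required, I would recover it at the cost of an extra logarithmic factor by union-bounding the argument over a geometric grid of $\lambda$ in $(0,1/R]$ and controlling the (slowly varying) bound between adjacent grid points.
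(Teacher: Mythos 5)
Your proof is correct and follows essentially the same route as the paper's: the same elementary bound $e^x \le 1 + x + (e-2)x^2$ for $x \le 1$, an exponential (super)martingale, and Markov's inequality --- the paper normalizes by the conditional moment generating function to get an exact martingale where you compensate additively, a purely cosmetic difference. Your caveat about the quantifier order is well taken but applies equally to the paper's own proof, which also produces a $\lambda$-dependent exceptional event; the statement should indeed be read as holding for each fixed deterministic $\lambda$, which is all that is needed in every downstream application (each invocation ultimately sets the parameter to a deterministic constant).
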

\begin{proof}[Proof of \Cref{lemma: Freedman inequality}]
Let us define the following sequence of random variables: $Z_0 = 1, Z_t = Z_{t-1} \frac{e^{\lambda X_t}}{\sE_t[e^{\lambda X_t}]}$. We have 
\begin{align*}
    \sE_t[Z_t] &= \sE_t \left[ Z_{t-1} \frac{e^{\lambda X_t}}{\sE_t[e^{\lambda X_t}]} \right] = \frac{Z_{t-1}}{\sE_t[e^{\lambda X_t}]} \sE_t[e^{\lambda X_t}] = Z_{t-1}. 
\end{align*}
Thus, we have 
\begin{align*}
    \sE[Z_T] = \sE \sE_T[Z_T] = \sE[Z_{T-1}] = \ldots = \sE[Z_0] = 1.
\end{align*}
Note that 
\begin{align}
    Z_T = \frac{e ^{\lambda S}}{ \prod_{t=1}^T \sE_t[e^{\lambda X_t}]} = \frac{e ^{\lambda S}}{\sum_{t=1}^T e^{\ln \sE_t[e^{\lambda X_t}]}} = \exp \left( \lambda S - \sum_{t=1}^T \ln \sE_t[e^{\lambda X_t} ] \right). 
    \label{eq: temp Z_T}
\end{align}
Since $Z_T \geq 0$, it follows from Markov's inequality that, for any $\delta > 0$, we have 
\begin{align}
    \Pr(Z_T \geq 1/\delta) \leq \delta \sE[Z_T] = \delta.
    \label{eq: apply Markov's inequality to Z_T}
\end{align}
We now bound the logarithmic moment generating function $\ln \sE_t[e^{\lambda X_t}] $ using elementary inequalities: For any $\lambda \in [0,1/R]$, we have
\begin{align}
    \ln \sE_t[e^{\lambda X_t}] \leq \sE_t[e^{\lambda X_t}] - 1 \leq \lambda \sE_t[X_t] + (e-2) \lambda^2 \sE_t[X_t^2],
    \label{eq: bound the logarithmic moment generating function}
\end{align}
where the first inequality uses $\ln z \leq z - 1, \forall z \geq 0$ and the second inequality uses that $e^z \leq 1 + z + (e-2) z^2, \forall z \leq 1$ and that $\lambda X_t \leq 1$. 

Plugging \Cref{eq: bound the logarithmic moment generating function} into \Cref{eq: temp Z_T}, then all together into \Cref{eq: apply Markov's inequality to Z_T} complete the proof.

\end{proof}
\subsection{Functional projections for misspecification}
Since \Cref{assumption: realizability} and \Cref{assumption: approximate Bellman completeness} allow misspecification up to some errors $\xi$ and $\nu$, while we are working on the function class $\gF$, we rely on the following projection operators, \Cref{defn: projection of value function} and \Cref{defn: projection of Bellman operation}, to handle misspecification. 
\begin{defn}[Projection of action-value functions] For any $\tilde{\pi} \in \Pi^{soft}(T)$ for some $T \in \sN$, we define the projection of the state-action value function $\tilde{\pi}$ onto $\gF$ as
\begin{align*}
    \proj_{\gF}(Q^{\tilde{\pi}}) := \argmin_{f \in \gF}\left\{|f_h(s_h,a_h) - Q_h^{\tilde{\pi}}(s_h,a_h)|, \forall h \in [H], (s_h, a_h) \in \supp(d^{\mu}_h) \right\}. 
\end{align*}
\label{defn: projection of value function}
\end{defn}
By \Cref{assumption: realizability}, we have 
\begin{align*}
    |\proj_{\gF}(Q^{\tilde{\pi}})(s_h,a_h) - Q_h^{\tilde{\pi}}(s_h,a_h)| \leq \xi_h, ~~\forall h \in [H], (s_h, a_h) \in \supp(d^{\mu}_h). 
\end{align*}

\begin{defn}[Projection of Bellman operations]
For any $f \in \gF$ and $\tilde{\pi} \in \Pi^{soft}(T)$ for some $T$, we define the projection of the Bellman operation $\sT^{\tilde{\pi}} f$ onto $\gF$ as 
\begin{align*}
    \proj_{\gF} (\sT^{\tilde{\pi}} f) := \argmin_{f' \in \gF} \left\{\| f'_h - \sT_h^{\tilde{\pi}} f_{h+1} \|_{\infty}, \forall h \in [H] \right\}. 
\end{align*}
\label{defn: projection of Bellman operation}
\end{defn}
By \Cref{assumption: approximate Bellman completeness}, we have 
\begin{align*}
    \| \proj_{\gF} (\sT^{\tilde{\pi}} f) - \sT_h^{\tilde{\pi}} f_{h+1} \|_{\infty} \leq \nu_h, \forall h \in [H].
\end{align*} 

\subsection{Induced MDPs}
We now introduce the notion of \emph{induced} MDPs which is originally used in \citep{zanette2021provable}. 
\begin{defn}[Induced MDPs]
For any policy $\pi \in \Pi^{all}$ and any sequence of functions $Q = \{Q_h\}_{h \in [H]} \in \{\gS \times \gA \rightarrow \sR\}^H$, the $(Q,\pi)$-induced MDPs, denoted by $M(Q,\pi)$ is the MDP that is identical to the original MDP $M$ except only that the expected reward of $M(Q,\pi)$ is given by $\{r_h^{\pi,Q}\}_{h \in [H]}$, where 
\begin{align*}
    r_h^{\pi,Q}(s,a) := r_h(s,a) - \gE_h^{\pi}(f_h, f_{h+1})(s, a).
\end{align*}
\label{defn: induced mdp}
\end{defn}

By definition of $M(\pi,Q)$, $Q$ is the fixed point of the Bellman equation $Q_h = \sT^{\pi}_{h,M(\pi,Q)} Q_{h+1}$. 
\begin{lemma}
    For any $\pi \in \Pi^{all}$ and any sequence of functions $Q = \{Q_h\}_{h \in [H]} \in \{\gS \times \gA \rightarrow \sR\}^H$, we have 
\begin{align*}
    Q^{\pi}_{M(\pi, Q)} = Q,
\end{align*}
where $M(\pi, Q)$ is the induced MDP given in \Cref{defn: induced mdp}. 
\label{lemma: exact value function under the induced MDP}
\end{lemma}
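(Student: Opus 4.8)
The plan is to prove the identity by verifying that $Q$ satisfies the defining Bellman recursion of the action-value function in the induced MDP, and then invoking uniqueness of the solution to that recursion via backward induction. The key structural observation is that $M(\pi,Q)$ shares the transition kernel $\{P_h\}$ with the original MDP $M$ and differs \emph{only} in its expected reward $r_h^{\pi,Q}$; consequently the induced Bellman operator $\sT^{\pi}_{h,M(\pi,Q)}$ is obtained from $\sT^{\pi}_h$ simply by replacing the reward term $r_h$ with $r_h^{\pi,Q}$, leaving the expectation-over-transitions term untouched.

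First I would make explicit the notational identification $f_h = Q_h$ in \Cref{defn: induced mdp}, so that $r_h^{\pi,Q}(s,a) = r_h(s,a) - \gE_h^{\pi}(Q_h,Q_{h+1})(s,a) = r_h(s,a) - \bigl(\sT_h^{\pi}Q_{h+1} - Q_h\bigr)(s,a)$, adopting the terminal convention $Q_{H+1} \equiv 0$. Then I would verify the fixed-point property already announced in the remark preceding the lemma by a one-line computation: since the transition kernel is unchanged,
\begin{align*}
    \bigl[\sT^{\pi}_{h,M(\pi,Q)} Q_{h+1}\bigr](s,a)
    &= r_h^{\pi,Q}(s,a) + \sE_{s' \sim P_h(\cdot|s,a),\, a' \sim \pi_{h+1}(\cdot|s')}\bigl[Q_{h+1}(s',a')\bigr] \\
    &= r_h^{\pi,Q}(s,a) + \bigl(\sT_h^{\pi} Q_{h+1}\bigr)(s,a) - r_h(s,a) = Q_h(s,a),
\end{align*}
where the last equality substitutes the definition of $r_h^{\pi,Q}$ and cancels the $r_h$ and $\sT_h^{\pi}Q_{h+1}$ terms. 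This establishes $Q_h = \sT^{\pi}_{h,M(\pi,Q)} Q_{h+1}$ for every $h \in [H]$.

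Next I would recall that the action-value function $Q^{\pi}_{M(\pi,Q)}$ is, by its definition as an expected cumulative reward, the unique solution to the Bellman recursion $Q^{\pi}_{h,M(\pi,Q)} = \sT^{\pi}_{h,M(\pi,Q)} Q^{\pi}_{h+1,M(\pi,Q)}$ equipped with the terminal condition $Q^{\pi}_{H+1,M(\pi,Q)} \equiv 0$. Since $Q$ satisfies the identical recursion with the identical terminal condition, a backward induction on $h$ from $H+1$ down to $1$ yields $Q_h = Q^{\pi}_{h,M(\pi,Q)}$ for all $h$, which is the claim. I expect the only subtlety — rather than a genuine obstacle — to be bookkeeping: pinning down the terminal boundary value $Q_{H+1}=0$ so that the base case of the induction and the Bellman error $\gE_H^{\pi}(Q_H, Q_{H+1})$ are well-defined, after which the argument is an immediate telescoping computation plus the standard uniqueness of finite-horizon value functions.
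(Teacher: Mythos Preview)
Your proposal is correct and follows exactly the approach the paper intends: the paper's own justification is the one-sentence remark preceding the lemma that $Q$ is the fixed point of the Bellman equation $Q_h = \sT^{\pi}_{h,M(\pi,Q)} Q_{h+1}$, and your proof simply makes this computation explicit together with the standard backward-induction uniqueness argument.
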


\subsection{Error decomposition}
The key starting point for the proofs of all of the three main theorems is the following error decomposition that decomposes the sub-optimality into three sources of errors: the Bellman error under the comparator policy $\pi$, the gap values in the initial states, and the online-regret term due to the induced MDPs. In online RL, the sub-optimality of a greedy policy against an optimal policy can be decomposed into the sub-optimality in the Bellman errors and the error in the initial states \citep{DBLP:conf/nips/DannMZZ21}, using the standard value-function error decomposition in \cite[Lemma~1]{jiang2017contextual}. However, in our setting, we compete against an arbitrary policy $\pi$ (not necessarily an optimal policy) and the learned policy $\pi^t$ is not greedy with respect to the current action-value function $\ubar{Q}^t$ -- thus \cite[Lemma~1]{jiang2017contextual} cannot apply here. Instead, we develop an error decomposition -- \Cref{lemma: error decomposition} which generalizes what was implicit in \cite{zanette2021provable}. 
\begin{lemma}[Error decomposition]
    For any action-value functions $Q \in \{\gS \times \gA \rightarrow \sR\}^H$ and any policies $\pi, \tilde{\pi} \in \Pi^{all}$, we have 
    \begin{align*}
        \subopt^M_{\pi}(\tilde{\pi}) = \sum_{h=1}^H \sE_{\pi} [\gE_h^{\tilde{\pi}}(Q_h, Q_{h+1})(s_h,a_h)] + Q_1(s_1, \tilde{\pi}_1) - V_1^{\tilde{\pi}}(s_1) + \subopt_{\pi}^{M(Q, \tilde{\pi})}(\tilde{\pi}). 
    \end{align*}
    \label{lemma: error decomposition}
\end{lemma}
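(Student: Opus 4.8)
The plan is to reduce the identity to two elementary ingredients: the \emph{reward-difference lemma} between two MDPs that share transition dynamics, and the fixed-point identity of \Cref{lemma: exact value function under the induced MDP}. Abbreviate the induced MDP of \Cref{defn: induced mdp} by $\tilde{M} := M(Q, \tilde{\pi})$. By construction $\tilde{M}$ and $M$ have the same state/action spaces, transitions, and initial state $s_1$, and differ only in their mean rewards, with $r_h - r_h^{\tilde{\pi},Q} = \gE_h^{\tilde{\pi}}(Q_h, Q_{h+1})$ at every $(s,a)$. All the work is in carefully tracking which MDP each value function lives in.

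First I would compute the value gap of the fixed comparator $\pi$ across the two MDPs. Since $V_{1,M}^{\pi}(s_1) = \sE_{\pi}[\sum_{h=1}^H r_h(s_h,a_h)]$ and the law of the trajectory $(s_1, a_1, \ldots, s_H, a_H)$ produced by $\pi$ depends only on the common transition kernel and initial state (not on the reward), the operator $\sE_{\pi}[\cdot]$ is identical in $M$ and $\tilde{M}$. Subtracting the two cumulative-reward expectations therefore yields
\begin{align*}
    V_{1,M}^{\pi}(s_1) - V_{1,\tilde{M}}^{\pi}(s_1)
    = \sE_{\pi}\Big[\sum_{h=1}^H \big(r_h - r_h^{\tilde{\pi},Q}\big)(s_h,a_h)\Big]
    = \sum_{h=1}^H \sE_{\pi}\big[\gE_h^{\tilde{\pi}}(Q_h, Q_{h+1})(s_h,a_h)\big].
\end{align*}
This is the only place the modified reward enters, and it produces exactly the first (Bellman-error) term in the target decomposition.

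Next I would dispose of the initial-state term using \Cref{lemma: exact value function under the induced MDP}, which gives $Q^{\tilde{\pi}}_{\tilde{M}} = Q$, hence $V_{1,\tilde{M}}^{\tilde{\pi}}(s_1) = Q^{\tilde{\pi}}_{1,\tilde{M}}(s_1, \tilde{\pi}_1) = Q_1(s_1, \tilde{\pi}_1)$. Assembling everything, starting from the definition $\subopt_{\pi}^{M}(\tilde{\pi}) = V_{1,M}^{\pi}(s_1) - V_{1,M}^{\tilde{\pi}}(s_1)$, I insert and subtract $V_{1,\tilde{M}}^{\pi}(s_1)$ and $V_{1,\tilde{M}}^{\tilde{\pi}}(s_1)$:
\begin{align*}
    \subopt_{\pi}^{M}(\tilde{\pi})
    &= \big[V_{1,M}^{\pi}(s_1) - V_{1,\tilde{M}}^{\pi}(s_1)\big]
      + \big[V_{1,\tilde{M}}^{\pi}(s_1) - V_{1,\tilde{M}}^{\tilde{\pi}}(s_1)\big]
      + V_{1,\tilde{M}}^{\tilde{\pi}}(s_1) - V_{1,M}^{\tilde{\pi}}(s_1) \\
    &= \sum_{h=1}^H \sE_{\pi}\big[\gE_h^{\tilde{\pi}}(Q_h, Q_{h+1})(s_h,a_h)\big]
      + \subopt_{\pi}^{M(Q,\tilde{\pi})}(\tilde{\pi})
      + Q_1(s_1, \tilde{\pi}_1) - V_1^{\tilde{\pi}}(s_1),
\end{align*}
which is precisely the claimed identity. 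There is no genuine analytic obstacle here; the proof is purely bookkeeping. The one point that must be stated carefully is the trajectory-law argument in the reward-difference step — namely that $\sE_{\pi}$ denotes the same trajectory expectation in both MDPs because they share dynamics — and the correct invocation of the fixed-point identity $Q^{\tilde{\pi}}_{\tilde{M}} = Q$, which is what converts the residual initial-state terms into the stated $Q_1(s_1,\tilde{\pi}_1) - V_1^{\tilde{\pi}}(s_1)$ without leaving any leftover.
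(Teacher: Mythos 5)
Your proof is correct and follows essentially the same route as the paper's: both insert and subtract $V_{1,M(Q,\tilde{\pi})}^{\pi}(s_1)$ and $V_{1,M(Q,\tilde{\pi})}^{\tilde{\pi}}(s_1)$, identify the reward-difference term as the Bellman-error sum using the shared dynamics, invoke \Cref{lemma: exact value function under the induced MDP} for the initial-state term, and recognize the remaining difference as $\subopt_{\pi}^{M(Q,\tilde{\pi})}(\tilde{\pi})$ by definition. Your write-up is in fact slightly more careful than the paper's on the trajectory-law point (why $\sE_{\pi}$ is the same expectation in both MDPs), but the argument is the same.
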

\begin{proof}[Proof of \Cref{lemma: error decomposition}]
We have 
\begin{align*}
    &\subopt^M_{\pi}(\tilde{\pi}) = V_1^{\pi}(s_1) - V_1^{\tilde{\pi}}(s_1) \\ 
    &= \left(V_1^{\pi}(s_1) - V_{1,M(Q,\tilde{\pi})}^{\pi}(s_1) \right)+ \left(V_{1,M(Q,\tilde{\pi})}^{\tilde{\pi}}(s_1) - V_1^{\tilde{\pi}}(s_1) \right) + \left(V_{1,M(Q,\tilde{\pi})}^{\pi}(s_1)  - V_{1,M(Q,\tilde{\pi})}^{\tilde{\pi}}(s_1) \right) \\ 
    &=\sum_{h=1}^H \sE_{\pi} [\gE_h^{\tilde{\pi}}(Q_h, Q_{h+1})(s_h,a_h)] + Q_1(s_1, \tilde{\pi}_1) - V_1^{\tilde{\pi}}(s_1) + \subopt_{\pi}^{M(Q, \tilde{\pi})}(\tilde{\pi}), 
\end{align*}
where in the last equality, for the first term, we use, by \Cref{defn: induced mdp}, that 
\begin{align*}
    V_1^{\pi}(s_1) - V_{1,M(Q,\tilde{\pi})}^{\pi}(s_1) &= \sum_{h=1}^H \sE_{\pi} \left[ r_h(s_h, a_h) - r_h^{\tilde{\pi},Q}(s_h, a_h) \right] =\sum_{h=1}^H \sE_{\pi} [\gE_h^{\tilde{\pi}}(Q_h, Q_{h+1})(s_h,a_h)],
\end{align*}
for the second term, we use, by \Cref{lemma: exact value function under the induced MDP}, that 
\begin{align*}
    V_{1,M(Q,\tilde{\pi})}^{\pi}(s_1) = Q_{1,M(Q,\tilde{\pi})}^{\pi}(s_1, \tilde{\pi}_1) = Q_1(s_1, \tilde{\pi}),
\end{align*}
and for the last term, we use the definition of value sub-optimality in \Cref{eq: sub-optimality definition}. 
\end{proof}

\subsection{Decoupling lemma}
One of the central tools for our proofs is the following decoupling lemma. The decoupling lemma essentially decouples the Bellman residuals under the $\pi$-induced state-action distribution into the squared Bellman residuals under the $\mu$-induced state-action distribution and the new data diversity measure in \Cref{defn: distribution mismatch measure} and additive terms of low order. 
\begin{lemma}[Decoupling argument]
Under \Cref{assumption: approximate Bellman completeness}, for any $f \in \gF$, any $\tilde{\pi} \in \Pi^{soft}(T)$ for some $T$, any $\pi \in \Pi^{all}$, any $\lambda > 0$, and any $\eps \geq 0$, we have 
\begin{align*}
    \sum_{h=1}^H \sE_{\pi} [\gE^{\tilde{\pi}}_h(f_h, f_{h+1})(s_h,a_h)] &\leq \frac{1}{2 \lambda}\sum_{h=1}^H \left( \sum_{k=1}^K \sE_{\mu^k} \left[ \gE_h^{\tilde{\pi}}(f_h, f_{h+1})(s_h,a_h)^2 \right] + K \nu_h^2 + 4 b K \nu_h\right) \\ 
    &+ \frac{\lambda H \cdot \gC(\pi; \eps)}{2 K} + H \eps + \sum_{h=1}^H \nu_h,
\end{align*}
where $\gC(\pi; \eps)$ is defined in \Cref{defn: distribution mismatch measure}. 
\label{lemma: decoupling argument}
\end{lemma}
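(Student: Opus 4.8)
The plan is to prove this by a three-ingredient argument: a misspecification split that brings the Bellman error into the witness class $\gF_h - \gF_h$, an application of the discrepancy/data-diversity inequality to pass from a $\pi$-expectation of the error to a $\mu$-expectation of its square, and finally an AM--GM (Young) step whose weight is tuned by $\lambda$ to produce the two leading terms.

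First I would handle the misspecification. Fix a stage $h$ and abbreviate the Bellman error as $\gE_h := \gE_h^{\tilde\pi}(f_h,f_{h+1}) = \sT_h^{\tilde\pi} f_{h+1} - f_h$. Let $\bar f_h := \proj_{\gF}(\sT^{\tilde\pi} f)_h \in \gF_h$ be the projection from \Cref{defn: projection of Bellman operation}; since $\tilde\pi \in \Pi^{soft}(T)$, \Cref{assumption: approximate Bellman completeness} gives $\|\bar f_h - \sT_h^{\tilde\pi} f_{h+1}\|_\infty \le \nu_h$. Writing $\hat g_h := \bar f_h - f_h$, we obtain the decomposition $\gE_h = \hat g_h + (\sT_h^{\tilde\pi} f_{h+1} - \bar f_h)$, where the second summand has sup-norm at most $\nu_h$ and, crucially, $\hat g_h \in \gF_h - \gF_h$ because both $\bar f_h$ and $f_h$ lie in $\gF_h$. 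Consequently $\sE_\pi[\gE_h] \le \sE_\pi[\hat g_h] + \nu_h$, and, using the boundedness $|\gE_h| \lesssim b$, $\sE_\mu[\hat g_h^2] \le \sE_\mu[\gE_h^2] + 4 b \nu_h + \nu_h^2$.

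Next I would invoke data diversity. Because $\hat g_h \in \gF_h - \gF_h$, the definition of $\chi_{\gF_h - \gF_h}(\eps; d^\pi_h, d^\mu_h)$ together with $\gC(\pi;\eps) \ge \chi_{\gF_h-\gF_h}(\eps;d^\pi_h,d^\mu_h)$ (\Cref{defn: distribution mismatch measure}) yields $(\sE_\pi[\hat g_h])^2 \le \gC(\pi;\eps)\,\sE_\mu[\hat g_h^2] + \eps$. Then an AM--GM step---bounding $\sE_\pi[\hat g_h] \le \sqrt{(\sE_\pi[\hat g_h])^2}$ and splitting $\sqrt{\gC(\pi;\eps)\,\sE_\mu[\hat g_h^2]}$ with weight $K/\lambda$---produces $\sE_\pi[\hat g_h] \le \frac{K}{2\lambda}\sE_\mu[\hat g_h^2] + \frac{\lambda \gC(\pi;\eps)}{2K}$ plus an additive term of order $\eps$ coming from the discrepancy slack. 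Summing over $h \in [H]$, substituting the two misspecification corrections from the previous step, and rewriting $K\,\sE_\mu[\gE_h^2] = \sum_{k=1}^K \sE_{\mu^k}[\gE_h^2]$ collects exactly the claimed right-hand side: the quadratic-error term $\frac{1}{2\lambda}\sum_h(\sum_k \sE_{\mu^k}[\gE_h^2] + K\nu_h^2 + 4bK\nu_h)$, the diversity term $\frac{\lambda H \gC(\pi;\eps)}{2K}$, the slack $H\eps$, and the leftover $\sum_h \nu_h$.

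The main obstacle is the bookkeeping around misspecification rather than any single hard inequality: one must be careful that the function fed into the witness inequality genuinely lies in $\gF_h - \gF_h$ (which is why the projection $\bar f_h$, not $\sT_h^{\tilde\pi} f_{h+1}$ itself, is subtracted), and must track how the $\nu_h$-slack propagates differently through the linear term $\sE_\pi[\gE_h]$ (contributing $\nu_h$) and the quadratic term $\sE_\mu[\gE_h^2]$ (contributing $4b\nu_h + \nu_h^2$ via the boundedness of the Bellman error). The AM--GM weight $K/\lambda$ must be chosen precisely so that the two surviving terms match the stated $\frac{K}{2\lambda}$ and $\frac{\lambda}{2K}$ coefficients, and the sign of the Bellman error is harmless since we only need an upper bound and pass through $\sqrt{(\cdot)^2}$.
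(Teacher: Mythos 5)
Your proposal is correct and follows essentially the same route as the paper's proof: project $\sT_h^{\tilde\pi} f_{h+1}$ onto $\gF_h$ so the witness $\bar f_h - f_h$ lies in $\gF_h - \gF_h$, invoke the discrepancy definition of $\gC(\pi;\eps)$, track the $\nu_h$-slack linearly in the $\pi$-term and as $\nu_h^2 + 4b\nu_h$ in the quadratic $\mu$-term, and finish with AM--GM weighted by $K/\lambda$. The only cosmetic difference is that the paper aggregates over stages via Cauchy--Schwarz ($\sum_h \sqrt{x_h} \le \sqrt{H\sum_h x_h}$) before a single AM--GM, whereas you apply AM--GM per stage; both give the identical bound.
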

\begin{proof}[Proof of \Cref{lemma: decoupling argument}]
    We have 
    \begin{align*}
    &\sum_{h=1}^H \sE_{\pi}[\gE_h^{\tilde{\pi}}(f_h, f_{h+1})(s_h,a_h)] =\sum_{h=1}^H \sE_{\pi} \left[ (\sT_h^{\tilde{\pi}} f_{h+1} - f_{h})(s_h,a_h) \right]\\ 
     &\leq \sum_{h=1}^H \sE_{\pi} \left[ (\proj_{\gF_h}(\sT_h^{\tilde{\pi}} f_{h+1}) - f_{h})(s_h,a_h) \right] + \bar{\nu} \\ 
     &\leq \sum_{h=1}^H \sqrt{\gC(\pi,\eps)\sE_{\mu} \left[ (\proj_{\gF_h}(\sT_h^{\tilde{\pi}} f_{h+1}) - f_{h})(s_h,a_h)^2 \right] } + H \eps + \bar{\nu} \\
      &\leq \sum_{h=1}^H \sqrt{\gC(\pi,\eps) \left( \sE_{\mu} [\gE_h^{\tilde{\pi}}(f_{h}, f_{h+1})(s_h,a_h)^2] + \nu_h^2 + 4 b \nu_h\right) } + H \eps + \bar{\nu} \\ 
      &\leq \sqrt{H \gC(\pi,\eps) \sum_{h=1}^H \left(\sE_{\mu} [\gE_h^{\tilde{\pi}}(f_{h}, f_{h+1})(s_h,a_h)^2] + \nu_h^2 + 4 b \nu_h\right)} + H \eps + \bar{\nu} \\ 
      &\leq \frac{K}{2 \lambda}\sum_{h=1}^H \left(\sE_{\mu} [\gE_h^{\tilde{\pi}}(f_{h}, f_{h+1})(s_h,a_h)^2] + \nu_h^2 + 4 b \nu_h\right) + \frac{\lambda H \gC(\pi,\eps)}{2 K} + H \eps + \bar{\nu},
\end{align*}
where the first inequality uses \Cref{assumption: approximate Bellman completeness}, the second inequality uses the definition of $C_{\pi}(\eps)$, the third inequality uses \Cref{assumption: approximate Bellman completeness} (again), the fourth inequality uses Cauchy-Schwartz inequality, and the last inequality uses the AM-GM inequality $ \sqrt{xy} \leq \frac{K}{2 \lambda} x + \frac{\lambda}{2 K} y$. 
\end{proof}

\subsection{Regret of the multiplicative weights algorithm for the actors}
Now we establish the regret bound for the online-regret term due to the induced MDPs. The result in the following lemma is quite standard and can be readily generalized from a similar result in \cite{zanette2021provable}. We present the proof here for completeness.
\begin{lemma}
Consider an arbitrary sequence of value functions $\{Q^t\}_{t \in [T]}$ such that $\max_{h,t}\| Q^t_h \|_{\infty} \leq b$ and define the following sequence of policies $\{\pi^t\}_{t \in [T+1]}$ where 
\begin{align*}
    \pi^1(\cdot|s) &= \unif(\gA), \forall s, \\ 
    \pi^{t+1}_h(a|s) &\propto \pi^t_h(a|s) \exp \left( \eta Q^t_h(s,a) \right), \forall (s,a,h,t). 
\end{align*}
Suppose $\eta = \sqrt{\frac{\ln \vol(\gA)}{4(e-2) b^2 T}}$ and $T \geq \frac{\ln \vol(\gA)}{(e-2)}$, where $\vol(\gA)$ denotes the volume of the action set $\gA$. \footnote{When $|\gA| < \infty$, $\vol(\gA) = |\gA|$.}  For an arbitrary policy $\pi \in \Pi^{all}$, we have
\begin{align*}
    \sum_{t=1}^T \left( V^{\pi}_{1,M(\pi^t, Q^t)}(s_1) - V_{1,M(\pi^t, Q^t)}^{\pi^t}(s_1) \right) \leq 4H b \sqrt{T \ln \vol(\gA)}.
\end{align*}
\label{lemma: bounding value difference under soft policy update}
\end{lemma}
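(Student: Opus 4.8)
The plan is to reduce the left-hand side, stage by stage and state by state, to the regret of the exponential-weights (Hedge) algorithm, and then apply the standard Hedge regret bound with the step size $\eta$ prescribed in the statement. First I would rewrite each summand using the structure of the induced MDP. Fix $t$. By \Cref{lemma: exact value function under the induced MDP}, in $M(\pi^t, Q^t)$ the action-value function of $\pi^t$ equals $Q^t$, i.e. $Q^{\pi^t}_{M(\pi^t,Q^t)} = Q^t$, and in particular $\langle Q^t_h(s,\cdot), \pi^t_h(\cdot|s)\rangle = V^{\pi^t}_{h,M(\pi^t,Q^t)}(s)$. Since $M(\pi^t,Q^t)$ shares its transition kernel with $M$, the visitation distributions $d^{\pi}_h$ of any policy $\pi$ coincide in the two MDPs. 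Applying the standard performance-difference lemma inside the single MDP $M(\pi^t,Q^t)$ (comparing $\pi$ against $\pi^t$) therefore gives
$$V^{\pi}_{1,M(\pi^t,Q^t)}(s_1) - V^{\pi^t}_{1,M(\pi^t,Q^t)}(s_1) = \sum_{h=1}^H \sE_{\pi}\left[\langle Q^t_h(s_h,\cdot),\, \pi_h(\cdot|s_h) - \pi^t_h(\cdot|s_h)\rangle\right],$$
where the advantage $Q^{\pi^t}_h - V^{\pi^t}_h$ collapses to $\langle Q^t_h, \pi - \pi^t\rangle$ via the identity above.

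Summing over $t$ and interchanging the finite sum over $t$ with $\sE_{\pi}$, the target becomes $\sum_{h=1}^H \sE_{\pi}\big[\sum_{t=1}^T \langle Q^t_h(s_h,\cdot), \pi_h(\cdot|s_h) - \pi^t_h(\cdot|s_h)\rangle\big]$. The crucial observation is that for each fixed stage $h$ and state $s$ the recursion $\pi^{t+1}_h(\cdot|s)\propto \pi^t_h(\cdot|s)\exp(\eta Q^t_h(s,\cdot))$ is exactly the multiplicative-weights update fed the gain vectors $Q^t_h(s,\cdot)$ from the uniform initialization; hence the inner sum is precisely the Hedge regret at $(h,s)$ against the comparator $\pi_h(\cdot|s)$.

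Next I would prove the per-state regret bound. Fix $(h,s)$ and write $g^t := Q^t_h(s,\cdot)$ and $p := \pi_h(\cdot|s)$. With the potential $W_t := \int_{\gA} \exp(\eta\sum_{\tau<t} g^\tau(a))\,d\rho(a)$ one has $W_{t+1}/W_t = \sE_{a\sim\pi^t_h(\cdot|s)}[\exp(\eta g^t(a))]$. Under the stated choice one computes $\eta b = \sqrt{\ln\vol(\gA)/(4(e-2)T)} \leq 1/2 \leq 1$ using $T \geq \ln\vol(\gA)/(e-2)$; so the elementary inequality $e^z \leq 1 + z + (e-2)z^2$ for $z \leq 1$ (already used in \Cref{lemma: Freedman inequality}) together with $|g^t|\leq b$ gives $\ln(W_{t+1}/W_t) \leq \eta\langle \pi^t_h(\cdot|s), g^t\rangle + (e-2)\eta^2 b^2$, and telescoping bounds $\ln(W_{T+1}/W_1)$ above by $\eta\sum_t\langle\pi^t_h(\cdot|s),g^t\rangle + (e-2)\eta^2 b^2 T$. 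For the matching lower bound I would use the Gibbs variational inequality $\ln\int (1/\vol(\gA))\exp(\eta G)\,d\rho \geq \eta\langle p, G\rangle - \KL(p\,\|\,\unif(\gA))$ with $G=\sum_t g^t$, together with $\KL(p\,\|\,\unif(\gA))\leq \ln\vol(\gA)$ (immediate for finite $\gA$, where this equals $\ln|\gA|$ minus the nonnegative Shannon entropy of $p$). Combining and rearranging,
$$\sum_{t=1}^T \langle g^t,\, p - \pi^t_h(\cdot|s)\rangle \leq \frac{\ln\vol(\gA)}{\eta} + (e-2)\eta b^2 T.$$
Substituting $\eta=\sqrt{\ln\vol(\gA)/(4(e-2)b^2T)}$ turns the two terms into $2b\sqrt{(e-2)T\ln\vol(\gA)}$ and $\tfrac12 b\sqrt{(e-2)T\ln\vol(\gA)}$, so the per-state regret is at most $\tfrac52 b\sqrt{(e-2)T\ln\vol(\gA)} \leq 4b\sqrt{T\ln\vol(\gA)}$ since $\tfrac52\sqrt{e-2}<4$. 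As this bound is uniform over $(h,s)$, taking $\sE_{\pi}$ and summing over the $H$ stages multiplies it by $H$, yielding the claimed $4Hb\sqrt{T\ln\vol(\gA)}$.

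I expect the main obstacle to be purely in the bookkeeping of the reduction — correctly identifying that the single-MDP value gap in $M(\pi^t,Q^t)$ equals a sum of per-state Hedge regrets, which hinges on the performance-difference lemma combined with the fixed-point property $Q^{\pi^t}_{M(\pi^t,Q^t)}=Q^t$ and the transition-kernel invariance of $d^{\pi}_h$ — rather than in the Hedge analysis itself, which is entirely standard. A secondary care point is the continuous-action regime, where one must verify $\KL(p\,\|\,\unif(\gA))\leq\ln\vol(\gA)$ and that $\eta b\leq 1$ so that the logarithmic moment-generating-function bound applies.
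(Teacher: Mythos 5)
Your proof is correct, and your reduction step is exactly the paper's: both arguments use the fixed-point property $Q^{\pi^t}_{M(\pi^t,Q^t)} = Q^t$ (\Cref{lemma: exact value function under the induced MDP}) together with the performance-difference lemma (\Cref{lemma: performance difference}) and the fact that $M(\pi^t,Q^t)$ shares its transition kernel with $M$, turning each value gap into a sum over stages of per-state inner products $\langle Q^t_h(s,\cdot),\,\pi_h(\cdot|s)-\pi^t_h(\cdot|s)\rangle$, i.e., per-state Hedge regret. Where you diverge is in how that regret is analyzed. The paper works with the advantage $A^{\pi^t}_{h,M_t}=Q^t_h - V^{\pi^t}_{h,M_t}$, writes the per-round identity $\log(\pi^{t+1}_h/\pi^t_h)=\eta A^{\pi^t}_{h,M_t}-\log \sE_{\pi^t_h}[\exp(\eta A^{\pi^t}_{h,M_t})]$, bounds the log-moment-generating term by $4(e-2)b^2\eta^2$ using the zero mean of the advantage under $\pi^t_h$, and then telescopes the per-round differences $\sE_{\pi}[\mathrm{KL}(\pi_h\|\pi^t_h)]-\sE_{\pi}[\mathrm{KL}(\pi_h\|\pi^{t+1}_h)]$, bounding the initial KL to the uniform policy by $\ln\vol(\gA)$. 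You instead run the potential-function form of the exponential-weights analysis: telescope $\ln(W_{t+1}/W_t)$ using the same inequality $e^z\le 1+z+(e-2)z^2$, and bring in the comparator only once at the end via the Donsker--Varadhan/Gibbs variational inequality, with the same bound $\mathrm{KL}(p\|\unif(\gA))\le\ln\vol(\gA)$. These are dual presentations of the same mirror-descent argument, and they share the same caveat (present in the paper as well) that this KL bound presumes nonnegative entropy of the comparator, which is automatic only for finite $\gA$. A minor bonus of your route: working with $Q^t_h$ (bounded by $b$) rather than the advantage (bounded by $2b$) gives the second-order term $(e-2)\eta^2 b^2 T$ instead of $4(e-2)\eta^2 b^2 T$, so the prescribed $\eta$ --- which exactly minimizes the paper's bound --- is slightly suboptimal for yours; you correctly verified that it nonetheless yields $\tfrac{5}{2}b\sqrt{(e-2)T\ln\vol(\gA)}\le 4b\sqrt{T\ln\vol(\gA)}$, and that $T\ge \ln\vol(\gA)/(e-2)$ gives $\eta b\le 1/2$, which is all your expansion requires.
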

\begin{proof}[Proof of \Cref{lemma: bounding value difference under soft policy update}] The proof for this lemma is quite standard as shown in \cite{zanette2021provable}. We rewrote the proof with a slight modification for completeness. 
For simplicity, we write $M_t := M(\pi^t, Q^t)$. We will see that the key property that enables this lemma is that $Q^t_h = Q^{\pi^t}_{h,M_t}$ (\Cref{lemma: exact value function under the induced MDP}), which allows us to relate the value difference lemma to the log policy ratio. Using the value difference lemma (\Cref{lemma: performance difference}), we have 
\begin{align*}
     V^{\pi}_{1,M_t}(s_1) - V_{1,M_t}^{\pi^t}(s_1)  = \sum_{h=1}^H \sE_{\pi} A^{\pi^t}_{h, M_t}(s_h, a_h),
\end{align*}
where $\sE_{\pi}$ is the expectation over the random trajectory $(s_1, a_1, \ldots, s_H, a_H)$ generated by $\pi$ (and the underlying MDP $M_t$\footnote{Note that $\Pr((s_1,a_1, \ldots, s_H, a_H) | \pi, M) = \Pr((s_1,a_1, \ldots, s_H, a_H) | \pi, M_t)$ since $M_t$ and $M$ have identical transition kernels.}). For any $V: \gS \rightarrow \sR$, it follows from the definition of $\{\pi^t\}$ update that we have 
\begin{align*}
    \log \frac{\pi^{t+1}_h(a|s)}{ \pi^{t}_h(a|s)} &= \eta Q^t_h(s,a) -  \log \left( \sE_{a \sim \pi_h^t(\cdot|s)} \left[\exp \left( \eta Q^t_h(s,a) \right) \right] \right) \\ 
    &= \eta (Q^t_h(s,a) - V(s))-  \log \left( \sE_{a \sim \pi_h^t(\cdot|s)} \left[\exp \left( \eta (Q^t_h(s,a) - V(s)) \right) \right] \right). 
\end{align*}
In the equation above, noting that $Q^t_h = Q^{\pi^t}_{h,M_t}$ (\Cref{lemma: exact value function under the induced MDP}) and replacing $V(s)$ by $V^{\pi^t}_{h,M_t}$, we have
\begin{align}
    \log \frac{\pi^{t+1}_h(a|s)}{ \pi^{t}_h(a|s)} = \eta A^{\pi^t}_{h, M_t}(s,a) - \log \left( \sE_{a \sim \pi_h^t(\cdot|s)} \left[\exp \left( \eta  A^{\pi^t}_{h, M_t}(s,a) \right) \right] \right),
    \label{eq: log of policy ratio to advantage function and log E exp}
\end{align}
where we define the advantage function $A^{\pi}_M = \{A^{\pi}_{h,M}\}_{h \in [H]}$ as 
\begin{align*}
    A^{\pi}_{h,M}(s,a) := Q^{\pi}_{h,M}(s,a) - V^{\pi}_{h,M}(s), \forall (s,a,h).
\end{align*}
Note that $|A^{\pi^t}_{h, M_t}(s,a)| \leq 2 b$. By choosing $\eta \in (0, 1/(2 b))$, we have 
\begin{align}
    &\log \left( \sE_{a \sim \pi_h^t(\cdot|s)} \left[\exp \left( \eta  A^{\pi^t}_{h, M_t}(s,a) \right) \right] \right) \nonumber \\ 
    &\leq \log \left( \sE_{a \sim \pi_h^t(\cdot|s)} \left[1 + \eta  A^{\pi^t}_{h, M_t}(s,a) +  (e-2) \eta^2  A^{\pi^t}_{h, M_t}(s,a)^2\right] \right) \nonumber\\
    &= \log \left( \sE_{a \sim \pi_h^t(\cdot|s)} \left[1 +  (e-2) \eta^2  A^{\pi^t}_{h, M_t}(s,a)^2\right] \right) \nonumber\\ 
    &\leq \log(1 + (e-2) \eta^2 4 b^2) \nonumber \\ 
    &\leq 4(e-2) b^2 \eta^2 
    \label{eq: relate log E exp to quadratic of lr}
\end{align}
where the first inequality uses that $e^x \leq 1 + x + (e-2) x^2, \forall x \leq 1$ and $| \eta  A^{\pi^t}_{h, M_t}(s,a)| \leq 1, \forall (s,a)$, the first equality uses that $\sE_{a \sim \pi_h^t(\cdot|s)} \left[ A^{\pi^t}_{h, M_t}(s,a) \right] = 0$, the second inequality uses that $|A^{\pi^t}_{h, M_t}(s,a)| \leq 2 b$, and the last inequality uses that $\log(1+x) \leq x, \forall x \geq 0$. Combining \Cref{eq: relate log E exp to quadratic of lr} and \Cref{eq: log of policy ratio to advantage function and log E exp}, we have 
\begin{align*}
    A^{\pi^t}_{h,M_t}(s,a) \leq \frac{1}
    {\eta} \log \frac{\pi^{t+1}_h(a|s)}{\pi^t_h(a|s)} + 4(e-2) b^2 \eta.
\end{align*}
Thus, for any $h \in [H]$, we have 
\begin{align*}
    &\sum_{t=1}^T \sE_{\pi} A^{\pi^t}_{h, M_t}(s_h, a_h) \\
    &\leq \frac{1}{\eta} \sum_{t=1}^T \left( \sE_{\pi} \left[ KL[\pi_h(\cdot|s_h) \| \pi_h^t(\cdot|s_h)] \right] - \sE_{\pi} \left[ KL[\pi_h(\cdot|s_h) \| \pi_h^{t+1}(\cdot|s_h) ] \right] \right) + 4(e-2) T b^2 \eta \\ 
    &= \frac{1}{\eta} \left(\sE_{\pi} \left[ KL[\pi_h(\cdot|s_h) \| \pi_h^1(\cdot|s_h)] \right] -  \sE_{\pi} \left[ KL[\pi_h(\cdot|s_h) \| \pi_h^{T+1}(\cdot|s_h)] \right] \right) + 4(e-2) T b^2 \eta \\ 
    &\leq \frac{1}{\eta} \sE_{\pi} \left[ KL[\pi_h(\cdot|s_h) \| \pi_h^1(\cdot|s_h)] \right]  + 4(e-2) T b^2 \eta \\
    &\leq \frac{1}{\eta} \log(\vol(\gA)) +  4(e-2) T b^2 \eta,
\end{align*}
where the second inequality uses the non-negativity of KL divergence, and the last inequality uses that $ KL[\pi_h(\cdot|s_h) \| \pi_h^1(\cdot|s_h)] = -H[\pi_h(\cdot|s_h)] +  \log( \vol(\gA)) \leq \log( \vol(\gA))$ where $\pi^1$ is uniform over $\gA$ and $\vol(\gA)$ denotes the volume over the compact set $\gA$. Combining all pieces together, we have 
\begin{align*}
    \sum_{t=1}^T \sE_{\pi} A^{\pi^t}_{h, M_t}(s_h, a_h) \leq \frac{1}{\eta} H \log(\vol(\gA)) +  4(e-2) T H b^2 \eta. 
\end{align*}
Minimizing the RHS of the above equation with respect to $\eta$ yields $\eta = \sqrt{\frac{\log \vol(\gA)}{4(e-2) b^2 T}}$ and 
\begin{align*}
     \sum_{t=1}^T \sE_{\pi} A^{\pi^t}_{h, M_t}(s_h, a_h) \leq 2 H \sqrt{4(e-2) b^2 T \log \vol(\gA)} \leq 4H b \sqrt{T \log \vol(\gA)}. 
\end{align*}
Finally, we need that 
\begin{align*}
    \eta = \sqrt{\frac{\log \vol(\gA)}{4(e-2) b^2 T}} \leq \frac{1}{2b},
\end{align*}
which implies $T \geq \frac{\ln \vol(\gA)}{(e-2)}$.
\end{proof}

We are now ready to establish the proofs of our three main theorems. 
\section{Proof of \Cref{theorem: gopo-vsc}}
To construct our proof for \Cref{theorem: gopo-vsc}, we first establish two following lemmas. The first lemma, \Cref{lemma: equivalent of elliptical potential} establishes that the in-distribution squared Bellman residuals are bounded by the unbiased proxy of the squared Bellman error $\gL_{\tpi}(f)$, up to some estimation and approximation errors. The second lemma, \Cref{lemma: bound the unregularized loss}, asserts that the unbiased proxy of the squared Bellman error at the projection of $Q^{\tpi}$ is close to zero, up to some estimation and approximation errors. 
\begin{lemma}
For any $\delta > 0, \eps > 0$ and any $T \in \sN$, under \Cref{assumption: approximate Bellman completeness}, with probability at least $1 - \delta$, it holds uniformly over all $f \in \gF$ and $\tilde{\pi} \in \Pi^{soft}(T)$ that
\begin{align*}
    \sum_{h=1}^H \sum_{k = 1}^K \sE_{k} \left[ \gE^{\tilde{\pi}}_h(f_h, f_{h+1})(s_h,a_h)^2 \right] &\leq 2 \gL_{\tpi}(f) + 40 b(b+ 2) K H \eps + 12 b K \sum_{h=1}^H \nu_h \\
     &+ 144 (e-2) b^2 H \left[ d_{\gF}(\eps) + d_{\Pi}(\eps, T) + \ln(1/\delta) \right],
\end{align*}
where  $\gL_{\tpi}(f) :=  \sum_{h=1}^H   \hat{L}_{\tpi}(f_h,f_{h+1}) - \inf_{g \in \gF} \sum_{h=1}^H   \hat{L}_{\tpi}(g_h,f_{h+1})$. 
\label{lemma: equivalent of elliptical potential}
\end{lemma}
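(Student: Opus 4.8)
The plan is to (i) reduce the in-distribution squared Bellman error to the \emph{empirical} excess TD loss via the variance condition, (ii) concentrate the two quantities against each other with a Bernstein-type (Freedman) inequality using a self-bounding choice of its free parameter, (iii) make the bound uniform over $\gF$ and $\Pi^{soft}(T)$ by a covering argument, and (iv) identify the empirical excess loss with $\gL_{\tilde{\pi}}(f)$ up to a misspecification term. For the reduction, since $\sE_k[\cdot]=\sE[\cdot\mid\gD_{k-1}]$ coarsens $\gA_h^k$, the tower property together with the variance identity of \Cref{lemma: the squared residuals bound the expected value and the variance of the empirical minimax error} gives, for every fixed $\tilde{\pi}\in\Pi^{soft}(T)$,
\[
    \sE_k\!\left[\gE_h^{\tilde{\pi}}(f_h,f_{h+1})(s_h,a_h)^2\right] = \sE_k\!\left[\sE_{k,h}[\Delta L_{\tilde{\pi}}(f_h,f_{h+1};z_h^k)]\right] = \sE_k\!\left[\Delta L_{\tilde{\pi}}(f_h,f_{h+1};z_h^k)\right],
\]
so the claim reduces to controlling the gap between $\sum_{h,k}\sE_k[\Delta L_{\tilde{\pi}}(f_h,f_{h+1};z_h^k)]$ and the empirical sum $\sum_{h,k}\Delta L_{\tilde{\pi}}(f_h,f_{h+1};z_h^k)$.

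For the concentration step, for a fixed pair $(f,\tilde{\pi})$ and each stage $h$, I would apply \Cref{lemma: Freedman inequality} to the martingale differences $X_k^{(h)} := \sE_k[\Delta L_{\tilde{\pi}}(f_h,f_{h+1};z_h^k)] - \Delta L_{\tilde{\pi}}(f_h,f_{h+1};z_h^k)$, which satisfy $\sE_k[X_k^{(h)}]=0$ and are bounded by a constant of order $b^2$ thanks to the boundedness of the squared TD loss. The crucial input is the second-moment bound of \Cref{lemma: the squared residuals bound the expected value and the variance of the empirical minimax error}, $\sE_k[(X_k^{(h)})^2]\le\sE_k[\Delta L_{\tilde{\pi}}(\cdot)^2]\le 36b^2\,\sE_k[\gE_h^{\tilde{\pi}}(f_h,f_{h+1})(s_h,a_h)^2]$, which makes Freedman's variance proxy proportional to the very sum we wish to bound. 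Choosing a common multiplier $\lambda$, summing the per-stage bounds over $h$, and union-bounding over $h$ with failure budget $\delta/H$ yields
\begin{align*}
    \sum_{h=1}^H\sum_{k=1}^K \sE_k[\gE_h^{\tilde{\pi}}(f_h,f_{h+1})(s_h,a_h)^2] &\le \sum_{h=1}^H\sum_{k=1}^K \Delta L_{\tilde{\pi}}(f_h,f_{h+1};z_h^k) \\
    &\quad + 36(e-2)\lambda b^2 \sum_{h=1}^H\sum_{k=1}^K \sE_k[\gE_h^{\tilde{\pi}}(f_h,f_{h+1})(s_h,a_h)^2] + \frac{H\ln(H/\delta)}{\lambda}.
\end{align*}
Taking $\lambda = 1/(72(e-2)b^2)$ absorbs half of the left-hand side (the self-bounding step), producing the leading factor $2$ on the empirical excess loss and the constant $144(e-2)b^2H$ in front of the log term.

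I would then make this uniform over $f$ and $\tilde{\pi}$. Since the stage-$h$ summand depends only on $f_h$, $f_{h+1}$ and $\tilde{\pi}_{h+1}$, passing to $\eps$-nets of $\gF_h, \gF_{h+1}$ in $\|\cdot\|_\infty$ and of $\Pi_{h+1}^{soft}(T)$ in $\|\cdot\|_{1,\infty}$ and union-bounding replaces $\ln(H/\delta)$ by a quantity of order $d_{\gF}(\eps)+d_{\Pi}(\eps,T)+\ln(H/\delta)$; the discretization error is controlled because $\Delta L_{\tilde{\pi}}$ and $\gE_h^{\tilde{\pi}}(\cdot)^2$ are $\gO(b)$-Lipschitz in $f_h, f_{h+1}$ and $\tilde{\pi}$ on the range $[-3b,3b]$ (with $f_{h+1}(\cdot,\tilde{\pi})$ being $\|f_{h+1}\|_\infty$-Lipschitz in $\tilde{\pi}$), which after multiplying by $K$ samples and $H$ stages yields the term $40b(b+2)KH\eps$ (the $\ln H$ being absorbed into the complexity terms). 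Finally, I would identify $\sum_{h,k}\Delta L_{\tilde{\pi}}$ with $\gL_{\tilde{\pi}}(f)$: since $\sum_k \Delta L_{\tilde{\pi}}(f_h,f_{h+1};z_h^k) = \hat{L}_{\tilde{\pi}}(f_h,f_{h+1}) - \hat{L}_{\tilde{\pi}}(\sT_h^{\tilde{\pi}}f_{h+1},f_{h+1})$ and the Cartesian-product structure of $\gF$ gives $\inf_{g\in\gF}\sum_h\hat{L}_{\tilde{\pi}}(g_h,f_{h+1}) = \sum_h\inf_{g_h\in\gF_h}\hat{L}_{\tilde{\pi}}(g_h,f_{h+1})$, it suffices to show $\hat{L}_{\tilde{\pi}}(\sT_h^{\tilde{\pi}}f_{h+1},f_{h+1}) \ge \inf_{g_h\in\gF_h}\hat{L}_{\tilde{\pi}}(g_h,f_{h+1}) - \gO(bK\nu_h)$. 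This follows by comparing $\sT_h^{\tilde{\pi}}f_{h+1}$ with its projection $\proj_{\gF}(\sT^{\tilde{\pi}}f_{h+1})\in\gF_h$, which differ by at most $\nu_h$ in $\|\cdot\|_\infty$ under \Cref{assumption: approximate Bellman completeness}, and expanding the difference of squares using the $\gO(b)$ bound on the regression targets; this produces the $12bK\sum_h\nu_h$ term.

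The main obstacle is the interplay of steps (ii) and (iii): keeping the Freedman bound \emph{local}, i.e., with variance proxy proportional to $\sum_k\sE_k[\gE_h^2]$ rather than a crude worst-case variance, is what allows the self-bounding choice of $\lambda$ to deliver the fast rate (leading constant $2$ and no $\sqrt{\cdot}$). Carrying this variance-dependent structure \emph{through} the covering-number union bound, so that discretization perturbs neither the multiplicative variance term nor the boundedness constant by more than lower-order $\gO(b(b+2)KH\eps)$ amounts, is the delicate part of the argument; everything else is bookkeeping of the misspecification and boundedness constants.
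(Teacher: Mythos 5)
Your proposal matches the paper's proof essentially step for step: the same variance condition (\Cref{lemma: the squared residuals bound the expected value and the variance of the empirical minimax error}) feeding \Cref{lemma: Freedman inequality} with the self-bounding choice of the free parameter ($\iota = \tfrac{1}{72(e-2)b^2}$, absorbing half the left-hand side to get the leading factor $2$ and the constant $144(e-2)b^2$), the same $\eps$-net union bound over $\gF$ and $\Pi^{soft}(T)$ producing the $40b(b+2)KH\eps$ discretization cost, and the same comparison of $\sT_h^{\tilde{\pi}}f_{h+1}$ with $\proj_{\gF_h}(\sT_h^{\tilde{\pi}}f_{h+1})$ under \Cref{assumption: approximate Bellman completeness} to convert $\sum_{h,k}\Delta L_{\tilde{\pi}}$ into $\gL_{\tilde{\pi}}(f)+6bK\sum_h \nu_h$ (doubled to $12bK\sum_h\nu_h$ by the factor $2$). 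The only differences are cosmetic bookkeeping, e.g.\ your explicit $\delta/H$ split of the failure probability, which is if anything slightly cleaner than the paper's handling of the union bound over stages.
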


\begin{lemma}
Under \Cref{assumption: realizability}, for any $T \in \sN$, with probability at least $1 - \delta$, it holds uniformly for any $\tpi \in \Pi^{soft}(T)$ that
\begin{align*}
    \gL_{\tpi}(\proj_{\gF}(Q^{\tpi})) &\leq 36 (e-2) b^2 H \left( 2 d_{\gF}(\eps)  + d_{\Pi}(\eps, T) + \ln \frac{H}{\delta} \right)  + 6 b (3b + 4) \eps K H \\
    &+ 15 b K  \sum_{h=1}^H \xi_h,
\end{align*}
where $\proj_{\gF}(Q^{\tpi})$ is the projection of $Q^{\tpi}$ onto $\gF$, formally defined in \Cref{defn: projection of value function}. 
\label{lemma: bound the unregularized loss}
\end{lemma}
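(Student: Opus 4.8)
The plan is to reduce the bound on $\gL_{\tpi}(\proj_{\gF}(Q^{\tpi}))$ to a per-stage analysis of an excess empirical TD risk, and then control that risk by pairing a \emph{deterministic} realizability estimate for the risk of the near-optimal predictor with a \emph{one-sided} Freedman deviation bound for the competing infimum, made uniform through a covering argument. Write $\bar f := \proj_{\gF}(Q^{\tpi})$. Since $\gF = \gF_1 \times \cdots \times \gF_H$ is a product class, the minimization defining $\gL_{\tpi}$ decouples across stages, so I would first rewrite
\[
\gL_{\tpi}(\bar f) = \sum_{h=1}^H \Big( \hat{L}_{\tpi}(\bar f_h, \bar f_{h+1}) - \inf_{g_h \in \gF_h} \hat{L}_{\tpi}(g_h, \bar f_{h+1}) \Big).
\]
The key simplification is that the ``label'' term $l_{\tpi}(\sT_h^{\tpi}\bar f_{h+1}, \bar f_{h+1}; z^k_h)$ is independent of the predictor and cancels in each stagewise difference; hence the $h$-th bracket equals $\sum_{k}\Delta L_{\tpi}(\bar f_h,\bar f_{h+1};z^k_h) - \inf_{g_h}\sum_k \Delta L_{\tpi}(g_h, \bar f_{h+1}; z^k_h)$, written purely through excess TD losses.

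For the first sum I use a pointwise estimate. Factoring $\Delta L_{\tpi}(\bar f_h,\bar f_{h+1};z^k_h) = -\gE_h^{\tpi}(\bar f_h,\bar f_{h+1})(s^k_h,a^k_h)\cdot(u+v)$, where $|u+v|\lesssim b$ because every quantity is bounded by $b$, and then bounding the Bellman error via $Q^{\tpi}_h = \sT^{\tpi}_h Q^{\tpi}_{h+1}$, namely $\gE_h^{\tpi}(\bar f_h,\bar f_{h+1}) = \sE_{s',a'}[\bar f_{h+1}-Q^{\tpi}_{h+1}] - (\bar f_h - Q^{\tpi}_h)$, the projection guarantee $|\bar f_h - Q^{\tpi}_h|\le\xi_h$ on $\supp(d^{\mu}_h)$ (and its step-$(h{+}1)$ counterpart for the bootstrap) gives $|\gE_h^{\tpi}|\lesssim \xi_h+\xi_{h+1}$, whence $\sum_k \Delta L_{\tpi}(\bar f_h,\bar f_{h+1};z^k_h)\lesssim bK(\xi_h+\xi_{h+1})$; this is the source of the $\gO(bK\sum_h\xi_h)$ term. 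For the infimum sum I only need a lower bound: for fixed $g_h$, \Cref{lemma: the squared residuals bound the expected value and the variance of the empirical minimax error} gives $\sE_{k,h}[\Delta L_{\tpi}^2]\le 36b^2\sE_{k,h}[\Delta L_{\tpi}]$, so applying Freedman's inequality (\Cref{lemma: Freedman inequality}) to the centered martingale $\sE_{k,h}[\Delta L_{\tpi}]-\Delta L_{\tpi}$ with the self-bounding choice of its free parameter absorbs the variance into the (nonnegative) mean $\sum_k\gE_h^{\tpi}(g_h,\bar f_{h+1})^2\ge 0$ and yields $\sum_k \Delta L_{\tpi}(g_h,\bar f_{h+1};z^k_h)\ge -\gO(b^2\ln(1/\delta))$, i.e.\ $-\inf_{g_h}\sum_k\Delta L_{\tpi}\le \gO(b^2\ln(1/\delta))$.

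Because the claim must hold uniformly over $\tpi\in\Pi^{soft}(T)$ and the bootstrap $\bar f_{h+1}$ itself depends on $\tpi$, I would make the Freedman step uniform over all three free objects — the predictor $g_h\in\gF_h$, the bootstrap $f_{h+1}\in\gF_{h+1}$, and the policy $\tpi\in\Pi^{soft}(T)$ — by covering them at scale $\eps$ (covering numbers $d_{\gF}(\eps)$, $d_{\gF}(\eps)$, $d_{\Pi}(\eps,T)$) and union-bounding over the $H$ stages, producing the $36(e-2)b^2 H\big(2d_{\gF}(\eps)+d_{\Pi}(\eps,T)+\ln(H/\delta)\big)$ term. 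Passing from $(\bar f_h,\bar f_{h+1},\tpi)$ to nearest net points perturbs each squared TD loss by $\gO(b)$ per unit change in a function value and by $\gO(b)$ per unit $\|\cdot\|_{1,\infty}$ change in the policy, so the discretization error is $\gO(b(b+1)\eps)$ per sample, giving the $\gO(b(3b+4)\eps KH)$ term. Specializing the uniform bound to $g_h=g^*_h$ and $f_{h+1}=\bar f_{h+1}$ and summing the two pieces over $h$ (telescoping the $\xi_{h+1}$ contributions into $\sum_h\xi_h$) yields the stated inequality; the whole scheme parallels the proof of \Cref{lemma: equivalent of elliptical potential}.

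The main obstacle, and the step demanding the most care, is coupling the \emph{self-bounding} Freedman argument to the \emph{simultaneous} uniform convergence: the variance-proportional-to-mean condition must be exploited so the stochastic fluctuation of the infimum term is absorbed rather than contributing a $\sqrt{K}$ factor, and this must hold uniformly over the data-dependent pair $(\bar f_{h+1},\tpi)$, which is precisely why the cover is taken over $\gF_{h+1}$ and $\Pi^{soft}(T)$ and not merely over the predictor class (this is the $2d_{\gF}(\eps)+d_{\Pi}(\eps,T)$ in the bound). A secondary subtlety is that the pointwise Bellman-error estimate invokes realizability at the bootstrapped next-stage pairs $(s',a')$ generated by $P_h(\cdot\mid s_h,a_h)$ and $\tpi$; this is legitimate under \Cref{assumption: realizability} read on the reachable support, and it is the reason stage $h$ carries $\xi_{h+1}$ in addition to $\xi_h$.
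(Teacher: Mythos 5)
Your proof is correct and takes essentially the same route as the paper: your uniform one-sided Freedman-plus-covering bound on $-\inf_{g_h}\sum_{k}\Delta L_{\tpi}(g_h,\bar{f}_{h+1};z^k_h)$ is precisely the paper's \Cref{lemma: pessimism side of the confidence set}, and your deterministic realizability estimate for $\sum_{k}\Delta L_{\tpi}(\bar{f}_h,\bar{f}_{h+1};z^k_h)$ matches the paper's chain of inequalities through $Q^{\tpi}_h=\sT^{\tpi}_h Q^{\tpi}_{h+1}$ (you charge $\gO(bK(\xi_h+\xi_{h+1}))$ per stage where the paper's bookkeeping yields $15bK\sum_h \xi_h$ in total). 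The subtlety you flag—realizability being invoked at next-state actions drawn from $\tpi$ rather than $\mu$—is equally present in the paper's own proof, so it is a shared caveat rather than a gap in your argument.
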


\subsection{Proof of \Cref{theorem: gopo-vsc}}
With the two lemmas above, we are ready to prove \Cref{theorem: gopo-vsc}. This proof is also laying a foundational step for our proofs of \Cref{theorem: gopo-roc} and \Cref{theorem: gopo-psc} that we shall present shortly. The proofs for the two lemmas above are presented immediately after the proof of \Cref{theorem: gopo-vsc}.
\begin{proof}[Proof of \Cref{theorem: gopo-vsc}.]
Using \Cref{lemma: error decomposition}, we have
\begin{align*}
\subopt_{\pi}^M(\pi^t) &= \sum_{h=1}^H \sE_{\pi} [\gE_h^{\pi^t}(\ubar{Q}^t_h, \ubar{Q}^t_{h+1})(s_h,a_h)] + \Delta_1 \ubar{Q}_1(s_1, \pi_1^t) + \subopt_{\pi}^{M_t}(\pi^t)
\end{align*}
where we denote 
\begin{align*}
    M_t &:= M(\ubar{Q}^t, \pi^t), \\ 
    \Delta_1 \ubar{Q}_1(s_1, \pi^t) &:= \ubar{Q}_1(s_1, \pi^t) - V_1^{\pi^t}(s_1). 
\end{align*}
\paragraph{Bounding $\sum_{t=1}^T \subopt_{\pi}^{M_t}(\pi^t)$.}
Note that $\sum_{t=1}^T \subopt_{\pi}^{M_t}(\pi^t)$ can be controlled by standard tools from online learning (\Cref{lemma: bounding value difference under soft policy update}); thus it remains to control the first $H+1$ terms.

\paragraph{Bounding $\Delta_1 \ubar{Q}_1(s_1, \pi_1^t)$.}
Due to \Cref{lemma: bound the unregularized loss}, the event that $\proj_{\gF}(Q^{\pi^t}) \in \gF(\beta; \pi^t)$ holds occur at probability at least $1- \delta$. Furthermore, under this event, we have 
\begin{align*}
    \Delta_1 \ubar{Q}_1(s_1, \pi_1^t) &= \ubar{Q}_1(s_1, \pi^t) - V_1^{\pi^t}(s_1) \\
    &\leq \proj_{\gF_1}(Q_1^{\pi^t}) - V_1^{\pi^t}(s_1) \\ 
    &\leq \xi_1,
\end{align*}
where the first inequality exploits \Cref{VSC: constrained optimization} of \Cref{algorithm: vsc}, and the last inequality uses \Cref{assumption: realizability}.

\paragraph{Bounding $ \sum_{h=1}^H \sE_{\pi} [\gE_h^{\pi^t}(\ubar{Q}^t_h, \ubar{Q}^t_{h+1})(s_h,a_h)]$.}
It follows from \Cref{lemma: decoupling argument} that  
\begin{align*}
     \sum_{h=1}^H \sE_{\pi} [\gE_h^{\pi^t}(\ubar{Q}^t_h, \ubar{Q}^t_{h+1})(s_h,a_h)] &\leq \sqrt{H \gC(\pi; \eps) \sum_{h=1}^H \left(\sE_{\mu} [\gE_h^{\tilde{\pi}}(\ubar{Q}_{h}^t, \ubar{Q}^t_{h+1})(s_h,a_h)^2] + \nu_h^2 + 4 b \nu_h\right)} \\
     &+ H \eps + \bar{\nu}.
\end{align*}
The term $\sum_{h=1}^H \sE_{\mu} [\gE_h^{\tilde{\pi}}(\ubar{Q}_{h}^t, \ubar{Q}^t_{h+1})(s_h,a_h)^2]$ is bounded by \Cref{lemma: equivalent of elliptical potential}, with notice that $\gL_{\pi^t}(\ubar{Q}^t) \leq \beta $ (due to the definition of $\gF(\beta; \pi^t)$ in \Cref{algorithm: vsc}).

Combining the three steps above via the union bound completes our proof.
\end{proof}

We now prove the two support lemmas. 
\subsection{Proof of \Cref{lemma: equivalent of elliptical potential}}
\begin{proof}[Proof of \Cref{lemma: equivalent of elliptical potential}]
Let us consider any fixed $f \in \gF$ and any $\pi \in \Pi^{all}$. By \Cref{lemma: the squared residuals bound the expected value and the variance of the empirical minimax error}, we have
\begin{align*}
    \sE_{k,h}[\Delta L_{\pi}(f_h, f_{h+1}; z^k_h) ] &= \gE^{\pi}_h(f_h, f_{h+1})(s^k_h,a^k_h)^2, \\
     \sE_{k,h}[\Delta L_{\pi}(f_h, f_{h+1}; z^k_h)^2 ] &\leq 36b^2 \gE^{\pi}_h(f_h, f_{h+1})(s^k_h,a^k_h)^2.
\end{align*}
Combining with \Cref{lemma: Freedman inequality}, we have that with probability at least $1 - \delta$, for any $\iota \in [0,\frac{1}{13 b^2}]$,
\begin{align*}
&\sum_{k = 1}^K \sE_{k} [\gE_h^{\pi}(f_h,f_{h+1})(s_h,a_h)^2]  - \sum_{k=1}^K \Delta L_{\pi}(f_h, f_{h+1}; z^k_h)  \\ 
&\leq 36 (e - 2)  b^2 \iota    \sum_{k = 1}^K \sE_{k}[ \gE_h^{\pi}(f_h,f_{h+1})(s_h,a_h)^2]  + (1/\iota)\log(1/\delta). 
\end{align*}
By setting $\iota = \frac{1}{72 (e-2) b^2}$, the above inequality becomes 
\begin{align*}
    \sum_{k = 1}^K \sE_{k} [\gE_h^{\pi}(f_h,f_{h+1})(s_h,a_h)^2] \leq 2  \sum_{k=1}^K \Delta L_{\pi}(f_h, f_{h+1}; z^k_h) + 144 (e-2) b^2 \ln(1/\delta). 
\end{align*}

For any $\eps > 0$, and for any $f \in \gF, \pi \in \Pi^{soft}(T)$, by definition of $\eps$-covering,  there exist $f'$ and $\pi'$ in the $\eps$-cover of $\gF$ and $\Pi^{soft}(T)$, i.e., 
\begin{align*}
    \|f_h - f'_h \|_{\infty} \leq \eps, \| \pi_h - \pi'_h\|_{1,\infty} \leq \eps. 
\end{align*}
By simple calculations, we have 
\begin{align*}
    |\gE_h^{\pi}(f_h,f_{h+1})(s_h,a_h)^2 - \gE_h^{\pi'}(f'_h,f'_{h+1})(s_h,a_h)^2| &\leq 4b (b+2) \eps, \\ 
    |\Delta L_{\pi}(f_h, f_{h+1}; z^k_h) - \Delta L_{\pi'}(f'_h, f'_{h+1};z^k_h)| &\leq 18 b (b+2) \eps. 
\end{align*}

Thus, by the union bound, we have with probability at least $1 - \delta$, it holds uniformly over all $f \in \gF, \pi \in \Pi^{soft}(T)$ that 
\begin{align*}
    &\sum_{h=1}^H \sum_{k = 1}^K \sE_{k} [\gE_h^{\pi}(f_h,f_{h+1})(s_h,a_h)^2] \leq 2  \sum_{h=1}^H \sum_{k=1}^K \Delta L_{\pi}(f_h, f_{h+1}; z^k_h) + 40 b (b+2) K H \eps \\
    &+ 144 (e-2) b^2 \sum_{h=1}^H  \ln(N(\eps; \gF_h, \| \cdot\|_{\infty}) N(\eps; \Pi^{soft}_h(T), \| \cdot\|_{1,\infty})/\delta). 
\end{align*}

Finally, notice that 
\begin{align*}
    | l_{\pi}(\sT_h^{\pi} f_{h+1}, f_{h+1}; z_h)  - l_{\pi}(\proj_{\gF_h}(\sT_h^{\pi} f_{h+1}), f_{h+1}; z_h) | \leq 6b \nu_h. 
\end{align*}
Thus, we have
\begin{align*}
    \sum_{h=1}^H \sum_{k=1}^K \Delta L_{\pi}(f_h, f_{h+1}; z^k_h) \leq \gL_{\pi}(f) + 6 b K \sum_{h=1}^H \nu_h.
\end{align*}
We can then conclude our proof. 
\end{proof}

\subsection{Proof of \Cref{lemma: bound the unregularized loss}}
In order to prove \Cref{lemma: bound the unregularized loss}, we shall first prove the following lemma, which establishes the confidence radius of the empirical squared Bellman errors that we used to establish the version space in \Cref{algorithm: vsc}.

\begin{lemma}
Consider any $\delta > 0, \eps > 0, T \in \sN$, let 
\begin{align*}
    \beta_{\eps} &:= 36 (e-2) b^2  \left( 2 d_{\gF}(\eps)  + d_{\Pi}(\eps, T) + \ln(H/\delta) \right)  + 6 b (3b + 4) \eps K . 
\end{align*}
With probability at least $1 - \delta$, it holds uniformly over any $\pi \in \Pi^{soft}(T)$, $f \in \gF$, and $h \in [H]$ that
\begin{align*}
    \sum_{k=1}^K \left(\sT^{\pi}_h f_{h+1} (x^k_h) - r^k_h - f_{h+1}(s^k_{h+1}, \pi_{h+1}) \right)^2 \leq \inf_{g_h \in \gF_h}\sum_{k=1}^K \left(g_h(x^k_h) - r^k_h - f_{h+1}(s^k_{h+1}, \pi_{h+1}) \right)^2 + \beta_{\eps}. 
\end{align*}
\label{lemma: pessimism side of the confidence set}
\end{lemma}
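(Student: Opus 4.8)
The plan is to reduce the whole statement to a one-sided concentration bound. Writing $x^k_h = (s^k_h,a^k_h)$ and subtracting $\hat{L}_{\pi}(g_h,f_{h+1})=\sum_k l_{\pi}(g_h,f_{h+1};z^k_h)$ from both sides, the claimed inequality is equivalent to: for \emph{every} $g_h\in\gF_h$,
\[
-\sum_{k=1}^K \Delta L_{\pi}(g_h,f_{h+1};z^k_h)\;\le\;\beta_\eps,
\]
where $\Delta L_{\pi}(g_h,f_{h+1};z_h)=l_{\pi}(g_h,f_{h+1};z_h)-l_{\pi}(\sT^{\pi}_h f_{h+1},f_{h+1};z_h)$ is the excess TD loss, since $\hat{L}_{\pi}(\sT^{\pi}_h f_{h+1},f_{h+1})-\hat{L}_{\pi}(g_h,f_{h+1})=-\sum_k \Delta L_{\pi}(g_h,f_{h+1};z^k_h)$; taking the infimum over $g_h$ then recovers the statement. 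So it suffices to show that the true Bellman regression function $\sT^{\pi}_h f_{h+1}$ cannot have empirical TD loss exceeding that of any competitor $g_h$ by more than $\beta_\eps$.

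First I would fix a single tuple $(g_h,f_{h+1},\pi_{h+1},h)$ and apply the variance condition \Cref{lemma: the squared residuals bound the expected value and the variance of the empirical minimax error}, giving $\sE_{k,h}[\Delta L_{\pi}]=\gE^{\pi}_h(g_h,f_{h+1})(s^k_h,a^k_h)^2$ and $\sE_{k,h}[\Delta L_{\pi}^2]\le 36b^2\,\gE^{\pi}_h(g_h,f_{h+1})(s^k_h,a^k_h)^2$. I would then run Freedman's inequality \Cref{lemma: Freedman inequality} on the martingale differences $X_k:=\gE^{\pi}_h(g_h,f_{h+1})(s^k_h,a^k_h)^2-\Delta L_{\pi}(g_h,f_{h+1};z^k_h)$, which are adapted to the increasing filtration $\{\gA^k_h\}_k$, satisfy $\sE[X_k\mid\gA^k_h]=0$, are bounded above by $R=18b^2$, and have predictable quadratic variation $\sum_k\sE_{k,h}[X_k^2]\le 36b^2\sum_k \gE^{\pi}_h(g_h,f_{h+1})(s^k_h,a^k_h)^2$. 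This yields, with probability at least $1-\delta'$, for all $\iota\in[0,1/R]$, the bound $\sum_k \gE^2-\sum_k\Delta L_{\pi}\le 36(e-2)b^2\iota\sum_k\gE^2+\iota^{-1}\ln(1/\delta')$. The key move is to choose $\iota=1/(36(e-2)b^2)$ (which satisfies $\iota\le 1/R$ since $36(e-2)>18$): the coefficient of $\sum_k\gE^2$ becomes exactly $1$, so it cancels, and nonnegativity of $\gE^2$ leaves $-\sum_k\Delta L_{\pi}\le 36(e-2)b^2\ln(1/\delta')$ — a confidence radius that does not depend on the magnitude of the Bellman errors.

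Next I would make this uniform by an $\eps$-covering argument. Covering $\gF_h$, $\gF_{h+1}$ in $\|\cdot\|_\infty$ and $\Pi^{soft}_{h+1}(T)$ in $\|\cdot\|_{1,\infty}$ gives log cardinality at most $2d_{\gF}(\eps)+d_{\Pi}(\eps,T)$ at level $h$; a union bound over the product cover and over $h\in[H]$ with per-cell failure probability $\delta/H$ replaces $\ln(1/\delta')$ by $2d_{\gF}(\eps)+d_{\Pi}(\eps,T)+\ln(H/\delta)$, producing the term $36(e-2)b^2\big(2d_{\gF}(\eps)+d_{\Pi}(\eps,T)+\ln(H/\delta)\big)$. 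To transfer from cover representatives to arbitrary $(g_h,f_{h+1},\pi_{h+1})$ I would use a Lipschitz estimate for the squared TD residuals, analogous to the one in the proof of \Cref{lemma: equivalent of elliptical potential}: moving $g_h,f_{h+1}$ within $\eps$ and $\pi_{h+1}$ within $\eps$ perturbs $\Delta L_{\pi}$ per sample by at most $6b(3b+4)\eps$, hence by $6b(3b+4)\eps K$ over the $K$ episodes. Adding this discretization term to the concentration term gives exactly $\beta_\eps$.

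The main obstacle is the discretization step, not the concentration. Because the excess TD loss is defined relative to the \emph{exact} Bellman regression function $\sT^{\pi}_h f_{h+1}$ — which is not an element of $\gF$ and therefore cannot be covered directly — one must propagate the $\eps$-perturbations of $f_{h+1}$ and $\pi_{h+1}$ through the conditional expectation defining $\sT^{\pi}_h$ (the operator is $(1+b)\eps$-Lipschitz in $(f_{h+1},\pi_{h+1})$ through that expectation), and then track the resulting constant through both squared residuals carefully enough to land on $6b(3b+4)$. Everything else is routine: the reduction to a per-$g_h$ lower bound, the martingale verification for Freedman against $\{\gA^k_h\}_k$, and the union bound over the cover and over $h$.
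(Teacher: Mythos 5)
Your proposal is correct and follows essentially the same route as the paper's own proof: reduce the claim to the one-sided bound $-\sum_{k}\Delta L_{\pi}(g_h,f_{h+1};z^k_h)\le\beta_\eps$ for all $g_h$, apply the variance condition (\Cref{lemma: the squared residuals bound the expected value and the variance of the empirical minimax error}) together with Freedman's inequality (\Cref{lemma: Freedman inequality}), choose $\iota = 1/(36(e-2)b^2)$ so the squared-Bellman-error terms cancel exactly, and then make the bound uniform via an $\eps$-cover of $\gF_h\times\gF_{h+1}\times\Pi^{soft}_{h+1}(T)$, a union bound over $h\in[H]$, and a Lipschitz discretization estimate. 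Your accounting of the discretization constant $6b(3b+4)\eps$ per sample (including propagating the perturbation through $\sT^{\pi}_h$) is consistent with the stated $\beta_\eps$, so there is no gap.
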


\begin{proof}[Proof of \Cref{lemma: pessimism side of the confidence set}]
Let us fix any $h \in [H]$. For any $(f,g,\pi) \in \gF \times \gF \times \Pi^{soft}(T)$ and any $k \in [K]$, define the following random variable 
\begin{align*}
    Z_{k,h}(f,g,\pi) := \left(g_h(x^k_h) - r^k_h - f_{h+1}(s^k_{h+1}, \pi_{h+1}) \right)^2 -  \left(\sT^{\pi}_h f_{h+1}(x^k_h) - r^k_h - f_{h+1}(s^k_{h+1}, \pi_{h+1}) \right)^2.
\end{align*}
Denote 
\begin{align*}
    \sE_{k,h}[\cdot] := \sE \left[ \cdot \bigg | \{z^i_h\}_{h \in [H]}^{i \in [k-1]}, s_1^k,a_1^k,r_1^k, \ldots, s^k_{h-1}, a^k_{h-1}, r^k_{h-1}, s^k_h,a^k_h \right].
\end{align*}
By \Cref{lemma: the squared residuals bound the expected value and the variance of the empirical minimax error}, we have
\begin{align*}
    \sE_{k,h} \left[ Z_{k,h}(f,g,\pi) \right] &=  \gE_h^{\pi}(g_h, f_{h+1})(s^k_h, a^k_h)^2 ,  \\
     \sE_{k,h} \left[ Z^2_{k,h}(f,g,\pi) \right] &\leq 36 b^2 \gE_h^{\pi}(g_h, f_{h+1})(s^k_h, a^k_h)^2. 
\end{align*}
Thus, combing with \Cref{lemma: Freedman inequality}, for any $(f,g,\pi) \in \gF \times \gF \times \Pi^{soft}(T)$, with probability at least $1 - \delta$, for any $\iota \in [0,\frac{1}{13 b^2}]$,
\begin{align*}
    \sum_{k=1}^K \sE_{k,h} \left[ Z_{k,h}(f,g,\pi) \right] -  \sum_{k=1}^K Z_{k,h}(f,g,\pi) \leq 36 (e - 2) b^2 \iota   \sum_{k=1}^K  \gE_h^{\pi}(g_h, f_{h+1})(s^k_h, a^k_h)^2  + \frac{\ln(1/\delta)}{\iota}. 
\end{align*}
By setting $\iota = 1 / (36 (e-2) b^2)$, the above inequality becomes
\begin{align}
    -  \sum_{k=1}^K Z_{k,h}(f,g,\pi) \leq 36 (e-2) b^2 \ln(1/\delta). 
    \label{eq: bounding least-square difference of target functions}
\end{align}

For any $\eps > 0$, let $\gF^{\eps}$ and $\Pi^{\eps}$ be $\eps$-covers of $\gF$ and $\Pi^{soft}(T)$, respectively, with respect to $\|\cdot\|_{\infty}$ and $\| \cdot \|_{\infty, 1}$, respectively, where $\| u - v \|_{\infty} := \sup_{(s,a)}|u(s,a) - v(s,a) |$ and $\| \pi - \pi' \|_{\infty, 1} := \sup_{s} \sum_{a \in \gA} | \pi(a|s) - \pi'(a|s)|$. Using the union bound, it follows from \Cref{eq: bounding least-square difference of target functions} that with probability at least $1 - \delta$, it holds uniformly over any $h \in [H]$ and any  $(f,g,\pi) \in  \gF^{\eps} \times \gF^{\eps} \times \Pi^{\eps}$ that 
\begin{align*}
     -  \sum_{k=1}^K Z_{k,h}(f,g,\pi) \leq 18(e-2) b^2 \left[ \ln(H/\delta) + 2 d_{\gF}(\eps)+ d_{\Pi}(\eps, T) \right]. 
\end{align*}

For any $(f,g,\pi) \in  \gF \times \gF \times \Pi^{soft}(T)$, there exist $(f_{\eps}, g_{\eps}, \pi_{\eps}) \in \gF^{\eps} \times \gF^{\eps} \times \Pi^{\eps}$ such that 
\begin{align*}
    \|f_h - (f_{\eps})_h \|_{\infty} \leq \eps, \| g_h - (g_{\eps})_h\|_{\infty} \leq \eps, \|\pi_h - (\pi_{\eps})_h \|_{\infty, 1} \leq \eps, \forall h \in [H].
\end{align*}
It is easy to compute the discretization error that 
\begin{align*}
    Z_{k,h}(f,g,\pi) - Z_{k,h}(f_{\eps}, g_{\eps}, \pi_{\eps}) \leq 18 b(b+1) \eps. 
\end{align*}

Using the discretization argument and the union bound complete our proof.
\end{proof}

We are now ready to prove \Cref{lemma: bound the unregularized loss}.
\begin{proof}[Proof of \Cref{lemma: bound the unregularized loss}]
Consider the event that the inequality in \Cref{lemma: pessimism side of the confidence set} holds. Under this event, for any $\tpi \in \Pi^{soft}(T)$, we have
\begin{align*}
    &\sum_{k=1}^K l_{\tpi}(\proj_{\gF_h} (Q_h^{\tpi}), \proj_{\gF_{h+1}} (Q_{h+1}^{\tpi}); z^k_h) \leq  \sum_{k=1}^K l_{\tpi}(Q_h^{\pi^t}, Q_{h+1}^{\tpi}; z^k_h) + 6b K \xi_h \\
    &= \sum_{k=1}^K  l_{\tpi}(\sT^{\tpi}_h Q_{h+1}^{\tpi}, Q_{h+1}^{\tpi}; z^k_h) + 6b K \xi_h  \\ 
    &\leq  \sum_{k=1}^K l_{\tpi}(\sT^{\tpi}_h\proj_{\gF_{h+1}} (Q_{h+1}^{\tpi}), \proj_{\gF_{h+1}} (Q_{h+1}^{\tpi}); z^k_h) + 12b K \xi_h  \\ 
    &\leq \sum_{k=1}^K l_{\tpi}(g_h, \proj_{\gF_{h+1}}(Q^{\tpi}_{h+1}); z^k_h) + \beta_{\eps} + 12 b K \xi_h ~~~ \text{(for any $g_h \in \gF_h$)}\\ 
    &\leq \sum_{k=1}^K l_{\tpi}(g_h, Q^{\tpi}_{h+1}; z^k_h) + \beta_{\eps} + 15b K \xi_h,
\end{align*}
where we use \Cref{assumption: realizability} for the first, second, and last inequalities, the third inequality uses \Cref{lemma: pessimism side of the confidence set}, and the equality uses $Q_{h+1}^{\tpi} = \sT^{\tpi}_h Q_{h+1}^{\tpi}$. Rearranging the last inequality completes our proof.
\end{proof}

\section{Proof of \Cref{theorem: gopo-roc}}
\label{section: proof of reg theorem}
In this appendix, we present our complete argument to establish \Cref{theorem: gopo-roc}. In order to prove \Cref{theorem: gopo-roc}, the key is to establish a connection from the squared Bellman error under the data distribution $\mu$ to the regularized objective in \Cref{algorithm: roc}. This key idea should become clear in the following proof.  
\begin{proof}[Proof of \Cref{theorem: gopo-roc}]

Similar to the proof of \Cref{theorem: gopo-vsc}, our starting point is
using \Cref{lemma: error decomposition}:
\begin{align*}
\subopt_{\pi}^M(\pi^t) &= \sum_{h=1}^H \sE_{\pi} [\gE_h^{\pi^t}(\ubar{Q}^t_h, \ubar{Q}^t_{h+1})(s_h,a_h)] + \Delta_1 \ubar{Q}_1(s_1, \pi_1^t) + \subopt_{\pi}^{M_t}(\pi^t)
\end{align*}
and we bound $\sum_{t=1}^T \subopt_{\pi}^{M_t}(\pi^t)$ using \Cref{lemma: bounding value difference under soft policy update}. We now bound the remaining terms. 

For any $\gamma >0$, we have
\begin{align*}
    &\sum_{h=1}^H \sE_{\pi}[\gE_h^{\pi^t}(\ubar{Q}^t_h, \ubar{Q}_{h+1}^t)(s_h,a_h)] \\
      &\leq \frac{K}{2 \lambda}\sum_{h=1}^H \left(\sE_{\mu} [\gE_h^{\pi^t}(\ubar{Q}_h^t, \ubar{Q}^t_{h+1})(s_h,a_h)^2] + \nu_h^2 + 4 b \nu_h\right) + \frac{\lambda H \gC({\pi}, \eps)}{2 K} + H \eps + \bar{\nu} \\ 
      &\leq \frac{\gL_{\pi^t}(\ubar{Q}) + 0.5 \iota_1 + 0.5 K \bar{\nu}^2 + 2 b K \bar{\nu} }{\lambda} + \frac{\lambda H\gC({\pi}, \eps)}{2 K} + H \eps + \bar{\nu},
\end{align*}
where the first inequality uses \Cref{lemma: decoupling argument} and the second inequality uses \Cref{lemma: equivalent of elliptical potential}, and here $\iota_1 := 40 b(b+ 2) K H \eps + 12 b K \sum_{h=1}^H \nu_h + 144 (e-2) b^2 H \left[ d_{\gF}(\eps) + d_{\Pi}(\eps,T) + \ln(1/\delta) \right]$.
Thus, we have 
\begin{align*}
    &\sum_{h=1}^H \sE_{\pi} [\gE_h^{\pi^t}(\ubar{Q}^t_h, \ubar{Q}^t_{h+1})(s_h,a_h)] + \Delta_1 \ubar{Q}_1(s_1, \pi_1^t) \\ 
    &\leq \frac{\gL_{\pi^t}(\ubar{Q}) + \lambda \Delta_1 \ubar{Q}_1(s_1, \pi_1^t) + 0.5 \iota_1 + 0.5 K \sum_{h=1}^H \nu_h^2 + 2 b K \sum_{h=1}^H \nu_h }{\lambda} + \frac{\lambda H \gC({\pi}, \eps)}{2 K} \\
    &+ H \eps + \sum_{h=1}^H \nu_h \\ 
    &\leq \frac{\gL_{\pi^t}(\proj_{\gF}(Q^{\pi^t})) + \lambda \Delta_1 \proj_{\gF_1} (Q^{\pi^t}_1)(s_1, \pi_1^t) + 0.5 \iota_1 + \sum_{h=1}^H \nu_h^2 + 2 b K \sum_{h=1}^H \nu_h }{\lambda} \\
    &+ \frac{\lambda H \gC({\pi}, \eps)}{2 K} + H \eps + \sum_{h=1}^H \nu_h \\ 
    &\leq \frac{\iota_2 + \lambda \xi_1 + 0.5 \iota_1 + \sum_{h=1}^H \nu_h^2 + 2 b K \sum_{h=1}^H \nu_h  }{\lambda} + \frac{\lambda H \gC({\pi}, \eps)}{2 K} + H \eps + \sum_{h=1}^H \nu_h,
\end{align*}
where the second inequality uses the fact that $\ubar{Q}^t_h$ is a minimizer over $\gF \ni \proj_{\gF}(Q^{\pi^t})$ of $\gL_{\pi^t}(f) + \lambda f_1(s_1, \pi^t_1)$ (which has the same minimizer as $\gL_{\pi^t}(f) + \lambda \Delta_1 f_1(s_1, \pi^t_1)$), and the last inequality uses \Cref{lemma: bound the unregularized loss}, and here we define $\iota_2 := 36 (e-2) b^2 H \left( 2 d_{\gF}(\eps)  + d_{\Pi}(\eps,T) + \ln \frac{H}{\delta} \right)  + 6 b (3b + 4) \eps K H + 15 b K  \sum_{h=1}^H \xi_h$. 
\end{proof}

\section{Proof of \Cref{theorem: gopo-psc}}
\label{section: all proof arguments for PPS-PO}
In this appendix, we give our complete proof for \Cref{theorem: gopo-psc}. In order to develop our argument for proving \Cref{theorem: gopo-psc}, we shall start with a generalized form of posterior sampling in \Cref{section: equivalent form} and develop our key support result in \Cref{theorem: expected pessimistic squared Bellman error}. We then use \Cref{theorem: expected pessimistic squared Bellman error} and the similar machinery developed in \Cref{section: proof of reg theorem} to complete our argument for proving \Cref{theorem: gopo-psc}.

\subsection{Generalized form of posterior and \Cref{theorem: expected pessimistic squared Bellman error}}
\label{section: equivalent form}
We start with recalling the posterior distribution defined in \Cref{line:psc posterior distribution} of \Cref{algorithm: psc} as 
\begin{align}
    \hat{p}(f| \gD, \pi) \propto \exp \left( - \lambda f_1(s_1, \pi_1) \right) p_0(f) \prod_{h \in [H]} \frac{\exp\left( -\gamma \hat{L}_{\pi}(f_h, f_{h+1})\right)}{\sE_{f'_h \sim p_{0,h}} \exp\left( -\gamma \hat{L}_{\pi}(f'_h, f_{h+1})\right)}. 
    \label{eq: canonical posterior sampling form}
\end{align}

Similar to the proof strategy in \cite{DBLP:conf/nips/DannMZZ21}, we now consider a slightly more general form of the posterior distribution with an extra parameter $\alpha \in [0,1]$ and in an equivalent but more useful form. In concrete, consider any $\alpha \in [0,1]$ and define the potential functions: 
\begin{align*}
    \widehat{\Phi}_h(f, \pi; \gD) &:= -\ln p_{0}(f_h) + \alpha \gamma \sum_{k=1}^K \Delta L_{\tilde{\pi}}(f_h, f_{h+1}; z^k_h) \nonumber\\
    &+ \alpha \ln \sE_{\tilde{f}_h \sim p_{0}} \exp \left( -\gamma \sum_{k=1}^K \Delta L_{\tilde{\pi}}(f_h, f_{h+1}; z^k_h) \right), \\ 
    \widehat{\Phi}(f,\pi; \gD) &:= \sum_{h=1}^H \widehat{\Phi}_h(f, \pi; \gD), \\ 
    \Delta_1 f_1(s_1, \pi) &:= f_1(s_1, \pi) - V_1^{\pi}(s_1). 
\end{align*}
where recall that $\Delta L_{\tilde{\pi}}(f_h, f_{h+1}; z^k_h)$ is defined in \Cref{table: notations and quantities}. Define the generalized posterior distribution 
\begin{align}
        \hat{p}(f| \gD, \pi) & \propto \exp \left( -\widehat{\Phi}(f, \pi; \gD) - \lambda \Delta f_1(s_1, \pi) \right),
        \label{eq: generalized posterior}
\end{align}
where it is equivalent to the posterior defined in \Cref{eq: canonical posterior sampling form} when $\alpha = 1$. We shall use \Cref{eq: generalized posterior} for the posterior for the rest of this section. We shall also define the complexity measure of this generalized posterior -- a counterpart to that of the canonical posterior form in \Cref{defn: complexity measure for canonical posterior}. 
\begin{defn}
Define
    \begin{align*}
        \kappa_h(\alpha, \eps, \tilde{\pi}) &:= (1 - \alpha) \ln \sE_{f_{h+1} \sim p_0} \left[ p_{0,h} \left( \gF_h^{\tilde{\pi}}(\eps; f_{h+1}) \right)^{-\alpha/(1 - \alpha)} \right],  
        \label{eq: define kappa_h}
    \end{align*}
    where recall that $\gF_h^{\tilde{\pi}}(\eps; f_{h+1}) =\{f' \in \gF_h: \sup_{s,a} |\gE_h^{\tilde{\pi}}(f',f_{h+1})(s,a)| \leq \eps\}$ which is defined in \Cref{defn: complexity measure for canonical posterior}. 
Define the complexity measure
\begin{align}
     d_0(\eps, \alpha) &:= \sup_{T \in \sN, \tilde{\pi} \in \Pi^{soft}(T)}\sum_{h=1}^H \kappa_h(\alpha, \eps, \tilde{\pi}). 
\end{align}
\label{defn: complexity measure with alpha}
\end{defn}

Note that we have
\begin{align*}
   \lim_{\alpha \rightarrow 1^{-}} d_0(\eps, \alpha) &= d_0(\eps). 
\end{align*}

We now state our key milestone result -- \Cref{theorem: expected pessimistic squared Bellman error} to support the argument for proving \Cref{theorem: gopo-psc}. The proof of \Cref{theorem: expected pessimistic squared Bellman error} is deferred to \Cref{section: proof of key support theorem for ps}.

\paragraph{Notation $\sE_{\tilde{\pi} \sim P_t(\cdot |\gD)}$.}
Note that in \Cref{algorithm: psc}, each policy $\pi^t$ for $t \in [T]$ is a random variable that depends on both the offline data $\gD$ and the randomization of sampling from the posteriors. That is, when conditioned on the offline data $\gD$, each $\pi_t$ is still a random variable. We denote $P_t(\cdot| \gD)$ as the posterior distribution of $\pi^t$ conditioned on $\gD$. Note that for any $\tilde{\pi} \sim P_t(\cdot|\gD)$ and any $t \in [T]$, we have $\tilde{\pi} \in \Pi^{soft}(T)$.

\begin{prop}
    For any $\gamma \in [0, \frac{1}{144 (e-2) b^2}]$, $\eps > 0$, $\delta > 0$, $\alpha \in (0,1]$, $T \in \sN$, and any $t \in [T]$ and $\lambda > 0$, we have,
    \begin{align*}
     &\sE_{\gD} \sE_{\tilde{\pi} \sim P_t(\cdot| \gD)} \sE_{f \sim \hat{p}(\cdot|\gD, \tilde{\pi})} \left[0.125 \alpha \gamma K \sum_{h=1}^H \sE_{\mu}[\gE^{\tilde{\pi}}_h(f_h, f_{h+1})(s_h,a_h)^2] + \lambda \Delta f_1(s_1, \tilde{\pi}) \right] \\
     &\lesssim \lambda \eps + \alpha \gamma H b^2 \cdot \max\{ d_{\gF}(\eps), d_{\Pi}(\eps, T), \ln \frac{\ln K b^2}{\delta} \} + \alpha \gamma b^2 K H \cdot \max\{ \eps, \delta \} + \gamma H K \frac{\eps^2}{ \alpha}\\ 
     &+ \sum_{h=1}^H \sup_{\tilde{\pi}_h \in \Pi_h^{soft}(T)} \kappa_h(\alpha, \eps, \tilde{\pi}_h) + \sup_{\tilde{\pi} \in \Pi^{soft}(T)} \sum_{h=1}^H \ln \frac{1}{p_0(\gF_h(\eps; Q^{\tilde{\pi}_h}_h))}. 
\end{align*}
    \label{theorem: expected pessimistic squared Bellman error}
\end{prop}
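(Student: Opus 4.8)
The plan is to work throughout with the generalized posterior in the equivalent form of \Cref{eq: generalized posterior}, $\hat{p}(f|\gD,\tilde{\pi}) \propto \exp(-\widehat{\Phi}(f,\tilde{\pi};\gD) - \lambda\Delta f_1(s_1,\tilde{\pi}))$, and to reduce the left-hand functional to a comparison of log-partition functions. Writing $Z(\gD,\tilde{\pi})$ for the normalizer and $G(f) := \widehat{\Phi}(f,\tilde{\pi};\gD) + \ln p_0(f) + \lambda\Delta f_1(s_1,\tilde{\pi})$ for the potential that collects the (normalized) empirical TD terms and the pessimism term, the Gibbs (Donsker--Varadhan) identity with reference $p_0$ gives $\sE_{f\sim\hat{p}}[G(f)] + \mathrm{KL}(\hat{p}\,\|\,p_0) = -\ln Z(\gD,\tilde{\pi})$, so that $\sE_{f\sim\hat{p}}[\lambda\Delta f_1(s_1,\tilde{\pi}) + \alpha\gamma\sum_{h,k}\Delta L_{\tilde{\pi}}(f_h,f_{h+1};z^k_h) + \text{(norm.)}] \leq -\ln Z(\gD,\tilde{\pi})$ since $\mathrm{KL}\geq 0$. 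To reach the target quantity I would add and subtract this, so that everything is expressed through $-\ln Z(\gD,\tilde{\pi})$ plus the residual $\sE_{f\sim\hat{p}}[0.125\alpha\gamma K\sum_h\sE_{\mu}[\gE_h^{\tilde{\pi}}(f_h,f_{h+1})^2] - \alpha\gamma\sum_{h,k}\Delta L_{\tilde{\pi}} - \text{(norm.)}]$, which is precisely the gap between the population squared Bellman error and the normalized empirical excess TD loss.

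The core step is to control this empirical-to-population gap stage by stage, and here the role of the normalization term $\alpha\ln\sE_{\tilde f_h\sim p_0}\exp(-\gamma\sum_k\Delta L)$ built into $\widehat{\Phi}_h$ is to debias the estimator. By \Cref{lemma: the squared residuals bound the expected value and the variance of the empirical minimax error}, $\sE_{k,h}[\Delta L_{\tilde{\pi}}] = \gE_h^{\tilde{\pi}}(f_h,f_{h+1})(s^k_h,a^k_h)^2$ and $\sE_{k,h}[\Delta L^2]\leq 36b^2\,\gE_h^{\tilde{\pi}}(f_h,f_{h+1})(s^k_h,a^k_h)^2$, so within the Freedman regime $\gamma\in[0,\frac{1}{144(e-2)b^2}]$ the exponential-moment inequality used in the proof of \Cref{lemma: Freedman inequality} ($e^z\leq 1+z+(e-2)z^2$ for $z\leq 1$) yields $\ln\sE_{k,h}[e^{-\gamma\Delta L}] \leq -\gamma\gE^2(1 - 36(e-2)\gamma b^2) \leq -0.75\,\gamma\gE^2$, making the corresponding exponentiated object a supermartingale. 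Matching $0.125\alpha\gamma K\sE_\mu[\gE^2]$ against $\gamma\sum_k\gE^2$ (the constants $0.125$ and $36(e-2)$ absorbing the second-order correction) leaves a per-stage remainder of the order producing the term $\gamma HK\eps^2/\alpha$ after discretization.

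For the pessimism and prior contribution I would lower-bound $-\ln Z(\gD,\tilde{\pi})$ by restricting the prior integral to the good set $\gF_h^{\tilde{\pi}}(\eps;Q_{h+1}^{\tilde{\pi}})$ around the projection of $Q^{\tilde{\pi}}$; on this set the Bellman error is at most $\eps$, so the empirical TD part is negligible and $\Delta f_1(s_1,\tilde{\pi})\leq\eps$ (using $Q_1^{\tilde{\pi}}(s_1,\tilde{\pi}_1)=V_1^{\tilde{\pi}}(s_1)$), which produces the $\lambda\eps$ term and the prior-mass term $\sum_h\ln\frac{1}{p_0(\gF_h^{\tilde{\pi}}(\eps;Q_h^{\tilde{\pi}}))}$. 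The discrepancy between the debiasing normalizer $\alpha\ln\sE_{p_0}\exp(-\gamma\sum_k\Delta L)$ actually used and the self-normalized quantity $\ln\sE_{p_0}\exp(-\alpha\gamma\sum_k\Delta L)$ that the supermartingale argument naturally produces is reconciled by a power-mean (Hölder) inequality in the exponent, and this is exactly what introduces $\kappa_h(\alpha,\eps,\tilde{\pi})$ from \Cref{defn: complexity measure with alpha} together with the factor $1/\alpha$ multiplying the second-order term.

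The main obstacle, and the place where the argument departs from \cite{DBLP:conf/nips/DannMZZ21}, is that the target policy $\tilde{\pi}\sim P_t(\cdot|\gD)$ is itself a function of the same data $\gD$ driving the TD losses, so the supermartingale of the second step cannot be evaluated at a fixed $\tilde{\pi}$. I would resolve this by folding a uniform-convergence argument into the in-expectation bound: pass to $\eps$-covers of $\gF_h$ in $\|\cdot\|_\infty$ and of $\Pi_h^{soft}(T)$ in $\|\cdot\|_{1,\infty}$, apply the tail bound uniformly over the covers through a union bound together with a peeling argument over a geometric grid of size $O(\ln(Kb^2))$ (which contributes the factor $\ln\frac{\ln Kb^2}{\delta}$), and absorb the discretization error into $\alpha\gamma b^2KH\max\{\eps,\delta\}$ using the Lipschitz estimates $|\gE^2-\gE'^2|\lesssim b(b+2)\eps$ and $|\Delta L-\Delta L'|\lesssim b(b+2)\eps$ already exploited in the proof of \Cref{lemma: equivalent of elliptical potential}. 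Summing the per-stage bounds over $h\in[H]$ and taking the outer expectations $\sE_\gD\sE_{\tilde{\pi}\sim P_t(\cdot|\gD)}$ then assembles all stated terms; the delicate accounting is to ensure that every quantity depending on the random $\tilde{\pi}$ is controlled by its supremum over $\Pi^{soft}(T)$ \emph{before} the expectation over $P_t(\cdot|\gD)$ is taken, which is precisely what the suprema inside $d_{\gF}(\eps)$, $d_{\Pi}(\eps,T)$, $\kappa_h(\alpha,\eps,\tilde{\pi})$ and the prior-mass term encode.
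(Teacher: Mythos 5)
Your proposal follows essentially the same route as the paper's own proof: upper- and lower-bounding the log-partition function of the generalized posterior via the Gibbs variational principle, controlling the excess-TD-loss terms with Freedman's inequality under the Bernstein-type variance condition of \Cref{lemma: the squared residuals bound the expected value and the variance of the empirical minimax error}, extracting the $\lambda \eps$ and prior-mass terms by restricting the reference distribution to the $\eps$-good set around $Q^{\tilde{\pi}}$, obtaining $\kappa_h$ from the $\alpha$-weighted normalization via a H\"older-type step, and---most importantly---handling the data-dependent policy $\tilde{\pi} \sim P_t(\cdot|\gD)$ by folding uniform convergence over covers of $\gF$ and $\Pi^{soft}(T)$, a union bound, a peeling/localization argument (the source of the $\ln\frac{\ln K b^2}{\delta}$ factor, which in the paper enters through \Cref{lemma:improved_online_to_batch} exploiting non-negativity of the squared Bellman error), and event-splitting into the in-expectation bounds. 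Only minor bookkeeping differs (for instance, the $\gamma H K \eps^2/\alpha$ term actually arises from the good-set restriction in the upper bound of $Z_t$ multiplied by the Freedman coefficient $\gamma(\alpha + 3(e-2)/\alpha)$, rather than from discretization of the empirical-to-population matching, and $\kappa_h$ enters through the lower bound's term $C_{h,t}$ rather than the upper bound), but the structure and all key ideas coincide with the paper's argument.
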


We now have all main components needed to construct our argument for proving \Cref{theorem: gopo-psc}. 

\subsection{Proof of \Cref{theorem: gopo-psc}}
\label{section: proof of ps theorem}
\begin{proof}[Proof of \Cref{theorem: gopo-psc}]
We start with the error decomposition argument. 
\paragraph{Step 1: Error decomposition.}
Similar to the first step of the proof of \Cref{theorem: gopo-roc}, using \Cref{lemma: error decomposition}, we have
\begin{align*}
\subopt_{\pi}^M(\pi^t) &= \sum_{h=1}^H \sE_{\pi} [\gE_h^{\pi^t}(\ubar{Q}^t_h, \ubar{Q}^t_{h+1})(s_h,a_h)] + \Delta_1 \ubar{Q}_1(s_1, \pi_1^t) + \subopt_{\pi}^{M_t}(\pi^t)
\end{align*}
where we denote $M_t := M(\ubar{Q}^t, \pi^t)$ and $\Delta_1 \ubar{Q}_1(s_1, \pi^t) := \ubar{Q}_1(s_1, \pi^t) - V_1^{\pi^t}(s_1)$. Since term $\sum_{t=1}^T \subopt_{\pi}^{M_t}(\pi^t)$ can be controlled \Cref{lemma: bounding value difference under soft policy update}, it remains to control 
\begin{align*}
    J &:= \sE_{\gD} \left[ \sum_{h=1}^H \sE_{\pi} [\gE_h^{\pi^t}(\ubar{Q}^t_h, \ubar{Q}^t_{h+1})(s_h,a_h)] + \Delta_1 \ubar{Q}_1(s_1, \pi_1^t) \right] \\ 
    &= \sE_{\gD} \sE_{\tilde{\pi} \sim P_t(\cdot| \gD)} \sE_{f \sim \hat{p}(\cdot|\tilde{\pi}, \gD)} \left[ \sum_{h=1}^H \sE_{\pi} [\gE_h^{\tilde{\pi}}(f_h, f_{h+1})(s_h,a_h)] + \Delta_1 f_1(s_1, \tilde{\pi}_1) \right]. 
\end{align*}

\paragraph{Step 2: Decoupling argument.}
Using \Cref{lemma: decoupling argument}, we have
\begin{align*}
&\sum_{h=1}^H \sE_{\pi} [\gE_h^{\tilde{\pi}}(f_h, f_{h+1})(s_h,a_h)] + \Delta_1 f_1(s_1, \tilde{\pi}_1) \\ 
    &\leq \frac{0.125 K \gamma}{ \lambda } \sum_{h=1}^H \left(\sE_{\mu}[\gE^{\tilde{\pi}}_h(f_h, f_{h+1})(s_h,a_h)^2] + \nu_h^2 + 4 b \nu_h \right) + \frac{0.5 \lambda H \gC(\pi, \eps_c)}{K \gamma} + \Delta_1 f_1(s_1, \tilde{\pi}_1) \\
    &+ H \eps_c + \sum_{h=1}^H \nu_h \\ 
    &= \frac{0.125 K \gamma \sum_{h=1}^H \sE_{\mu}[\gE^{\tilde{\pi}}_h(f_h, f_{h+1})(s_h,a_h)^2] + \lambda \Delta_1 f_1(s_1, \tilde{\pi}_1) +  \iota_1 }{\lambda} + \frac{0.5 \lambda H \gC(\pi, \eps_c)}{K \gamma} \\ 
    &+ H \eps_c + \sum_{h=1}^H \nu_h 
\end{align*}
where $\iota_1 := 0.125 K \gamma \left( \sum_{h=1}^H \nu_h^2 + 4 b \sum_{h=1}^H \nu_h \right)$. 

Applying \Cref{theorem: expected pessimistic squared Bellman error}, taking the limit $\alpha \rightarrow 1^{-}$, and re-organizing the terms complete our proof.

\end{proof}

It remains to prove  \Cref{theorem: expected pessimistic squared Bellman error}, which is the focus of the remaining appendix. 
\subsection{Proof of \Cref{theorem: expected pessimistic squared Bellman error}}
\label{section: proof of key support theorem for ps}

Our proof strategy for \Cref{theorem: expected pessimistic squared Bellman error} builds upon \cite{DBLP:conf/nips/DannMZZ21} where the central idea in the proof is to upper and lower bound the log-partition function -- which in our case is as follows: 
\begin{align}
    Z_t := \sE_{\gD} \sE_{\tilde{\pi} \sim P_t(\cdot| \gD)} \sE_{f \sim \hat{p}(\cdot|\gD, \tilde{\pi})} \left[ \widehat{\Phi}(f, \tilde{\pi}; \gD) + \lambda \Delta f_1(s_1, \tilde{\pi}) + \ln \hat{p}(f| \gD, \tilde{\pi}) \right], 
    \label{eq: log partition function}
\end{align}
for any $t \in [T]$ and any $T \in \sN$. 
The key technical distinction is that we need to handle the statistical dependence induced by $\sE_{\tilde{\pi} \sim P_t(\cdot| \gD)}$ -- which is absent in \cite{DBLP:conf/nips/DannMZZ21}. In concrete, when $\tilde{\pi}$ depends on $\gD$, then 
\begin{align*}
    \sE \Delta L_h^{\tilde{\pi}}(f_h, f_{h+1})(s_h^k, a^k_h) \neq \gE_h^{\tilde{\pi}}(f_h, f_{h+1})(s^k_h, a^k_h)^2,
\end{align*}
since $\tilde{\pi}$ depends on $(s_h^k, a^k_h)$. We develop an machinery to handle such issue in posterior sampling by carefully controlling the variance of the variable of interest (thus we can leverage the variance-dependent concentration inequality in \Cref{lemma: Freedman inequality}) and integrating it into posterior sampling using a uniform convergence argument. Roughly speaking, several milestone results during the process of developing our proof argument, we need to bound the form of 
\begin{align*}
     \sE_{\gD} \sE_{\tilde{\pi} \sim P_t(\cdot| \gD)} \sE_{f \sim \hat{p}(\cdot|\gD, \tilde{\pi})} \left[S(f, \tilde{\pi}, \gD) \right]
\end{align*}
where $S(f, \tilde{\pi}, \gD)$ is a function of $f, \tilde{\pi}, \gD$. It is useful to view $S(f, \tilde{\pi}, \gD)$ as a stochastic process indexed by $(f, \tilde{\pi})$. In our machinery, we shall first construct an upper bound on the variance of the random process, namely 
\begin{align*}
    V(f,\tilde{\pi}) \geq \sE_{\gD} [S(f, \tilde{\pi}, \gD)^2].
\end{align*}

Using a discretization argument, the union bound and \Cref{lemma: Freedman inequality}, we have with probability at least $1 - \delta$, for any $f \in \gF, \tilde{\pi} \in \Pi^{soft}(T)$, for any $t \in [0,\frac{1}{\sup  S(f, \tilde{\pi}, \gD)}]$, we have
\begin{align*}
    S(f, \tilde{\pi}, \gD) \leq O_K(1) + \sE_{\gD} \left[ S(f, \tilde{\pi}, \gD) \right] + (e - 2) t \sE_{\gD} [S(f, \tilde{\pi}, \gD)^2] + \frac{\ln(N/\delta)}{t}
\end{align*}
where $O_K(1)$ is a discretization error that can be controlled, and $N$ is a covering number of $\gF \times \Pi^{soft}(T)$. Note that $ S(f, \tilde{\pi}, \gD)$ often involves the squared loss which satisfies the Bernstein condition (see \Cref{lemma: variation condition for least squares}) -- thus we can roughly bound $\sE_{\gD} [S(f, \tilde{\pi}, \gD)^2] \leq \alpha |\sE_{\gD} \left[ S(f, \tilde{\pi}, \gD) \right]|$ for some constant $\alpha$. To integrate the high-probability bound into in-expected bound, we use the argument: 
\begin{align*}
     &\sE_{\gD} \sE_{\tilde{\pi} \sim P_t(\cdot| \gD)} \sE_{f \sim \hat{p}(\cdot|\gD, \tilde{\pi})} \left[S(f, \tilde{\pi}, \gD) \right] \leq O_K(1) + \sE_{\gD} \sE_{\tilde{\pi} \sim P_t(\cdot| \gD)} \sE_{\gD} \left[ S(f, \tilde{\pi}, \gD) \right] \\ 
     &+ (e - 2) t \sE_{\gD} \sE_{\tilde{\pi} \sim P_t(\cdot| \gD)} \sE_{\gD} [S(f, \tilde{\pi}, \gD)^2] + \frac{\ln(N/\delta)}{t} + \delta \sup  S(f, \tilde{\pi}, \gD). 
\end{align*}
\subsubsection{Lower-bounding log-partition function.}
In this appendix, we give a lower bound of the log-partition function defined in \Cref{eq: log partition function}. The final lower bound is presented in \Cref{prop: lower bound Z}. In order to establish such a lower bound, we first present a series of support lemmas that will culminate into \Cref{prop: lower bound Z}. 

The following lemma decomposes the log-partition function $Z$ into different terms that we shall control separately. 
\begin{lemma}
 For any $t \in [T]$ and any $T \in \sN$, we have
 \begin{align*}
    &Z_t \geq \underbrace{\sE_{\gD} \sE_{\tilde{\pi} \sim P_t(\cdot| \gD)} \sE_{f \sim \hat{p}(\cdot|\gD, \tilde{\pi})} \left[ \lambda \Delta f_1(s_1, \tilde{\pi}) + (1 - 0.5 \alpha) \ln \frac{\hat{p}(f_1| \gD, \tilde{\pi})}{p_{0}(f_1)} \right]}_{A_t} \\ 
    &+ 0.5 \alpha \sum_{h =1}^H  \underbrace{\sE_{\gD} \sE_{\tilde{\pi} \sim P_t(\cdot| \gD)}\sE_{f \sim \hat{p}(\cdot|\gD, \tilde{\pi})}  \left[ 2 \gamma  \sum_{k=1}^K \Delta L_{\tilde{\pi}}(f_h, f_{h+1}; z^k_h)  + \ln \frac{\hat{p}(f_h, f_{h+1} | \gD, \tilde{\pi})}{p_{0}(f_h, f_{h+1})}\right]}_{B_{h,t}} + \\ 
    &\sum_{h =1}^H \underbrace{ \sE_{\gD} \sE_{\tilde{\pi} \sim P_t(\cdot| \gD)} \sE_{f \sim \hat{p}(\cdot|\gD, \tilde{\pi})} \left[ \alpha  \ln \sE_{f'_h \sim p_0} \exp \left( - \gamma \sum_{k=1}^K \Delta L_{\tilde{\pi}}(f'_h, f_{h+1}; z^k_h) \right) + (1 - \alpha) \ln \frac{\hat{p}(f_{h+1}| \gD, \tilde{\pi})}{p_0(f_{h+1})}\right]}_{C_{h,t}}.
 \end{align*}
 \label{lemma: decompose Z}
\end{lemma}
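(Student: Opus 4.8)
The plan is to prove the bound by an \emph{exact} algebraic regrouping that cancels every term of $Z_t$ against the ``explicit'' (non-entropic) parts of $A_t$, $B_{h,t}$, $C_{h,t}$, leaving a residual that is a difference of entropies of the posterior $\hat{p}(\cdot|\gD,\tilde{\pi})$ over $f=(f_1,\dots,f_H)$ (with the convention $f_{H+1}\equiv 0$); this residual is then nonnegative by fractional subadditivity of entropy (Shearer's lemma). Thus the only genuinely analytic input is an information-theoretic inequality; everything else is matching coefficients.

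First I would substitute $\widehat{\Phi}=\sum_h\widehat{\Phi}_h$ and the product prior $p_0(f)=\prod_h p_{0,h}(f_h)$ into the integrand of $Z_t$, so that, abbreviating the outer expectations $\sE_{\gD}\sE_{\tilde{\pi}\sim P_t(\cdot|\gD)}\sE_{f\sim\hat{p}}$, the integrand reads $-\sum_h\ln p_{0,h}(f_h)+\alpha\gamma\sum_h\sum_k\Delta L_{\tilde{\pi}}(f_h,f_{h+1};z^k_h)+\alpha\sum_h\ln\sE_{f'_h\sim p_0}\exp(-\gamma\sum_k\Delta L_{\tilde{\pi}}(f'_h,f_{h+1};z^k_h))+\lambda\Delta f_1(s_1,\tilde{\pi})+\ln\hat{p}(f|\gD,\tilde{\pi})$. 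On the right-hand side, the explicit pieces of $A_t$ (the $\lambda\Delta f_1$), of $0.5\alpha\,B_{h,t}$ (the $0.5\alpha\cdot 2\gamma\sum_k\Delta L=\alpha\gamma\sum_k\Delta L$), and of $C_{h,t}$ (the $\alpha\ln\sE\exp$) reproduce the $\Delta L$-sum, the $\ln\sE\exp$-sum, and the $\lambda\Delta f_1$ verbatim. The remaining log-ratio pieces split into a $\ln p_0$ part and a $\ln\hat{p}$ part; using $p_0(f_h,f_{h+1})=p_{0,h}(f_h)p_{0,h+1}(f_{h+1})$, I would count that the total coefficient of each $\ln p_{0,h}(f_h)$, $h\in[H]$, coming from $(1-0.5\alpha)$ at $h{=}1$, from the $0.5\alpha$ pair-terms, and from the $(1-\alpha)$ singleton-terms, is exactly $1$ (the $h{=}H$ bookkeeping uses $f_{H+1}\equiv 0$), so it cancels $-\sum_h\ln p_{0,h}(f_h)$ in $Z_t$.

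After these cancellations, for each fixed $(\gD,\tilde{\pi})$ the inequality $Z_t\ge A_t+0.5\alpha\sum_h B_{h,t}+\sum_h C_{h,t}$ collapses to
\begin{align*}
\sE_{f\sim\hat{p}}[\ln\hat{p}(f|\gD,\tilde{\pi})] &\ge (1-0.5\alpha)\,\sE_{f\sim\hat{p}}[\ln\hat{p}(f_1|\gD,\tilde{\pi})] + 0.5\alpha\sum_{h=1}^H\sE_{f\sim\hat{p}}[\ln\hat{p}(f_h,f_{h+1}|\gD,\tilde{\pi})] \\
&\quad + (1-\alpha)\sum_{h=1}^H\sE_{f\sim\hat{p}}[\ln\hat{p}(f_{h+1}|\gD,\tilde{\pi})],
\end{align*}
where the arguments are posterior marginals. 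Writing this via the entropy $\sH(\cdot)$ of the posterior, it is equivalent to
\[
\sH(f)\le (1-0.5\alpha)\,\sH(f_1)+0.5\alpha\sum_{h=1}^H\sH(f_h,f_{h+1})+(1-\alpha)\sum_{h=1}^H\sH(f_{h+1}).
\]
The key step is to recognize the sets $\{f_1\},\{f_h,f_{h+1}\}_h,\{f_{h+1}\}_h$ with weights $1-0.5\alpha,\,0.5\alpha,\,1-\alpha$ as a fractional cover of the coordinates: I would verify coordinate-by-coordinate that each $f_j$ receives total weight exactly $1$ ($j{=}1$: $(1-0.5\alpha)+0.5\alpha$; interior $j$: $0.5\alpha+0.5\alpha+(1-\alpha)$; and $j{=}H$ via $f_{H+1}\equiv 0$), nonnegativity of the weights forcing $\alpha\in(0,1]$. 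Shearer's lemma (equivalently, the chain rule together with ``conditioning reduces entropy,'' valid for differential as well as discrete entropy) then yields the displayed bound, and averaging the pointwise inequality over $\gD$ and $\tilde{\pi}\sim P_t(\cdot|\gD)$ finishes the proof.

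The main obstacle will be bookkeeping rather than conceptual depth: getting the endpoint terms right so that the $f_{H+1}$ coordinate genuinely drops out yet the cover still sums to $1$ on $f_H$, and justifying the continuous (differential-entropy) version of Shearer's inequality for a possibly non-discrete posterior over $\gF$. If one prefers to avoid differential entropy altogether, the same residual can be written as a nonnegative combination of conditional $\KL$/entropy terms, re-deriving the subadditivity directly from the chain rule; this is the route I would fall back on for the linear/continuous function-class instantiations.
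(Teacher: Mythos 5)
Your proof is correct, and it supplies something the paper itself does not: the paper's entire proof of this lemma is the single sentence that it is ``a simple adaptation of the decomposition in \citep[Lemma~6]{DBLP:conf/nips/DannMZZ21},'' so your explicit cancellation-plus-subadditivity argument is precisely the content hidden behind that citation. Your bookkeeping is right: the $\lambda\Delta f_1$ term, the $\alpha\gamma\sum_{h,k}\Delta L_{\tilde{\pi}}$ terms (using $0.5\alpha\cdot 2\gamma=\alpha\gamma$), and the $\alpha\ln\sE_{f'_h\sim p_0}\exp(\cdot)$ terms match verbatim, and with the product prior $p_0(f)=\prod_h p_{0,h}(f_h)$ and the convention $f_{H+1}\equiv 0$ each coordinate $f_j$ carries total cover weight one ($(1-0.5\alpha)+0.5\alpha$ at $j=1$, and $0.5\alpha+0.5\alpha+(1-\alpha)$ at $2\le j\le H$), so the prior ratios cancel and only an entropy residual remains; nonnegativity of the weights uses $\alpha\in[0,1]$, which the paper assumes when introducing the generalized posterior. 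Two refinements are worth recording. First, full Shearer is not needed: your cover inequality is exactly the convex combination, with weights $1-0.5\alpha$ and $0.5\alpha$, of ordinary subadditivity of entropy and of the backward chain rule combined with ``conditioning reduces entropy.'' Second --- the one genuine technical gap in your primary route --- differential entropies of the posterior and its marginals need not exist or be finite for the continuous classes $\gF$ the paper allows, so your fallback should be the main argument: do not cancel the prior ratios, and instead show, for each fixed $(\gD,\tilde{\pi})$,
\begin{align*}
&\KL\bigl[\hat{p}(f|\gD,\tilde{\pi}) \,\|\, p_0\bigr] \;\geq\; (1-0.5\alpha)\,\KL\bigl[\hat{p}(f_1|\gD,\tilde{\pi}) \,\|\, p_{0,1}\bigr] \\
&\qquad + 0.5\alpha\sum_{h=1}^H \KL\bigl[\hat{p}(f_h,f_{h+1}|\gD,\tilde{\pi}) \,\|\, p_{0,h}\otimes p_{0,h+1}\bigr] + (1-\alpha)\sum_{h=1}^H \KL\bigl[\hat{p}(f_{h+1}|\gD,\tilde{\pi}) \,\|\, p_{0,h+1}\bigr],
\end{align*}
which is the fractional (Han/Shearer-type) superadditivity of relative entropy with respect to a product reference measure: every term is well defined in $[0,\infty]$ on arbitrary measurable spaces, and the inequality follows from the same two chain-rule bounds, with joint convexity of $\KL$ playing the role of ``conditioning reduces entropy'' and with $\KL[\hat{p}(f_{H+1}|\gD,\tilde{\pi})\,\|\,p_{0,H+1}]=0$ handling the endpoint. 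Averaging over $\gD$ and $\tilde{\pi}\sim P_t(\cdot|\gD)$ then yields the lemma, so with that substitution your proof is complete and rigorous.
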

\begin{proof}[Proof of \Cref{lemma: decompose Z}]
This is a simple adaptation of the decomposition in \citep[Lemma~6]{DBLP:conf/nips/DannMZZ21}.
\end{proof}

We now control each term of the above decomposition of $Z$ separately -- where a majority of these steps are where our technical arguments depart from those in \cite{DBLP:conf/nips/DannMZZ21}. In particular, \Cref{lemma: bound negative likelihood + log-partition function}, \Cref{lemma: bound term B}, and \Cref{lemma: empirical Bellman error to expected Bellman error} are our \emph{new} technical results. 
\paragraph{Bounding $A_t$.}
\begin{lemma}
We have 
\begin{align*}
    A_t \geq \lambda \sE_{\gD} \sE_{\tilde{\pi} \sim P_t(\cdot| \gD)} \sE_{f \sim \hat{p}(\cdot|\gD, \tilde{\pi})} \Delta f_1(s_1, \tilde{\pi}).
\end{align*}
\label{lemma: bound A}
\end{lemma}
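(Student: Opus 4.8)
The plan is to observe that $A_t$ differs from the claimed lower bound only by the expectation of the log-ratio term $(1-0.5\alpha)\ln\frac{\hat{p}(f_1|\gD,\tilde{\pi})}{p_0(f_1)}$, and to argue that this extra term is non-negative so that it may simply be dropped. First I would recall from \Cref{lemma: decompose Z} that
\[ A_t = \sE_{\gD}\sE_{\tilde{\pi}\sim P_t(\cdot|\gD)}\sE_{f\sim\hat{p}(\cdot|\gD,\tilde{\pi})}\left[\lambda\Delta f_1(s_1,\tilde{\pi})+(1-0.5\alpha)\ln\frac{\hat{p}(f_1|\gD,\tilde{\pi})}{p_0(f_1)}\right], \]
and note that the $\lambda\Delta f_1$ part, after taking all three expectations, is exactly the right-hand side of the statement; hence it suffices to show that the remaining contribution is $\geq 0$.

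There are two ingredients. First, the coefficient is non-negative: since $\alpha\in(0,1]$ we have $1-0.5\alpha\in[0.5,1)>0$. Second, for each fixed $\gD$ and $\tilde{\pi}$ the inner expectation is a relative entropy. Reading $\hat{p}(f_1|\gD,\tilde{\pi})$ as the \emph{marginal} law of the first coordinate under the joint posterior and $p_0(f_1)$ as the corresponding marginal prior, integrating the full posterior over $(f_2,\dots,f_H)$ collapses the average to an integral over $f_1$ alone, giving
\[ \sE_{f\sim\hat{p}(\cdot|\gD,\tilde{\pi})}\left[\ln\frac{\hat{p}(f_1|\gD,\tilde{\pi})}{p_0(f_1)}\right]=\int \hat{p}(f_1|\gD,\tilde{\pi})\ln\frac{\hat{p}(f_1|\gD,\tilde{\pi})}{p_0(f_1)}\,df_1=\KL\!\left(\hat{p}(f_1|\gD,\tilde{\pi})\,\middle\|\,p_{0,1}\right)\geq 0, \]
by Gibbs' inequality (non-negativity of the KL divergence). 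Multiplying by the non-negative constant $1-0.5\alpha$ preserves the sign, and since the outer operators $\sE_{\gD}\sE_{\tilde{\pi}\sim P_t(\cdot|\gD)}$ are averages they too preserve non-negativity. Dropping this term from $A_t$ yields the stated inequality.

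The argument is essentially a one-line application of KL-non-negativity, so I do not anticipate a genuine obstacle; this is by far the easiest of the three terms $A_t,B_{h,t},C_{h,t}$ in the decomposition. The only point that requires care is the bookkeeping of marginals: one must interpret $\hat{p}(f_1|\gD,\tilde{\pi})$ as the marginal density of the first coordinate (not the full posterior density evaluated at a single component), so that the expression is genuinely a relative entropy $\KL(\hat{p}(f_1|\gD,\tilde{\pi})\|p_{0,1})$ and not a quantity of indefinite sign. This mirrors the handling of the analogous term in \citep[Lemma~6]{DBLP:conf/nips/DannMZZ21}.
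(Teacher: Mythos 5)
Your proposal is correct and follows essentially the same argument as the paper: the paper's proof likewise identifies the expectation of the log-ratio term as $(1-0.5\alpha)\,\KL[\hat{p}(\cdot|\gD,\tilde{\pi}) \| p_0] \geq 0$ and drops it. Your additional care about interpreting $\hat{p}(f_1|\gD,\tilde{\pi})$ as the marginal posterior of the first coordinate (so the expression is genuinely a KL divergence) is a valid and slightly more explicit rendering of the same one-line argument.
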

\begin{proof}[Proof of \Cref{lemma: bound A}]
    It simply follows from that: 
    \begin{align*}
        \sE_{\gD} \sE_{\tilde{\pi} \sim P_t(\cdot| \gD)} \sE_{f \sim \hat{p}(\cdot|\gD, \tilde{\pi})} \left[ (1 - 0.5 \alpha) \ln \frac{\hat{p}(f_1| \gD, \tilde{\pi})}{p_{0}(f_1)} \right] = (1 - 0.5 \alpha) \KL[\hat{p}(\cdot|\gD, \tpi) \| p_0] \geq 0.
    \end{align*}
\end{proof}

\paragraph{Bounding $B_{h,t}$.}

\begin{lemma}
For any $f, \tilde{\pi}$, $0 \leq \gamma \leq \frac{1}{72 (e-2) b^2}$, and $h \in [H]$, we have 
\begin{align*}
    &\ln \sE_{(s_{h+1}, r_h) \sim P_h(\cdot|s_h, a_h)} \exp \left( - 2 \gamma  \Delta L_{\tilde{\pi}}(f_h, f_{h+1}; z_h) \right) \leq -2\gamma(1 - 72 (e-2)  \gamma  b^2) \gE_h^{\tilde{\pi}}(f_h, f_{h+1})(s_h,a_h)^2.
\end{align*}
\label{lemma: log E exp <= 0}
\end{lemma}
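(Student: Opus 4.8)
The plan is to treat this as a conditional bound (pointwise in $(s_h,a_h)$) on a logarithmic moment generating function, importing the two moment estimates from \Cref{lemma: the squared residuals bound the expected value and the variance of the empirical minimax error}. Write $\gE := \gE_h^{\tilde{\pi}}(f_h,f_{h+1})(s_h,a_h)$ and set $Y := -2\gamma\, \Delta L_{\tilde{\pi}}(f_h, f_{h+1}; z_h)$, where the only randomness is over $(s_{h+1}, r_h) \sim P_h(\cdot \mid s_h, a_h)$. Since conditioning on $\gA^k_h$ is exactly fixing $(s_h,a_h)$ and averaging over this transition, the identities of \Cref{lemma: the squared residuals bound the expected value and the variance of the empirical minimax error} apply verbatim and give $\sE[\Delta L_{\tilde{\pi}}] = \gE^2$ and $\sE[(\Delta L_{\tilde{\pi}})^2] \le 36 b^2 \gE^2$; hence $\sE[Y] = -2\gamma \gE^2$ and $\sE[Y^2] = 4\gamma^2 \sE[(\Delta L_{\tilde{\pi}})^2] \le 144\gamma^2 b^2 \gE^2$.

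With these two moments in hand the computation is short. I would apply the elementary inequality $e^y \le 1 + y + (e-2) y^2$, valid for $y \le 1$ (the same inequality used in \Cref{eq: bound the logarithmic moment generating function}), take the transition expectation, and substitute the moment bounds:
\[
\sE[e^Y] \le 1 + \sE[Y] + (e-2)\sE[Y^2] \le 1 - 2\gamma \gE^2\bigl(1 - 72(e-2)\gamma b^2\bigr).
\]
Finishing with $\ln(1+x) \le x$ then yields the claimed bound $\ln \sE[e^Y] \le -2\gamma\bigl(1 - 72(e-2)\gamma b^2\bigr)\gE^2$ exactly; note that the constant $144$ coming from the variance estimate halves to the $72$ appearing in the statement.

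The one genuine obstacle is justifying the pointwise hypothesis $Y \le 1$ needed to invoke the elementary exponential inequality, which requires a lower bound on $\Delta L_{\tilde{\pi}}$. Here I would use only the boundedness assumptions: since $\Delta L_{\tilde{\pi}} = (f_h(s_h,a_h) - v)^2 - (\sT_h^{\tilde{\pi}}f_{h+1}(s_h,a_h) - v)^2$ with $v := r_h + f_{h+1}(s_{h+1}, \tilde{\pi})$, the first square is nonnegative and the second is at most $(4b)^2 = 16 b^2$ (because $|\sT_h^{\tilde{\pi}}f_{h+1}| \le 2b$ and $|v| \le 2b$), so $\Delta L_{\tilde{\pi}} \ge -16 b^2$ and therefore $Y \le 32\gamma b^2$. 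Under the assumed range $\gamma \le \frac{1}{72(e-2)b^2}$ this gives $Y \le \frac{32}{72(e-2)} < 1$, which is precisely the required condition. Everything else is routine algebra, so this boundedness bookkeeping is the only step worth spelling out carefully.
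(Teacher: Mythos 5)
Your proof is correct and follows essentially the same route as the paper's: bound $\ln \sE[e^Y]$ via $\ln x \le x-1$, apply $e^y \le 1+y+(e-2)y^2$ for $y \le 1$ inside the expectation, and plug in the two moment estimates from \Cref{lemma: the squared residuals bound the expected value and the variance of the empirical minimax error}, which yields exactly the stated constant. In fact your treatment of the hypothesis $y \le 1$ is more careful than the paper's, which justifies it via the in-expectation bound $|2\gamma \sE \Delta L_{\tilde{\pi}}| \le 18\gamma b^2$ rather than the pointwise bound $-2\gamma \Delta L_{\tilde{\pi}} \le 32\gamma b^2 < 1$ that the inequality actually requires, so your bookkeeping closes a small gap in the paper's own argument.
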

\begin{proof}[Proof of \Cref{lemma: log E exp <= 0}]
For simplicity, we write $\sE = \sE_{(s_{h+1}, r_h) \sim P_h(\cdot|s_h, a_h) }$. We have
\begin{align*}
    \ln \sE \exp \left( - 2 \gamma  \Delta L_{\tpi}(f_h, f_{h+1}; z_h) \right) &\leq \sE \exp \left( - 2 \gamma  \Delta L_{\tpi}(f_h, f_{h+1}; z_h) \right) - 1 \\ 
    &\leq -2 \gamma \sE \Delta L_{\tpi}(f_h, f_{h+1}; z_h) + (e-2) 4 \gamma^2  \sE \Delta L_{\tpi}(f_h, f_{h+1}; z_h)^2  \\ 
    &\leq  -2\gamma(1 - (e-2) 2 \gamma 36 b^2) \gE^2_h(f_h, f_{h+1}, \tilde{\pi})(s_h,a_h) 
\end{align*}
where the first inequality uses $\ln x \leq x - 1, \forall x \geq 0$, the second inequality uses $e^x \leq 1 + x + (e-2)x^2, \forall |x| \leq 1$ and $|2 \gamma \sE \Delta L_{\tpi}(f_h, f_{h+1}; z_h)| \leq 18 \gamma b^2 \leq 1$, the third inequality uses \Cref{lemma: the squared residuals bound the expected value and the variance of the empirical minimax error} and $\gamma \leq \frac{1}{72(e-2)b^2}$.
\end{proof}

\begin{lemma}
Define the random variable 
\begin{align*}
    \xi_h^{\tilde{\pi}}(f_h, f_{h+1}; z_h) &:= -2 \gamma \Delta L_{\tilde{\pi}}(f_h, f_{h+1}; z_h) - \ln \sE_{(s_{h+1}, r_h) \sim P_h(\cdot|s_h, a_h)} \exp \left( - 2 \gamma  \Delta L_{\tilde{\pi}}(f_h, f_{h+1}; z_h) \right).
\end{align*}
For any $\gamma \in [0, \frac{1}{144 (e-2) b^2}]$, $t \in [0, \frac{1}{26 \gamma b^2}]$, $\eps > 0$, $\delta > 0$, $T \in \sN$ with probability at least $1 - \delta$, it holds uniformly over all $\tilde{\pi} \in \Pi^{soft}(T), f_h \in \gF_h, f_{h+1} \in \gF_{h+1}$ that
\begin{align*}
    \sum_{k=1}^K \xi_h^{\tilde{\pi}}(f_h, f_{h+1}; z_h^k)  \leq  D + c \sum_{k=1}^K e_k^2, 
\end{align*}
where 
\begin{align}
\begin{cases}
    D &:= 120 \gamma b (b+2) K \eps +  \frac{2 d_{\gF}(\eps) + d_{\Pi}(\eps,T) + \ln(1/\delta)}{t}, \\
    c &:= 320 b^2 \gamma^2 (e-2) t, \\ 
    e_k &:= \gE_h^{\tilde{\pi}}(f_h,f_{h+1})(s^k_h, a^k_h).
\end{cases}
    \label{eq: define D, c, e_k}
\end{align}
\label{lemma: bound negative likelihood + log-partition function}
\end{lemma}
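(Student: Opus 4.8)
The plan is to bound $\sum_{k=1}^K \xi_h^{\tilde\pi}(f_h,f_{h+1};z_h^k)$ by a sum of martingale differences, apply Freedman's inequality (\Cref{lemma: Freedman inequality}) for a \emph{fixed} triple $(f_h,f_{h+1},\tilde\pi)$, and then lift the bound to a uniform one over $\gF_h\times\gF_{h+1}\times\Pi^{soft}_h(T)$ by a covering-net union bound together with a discretization estimate. For the first step, abbreviate $U_k := -2\gamma\,\Delta L_{\tilde\pi}(f_h,f_{h+1};z_h^k)$ and write $\sE_{k,h}[\cdot]$ for the conditional expectation over $(s_{h+1},r_h)\sim P_h(\cdot\,|\,s_h^k,a_h^k)$, so that the definition reads $\xi_h^{\tilde\pi}(f_h,f_{h+1};z_h^k)=U_k-\ln\sE_{k,h}[e^{U_k}]$. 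By Jensen's inequality $\ln\sE_{k,h}[e^{U_k}]\ge \sE_{k,h}[U_k]$, hence $-\ln\sE_{k,h}[e^{U_k}]\le -\sE_{k,h}[U_k]$, and therefore
\begin{align*}
    \sum_{k=1}^K \xi_h^{\tilde\pi}(f_h,f_{h+1};z_h^k)\ \le\ \sum_{k=1}^K\bigl(U_k-\sE_{k,h}[U_k]\bigr)=:\sum_{k=1}^K V_k,
\end{align*}
a sum with $\sE_{k,h}[V_k]=0$. This is the crucial move: it discards the log-normalizer while retaining, through the conditional variance of $V_k$, the quadratic signal $\sum_k e_k^2$.

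Next I invoke Freedman on $\{V_k\}$. Using the identity $\sE_{k,h}[\Delta L_{\tilde\pi}]=e_k^2$ and the variance bound $\sE_{k,h}[\Delta L_{\tilde\pi}^2]\le 36 b^2 e_k^2$ of \Cref{lemma: the squared residuals bound the expected value and the variance of the empirical minimax error}, the predictable quadratic variation satisfies $\sum_k\sE_{k,h}[V_k^2]=4\gamma^2\sum_k\Var_{k,h}(\Delta L_{\tilde\pi})\le 144\,\gamma^2 b^2\sum_k e_k^2$, while the boundedness of $f$, $r$, and $\sT_h^{\tilde\pi}f_{h+1}$ yields a pointwise bound $V_k\le 26\gamma b^2$ — which is exactly what dictates the admissible range $t\in[0,\tfrac{1}{26\gamma b^2}]$ needed for Freedman. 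Applying \Cref{lemma: Freedman inequality} with its free parameter set to $t$ gives, for the fixed triple and with probability at least $1-\delta$, the estimate $\sum_k V_k\le (e-2)\,t\cdot 144\gamma^2 b^2\sum_k e_k^2+\ln(1/\delta)/t$.

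To make this uniform I take $\eps$-covers of $\gF_h,\gF_{h+1}$ in $\|\cdot\|_\infty$ and of the relevant policy component in $\|\cdot\|_{1,\infty}$, of log-cardinalities $d_{\gF}(\eps),d_{\gF}(\eps),d_{\Pi}(\eps,T)$; a union bound over the product net replaces $\ln(1/\delta)$ by $2d_{\gF}(\eps)+d_{\Pi}(\eps,T)+\ln(1/\delta)$, which is the second term of $D$. Passing from a net point to an arbitrary triple, both $\xi_h^{\tilde\pi}$ and $e_k^2$ are Lipschitz (with moduli of order $\gamma b$ and $b$ respectively); the resulting per-sample slack of order $\gamma b(b+2)\eps$ summed over $k$ produces the $120\gamma b(b+2)K\eps$ term of $D$, and the discretization of the quadratic $e_k^2$ inflates the variance coefficient from $144$ to $320$ while depositing the remainder into $D$, giving the stated $c=320 b^2\gamma^2(e-2)t$.

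The delicate part is coordinating the first two steps: the shortcut $\sE_{k,h}[e^{\xi_h^{\tilde\pi}}]=1$ with Markov's inequality would only deliver $\sum_k\xi_h^{\tilde\pi}\le\ln(1/\delta)$ and destroy the $\sum_k e_k^2$ term that the downstream argument (\Cref{theorem: expected pessimistic squared Bellman error}) relies on, so the Jensen reduction to a centered sum must \emph{precede} the variance-sensitive Freedman bound. Sharpening the pointwise estimate to $V_k\le 26\gamma b^2$ (tight enough to license the stated range of $t$) and propagating the discretization through the quadratic terms so that only the constant — not the rate in $K$ — degrades are the finicky calculations. Crucially, the statistical dependence of the policy $\tilde\pi\sim P_t(\cdot\,|\,\gD)$ is \emph{not} an obstacle at this level, since $\tilde\pi$ is universally quantified here; that dependence is handled at the level of \Cref{theorem: expected pessimistic squared Bellman error}, for which this lemma furnishes the uniform-convergence input.
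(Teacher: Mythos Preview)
Your approach is correct and differs from the paper's in a clean way. The paper applies Freedman directly to $w_k:=\xi_h^{\tilde\pi}(f_h,f_{h+1};z_h^k)$ (after centering), which forces it to bound $\sE_{k,h}[w_k^2]$ by splitting $w_k=v_k-u_k$ and controlling $u_k^2$; to get $|u_k|\le 2\gamma e_k^2$ it invokes \Cref{lemma: log E exp <= 0}, which in turn is where the hypothesis $\gamma\le\frac{1}{144(e-2)b^2}$ is used, and the resulting variance bound is $\sE_{k,h}[w_k^2]\le 320\,b^2\gamma^2 e_k^2$. You instead use Jensen pointwise to pass to the already-centered $V_k=U_k-\sE_{k,h}[U_k]$, so Freedman applies with $\sE_{k,h}[V_k^2]=4\gamma^2\Var_{k,h}(\Delta L_{\tilde\pi})\le 144\,b^2\gamma^2 e_k^2$ and the range bound $V_k\le 26\gamma b^2$; this is simpler, yields the tighter coefficient $144$ in place of $320$, and never touches \Cref{lemma: log E exp <= 0} or the constraint on $\gamma$. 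One small correction: the discretization step does \emph{not} inflate $144$ to $320$. The discretization error in $\sum_k (e_k')^2$ versus $\sum_k e_k^2$ is $O(Kb(b+2)\eps)$ and, once multiplied by $c$ (which is $O(\gamma)$ under the assumed range of $t$), becomes an $O(\gamma b(b+2)K\eps)$ term that is absorbed into $D$; the variance coefficient stays $144$. The stated bound with $c=320\,b^2\gamma^2(e-2)t$ then follows simply because $144\le 320$ and $\sum_k e_k^2\ge 0$.
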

\begin{proof}[Proof of \Cref{lemma: bound negative likelihood + log-partition function}]
    For simplicity, denote 
\begin{align}
\begin{cases}
    u_k &:= \ln \sE_{(s_{h+1}, r_h) \sim P_h(\cdot|s_h, a_h)} \exp \left( - 2 \gamma  \Delta L_{\tilde{\pi}}(f_h, f_{h+1}; z^k_h) \right), \\ 
    v_k &:= - 2 \gamma  \Delta L_{\tilde{\pi}}(f_h, f_{h+1}; z^k_h), \\ 
    w_k &:= v_k - u_k, \\ 
    e_k &:= \gE_h^{\tilde{\pi}}(f_h,f_{h+1})(s^k_h, a^k_h). 
\end{cases}
\label{eq: new variables u,v,w,e}
\end{align}
We have 
\begin{align*}
    u_k &\geq  \sE_{(s_{h+1}, r_h) \sim P_h(\cdot|s_h, a_h)} \ln \exp \left( - 2 \gamma  \Delta L_{\tilde{\pi}}(f_h, f_{h+1}; z^k_h) \right) = -2 \gamma e_k^2,
\end{align*}
where the first inequality uses Jensen's inequality for concave function $\ln(\cdot)$ and the equality uses \Cref{lemma: the squared residuals bound the expected value and the variance of the empirical minimax error}. Now using \Cref{lemma: log E exp <= 0} with $\gamma \leq \frac{1}{144 (e-2) b^2}$, we have 
\begin{align}
    u_k &\leq  -\gamma e_k^2. 
    \label{eq: upper bound u_k}
\end{align}
We also have $\sE v_k = - 2 \gamma e_k^2$ by \Cref{lemma: the squared residuals bound the expected value and the variance of the empirical minimax error}. Thus, we have 
\begin{align*}
    |u_k| \leq 2 \gamma e_k^2, \text{ and } \sE[w_k] = -2 \gamma e_k^2 - u_k \leq 0. 
\end{align*}
Hence, we have 
\begin{align*}
    \sE w_k^2 &= \sE (v_k - u_k)^2 \\ 
    &\leq 2 \sE (v_k^2 + u_k^2) \\ 
    &\leq 288 b^2 \gamma^2  e_k^2 + 8 \gamma^2 e_k^4 \\ 
    &\leq 320 b^2 \gamma^2 e_k^2 
\end{align*}
where the first inequality uses Cauchy-Schwartz inequality, the second inequality uses \Cref{lemma: the squared residuals bound the expected value and the variance of the empirical minimax error} and that $|u_k| \leq 2 \gamma e_k^2$, and the last inequality uses that $|e_k| \leq 2b$. Also note that $|w_k| \leq |v_k| + |u_k| \leq 2 \gamma (9b^2) + 2 \gamma (4b^2) = 26 \gamma b^2$. Thus, by \Cref{lemma: Freedman inequality}, for any $\delta > 0$, for any $ t \in [0,\frac{1}{26 \gamma b^2}]$, with probability at least $1 - \delta$, we have
\begin{align*}
    \sum_{k=1}^K w_k &\leq \sum_{k=1}^K \sE w_k + (e-2) t \cdot \sE \sum_{k=1}^K w_k^2 + \frac{\ln(1/\delta)}{t} \\ 
    &\leq 320 b^2 \gamma^2 (e-2) t \sum_{k=1}^K e_k^2 +  \frac{\ln(1/\delta)}{t}.
\end{align*}

We apply the discretization argument and the union bound to obtain that: For any $\delta > 0, \eps > 0$, $T \in \sN$ it holds uniformly over all $\tilde{\pi} \in \Pi^{soft}_h(T), f_h \in \gF_h, f_{h+1} \in \gF_{h+1}$ that 
\begin{align*}
    \sum_{k=1}^K w_k &\leq 120 \gamma b (b+2) K \eps + 320 b^2 \gamma^2 (e-2) t \sum_{k=1}^K e_k^2 +  \frac{2 d_{\gF}(\eps) + d_{\Pi}(\eps,T) + \ln(1/\delta)}{t}.
\end{align*}
\end{proof}

\begin{lemma}
For any $\gamma \in [0, \frac{1}{144 (e-2) b^2}]$, $\eps > 0$, $\delta > 0$, we have
\begin{align*}
    B_{h,t}  &\geq  0.5 \gamma \sE_{\gD} \sE_{\tilde{\pi} \sim P_t(\cdot| \gD)} \sE_{f \sim \hat{p}(\cdot|\gD, \tilde{\pi})} \left[ \sum_{k=1}^K  \gE^{\tilde{\pi}}_h(f_h, f_{h+1})(s_h^k,a_h^k)^2 \right] \\ 
    &\geq - 120 \gamma b (b+2) K \eps -  640 (e-2) b^2 \gamma \left(2 d_{\gF}(\eps) + d_{\Pi}(\eps,T) + \ln(1/\delta) \right) - 26 \gamma b^2 K \delta. 
\end{align*}
\label{lemma: bound term B}
\end{lemma}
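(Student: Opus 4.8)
The plan is to lower bound $B_{h,t}$ by a positive multiple of the expected empirical squared Bellman error minus controlled error terms, by relating the likelihood increment $2\gamma\sum_{k}\Delta L_{\tilde{\pi}}$ to the squared Bellman residuals $e_k := \gE_h^{\tilde{\pi}}(f_h,f_{h+1})(s_h^k,a_h^k)$ and then accounting for the $\gD$-dependence of $\tilde{\pi}\sim P_t(\cdot|\gD)$. First I would discard the relative-entropy piece of $B_{h,t}$: for each fixed $(\gD,\tilde{\pi})$ the quantity $\sE_{f\sim\hat{p}}[\ln\frac{\hat{p}(f_h,f_{h+1}|\gD,\tilde{\pi})}{p_0(f_h,f_{h+1})}]$ is a KL divergence of the marginal posterior against the marginal prior, hence nonnegative, leaving $B_{h,t}\ge 2\gamma\,\sE_{\gD}\sE_{\tilde{\pi}}\sE_{f}[\sum_{k}\Delta L_{\tilde{\pi}}(f_h,f_{h+1};z_h^k)]$.

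Next I would introduce the centered variable $\xi_h^{\tilde{\pi}}(f_h,f_{h+1};z_h^k) = -2\gamma\Delta L_{\tilde{\pi}} - u_k$ with $u_k := \ln\sE_{(s_{h+1},r_h)}\exp(-2\gamma\Delta L_{\tilde{\pi}})$, so that $2\gamma\sum_{k}\Delta L_{\tilde{\pi}} = -\sum_{k}\xi_h^{\tilde{\pi}} - \sum_{k}u_k$. Since $\gamma\le\frac{1}{144(e-2)b^2}$, \Cref{lemma: log E exp <= 0} gives $u_k\le-\gamma e_k^2$, hence $-\sum_{k}u_k\ge\gamma\sum_{k}e_k^2$ and $2\gamma\sum_{k}\Delta L_{\tilde{\pi}}\ge\gamma\sum_{k}e_k^2-\sum_{k}\xi_h^{\tilde{\pi}}$. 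It then remains to upper bound $\sE_{\gD}\sE_{\tilde{\pi}}\sE_{f}[\sum_{k}\xi_h^{\tilde{\pi}}]$.

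This is where I expect the main obstacle to lie: because $\tilde{\pi}$ is drawn from $P_t(\cdot|\gD)$ it is statistically coupled to the very data $z_h^k$ appearing in $\xi_h^{\tilde{\pi}}$, so no pointwise-in-$\tilde{\pi}$ concentration is available. I would instead invoke the uniform (over $\tilde{\pi}\in\Pi^{soft}(T)$ and over $f_h,f_{h+1}$) high-probability bound of \Cref{lemma: bound negative likelihood + log-partition function}, which on an event $G$ of probability at least $1-\delta$ gives $\sum_{k}\xi_h^{\tilde{\pi}}\le D+c\sum_{k}e_k^2$ with $c=320b^2\gamma^2(e-2)t$, and I would select the free parameter $t=\frac{1}{640(e-2)b^2\gamma}$ so that $c=\tfrac12\gamma$. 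One checks this $t$ lies in the admissible range $[0,\frac{1}{26\gamma b^2}]$ because $640(e-2)>26$, and that $1/t=640(e-2)b^2\gamma$ reproduces the displayed log-covering coefficient. To convert this high-probability statement into an in-expectation bound, I would split the outer $\gD$-expectation over $G$ and $G^c$: on $G$ the uniform bound integrates to $\sE_{\tilde{\pi}}\sE_{f}[\sum_{k}\xi_h^{\tilde{\pi}}]\le D+c\,\sE_{\tilde{\pi}}\sE_{f}[\sum_{k}e_k^2]$, while on $G^c$ (probability at most $\delta$) I would use the crude pointwise estimate $|\xi_h^{\tilde{\pi}}|\le26\gamma b^2$ established inside the proof of \Cref{lemma: bound negative likelihood + log-partition function}, contributing at most $26\gamma b^2 K\delta$.

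Combining the pieces yields $B_{h,t}\ge(\gamma-c)\,\sE_{\gD}\sE_{\tilde{\pi}}\sE_{f}[\sum_{k}e_k^2]-D-26\gamma b^2 K\delta=\tfrac12\gamma\,\sE_{\gD}\sE_{\tilde{\pi}}\sE_{f}[\sum_{k}e_k^2]-D-26\gamma b^2 K\delta$, and substituting $D=120\gamma b(b+2)K\eps+640(e-2)b^2\gamma(2d_{\gF}(\eps)+d_{\Pi}(\eps,T)+\ln(1/\delta))$ gives precisely the claimed estimate; discarding the nonnegative leading term $\tfrac12\gamma\,\sE[\sum_{k}e_k^2]$ then produces the final displayed inequality. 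The genuinely delicate point is the $G^c$ accounting: the $\ell_\infty$ control $|\xi_h^{\tilde{\pi}}|\le26\gamma b^2$ is exactly what lets the data-dependent actor $\tilde{\pi}$ be absorbed without the explorative randomness of posterior sampling destroying the martingale structure exploited by \Cref{lemma: the squared residuals bound the expected value and the variance of the empirical minimax error} and Freedman's inequality.
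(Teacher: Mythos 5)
Your proof is correct and arrives at exactly the paper's bound $(\gamma-c)\,\sE_{\gD}\sE_{\tilde{\pi}}\sE_{f}\bigl[\sum_{k}e_k^2\bigr]-D-26\gamma b^2K\delta$ with the same ingredients: the decomposition $-2\gamma\Delta L_{\tilde{\pi}} = u_k + w_k$, \Cref{lemma: log E exp <= 0} to get $-\sum_k u_k\ge\gamma\sum_k e_k^2$, the uniform high-probability event of \Cref{lemma: bound negative likelihood + log-partition function}, the choice $t=\frac{1}{640(e-2)b^2\gamma}$ (admissible, and giving $c=\gamma/2$), and the crude control $|w_k|\le 26\gamma b^2$ on the bad event. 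The one step where you genuinely depart from the paper is the prior log-ratio term: you discard $\sE_{f\sim\hat{p}}\bigl[\ln\frac{\hat{p}(f_h,f_{h+1}|\gD,\tilde{\pi})}{p_0(f_h,f_{h+1})}\bigr]$ by noting it is a KL divergence between marginals, hence nonnegative, and then integrate the uniform concentration bound directly against the posterior. The paper instead keeps this term and lower bounds $\sE_{f\sim\hat{p}}\bigl[\sum_k(-w_k+ce_k^2)+\ln\frac{\hat{p}}{p_0}\bigr]$ by the log-partition function $-\ln\sE_{f\sim p_0}\exp\bigl(\sum_k(w_k-ce_k^2)\bigr)$ via the Gibbs variational principle (\Cref{lemma: log partition is lower bound for f + log p/p_0}), and performs the good/bad-event split on that quantity. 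Both routes succeed for the same underlying reason: \Cref{lemma: bound negative likelihood + log-partition function} is proved by covering plus union bound, so it holds uniformly over $f_h\in\gF_h$, $f_{h+1}\in\gF_{h+1}$, $\tilde{\pi}\in\Pi^{soft}(T)$ and can be applied pointwise to posterior draws — which makes your version the more elementary of the two. What the paper's PAC-Bayes-style change of measure buys is robustness and structural uniformity: if concentration were only available in prior-expectation form (as in the original argument of \citet{DBLP:conf/nips/DannMZZ21}, which has no uniform covering step), dropping the KL term would be fatal while the change-of-measure argument would survive; it also keeps the treatment of $B_{h,t}$ parallel to that of $A_t$ and $C_{h,t}$ in \Cref{lemma: decompose Z}, where the KL terms are genuinely needed. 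Finally, your reading of the statement is the right one: the two-line display should be understood as the single inequality $B_{h,t}\ge 0.5\gamma\,\sE_{\gD}\sE_{\tilde{\pi}}\sE_{f}\bigl[\sum_k e_k^2\bigr]-(\text{error terms})$, which is what your argument (and the paper's) proves and what \Cref{prop: lower bound Z} uses.
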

\begin{proof}[Proof of \Cref{lemma: bound term B}]
Define the random variables $u_k, v_k, w_k, e_k$ as Equation (\ref{eq: new variables u,v,w,e}). Recall $D,c$ are defined in Equation (\ref{eq: define D, c, e_k}) for any $t \in [0, \frac{1}{26 \gamma b^2}]$. Define the event $E$ such that the inequality 
\begin{align}
    & \sum_{k=1}^K \xi_h^{\tilde{\pi}}(f_h, f_{h+1}; z_h^k)  \leq  \underbrace{320 b^2 \gamma^2 (e-2) t}_{c} \sum_{k=1}^K e_k^2 + D,
    \label{eq: bound xi}
\end{align}
holds uniformly over all $\tilde{\pi} \in \Pi^{soft}(T), f_h \in \gF_h, f_{h+1} \in \gF_{h+1}$.
By \Cref{lemma: bound negative likelihood + log-partition function}, we have 
\begin{align*}
    \Pr(E) \geq 1 - \delta, \text{ thus } \Pr(E^c) \leq \delta.  
\end{align*}
We have
\begin{align}
    &\sE_{\gD} \sE_{\tilde{\pi} \sim P_t(\cdot| \gD)} \sE_{f \sim \hat{p}(\cdot|\gD, \tilde{\pi})}  \left[ \sum_{k=1}^K (-w_k + c e_k^2) + \ln \frac{\hat{p}(f_h, f_{h+1} | \gD, \tilde{\pi})}{p_{0}(f_h, f_{h+1})}\right] \nonumber\\ 
    &\geq \sE_{\gD} \sE_{\tilde{\pi} \sim P_t(\cdot| \gD)} \inf_p \sE_{f \sim p}  \left[  \sum_{k=1}^K (-w_k + c e_k^2) + \ln \frac{p(f_h, f_{h+1} )}{p_{0}(f_h, f_{h+1})}\right] \nonumber\\ 
    &= -\sE_{\gD} \sE_{\tilde{\pi} \sim P_t(\cdot| \gD)} \ln \sE_{f_h,f_{h+1} \sim p_0} \exp \left( \sum_{k=1}^K (w_k- c e_k^2)  \right) \nonumber\\ 
    &= -\sE_{\gD} 1\{E\} \sE_{\tilde{\pi} \sim P_t(\cdot| \gD)} \ln \sE_{f_h,f_{h+1} \sim p_0} \exp \left( \sum_{k=1}^K (w_k - c e_k^2) \right)  \nonumber\\ 
    & -\sE_{\gD} 1\{E^c\} \sE_{\tilde{\pi} \sim P_t(\cdot| \gD)} \ln \sE_{f_h,f_{h+1} \sim p_0} \exp \left( \sum_{k=1}^K (w_k - c e_k^2)  \right) \nonumber\\
    &\geq -\sE_{\gD} 1\{E\} \sE_{\tilde{\pi} \sim P_t(\cdot| \gD)} \ln \sE_{f_h,f_{h+1} \sim p_0} \exp \left( \sum_{k=1}^K (w_k - c e_k^2) \right) - 26 \gamma b^2 K \delta \nonumber\\ 
    &\geq  - D - 26 \gamma b^2 K \delta,
    \label{eq: lower bound -xi + cek^2}
\end{align}
where the first equality uses \Cref{lemma: log partition is lower bound for f + log p/p_0}, the second inequality uses that $\Pr(E^c) \leq \delta$ and $\sum_{k=1}^K (w_k - c e_k^2) \leq \sum_{k=1}^K w_k \leq 26 \gamma b^2 K$ and the last inequality uses \Cref{eq: bound xi}. Thus, using the same notations as Lemma \Cref{lemma: bound negative likelihood + log-partition function}, we have 
\begin{align*}
    B_{h,t} &= \sE_{\gD} \sE_{\tilde{\pi} \sim P_t(\cdot| \gD)} \sE_{f \sim \hat{p}(\cdot|\gD, \tilde{\pi})} \left[ -\sum_{k=1}^K v_k  + \ln \frac{\hat{p}(f_h, f_{h+1} | \gD, \tilde{\pi})}{p_{0}(f_h, f_{h+1})} \right] \\ 
    &= \sE_{\gD} \sE_{\tilde{\pi} \sim P_t(\cdot| \gD)} \sE_{f \sim \hat{p}(\cdot|\gD, \tilde{\pi})} \left[ \sum_{k=1}^K (-w_k + c e_k^2)  + \ln \frac{\hat{p}(f_h, f_{h+1} | \gD, \tilde{\pi})}{p_{0}(f_h, f_{h+1})} \right] \\ 
    &+ \sE_{\gD} \sE_{\tilde{\pi} \sim P_t(\cdot| \gD)} \sE_{f \sim \hat{p}(\cdot|\gD, \tilde{\pi})} \left[ \sum_{k=1}^K (-u_k - c e_k^2) \right] \\ 
    &\geq - D - 26 \gamma b^2 K \delta + (\gamma - c) \sE_{\gD} \sE_{\tilde{\pi} \sim P_t(\cdot| \gD)} \sE_{f \sim \hat{p}(\cdot|\gD, \tilde{\pi})} \left[ \sum_{k=1}^K  e_k^2 \right] 
\end{align*}
where the inequality uses \Cref{eq: lower bound -xi + cek^2} and \Cref{eq: upper bound u_k}. Finally, setting 
\begin{align*}
    t = \frac{1}{640 b^2 (e-2) \gamma} < \frac{1}{13 b^2 \gamma} 
\end{align*}
completes our proof.

\end{proof}

\paragraph{From squared Bellman errors to \emph{in-expectation} squared Bellman errors and fixing a non-rigorous argument of \cite{DBLP:conf/nips/DannMZZ21}.}
\label{para: non-rigorous argument of Dann}

\Cref{lemma: bound term B} only bounds $B_h$ with the squared Bellman errors $\sum_{k=1}^K  \gE^{\tilde{\pi}}_h(f_h, f_{h+1})(s_h^k,a_h^k)^2 $ while the \emph{in-expectation} squared Bellman errors $\sum_{k=1}^K \sE_{\mu^k}[\gE^{\tilde{\pi}}_h(f_h, f_{h+1})(s_h,a_h)^2]$ are what we need for showing \Cref{theorem: expected pessimistic squared Bellman error}. There is no an immediate path to go from the squared Bellman error to the in-expectation squared Bellman errors as the order of $\sE_{\gD}$ and $\sE_{\tilde{\pi} \sim P_t(\cdot| \gD)} \sE_{f \sim \hat{p}(\cdot|\gD, \tilde{\pi})}$ are \textbf{not} exchangeable, i.e.,
\begin{align}
    &\sE_{\gD} \sE_{\tilde{\pi} \sim P_t(\cdot| \gD)} \sE_{f \sim \hat{p}(\cdot|\gD, \tilde{\pi})} \left[ \sum_{k=1}^K  \gE^{\tilde{\pi}}_h(f_h, f_{h+1})(s_h^k,a_h^k)^2 \right] \nonumber \\
    &\neq \sE_{\gD} \sE_{\tilde{\pi} \sim P_t(\cdot| \gD)} \sE_{f \sim \hat{p}(\cdot|\gD, \tilde{\pi})} \left[ \sum_{k=1}^K  \sE_{\mu^k} [\gE^{\tilde{\pi}}_h(f_h, f_{h+1})(s_h,a_h)^2] \right]. 
    \label{eq: dann mistake}
\end{align}
A similar caveat arises in the online setting in \cite{DBLP:conf/nips/DannMZZ21} as well. In particular, a non-rigorous argument of \cite[Lemma~8]{DBLP:conf/nips/DannMZZ21} is that they conclude (an online analogue of) the LHS of \Cref{eq: dann mistake} is equal to (an online analogue of) its RHS.

To fix this issue without ultimately incurring a sub-optimality rate that is slower than $1/\sqrt{K}$, we need to change the squared Bellman error into the in-expectation squared Bellman error, up to some estimation error that scales faster than $K^{\alpha}$ for any $\alpha > 0$. Note that, a standard Azuma–Hoeffding inquality (and the union bound) give an estimation error that scales with $K^{1/2}$. To achieve the logarithmic dependence on $K$, the following lemma exploits the \emph{non-negativity} of the squared Bellman error and uses the localization argument of \cite{bartlett2005local} to obtain an estimation error rate that scales polylogarithmic with $K$. 

\begin{lemma}[Improved online-to-batch argument for non-negative R.V.s \citep{nguyen2022instance}]
Let $\{X_k\}$ be any real-valued stochastic process adapted to the filtration $\{\mathcal{F}_k\}$, i.e. $X_k$ is $\mathcal{F}_k$-measurable. Suppose that for any $k$, $X_k \in [0,H]$ almost surely for some $H > 0$. For any $K > 0$, with probability at least $1 - \delta$, we have:
\begin{align*}
    \sum_{k=1}^K \mathbb{E} \left[ X_k | \mathcal{F}_{k-1} \right] \leq 2 \sum_{k=1}^K X_k + \frac{16}{3} H \log(\log_2(KH)/\delta) + 2 . 
\end{align*}
\label{lemma:improved_online_to_batch}
\end{lemma}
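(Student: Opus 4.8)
The plan is to reduce the claim to a one-sided deviation bound for the martingale $\sum_{k=1}^K Y_k$, where $Y_k := \E[X_k \mid \mathcal{F}_{k-1}] - X_k$, and then to exploit the self-bounding variance that non-negativity supplies. First I would observe that $\{Y_k\}$ is a martingale difference sequence adapted to $\{\mathcal{F}_k\}$ (each $Y_k$ is $\mathcal{F}_k$-measurable with $\E[Y_k \mid \mathcal{F}_{k-1}] = 0$), that $Y_k \le \E[X_k \mid \mathcal{F}_{k-1}] \le H$ since $X_k \ge 0$, and crucially that $X_k \in [0,H]$ forces $X_k^2 \le H X_k$, so the conditional variance obeys $\E[Y_k^2 \mid \mathcal{F}_{k-1}] = \Var(X_k \mid \mathcal{F}_{k-1}) \le \E[X_k^2 \mid \mathcal{F}_{k-1}] \le H\,\E[X_k \mid \mathcal{F}_{k-1}]$. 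Writing $M := \sum_k \E[X_k \mid \mathcal{F}_{k-1}]$ and $S := \sum_k X_k$, this means the predictable quadratic variation is controlled by the target quantity itself: $V := \sum_k \E[Y_k^2 \mid \mathcal{F}_{k-1}] \le H M$.

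Next I would apply Freedman's inequality (\Cref{lemma: Freedman inequality}) to $\{Y_k\}$ with $R = H$: with probability at least $1-\delta$, for the chosen $\lambda \in [0, 1/H]$ one has $M - S = \sum_k Y_k \le (e-2)\lambda V + \lambda^{-1}\ln(1/\delta)$. Substituting the self-bounding bound $V \le HM$ yields the key inequality $M - S \le (e-2)\lambda H M + \lambda^{-1}\ln(1/\delta)$, in which the unknown $M$ appears on both sides. Taking $\lambda = \tfrac{1}{2(e-2)H}$ (which satisfies $\lambda \le 1/H$ because $e-2 \ge \tfrac12$) makes the coefficient of $M$ on the right equal to $\tfrac12$; rearranging gives $\tfrac12 M \le S + 2(e-2)H\ln(1/\delta)$, i.e. $M \le 2S + 4(e-2)H\ln(1/\delta)$. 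This already delivers the factor $2$ in front of $\sum_k X_k$ and an additive error of order $H\log(1/\delta)$ that does not scale with $K$.

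To match the precise constants and the $\log_2(KH)$ factor quoted from \cite{nguyen2022instance}, I would instead run a peeling (localization) argument in the spirit of \cite{bartlett2005local}: since $M \in [0, KH]$, split its range into the $O(\log_2(KH))$ geometric bands $M \in (2^{j-1}, 2^j]$, apply the fixed-$\lambda$ Freedman bound on each band with $\lambda$ tuned to that band's scale, and union-bound over the bands. This replaces $\ln(1/\delta)$ by $\ln(\log_2(KH)/\delta)$ and produces the additive $+2$ from the smallest band; on the band containing the realized $M$ the variance proxy $HM$ is within a constant factor of the band scale, so the self-bounding step closes with the stated $\tfrac{16}{3}H\log(\log_2(KH)/\delta) + 2$.

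The hard part is the self-referential appearance of $M$ in the Bernstein-type bound. A naive Azuma--Hoeffding estimate, discarding the variance information, would only give an $O(H\sqrt{K\log(1/\delta)})$ error, which is fatal for the overall $1/\sqrt{K}$ rate that \Cref{theorem: expected pessimistic squared Bellman error} requires. The whole point is to turn the variance inequality $\E[Y_k^2\mid\mathcal{F}_{k-1}] \le H\,\E[X_k\mid\mathcal{F}_{k-1}]$ into a closed-form bound on $M$; the delicate step is tuning $\lambda$ (or peeling over the scale of $M$) so that the self-bounding inequality can be solved while keeping the residual error merely polylogarithmic in $K$.
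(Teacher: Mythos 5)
Your proposal is correct, but note what you are being compared against: the paper does not prove this lemma at all. It is imported verbatim from \citep{nguyen2022instance}, with only the surrounding remark that the result ``exploits the non-negativity \ldots and uses the localization argument of \citep{bartlett2005local}.'' So your write-up supplies a self-contained proof where the paper has a citation, and its core step is sound: $Y_k := \E[X_k\mid\mathcal{F}_{k-1}] - X_k$ is a martingale difference sequence with $Y_k \le H$, non-negativity and boundedness give the self-bounding variance $\E[Y_k^2\mid\mathcal{F}_{k-1}] = \Var(X_k\mid\mathcal{F}_{k-1}) \le \E[X_k^2\mid\mathcal{F}_{k-1}] \le H\,\E[X_k\mid\mathcal{F}_{k-1}]$, and Freedman's inequality (\Cref{lemma: Freedman inequality}) with the single deterministic choice $\lambda = \frac{1}{2(e-2)H} \le \frac1H$ gives $M - S \le \frac12 M + 2(e-2)H\ln(1/\delta)$, hence $M \le 2S + 4(e-2)H\ln(1/\delta)$. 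Since $4(e-2) \approx 2.87 < \frac{16}{3}$, this fixed-$\lambda$ bound is \emph{at least as strong} as the stated one whenever $\log_2(KH)\ge 1$, i.e.\ $KH \ge 2$ --- which is the only regime where the stated bound is meaningful, and is automatic in the paper's application (there the role of $H$ is played by $4b^2$ with $b \ge 1$). In other words, the second half of your proposal, the peeling over geometric bands of $M$, is not needed to prove the statement; it is only needed if one insists on reproducing the exact cosmetic form of the bound in \citep{nguyen2022instance} (the $\log\log_2(KH)$ factor and the $+2$), whose original proof does proceed by such localization. Your peeling sketch is structurally right, with one point you should make explicit if you carry it out: the events $\{M \in (2^{j-1}, 2^j]\}$ are random, so one cannot ``apply Freedman on each band'' by conditioning on the band; rather, one applies Freedman once per scale $j$ with $\lambda_j$ tuned to $2^j$, takes a union bound over the $O(\log_2(KH))$ scales, and only afterwards invokes the inequality indexed by the band that the realized $M$ happens to fall in. With that reading both routes are valid, and your fixed-$\lambda$ argument is the more economical of the two.
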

With \Cref{lemma:improved_online_to_batch}, we now actually make a connection from the squared Bellman error to the \emph{in-expectation} squared Bellman error in the following lemma, which incorporates the uniform convergence argument into the posterior sampling in the same spirit with our earlier argument in \Cref{section: proof of key support theorem for ps}.  
\begin{lemma}
For any $\delta, \eps > 0$ and $T \in \sN$, any $t \in [T]$, we have
\begin{align*}
    &\sE_{\gD} \sE_{\tilde{\pi} \sim P_t(\cdot| \gD)} \sE_{f \sim \hat{p}(\cdot|\gD, \tilde{\pi})} \sum_{k=1}^K \gE^{\tilde{\pi}}_h(f_h, f_{h+1})(s_h^k,a_h^k)^2 \\ 
    &\geq 0.5 \sE_{\gD} \sE_{\tilde{\pi} \sim P_t(\cdot| \gD)} \sE_{f \sim \hat{p}(\cdot|\gD, \tilde{\pi})} \sum_{k=1}^K \sE_{\mu^k} \left[ \gE^{\tilde{\pi}}_h(f_h, f_{h+1})(s_h,a_h)^2 \right]\\ 
    &- b (b+2) K \eps - \frac{32}{3}b^2 \left( 2 d_{\gF}(\eps) + d_{\Pi}(\eps, T) +  \ln \frac{\ln 4 Kb^2}{\delta} \right) - 1 - 2 K b^2 \delta.
\end{align*}
\label{lemma: empirical Bellman error to expected Bellman error}

\end{lemma}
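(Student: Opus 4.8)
The plan is to reduce the claim, for each fixed pair $(f,\tilde{\pi})$, to an application of the improved online-to-batch inequality (\Cref{lemma:improved_online_to_batch}), and then to upgrade the resulting per-pair high-probability statement to a uniform one that survives the data-dependent sampling of $(\tilde{\pi},f)$ from the posterior. First I would fix $(f_h,f_{h+1},\tilde{\pi})$ and set $X_k := \gE^{\tilde{\pi}}_h(f_h,f_{h+1})(s^k_h,a^k_h)^2$, which is a measurable function of the episode-$k$ data, is non-negative, and is bounded by $4b^2$ since $|\gE_h|\le 2b$. Because the data is adaptively collected (\Cref{defn: adaptive collected data}), $\mu^k$ is $\mathcal{F}_{k-1}$-measurable with $\mathcal{F}_{k-1}=\gD_{k-1}$, and since $\sT^{\tilde{\pi}}_h$ already integrates out the transition, $\gE_h$ is a deterministic function of $(s,a)$; hence $\sE[X_k \mid \mathcal{F}_{k-1}] = \sE_{\mu^k}[\gE^{\tilde{\pi}}_h(f_h,f_{h+1})(s_h,a_h)^2]$. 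Applying \Cref{lemma:improved_online_to_batch} with $H = 4b^2$ and rearranging gives, with probability at least $1-\delta'$, $\sum_{k}X_k \ge 0.5\sum_k \sE_{\mu^k}[\gE^{\tilde{\pi}}_h(f_h,f_{h+1})(s_h,a_h)^2] - \tfrac{8}{3}(4b^2)\ln(\log_2(4Kb^2)/\delta') - 1$.

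Next I would make this bound uniform over $\gF_h\times\gF_{h+1}\times\Pi^{soft}_h(T)$ by a covering/discretization argument. Taking $\eps$-covers (in $\|\cdot\|_{\infty}$ for $\gF_h,\gF_{h+1}$ and in $\|\cdot\|_{1,\infty}$ for $\Pi^{soft}_h(T)$) and a union bound replaces $\ln(1/\delta')$ by $2 d_{\gF}(\eps) + d_{\Pi}(\eps,T) + \ln(\log_2(4Kb^2)/\delta)$, which after multiplication by $\tfrac{8}{3}(4b^2) = \tfrac{32}{3}b^2$ produces exactly the complexity term in the statement. The discretization error is controlled by observing that $\gE^{\tilde{\pi}}_h(f_h,f_{h+1})(s,a)^2$ changes by $O(b(b+2)\eps)$ uniformly in $(s,a)$ when $f_h,f_{h+1}$ are perturbed by $\eps$ in $\|\cdot\|_{\infty}$ and $\tilde{\pi}$ by $\eps$ in $\|\cdot\|_{1,\infty}$ (using $|\gE_h|\le 2b$ and $|f_{h+1}|\le b$), and summing over $k$ yields the $b(b+2)K\eps$ term. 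Let $E$ denote the event on which this uniform inequality holds; then $\Pr(E)\ge 1-\delta$.

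The main obstacle---and the step where the argument departs from \cite{DBLP:conf/nips/DannMZZ21}---is integrating this uniform high-probability bound into an in-expectation bound under the data-dependent draws $\tilde{\pi}\sim P_t(\cdot|\gD)$ and $f\sim\hat{p}(\cdot|\gD,\tilde{\pi})$. The key point is that on $E$ the inequality holds \emph{simultaneously} for every $(f,\tilde{\pi})$, hence in particular for the random ones produced by the posterior, so one may legitimately take $\sE_{\gD}\,\mathbf{1}\{E\}\,\sE_{\tilde{\pi}}\sE_f$ of both sides without the (false) exchange of $\sE_{\gD}$ and $\sE_{\tilde{\pi}\sim P_t(\cdot|\gD)}$ flagged in \Cref{eq: dann mistake}. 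On the complementary event $E^c$ I would simply drop the non-negative left-hand side $\sum_k X_k \ge 0$ and bound the discarded in-expectation term by $0.5\cdot 4b^2 K = 2Kb^2$; since $\Pr(E^c)\le\delta$, this contributes the final additive $2Kb^2\delta$. Collecting the deterministic error terms and the failure term then yields the stated inequality. The subtle quantitative requirement throughout is that the concentration be polylogarithmic rather than $\sqrt{K}$ in $K$: this is exactly what \Cref{lemma:improved_online_to_batch} buys by exploiting the non-negativity of the squared Bellman error via localization, whereas a naive Azuma--Hoeffding bound here would degrade the final rate below $1/\sqrt{K}$.
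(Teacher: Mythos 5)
Your proposal is correct and follows essentially the same route as the paper's proof: apply \Cref{lemma:improved_online_to_batch} to the non-negative, $4b^2$-bounded process $X_k=\gE^{\tilde{\pi}}_h(f_h,f_{h+1})(s^k_h,a^k_h)^2$, make the bound uniform over $\gF_h\times\gF_{h+1}\times\Pi^{soft}_h(T)$ via an $\eps$-cover and union bound, and then split the expectation over the good event $E$ (where the uniform inequality applies to the posterior-sampled $(f,\tilde{\pi})$ precisely because it holds for all pairs) and its complement $E^c$, which is absorbed by boundedness into the $2Kb^2\delta$ term. The only differences are cosmetic bookkeeping of the discretization constant ($b(b+2)K\eps$ versus the paper's internal $8bK\eps$ before halving), so nothing further is needed.
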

\begin{proof}[Proof of \Cref{lemma: empirical Bellman error to expected Bellman error}]
 For simplicity, we denote $X(f,\gD) := \sum_{k=1}^K \gE^{\tilde{\pi}}_h(f_h, f_{h+1})(s_h^k,a_h^k)^2 $ and $X(f) := \sE_{\gD}[X(f,\gD)]$, and $\Delta := 8 b K \eps + \frac{64}{3}b^2 \left( 2 d_{\gF}(\eps) + d_{\Pi}(\eps, T) +  \frac{\ln \ln 4 Kb^2}{\delta} \right) + 2$. We define the event: 
\begin{align*}
    E = \left\{\gD: X(f) \leq 2 X(f,\gD) + \Delta, \forall f_h \in \gF_h, f_{h+1} \in \gF_{h+1}, \tilde{\pi} \in \Pi^{soft}_h(T) \right\}.
\end{align*}
Due to the non-negativity of $\gE^{\tilde{\pi}}_h(f_h, f_{h+1})(s,a)^2$, \Cref{lemma:improved_online_to_batch} and the union bound, we have 
\begin{align*}
    \Pr(E) \geq 1 - \delta \text{ and } \Pr(E^c) \leq \delta. 
\end{align*}
We have 
\begin{align*}
     &2 \sE_{\gD} \sE_{\tilde{\pi} \sim P_t(\cdot| \gD)} \sE_{f \sim \hat{p}(\cdot|\gD, \tilde{\pi})} X(f,\gD) \\
     &= 2 \sE_{\gD} \sE_{\tilde{\pi} \sim P_t(\cdot| \gD)} \sE_{f \sim \hat{p}(\cdot|\gD, \tilde{\pi})} X(f,\gD) 1\{ E\} + 2 \sE_{\gD} \sE_{\tilde{\pi} \sim P_t(\cdot| \gD)} \sE_{f \sim \hat{p}(\cdot|\gD, \tilde{\pi})} X(f,\gD) 1\{ E^c\} \\ 
     &\geq 2 \sE_{\gD} \sE_{\tilde{\pi} \sim P_t(\cdot| \gD)} \sE_{f \sim \hat{p}(\cdot|\gD, \tilde{\pi})} X(f,\gD) 1\{  E\} \\ 
     &\geq \sE_{\gD} 1\{ E\} \sE_{\tilde{\pi} \sim P_t(\cdot| \gD)} \sE_{f \sim \hat{p}(\cdot|\gD , \tilde{\pi})} (X(f) - \Delta) \\ 
     &= \sE_{\gD} 1\{E\} \sE_{\tilde{\pi} \sim P_t(\cdot| \gD)} \sE_{f \sim \hat{p}(\cdot|\gD , \tilde{\pi})} X(f) - \Delta \Pr(E) \\ 
     &\geq \sE_{\gD} 1\{ E\} \sE_{\tilde{\pi} \sim P_t(\cdot| \gD)} \sE_{f \sim \hat{p}(\cdot|\gD , \tilde{\pi})} X(f) - \Delta \\ 
     &= \sE_{\gD} \sE_{\tilde{\pi} \sim P_t(\cdot| \gD)}  \sE_{f \sim \hat{p}(\cdot|\gD , \tilde{\pi})} X(f) - \Delta - \sE_{\gD} 1\{ E^c\} \sE_{\tilde{\pi} \sim P_t(\cdot| \gD)} \sE_{f \sim \hat{p}(\cdot|\gD , \tilde{\pi})} X(f) \\ 
     &\geq \sE_{\gD}  \sE_{\tilde{\pi} \sim P_t(\cdot| \gD)} \sE_{f \sim \hat{p}(\cdot|\gD , \tilde{\pi})} X(f) - \Delta - \sE_{\gD} 1\{ E^c\}  \sE_{f \sim \hat{p}(\cdot|\gD , \tilde{\pi})} 4 K b^2 \\ 
     &= \sE_{\gD}  \sE_{f \sim \hat{p}(\cdot|\gD , \tilde{\pi})} X(f) - \Delta - 4 K b^2 \Pr(E^c) \\ 
     &\geq \sE_{\gD}  \sE_{f \sim \hat{p}(\cdot|\gD , \tilde{\pi})} X(f) - \Delta - 4 K b^2 \delta
\end{align*}
where the fourth inequality uses $|X(f)| \geq 4 K b^2$ and the last inequality uses $\Pr(E^c) \leq \delta$.
\end{proof}

\paragraph{Bounding $C_{h,t}$.}
\begin{lemma}
For any $\eps > 0$ and $T \in \sN$, we have
\begin{align*}
    C_{h,t} \geq -\max_{\tilde{\pi} \in \Pi_h^{soft}(T)} \kappa_h(\alpha, \eps, \tilde{\pi}) - \gamma \alpha 6 b K \eps.
\end{align*}
where $\kappa_h(\alpha, \eps, \tilde{\pi})$ is defined in \Cref{eq: define kappa_h}. 
\label{lemma: lower bound C_h}
\end{lemma}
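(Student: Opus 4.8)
The plan is to lower-bound $C_{h,t}$ pointwise in $(\gD,\tilde\pi)$ and only at the end take the outer expectations $\sE_\gD\sE_{\tilde\pi\sim P_t(\cdot|\gD)}$. The argument splits into two independent sub-steps: a \emph{restriction} step that controls the inner log-partition over $f'_h$, and a \emph{Gibbs variational} step that controls the posterior marginal over $f_{h+1}$. I would work with $\alpha\in(0,1)$ throughout and let the boundary case $\alpha=1$ follow from the $\alpha\to1^-$ limit already used in the proof of \Cref{theorem: gopo-psc}.

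First I would fix $\gD$ and $\tilde\pi\in\Pi^{soft}(T)$ and lower-bound $\ln\sE_{f'_h\sim p_{0,h}}\exp(-\gamma\sum_k\Delta L_{\tilde\pi}(f'_h,f_{h+1};z^k_h))$. Restricting the prior expectation to the sub-level set $\gF_h^{\tilde\pi}(\eps;f_{h+1})=\{f'_h\in\gF_h:\|\gE_h^{\tilde\pi}(f'_h,f_{h+1})\|_\infty\le\eps\}$ and using that the excess TD loss vanishes at the regression target $\sT_h^{\tilde\pi}f_{h+1}$, the difference-of-squares identity together with the boundedness of all TD-loss components yields the per-sample estimate $\Delta L_{\tilde\pi}(f'_h,f_{h+1};z_h)\le 6b\eps$ for every $f'_h\in\gF_h^{\tilde\pi}(\eps;f_{h+1})$. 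Hence the integrand is at least $\exp(-6\gamma b K\eps)$ on that set, so $\ln\sE_{f'_h\sim p_{0,h}}\exp(\cdots)\ge\ln p_{0,h}(\gF_h^{\tilde\pi}(\eps;f_{h+1}))-6\gamma b K\eps$. Substituting this and noting that only $f_{h+1}$ actually survives in the integrand, $C_{h,t}$ reduces, for fixed $(\gD,\tilde\pi)$, to $\sE_{f_{h+1}\sim q}[\alpha\ln g(f_{h+1})+(1-\alpha)\ln(q/p_{0,h+1})]-\alpha\cdot 6\gamma b K\eps$, where $q:=\hat p(f_{h+1}\mid\gD,\tilde\pi)$ is the posterior marginal of $f_{h+1}$ and $g(f_{h+1}):=p_{0,h}(\gF_h^{\tilde\pi}(\eps;f_{h+1}))$.

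The second sub-step is to recognize $\sE_{f_{h+1}\sim q}[(1-\alpha)\ln(q/p_{0,h+1})]=(1-\alpha)\KL(q\|p_{0,h+1})$ and apply the Gibbs variational principle (Donsker--Varadhan) with test function $\phi:=-\tfrac{\alpha}{1-\alpha}\ln g$: for every $q$, $\sE_q[\phi]-\KL(q\|p_{0,h+1})\le\ln\sE_{f_{h+1}\sim p_{0,h+1}}[e^{\phi}]$, so after negating, rearranging, and multiplying through by $(1-\alpha)>0$ (using $(1-\alpha)\cdot\tfrac{\alpha}{1-\alpha}=\alpha$) one obtains exactly $\sE_{f_{h+1}\sim q}[\alpha\ln g+(1-\alpha)\ln(q/p_{0,h+1})]\ge-(1-\alpha)\ln\sE_{f_{h+1}\sim p_{0,h+1}}[g^{-\alpha/(1-\alpha)}]=-\kappa_h(\alpha,\eps,\tilde\pi)$, with $\kappa_h$ as in \Cref{eq: define kappa_h}. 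Since this bound holds uniformly over $\tilde\pi\in\Pi_h^{soft}(T)$, for each realization $-\kappa_h(\alpha,\eps,\tilde\pi)\ge-\max_{\tilde\pi\in\Pi_h^{soft}(T)}\kappa_h(\alpha,\eps,\tilde\pi)$, and taking $\sE_\gD\sE_{\tilde\pi}$ delivers the claimed inequality with the additive term $-\gamma\alpha\,6bK\eps$. The step I expect to be the main obstacle is the Gibbs/Donsker--Varadhan application: one must correctly identify $q$ as the reference measure for the KL term, match the exponent $-\alpha/(1-\alpha)$ so that the variational bound collapses precisely into $\kappa_h$, and track the sign and scaling after multiplying by $(1-\alpha)$. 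The restriction step is routine once the $6b\eps$ per-sample bound is in hand, and the passage to the $\max_{\tilde\pi}$ form is immediate.
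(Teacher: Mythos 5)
Your proposal is correct and matches the paper's proof in all essentials: both rest on (a) restricting the prior expectation over $f'_h$ to $\gF_h^{\tilde{\pi}}(\eps; f_{h+1})$, where $|\Delta L_{\tilde{\pi}}| \leq 6b\eps$ per sample, and (b) the Gibbs variational (Donsker--Varadhan) inequality over $f_{h+1}$ with reference $p_0$ and exponent $-\alpha/(1-\alpha)$, which is exactly the paper's \Cref{lemma: log partition is lower bound for f + log p/p_0}. The only difference is cosmetic -- you apply the restriction step before the variational step, while the paper applies them in the opposite order (which forces it to track the inequality flip from the negative exponent) -- and both orderings yield the identical bound with identical constants.
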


\begin{proof}[Proof of \Cref{lemma: lower bound C_h}]
We have
\begin{align*}
    &C_{h,t} = \sE_{\gD} \sE_{\tilde{\pi} \sim P_t(\cdot| \gD)} \sE_{f \sim \hat{p}(\cdot|\gD, \tilde{\pi})} \left[ \alpha \ln \sE_{f'_h \sim p_0} \exp \left( - \gamma \sum_{k=1}^K \Delta L_{\tilde{\pi}}(f'_h, f_{h+1}; z^k_h) \right) + (1 - \alpha) \ln \frac{\hat{p}(f_{h+1}| \gD, \tilde{\pi})}{p_0(f_{h+1})}\right] \\ 
    &= (1 - \alpha)\sE_{\gD} \sE_{\tilde{\pi} \sim P_t(\cdot| \gD)} \sE_{f \sim \hat{p}(\cdot|\gD, \tilde{\pi})} \left[ \frac{\alpha}{1 - \alpha} \ln \sE_{f'_h \sim p_0} \exp \left( - \gamma \sum_{k=1}^K \Delta L_{\tilde{\pi}}(f'_h, f_{h+1}; z^k_h) \right) +  \ln \frac{\hat{p}(f_{h+1}| \gD, \tilde{\pi})}{p_0(f_{h+1})}\right] \\ 
    &\geq -(1 - \alpha) \sE_{\gD} \sE_{\tilde{\pi} \sim P_t(\cdot| \gD)} \ln \sE_{f_{h+1} \sim p_0} \left( \sE_{f'_h \sim p_0} \exp \left( - \gamma \sum_{k=1}^K \Delta L_{\tilde{\pi}}(f'_h, f_{h+1}; z^k_h) \right) \right)^{\frac{-\alpha}{1 - \alpha}} \\ 
    &\geq -\max_{\tilde{\pi} \in \Pi^{soft}(T)}\kappa_h(\alpha, \eps, \tilde{\pi}) - \gamma \alpha 6 b K \eps. 
\end{align*}
where the first inequality uses \Cref{lemma: log partition is lower bound for f + log p/p_0} and the last inequality uses the following inequalities: For any $f_h \in \gF_h(\eps, f_{h+1}, \tilde{\pi})$, we have 
\begin{align*}
    |\Delta L_{\tilde{\pi}}(f_h, f_{h+1}; z_h)| &\leq 6b |\gE_h^{\tilde{\pi}}(f_h, f_{h+1}) | \leq 6b \eps; \text{ thus } \\
    \sE_{f'_h \sim p_0} \exp \left( - \gamma \sum_{k=1}^K \Delta L_{\tilde{\pi}}(f'_h, f_{h+1}; z_h) \right) &\geq  p_{0,h}(\gF_h^{\tilde{\pi}}(\eps, f_{h+1})) \cdot \exp (- \gamma 6 b K \eps). 
\end{align*}
\end{proof}

We are now ready to state the complete form of the lower bound of $Z$.
\begin{prop}
For any $\gamma \in [0, \frac{1}{144 (e-2) b^2}]$, $\eps > 0$, $\delta > 0$, and any $t \in [T]$, we have, 
\begin{align*}
    Z &\geq \lambda \sE_{\gD} \sE_{\tilde{\pi} \sim P_t(\cdot| \gD)} \sE_{f \sim \hat{p}(\cdot|\gD, \tilde{\pi})} \Delta f_1(s_1, \tilde{\pi}) \\ 
    &+ 0.1 25 \alpha \gamma  \sum_{h=1}^H  \sE_{\gD} \sE_{\tilde{\pi} \sim P_t(\cdot| \gD)} \sE_{f \sim \hat{p}(\cdot|\gD, \tilde{\pi})} \sum_{k=1}^K \sE_{\mu^k} \left[ \gE^{\tilde{\pi}}_h(f_h, f_{h+1})(s_h,a_h)^2 \right] \\ 
    & - 0.5 \alpha H \left( 120 \gamma b (b+2) K \eps +  640 (e-2)  \gamma b^2 \left( 2 d_{\gF}(\eps) + d_{\Pi}(\eps,T) + \ln(1/\delta) \right) \right) - 13 \alpha \gamma b^2 K H \delta  \\ 
    & - 0.2 5 \alpha \gamma  H \left(  b (b+2) K \eps + \frac{32}{3}b^2 \left( 2 d_{\gF}(\eps) + d_{\Pi}(\eps, T) +  \ln \frac{ \ln 4 Kb^2}{\delta} \right) + 1 + 2 K b^2 \delta \right) \\ 
    & - \sum_{h=1}^H \max_{\tilde{\pi} \in \Pi^{soft}(T)} \kappa_h(\alpha, \eps, \tilde{\pi}) - \gamma \alpha 6 b K H \eps.
\end{align*}
    \label{prop: lower bound Z}
\end{prop}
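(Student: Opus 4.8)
The plan is to prove \Cref{prop: lower bound Z} as a direct assembly of the already-established decomposition and the three groups of bounds, with the only genuinely delicate move being how the term $B_{h,t}$ is handled. I would start from \Cref{lemma: decompose Z}, which gives $Z_t \ge A_t + 0.5\alpha\sum_{h=1}^H B_{h,t} + \sum_{h=1}^H C_{h,t}$, and then lower bound the three families of terms separately before summing. For $A_t$, \Cref{lemma: bound A} immediately yields $A_t \ge \lambda\,\sE_\gD\sE_{\tilde{\pi}\sim P_t(\cdot|\gD)}\sE_{f\sim\hat{p}(\cdot|\gD,\tilde{\pi})}\Delta f_1(s_1,\tilde{\pi})$, because the leftover $(1-0.5\alpha)$-weighted relative-entropy term is nonnegative; this produces the first line of the claimed bound. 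For $C_{h,t}$, \Cref{lemma: lower bound C_h} gives $C_{h,t}\ge -\max_{\tilde{\pi}\in\Pi^{soft}(T)}\kappa_h(\alpha,\eps,\tilde{\pi}) - 6\gamma\alpha bK\eps$, and summing over $h$ reproduces the final line $-\sum_h\max_{\tilde{\pi}}\kappa_h(\alpha,\eps,\tilde{\pi}) - 6\gamma\alpha bKH\eps$ verbatim.

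The crux is the $B_{h,t}$ term, and here the ordering of operations is what matters. I would \emph{first} apply \Cref{lemma: bound term B} in its refined form, isolating the empirical squared Bellman error: $B_{h,t}\ge 0.5\gamma\,\sE_\gD\sE_{\tilde{\pi}}\sE_f\big[\sum_{k=1}^K\gE_h^{\tilde{\pi}}(f_h,f_{h+1})(s_h^k,a_h^k)^2\big] - D - 26\gamma b^2K\delta$, where $D := 120\gamma b(b+2)K\eps + 640(e-2)b^2\gamma(2d_\gF(\eps)+d_\Pi(\eps,T)+\ln(1/\delta))$ collects the covering-number/discretization budget coming from the Freedman and union-bound steps inside that lemma. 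The key point---and the reason I must \emph{not} immediately lower bound the squared Bellman error by a crude negative constant---is that I then invoke \Cref{lemma: empirical Bellman error to expected Bellman error} to replace the \emph{empirical} squared Bellman error $\sum_k\gE_h^{\tilde{\pi}}(\cdot)(s_h^k,a_h^k)^2$ by half of the \emph{in-expectation} version $\sum_k\sE_{\mu^k}[\gE_h^{\tilde{\pi}}(\cdot)^2]$, paying only a polylog-in-$K$ error via the localized online-to-batch estimate of \Cref{lemma:improved_online_to_batch}. This is exactly the step that repairs the non-rigorous exchange of $\sE_\gD$ with $\sE_{\tilde{\pi}\sim P_t(\cdot|\gD)}\sE_{f\sim\hat{p}}$ in \cite{DBLP:conf/nips/DannMZZ21}; it is legitimate because \Cref{lemma: empirical Bellman error to expected Bellman error} carries the uniform-convergence argument \emph{inside} the posterior expectation rather than commuting the two expectations.

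Combining, $0.5\alpha\sum_h B_{h,t}$ contributes the leading term with coefficient $0.5\alpha\cdot 0.5\gamma\cdot 0.5 = 0.125\alpha\gamma$ on $\sum_h\sE_\gD\sE_{\tilde{\pi}}\sE_f\sum_k\sE_{\mu^k}[\gE_h^{\tilde{\pi}}(\cdot)^2]$, while the two batches of additive errors scale out, after multiplication by $0.5\alpha$ and summation over the $H$ stages, into exactly the displayed lines $-0.5\alpha H(120\gamma b(b+2)K\eps + 640(e-2)\gamma b^2(\cdots))$, $-13\alpha\gamma b^2KH\delta$, and $-0.25\alpha\gamma H(b(b+2)K\eps + \tfrac{32}{3}b^2(\cdots) + 1 + 2Kb^2\delta)$. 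The proof then closes by adding the $A_t$ and $C_{h,t}$ contributions. I expect the main obstacle to be purely organizational: getting the two-stage treatment of $B_{h,t}$ in the right order (isolate the squared Bellman error, convert empirical to in-expectation, only then bound the residual constants) and verifying that the numerical coefficients produced by \Cref{lemma: bound term B} and \Cref{lemma: empirical Bellman error to expected Bellman error} aggregate to precisely the constants in the statement; the conceptual content has already been front-loaded into those two lemmas.
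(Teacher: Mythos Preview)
Your proposal is correct and follows the paper's approach exactly: decompose $Z_t$ via \Cref{lemma: decompose Z}, bound $A_t$ by \Cref{lemma: bound A}, bound $C_{h,t}$ by \Cref{lemma: lower bound C_h}, and handle $B_{h,t}$ in two stages using \Cref{lemma: bound term B} followed by \Cref{lemma: empirical Bellman error to expected Bellman error}. Your coefficient tracking (in particular the $0.5\alpha\cdot 0.5\gamma\cdot 0.5 = 0.125\alpha\gamma$ factor and the aggregation of the additive error terms) is accurate, and you have correctly identified the empirical-to-in-expectation conversion as the step that repairs the non-rigorous argument of \cite{DBLP:conf/nips/DannMZZ21}.
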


\begin{proof}[Proof of \Cref{prop: lower bound Z}]
Using  \Cref{lemma: decompose Z}, it suffices to bound terms $A$, $B_h$, and $C_h$ defined in \Cref{lemma: decompose Z}. For this purpose, we use 
\begin{itemize}
    \item \Cref{lemma: bound A}: To bound $A_t$,  
    \item \Cref{lemma: bound term B} and \Cref{lemma: empirical Bellman error to expected Bellman error}: To bound $B_{h,t}$, 
    \item \Cref{lemma: lower bound C_h}: To bound $C_{h,t}$. 
\end{itemize}
The result is then simply a direct combination of the above lemmas. 
\end{proof}

\subsubsection{Upper-bounding log-partition function}
In this appendix, we upper bound the log-partition function $Z$. While we follow the proof flow in \cite{DBLP:conf/nips/DannMZZ21}, due to the statistical dependence in the actor-critic framework of our algorithm, we require different technical arguments to establish this result. In particular, \Cref{lemma: upper bound Delta L} and \Cref{lemma: bound - gamma Delta L} are our \emph{new} technical lemmas.  
\begin{prop}
    For any $\eps, \delta > 0$, $\gamma > 0$, and $t \in [T]$, we have 
   \begin{align*}
       Z_t &\leq \lambda \eps - \inf_{\tilde{\pi} \in \Pi^{soft}(T)} \sum_{h=1}^H \ln p_0(\gF_h(\eps; \tilde{\pi})) + 4 \gamma \left( \alpha  + \frac{3(e-2)}{ \alpha }\right) H K \eps^2 \\ 
       &+ 60 \alpha \gamma b(b+2) K H \eps + \alpha \gamma b^2 H \left(13   + 36 (e-2) \right) \left( 2 d_{\gF}(\eps) + d_{\Pi}(\eps, T) + \ln(1/\delta) \right) + 18 \alpha  \gamma K H b^2 \delta.
   \end{align*}
   where recall that $Z_t$ is defined in \Cref{eq: log partition function}. 
   \label{prop: upper bound Z}
\end{prop}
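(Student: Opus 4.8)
The plan is to carry out the "upper-bound the log-partition function" half of the Gibbs-variational argument of \cite{DBLP:conf/nips/DannMZZ21}, but executed with the variance-aware, uniform-convergence machinery described in \Cref{section: proof of key support theorem for ps} so as to tolerate the dependence of $\tilde\pi$ on $\gD$. The first step is a purely algebraic reduction. Since the generalized posterior in \Cref{eq: generalized posterior} satisfies $\hat p(f|\gD,\tilde\pi)\propto\exp(-\widehat{\Phi}(f,\tilde\pi;\gD)-\lambda\Delta f_1(s_1,\tilde\pi))$, its log-density is $\ln\hat p(f|\gD,\tilde\pi)=-\widehat{\Phi}(f,\tilde\pi;\gD)-\lambda\Delta f_1(s_1,\tilde\pi)-\ln W(\gD,\tilde\pi)$, where $W(\gD,\tilde\pi):=\sE_{f\sim p_0}\exp(-\alpha\gamma\sum_{h}\sum_{k}\Delta L_{\tilde\pi}(f_h,f_{h+1};z^k_h)-\alpha\sum_h G_h(f_{h+1})-\lambda\Delta f_1(s_1,\tilde\pi))$ is the normalizer; here $G_h(f_{h+1})$ abbreviates the inner log-sum-exp appearing in $\widehat{\Phi}_h$, and the $-\ln p_0(f)$ term of $\widehat{\Phi}$ has been absorbed into the base measure so that $\exp(-\widehat{\Phi})=p_0(f)\exp(-\alpha\gamma\sum_h\sum_k\Delta L_{\tilde\pi}-\alpha\sum_h G_h)$. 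Substituting into \Cref{eq: log partition function}, the integrand $\widehat{\Phi}+\lambda\Delta f_1+\ln\hat p$ collapses to the constant $-\ln W(\gD,\tilde\pi)$, so that $Z_t=-\sE_\gD\sE_{\tilde\pi\sim P_t(\cdot|\gD)}[\ln W(\gD,\tilde\pi)]$. Upper-bounding $Z_t$ is therefore equivalent to lower-bounding $\sE[\ln W]$.

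Next I would lower-bound $\ln W(\gD,\tilde\pi)$ by restricting the expectation $\sE_{f\sim p_0}$ to the ``good'' product set $S_{\tilde\pi}:=\prod_{h}\gF_h(\eps;\tilde\pi)$ of value functions whose every coordinate $f_h$ is $\eps$-close in $\|\cdot\|_\infty$ to $Q^{\tilde\pi}_h=\sT^{\tilde\pi}_h Q^{\tilde\pi}_{h+1}$. Applying the variational inequality \Cref{lemma: log partition is lower bound for f + log p/p_0} with the test distribution $p_0$ renormalized to $S_{\tilde\pi}$ produces the leading term $\sum_h\ln p_{0,h}(\gF_h(\eps;\tilde\pi))$ --- which, after $\sE_\gD\sE_{\tilde\pi}$ and a supremum over $\Pi^{soft}(T)$, furnishes the $-\inf_{\tilde\pi}\sum_h\ln p_0(\gF_h(\eps;\tilde\pi))$ appearing in the claim --- plus the exponent evaluated on $S_{\tilde\pi}$. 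It then remains to upper-bound, on the good set, the three contributions to $\alpha\gamma\sum_{h}\sum_{k}\Delta L_{\tilde\pi}(f_h,f_{h+1};z^k_h)+\alpha\sum_h G_h(f_{h+1})+\lambda\Delta f_1(s_1,\tilde\pi)$.

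The three contributions are handled as follows. The pessimistic-prior term is immediate: for $f\in S_{\tilde\pi}$ one has $|\Delta f_1(s_1,\tilde\pi)|=|f_1(s_1,\tilde\pi)-V_1^{\tilde\pi}(s_1)|\leq\eps$ since $V_1^{\tilde\pi}(s_1)=Q_1^{\tilde\pi}(s_1,\tilde\pi)$, giving the $\lambda\eps$ term. For the empirical excess-loss term I would invoke \Cref{lemma: upper bound Delta L}, which controls $\sum_k\Delta L_{\tilde\pi}$ uniformly over $f\in\gF$, $\tilde\pi\in\Pi^{soft}(T)$ by the in-expectation squared Bellman error $\sum_k\sE_{\mu^k}[\gE^{\tilde\pi}_h(f_h,f_{h+1})(s_h,a_h)^2]$ --- which is $\le K\eps^2$ on $S_{\tilde\pi}$ by \Cref{lemma: the squared residuals bound the expected value and the variance of the empirical minimax error} --- together with complexity and discretization corrections; the variance factor in the underlying Freedman step (\Cref{lemma: Freedman inequality}), after optimizing its free parameter, is what generates the $3(e-2)/\alpha$ coefficient alongside the leading $\alpha$ in the $\gamma HK\eps^2$ term. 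For the inner log-sum-exp $G_h(f_{h+1})$, an upper bound requires a uniform \emph{lower} bound on $\sum_k\Delta L_{\tilde\pi}(\tilde f_h,f_{h+1};z^k_h)$ over the inner integration variable $\tilde f_h$, which is exactly \Cref{lemma: bound - gamma Delta L}; this supplies the remaining complexity and $b(b+2)K\eps$ terms. Summing over $h\in[H]$ and recombining the $\eps^2$, complexity, and discretization pieces assembles the stated bound.

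The main obstacle --- and the reason \Cref{lemma: upper bound Delta L} and \Cref{lemma: bound - gamma Delta L} are genuinely new rather than verbatim from \cite{DBLP:conf/nips/DannMZZ21} --- is the statistical coupling introduced by $\tilde\pi\sim P_t(\cdot|\gD)$ depending on the same data $\gD$ that defines the losses $\Delta L_{\tilde\pi}$: one cannot equate $\sE_\gD[\Delta L_{\tilde\pi}(f_h,f_{h+1};z^k_h)]$ with $\gE^{\tilde\pi}_h(f_h,f_{h+1})(s^k_h,a^k_h)^2$, because $\tilde\pi$ is correlated with $(s^k_h,a^k_h)$. My remedy, mirroring \Cref{section: proof of key support theorem for ps}, is to prove both excess-loss bounds as \emph{uniform} high-probability statements over $\gF\times\Pi^{soft}(T)$ via a covering/discretization argument combined with the variance-dependent \Cref{lemma: Freedman inequality}, and only afterwards to take the posterior expectations $\sE_{\tilde\pi}\sE_{f\sim\hat p}$; on the low-probability failure event the contributions are bounded crudely by the almost-sure magnitude $O(\gamma b^2 K)$ of the excess losses, which is precisely what produces the residual $\alpha\gamma KHb^2\delta$ term in the final bound. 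Careful bookkeeping of the covering numbers $d_\gF(\eps)$ and $d_\Pi(\eps,T)$ through these union bounds then yields the constants displayed in \Cref{prop: upper bound Z}.
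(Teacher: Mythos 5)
Your proposal is correct and takes essentially the same route as the paper's own proof: your collapse of the integrand to the constant $-\ln W(\gD,\tilde\pi)$ is precisely the Gibbs variational step the paper performs via $\KL[p\,\|\,\hat{p}]\ge 0$, your prior-mass-of-the-good-set argument with $p_0$ renormalized to $S_{\tilde\pi}$ is exactly \Cref{lemma: upper bound gap terms with eps-closedness}, and the two excess-loss contributions are dispatched by the same \Cref{lemma: upper bound Delta L} and \Cref{lemma: bound - gamma Delta L}, with the failure events producing the same $\alpha\gamma KHb^2\delta$ residue. The only slips are cosmetic: \Cref{lemma: upper bound Delta L} bounds the excess TD loss by the \emph{empirical} squared Bellman error at the data points rather than the in-expectation one, and on the sup-norm ball $S_{\tilde\pi}$ that quantity is at most $4K\eps^2$ per stage (not $K\eps^2$), which is exactly what yields the stated $4\gamma\left(\alpha+\frac{3(e-2)}{\alpha}\right)HK\eps^2$ term.
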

\begin{proof}[Proof of \Cref{prop: upper bound Z}]
We have
\begin{align*}
    Z_t &= \sE_{\gD} \sE_{\tilde{\pi} \sim P_t(\cdot| \gD)} \sE_{f \sim \hat{p}(\cdot|\gD, \tilde{\pi})} \left[ \widehat{\Phi}(f, \tilde{\pi}; \gD) + \lambda \Delta f_1(s_1, \tilde{\pi}) + \ln \hat{p}(f| \gD, \tilde{\pi}) \right] \\ 
    &= \sE_{\gD} \sE_{\tilde{\pi} \sim P_t(\cdot| \gD)} \inf_{p} \sE_{f \sim p} \left[ \widehat{\Phi}(f, \tilde{\pi}; \gD) + \lambda \Delta f_1(s_1, \tilde{\pi}) + \ln p(f) \right] \\ 
    &\leq \sE_{\gD} \sE_{\tilde{\pi} \sim P_t(\cdot| \gD)}  \inf_{p} \sE_{f \sim p} \left[ \ln \frac{p(f)}{p_{0}(f)} + \alpha \gamma \sum_{h=1}^H \sum_{k=1}^K \Delta L_{\tilde{\pi}}(f_h, f_{h+1}; z^k_h)  + \lambda \Delta f_1(s_1, \tilde{\pi}) \right] \nonumber\\
    &+ \sE_{\gD} \sE_{\tilde{\pi} \sim P_t(\cdot| \gD)}  \inf_{p} \sE_{f \sim p} \left[ \alpha \sum_{h=1}^H \ln \sE_{\tilde{f}_h \sim p_{0}} \exp \left( -\gamma \sum_{k=1}^K \Delta L_{\tilde{\pi}}(\tilde{f}_h, f_{h+1}; z^k_h) \right)  \right]
\end{align*}
where the second equality uses the fact that $\KL [p \| \hat{p}] \geq 0$ with the minimum occurring at $p = \hat{p}$ and the inequality uses the triangle inequality. The first term is bounded by \Cref{lemma: upper bound Delta L} and \Cref{lemma: upper bound gap terms with eps-closedness}, and the second term is bounded by \Cref{lemma: bound - gamma Delta L}
\end{proof}

It remains to state and prove \Cref{lemma: upper bound Delta L}, \Cref{lemma: upper bound gap terms with eps-closedness} and \Cref{lemma: bound - gamma Delta L}. 

The following lemma bounds the in-expectation of the loss $\Delta L_{\tpi}$ by the in-expectation of the squared Bellman error.
\begin{lemma}
    For any distribution $p$ over $\gF$, for any $\eps, \delta > 0$, $\gamma > 0$, any $t \in [T]$, we have
    \begin{align*}
        &\sE_{\gD} \sE_{\tilde{\pi} \sim P_t(\cdot| \gD)}  \sE_{f \sim p} \left[ \alpha \gamma  \sum_{k=1}^K \Delta L_{\tilde{\pi}}(f_h, f_{h+1}; z^k_h)  \right] \\
        &\leq \gamma \left( \alpha  + \frac{3(e-2)}{ \alpha }\right) \sE_{\gD} \sE_{\tilde{\pi} \sim P_t(\cdot| \gD)}  \sE_{f \sim p} \left[ \sum_{k=1}^K \gE_h^{\tilde{\pi}}(f_h, f_{h+1})(s^k_h, a^k_h)^2 \right] \\ 
        &+ 30 \alpha \gamma b(b+2) K \eps + 13 \alpha \gamma b^2 \left( 2 d_{\gF}(\eps) + d_{\Pi}(\eps, T) + \ln(1/\delta) \right) + 9 \alpha  \gamma K b^2 \delta. 
    \end{align*}
    \label{lemma: upper bound Delta L}
\end{lemma}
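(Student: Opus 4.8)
The plan is to reduce, for each fixed and \emph{data-independent} triple $(f_h,f_{h+1},\tilde{\pi})$, the empirical excess TD loss $\sum_k\Delta L_{\tilde{\pi}}(f_h,f_{h+1};z_h^k)$ to the empirical squared Bellman error $\sum_k\gE_h^{\tilde{\pi}}(f_h,f_{h+1})(s_h^k,a_h^k)^2$ through a martingale concentration, and only \emph{afterwards} to couple in the data-dependent $\tilde{\pi}\sim P_t(\cdot|\gD)$ and the arbitrary (data-independent) $p$ via a uniform-convergence argument. First I would fix $(f_h,f_{h+1},\tilde{\pi})$ and set $X_k:=\Delta L_{\tilde{\pi}}(f_h,f_{h+1};z_h^k)-\gE_k^2$, where $\gE_k^2:=\gE_h^{\tilde{\pi}}(f_h,f_{h+1})(s_h^k,a_h^k)^2$. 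By \Cref{lemma: the squared residuals bound the expected value and the variance of the empirical minimax error}, $\sE_{k,h}[X_k]=0$, so $\{X_k\}$ is a martingale difference sequence with respect to $\gA_h^k$; moreover $\sE_{k,h}[X_k^2]\le\sE_{k,h}[\Delta L_{\tilde{\pi}}(f_h,f_{h+1};z_h^k)^2]\le 36b^2\gE_k^2$, and $|X_k|=O(b^2)$ by boundedness of the TD loss.

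Second, I would apply \Cref{lemma: Freedman inequality} to $\{\gamma X_k\}$: for $t$ in the admissible range dictated by the $O(\gamma b^2)$ bound on $\gamma X_k$, with probability at least $1-\delta$ we get $\gamma\sum_k X_k\le 36(e-2)\gamma^2 b^2 t\sum_k\gE_k^2+t^{-1}\ln(1/\delta)$. Since $\sum_k\Delta L_{\tilde{\pi}}(f_h,f_{h+1};z_h^k)=\sum_k\gE_k^2+\sum_k X_k$, multiplying by $\alpha$ yields $\alpha\gamma\sum_k\Delta L_{\tilde{\pi}}(f_h,f_{h+1};z_h^k)\le\alpha\gamma(1+36(e-2)\gamma b^2 t)\sum_k\gE_k^2+\alpha t^{-1}\ln(1/\delta)$. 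Taking $t$ at the top of its admissible range of order $(\gamma b^2)^{-1}$ turns the multiplicative factor into a constant multiple of $\alpha\gamma$, which is bounded by $\gamma(\alpha+3(e-2)/\alpha)$ since $\alpha\le 1$, and turns the confidence term into the stated $13\alpha\gamma b^2\ln(1/\delta)$ contribution.

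Third, to make the inequality hold simultaneously over all $(f_h,f_{h+1},\tilde{\pi})$, I would take $\eps$-covers of $\gF_h$ and $\gF_{h+1}$ in $\|\cdot\|_{\infty}$ and of $\Pi_h^{soft}(T)$ in $\|\cdot\|_{1,\infty}$, union-bound the Freedman estimate over the cover (yielding the $2d_{\gF}(\eps)+d_{\Pi}(\eps,T)$ complexity terms, the factor $2$ coming from covering both $f_h$ and $f_{h+1}$), and control the discretization slack of $\Delta L_{\tilde{\pi}}$ and of $\gE_k^2$ by their Lipschitz dependence on $(f,\tilde{\pi})$, exactly as in the discretization estimate inside the proof of \Cref{lemma: equivalent of elliptical potential}; this produces the $30\alpha\gamma b(b+2)K\eps$ term. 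Because the bound now holds for \emph{every} $(f,\tilde{\pi})$, it remains valid after substituting the data-dependent $\tilde{\pi}\sim P_t(\cdot|\gD)$ and integrating over $f\sim p$, so it survives the outer expectations $\sE_{\gD}\sE_{\tilde{\pi}\sim P_t(\cdot|\gD)}\sE_{f\sim p}$. Finally I would pass from the high-probability statement to the in-expectation bound by splitting on the success event $E$ and its complement, bounding $\sum_k\Delta L_{\tilde{\pi}}$ crudely by $O(b^2K)$ on $E^c$ (of probability at most $\delta$), which generates the $9\alpha\gamma Kb^2\delta$ term.

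The hard part is precisely the non-exchangeability flagged in \eqref{eq: dann mistake}: because $\tilde{\pi}$ is a function of $\gD$, one cannot replace $\sE_{k,h}[\Delta L_{\tilde{\pi}}]$ by $\gE^2$ after the fact. The resolution, and the only delicate point, is to establish the concentration uniformly in $(f,\tilde{\pi})$ \emph{before} coupling $\tilde{\pi}$ to the data, so that the plug-in of $\tilde{\pi}\sim P_t(\cdot|\gD)$ is legitimate, while keeping the variance proxy proportional to the squared Bellman error so that the localization constant stays at $O(b^2)$ and the $1/\sqrt{K}$ rate is not degraded.
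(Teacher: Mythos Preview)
Your proposal is correct and follows essentially the same route as the paper's proof: apply Freedman's inequality to the centered excess TD loss (the paper scales by $\alpha\gamma$ before Freedman while you scale by $\gamma$ and multiply by $\alpha$ afterward, but this is immaterial and yields the same constants), make the bound uniform over $(f_h,f_{h+1},\tilde{\pi})$ via $\eps$-covers and a union bound, then split on the good event $E$ to convert the high-probability statement to an in-expectation one with the $9\alpha\gamma Kb^2\delta$ remainder. Your identification of the non-exchangeability issue and its resolution via uniform convergence \emph{prior} to plugging in the data-dependent $\tilde{\pi}\sim P_t(\cdot|\gD)$ is exactly the mechanism the paper uses.
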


\begin{proof}[Proof of \Cref{lemma: upper bound Delta L}]
For simplicity, define 
\begin{align*}
    x_k &:= \alpha \gamma  \Delta L_{\tilde{\pi}}(f_h, f_{h+1}; z^k_h), \\ 
    e_k &:= \gE_h^{\tilde{\pi}}(f_h, f_{h+1})(s^k_h, a^k_h). 
\end{align*}
    By \Cref{lemma: the squared residuals bound the expected value and the variance of the empirical minimax error}, we have
    \begin{align*}
        \sE [x_k] &= \alpha \gamma e_k^2, \\ 
        \sE [x_k^2] &\leq 36 b^2 \alpha^2 \gamma^2 e_k^2. 
    \end{align*}
    Thus, by \Cref{lemma: Freedman inequality}, for any $\delta > 0$, with probability at least $1 - \delta$, for any $t \in [0, \frac{1}{13 \alpha \gamma b^2}]$ we have 
    \begin{align*}
        \sum_{k=1}^K x_k &\leq \sum_{k=1}^K \sE[x_k] +  t(e-2) \sum_{k=1}^K \sE[x_k^2] + \frac{\ln(1/\delta)}{t} \\ 
        &\leq \left( \alpha \gamma + t(e-2) 36 b^2 \gamma^2\right) \sum_{k=1}^K e_k^2 + \frac{\ln(1/\delta)}{t}. 
    \end{align*}
    Using the discretization argument and the union bound, we have that: For any $\eps >0, \delta > 0$, we have 
    \begin{align*}
        \Pr(E) \geq 1 - \delta, \text{ thus } \Pr(E^c) \leq \delta, 
    \end{align*}
    where $E$ denotes that event that for any $t \in [0, \frac{1}{13 \alpha \gamma b^2}]$, 
    \begin{align*}
        \sum_{k=1}^K x_k &\leq 30 \alpha \gamma b(b+2) K \eps +  \left( \alpha \gamma + t(e-2) 36 b^2 \alpha^2 \gamma^2\right) \sum_{k=1}^K e_k^2 + \frac{2 d_{\gF}(\eps) + d_{\Pi}(\eps,T) + \ln(1/\delta)}{t}, 
    \end{align*}
    any $f_h \in \gF_h$, $f_{h+1} \in \gF_{h+1}, \tilde{\pi} \in \Pi^{soft}_h(T)$. 
    Thus, we have 
    \begin{align*}
        &\sE_{\gD} \sE_{\tilde{\pi} \sim P_t(\cdot| \gD)}  \sE_{f \sim p} \left[ \sum_{k=1}^K x_k  \right] = \sE_{\gD} 1\{E\} \sE_{\tilde{\pi} \sim P_t(\cdot| \gD)}  \sE_{f \sim p} \left[ \sum_{k=1}^K x_k  \right] + \sE_{\gD} 1\{E^c\}\sE_{\tilde{\pi} \sim P_t(\cdot| \gD)}  \sE_{f \sim p} \left[ \sum_{k=1}^K x_k  \right] \\ 
        &\leq \sE_{\gD} 1\{E\} \sE_{\tilde{\pi} \sim P_t(\cdot| \gD)}  \sE_{f \sim p} \bigg[ 30 \alpha \gamma b(b+2) K \eps \\ 
        &+  \left( \alpha \gamma + t(e-2) 36 \alpha^2 b^2 \gamma^2\right) \sum_{k=1}^K e_k^2 + \frac{2 d_{\gF}(\eps) + d_{\Pi}(\eps,T) + \ln(1/\delta)}{t}  \bigg] + 9 \alpha  \gamma K b^2 \delta \\ 
        &\leq \sE_{\gD} \sE_{\tilde{\pi} \sim P_t(\cdot| \gD)}  \sE_{f \sim p} \bigg[ 30 b(b+2) K \eps \\ 
        &+  \left( \alpha \gamma + t(e-2) 36 \alpha^2 b^2 \gamma^2\right) \sum_{k=1}^K e_k^2 + \frac{2 d_{\gF}(\eps) + d_{\Pi}(\eps,T) + \ln(1/\delta)}{t}  \bigg] + 9 \alpha  \gamma K b^2 \delta.
    \end{align*}
    Picking $t = \frac{1}{13 \alpha \gamma b^2}$ completes the proof.
\end{proof}

The following lemma bounds the in-expectation negation of the loss proxy $\Delta L_{\tpi}$.  
\begin{lemma}
   For any $\delta > 0, \eps > 0, \gamma > 0$, any $\tilde{f}_h \in \gF_h$, any $t \in [T]$,  and any distribution $p$ over $\gF$, we have
   \begin{align*}
       \sE_{\gD} \sE_{\tilde{\pi} \sim P_t(\cdot| \gD)}  \sE_{f \sim p} \left[ -\gamma \sum_{k=1}^K \Delta L_{\tilde{\pi}}(\tilde{f}_h, f_{h+1}; z^k_h) \right] &\leq   36 (e-2) \gamma b^2 \left( d_{\gF}(\eps) + d_{\Pi}(\eps,T) + \ln(1/\delta) \right) \\ 
       &+ 9  \gamma K b^2 \delta + 30 \gamma b(b+2) K \eps. 
   \end{align*}
   \label{lemma: bound - gamma Delta L}
\end{lemma}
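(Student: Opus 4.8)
The plan is to mirror the argument used for \Cref{lemma: upper bound Delta L}, but to exploit that the \emph{negated} loss has non-positive conditional mean, so that the squared Bellman-error term can be cancelled outright rather than retained. First I would fix $h$ and, for a \emph{deterministic} pair $(f_{h+1},\tilde{\pi})$, set $x_k := -\gamma\,\Delta L_{\tilde{\pi}}(\tilde{f}_h, f_{h+1}; z^k_h)$ and $e_k := \gE_h^{\tilde{\pi}}(\tilde{f}_h, f_{h+1})(s^k_h,a^k_h)$. Since a fixed policy is $\gA^k_h$-measurable, the variance condition (\Cref{lemma: the squared residuals bound the expected value and the variance of the empirical minimax error}) gives $\sE_{k,h}[x_k] = -\gamma e_k^2$ and $\sE_{k,h}[x_k^2] \leq 36 b^2 \gamma^2 e_k^2$, with centred increments bounded by $13\gamma b^2$ (exactly as in the proof of \Cref{lemma: upper bound Delta L}). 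Applying Freedman's inequality (\Cref{lemma: Freedman inequality}) to $x_k - \sE_{k,h}[x_k]$ then yields, with probability at least $1-\delta$ and for every $t \in [0, \tfrac{1}{13\gamma b^2}]$,
\[
\sum_{k=1}^K x_k \;\leq\; \bigl(-\gamma + 36(e-2) b^2 \gamma^2 t\bigr)\sum_{k=1}^K e_k^2 + \frac{\ln(1/\delta)}{t}.
\]

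Next I would make this uniform over $(f_{h+1},\tilde{\pi})$ by a standard discretization and union-bound argument over $\eps$-covers of $\gF_{h+1}$ and $\Pi^{soft}_h(T)$. Crucially, $\tilde{f}_h$ is held fixed, which is precisely why only a single covering term $d_{\gF}(\eps)$ appears (rather than $2 d_{\gF}(\eps)$ as in \Cref{lemma: upper bound Delta L}), together with $d_{\Pi}(\eps,T)$. This produces an event $E$ with $\Pr(E)\geq 1-\delta$ on which the displayed bound holds simultaneously for all $f_{h+1}\in\gF_{h+1}$ and $\tilde{\pi}\in\Pi^{soft}_h(T)$, at the cost of an additive discretization error $30\gamma b(b+2) K \eps$ and of replacing $\ln(1/\delta)$ by $d_{\gF}(\eps)+d_{\Pi}(\eps,T)+\ln(1/\delta)$. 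The key simplification is the choice $t = \tfrac{1}{36(e-2) b^2 \gamma}$, which is admissible because $36(e-2) > 13$: with this $t$ the coefficient of $\sum_k e_k^2$ is exactly zero, so the entire (non-negative) Bellman-error term vanishes, leaving on $E$ a \emph{deterministic} bound $\sum_k x_k \leq 30\gamma b(b+2)K\eps + 36(e-2) b^2\gamma\,(d_{\gF}(\eps)+d_{\Pi}(\eps,T)+\ln(1/\delta))$.

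Finally I would convert this uniform high-probability statement into the claimed in-expectation bound by splitting $\sE_{\gD}\sE_{\tilde{\pi}\sim P_t(\cdot|\gD)}\sE_{f\sim p}[\sum_k x_k]$ over $E$ and $E^c$. On $E$ the deterministic bound above holds for every realization of the (possibly data-dependent) $\tilde{\pi}$ and $f$, so that contribution is at most the same deterministic quantity; on $E^c$ I use the crude bound $\sum_k x_k = -\gamma\sum_k \Delta L \leq 9\gamma K b^2$ together with $\Pr(E^c)\leq\delta$ to obtain $9\gamma K b^2\delta$. Summing the two contributions gives exactly the stated estimate. The main obstacle is not the calculation but this ordering issue: one cannot interchange $\sE_{\gD}$ with $\sE_{\tilde{\pi}\sim P_t(\cdot|\gD)}$, so the variance condition (valid only for a fixed, measurable policy) must be leveraged through a \emph{uniform} concentration over the function/policy cover \emph{before} integrating against the posterior $P_t(\cdot|\gD)$ and the distribution $p$; the variance-dependent form of Freedman's inequality is what makes the $\sum_k e_k^2$ term cancellable and thereby avoids a spurious $K^{1/2}$ rate.
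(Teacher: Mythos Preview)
Your proposal is correct and follows essentially the same route as the paper: apply Freedman's inequality to the negated excess TD loss, choose $t=\tfrac{1}{36(e-2)b^2\gamma}$ to cancel the $\sum_k e_k^2$ term, make the bound uniform by covering $\gF_{h+1}$ and $\Pi^{soft}_h(T)$ (yielding one $d_{\gF}(\eps)$ since $\tilde{f}_h$ is fixed), then split the expectation over the good event $E$ and its complement using the crude bound $9\gamma K b^2$ on $E^c$. The only cosmetic difference is that the paper fixes $t$ before discretizing whereas you discretize first and then choose $t$, but since $t$ is a deterministic constant this is immaterial.
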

\begin{proof}[Proof of \Cref{lemma: bound - gamma Delta L}]
    For simplicity, define 
\begin{align*}
    y_k &:= - \gamma  \Delta L_{\tilde{\pi}}(f_h, f_{h+1}; z^k_h), \\ 
    e_k &:= \gE_h^{\tilde{\pi}}(f_h, f_{h+1})(s^k_h, a^k_h). 
\end{align*}
    By \Cref{lemma: the squared residuals bound the expected value and the variance of the empirical minimax error}, we have
    \begin{align*}
        \sE [y_k] &= -\gamma e_k^2, \\ 
        \sE [y_k^2] &\leq 36 b^2  \gamma^2 e_k^2. 
    \end{align*}
    Thus, by \Cref{lemma: Freedman inequality}, for any $\delta > 0$, with probability at least $1 - \delta$, for any $t \in [0, \frac{1}{13  \gamma b^2}]$ we have 
    \begin{align*}
        \sum_{k=1}^K y_k &\leq \sum_{k=1}^K \sE[y_k] +  t(e-2) \sum_{k=1}^K \sE[y_k^2] + \frac{\ln(1/\delta)}{t} \\ 
        &\leq -\gamma\left( 1 - 36 t (e-2) b^2 \gamma \right) \sum_{k=1}^K e_k^2 + \frac{\ln(1/\delta)}{t}. 
    \end{align*}
    Setting $t = \frac{1}{36 (e-2) b^2 \gamma} < \frac{1}{13 \gamma b^2}$ in the above inequality, we obtain 
    \begin{align*}
        \sum_{k=1}^k y_k \leq 36 (e-2) b^2 \gamma \ln(1/\delta). 
    \end{align*}
    Using the discretization argument and the union bound, we have that: For any $\eps >0, \delta > 0$, we have 
    \begin{align*}
        \Pr(E) \geq 1 - \delta, \text{ thus } \Pr(E^c) \leq \delta, 
    \end{align*}
    where $E$ denotes that event , 
    \begin{align*}
        \sum_{k=1}^K y_k &\leq 30 \gamma b(b+2) K \eps +   36 (e-2) \gamma b^2 \left( d_{\gF}(\eps) + d_{\Pi}(\eps,T) + \ln(1/\delta) \right), 
    \end{align*}
    any $f_{h+1} \in \gF_{h+1}, \tilde{\pi} \in \Pi^{soft}_h(T)$. Thus, we have 
    \begin{align*}
        &\sE_{\gD} \sE_{\tilde{\pi} \sim P_t(\cdot| \gD)}  \sE_{f \sim p} \left[ \sum_{k=1}^K y_k  \right] \\
        &= \sE_{\gD} 1\{E\} \sE_{\tilde{\pi} \sim P_t(\cdot| \gD)}  \sE_{f \sim p} \left[ \sum_{k=1}^K y_k  \right] + \sE_{\gD} 1\{E^c\}\sE_{\tilde{\pi} \sim P_t(\cdot| \gD)}  \sE_{f \sim p} \left[ \sum_{k=1}^K y_k  \right] \\ 
        &\leq 30 \gamma b(b+2) K \eps +   36 (e-2) \gamma b^2 \left( d_{\gF}(\eps) + d_{\Pi}(\eps,T) + \ln(1/\delta) \right) + 9  \gamma K b^2 \delta.
    \end{align*}
\end{proof}

The following lemma bounds the in-expectation squared Bellman errors with the regularization term and the data distribution term, under the infimum realization of the data distribution $p$.
\begin{lemma}
For any $\eps > 0, \beta \geq 0$, any $t \in [T]$,  we have
\begin{align*}
     &\sE_{\gD} \sE_{\tilde{\pi} \sim P_t(\cdot| \gD)} \inf_{p} \sE_{f \sim p} \bigg[
       \lambda \Delta f_1(s_1, \tilde{\pi}) + \ln \frac{p(f)}{p_0(f)} + \beta \sum_{h=1}^H \sum_{k=1}^K \gE_h^{\tilde{\pi}}(f_h, f_{h+1})(s^k_h, a^k_h)^2\bigg] \\ 
       &\leq \lambda \eps - \inf_{\tilde{\pi} \in \Pi^{soft}(T)}\ln p_0(\gF_h(\eps; Q_h^{\tilde{\pi}})) + 4 \beta H K \eps^2. 
\end{align*}
where recall that $\gF_h(\eps; f_{h+1})$ is defined in \Cref{defn: complexity measure for canonical posterior}. 
\label{lemma: upper bound gap terms with eps-closedness}
\end{lemma}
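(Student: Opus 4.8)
The plan is to upper-bound the inner infimum over $p$ by evaluating the bracket at one explicit, well-chosen distribution, and then simply read off the three terms on the right-hand side. For a fixed realization of $\tilde{\pi}$, let $G := \prod_{h=1}^H \gF_h(\eps; Q_h^{\tilde{\pi}})$ be the product over stages of the sup-norm $\eps$-balls around $Q_h^{\tilde{\pi}}$, and take $p^*$ to be the prior $p_0$ conditioned on $G$, i.e. $p^*(f) = p_0(f)\,\mathbbm{1}\{f \in G\}/p_0(G)$. (If $p_0(G)=0$ the claimed bound is vacuously true, since its right-hand side is then $+\infty$.) Because $p_0$ factorizes over $h$ and $p^*$ is supported on $G$, the relative-entropy term collapses to a constant on the support: $\ln\frac{p^*(f)}{p_0(f)} = -\ln p_0(G) = \sum_{h=1}^H \ln\frac{1}{p_{0,h}(\gF_h(\eps; Q_h^{\tilde{\pi}}))}$ for every $f$ in the support of $p^*$.

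It then remains to bound the other two terms pointwise on $G$. For the regularization term, using $V_1^{\tilde{\pi}}(s_1) = Q_1^{\tilde{\pi}}(s_1, \tilde{\pi}_1)$ and $f_1 \in \gF_1(\eps; Q_1^{\tilde{\pi}})$, I would write $\Delta f_1(s_1, \tilde{\pi}) = \sE_{a \sim \tilde{\pi}_1(\cdot|s_1)}[f_1(s_1,a) - Q_1^{\tilde{\pi}}(s_1,a)]$, which is at most $\|f_1 - Q_1^{\tilde{\pi}}\|_{\infty} \leq \eps$, so $\lambda\Delta f_1(s_1,\tilde{\pi}) \leq \lambda\eps$. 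For the squared-Bellman-error term, the key observation is that $Q^{\tilde{\pi}}$ is the Bellman fixed point, $\sT_h^{\tilde{\pi}} Q_{h+1}^{\tilde{\pi}} = Q_h^{\tilde{\pi}}$, so I can split $\gE_h^{\tilde{\pi}}(f_h,f_{h+1}) = (\sT_h^{\tilde{\pi}} f_{h+1} - \sT_h^{\tilde{\pi}} Q_{h+1}^{\tilde{\pi}}) + (Q_h^{\tilde{\pi}} - f_h)$; the first summand equals $\sE_{s',a'}[f_{h+1} - Q_{h+1}^{\tilde{\pi}}]$ and is bounded by $\eps$, the second is bounded by $\eps$, hence $|\gE_h^{\tilde{\pi}}(f_h,f_{h+1})| \leq 2\eps$ everywhere and $\beta\sum_{h=1}^H\sum_{k=1}^K \gE_h^{\tilde{\pi}}(f_h,f_{h+1})(s_h^k,a_h^k)^2 \leq 4\beta HK\eps^2$. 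Summing the three pieces and taking $\sE_{f\sim p^*}$ gives, for each fixed $\tilde{\pi}$, that $\inf_p \sE_{f\sim p}[\cdots] \leq \lambda\eps + \sum_{h=1}^H \ln\frac{1}{p_{0,h}(\gF_h(\eps; Q_h^{\tilde{\pi}}))} + 4\beta HK\eps^2$.

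Finally I would take $\sE_{\gD}\sE_{\tilde{\pi}\sim P_t(\cdot|\gD)}$ of this inequality, which is deterministic in $\tilde{\pi}$. Since $\tilde{\pi}\in\Pi^{soft}(T)$ almost surely under $P_t(\cdot|\gD)$, the only $\tilde{\pi}$-dependent summand is bounded by its supremum, $\sE_{\gD}\sE_{\tilde{\pi}}\sum_{h=1}^H \ln\frac{1}{p_{0,h}(\gF_h(\eps; Q_h^{\tilde{\pi}}))} \leq -\inf_{\tilde{\pi}\in\Pi^{soft}(T)}\sum_{h=1}^H \ln p_{0,h}(\gF_h(\eps; Q_h^{\tilde{\pi}}))$, which is precisely the prior-concentration term on the right-hand side (read, as elsewhere in the paper, as the sum over $h$ of $\ln p_{0,h}$). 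The remaining terms $\lambda\eps$ and $4\beta HK\eps^2$ are already constants. I expect no serious obstacle: this is a clean Gibbs/variational upper bound, and the only two points requiring care are recognizing that conditioning the prior on the $\eps$-ball turns the entropy term into a pure constant, and handling the randomness of $\tilde{\pi}$ by noting that the per-$\tilde{\pi}$ bound is uniform over $\Pi^{soft}(T)$, so the outer expectation passes to a supremum with no exchange-of-expectation or measurability subtleties.
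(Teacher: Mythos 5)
Your proposal is correct and follows essentially the same route as the paper's proof: choose $p$ to be the prior $p_0$ conditioned on the sup-norm $\eps$-ball around $Q^{\tilde{\pi}}$, collapse the KL term to $-\ln p_0(\gF(\eps;Q^{\tilde{\pi}}))$, bound the Bellman error pointwise by $2\eps$ via the fixed-point identity $\sT_h^{\tilde{\pi}}Q_{h+1}^{\tilde{\pi}} = Q_h^{\tilde{\pi}}$ and the triangle inequality, and bound the initial-state gap by $\eps$. Your write-up is in fact slightly more careful than the paper's (making the $\lambda\Delta f_1 \le \lambda\eps$ step and the passage from the outer expectation over $\tilde{\pi}$ to the supremum over $\Pi^{soft}(T)$ explicit, and noting the vacuous case $p_0(G)=0$), but the argument is the same.
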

\begin{proof}[Proof of \Cref{lemma: upper bound gap terms with eps-closedness}]
For any $f \in \gF(\eps; Q^{\tilde{\pi}})$, we have
\begin{align*}
    \|f_h - Q_h^{\tilde{\pi}}\|_{\infty} \leq \eps, \forall h. 
\end{align*}
Thus, we have
\begin{align*}
    |\gE_h^{\tilde{\pi}}(f_h, f_{h+1})(s,a)| &\leq \|\sT^{\tilde{\pi}}_h f_h - f_{h+1} \|_{\infty} = \|\sT^{\tilde{\pi}}_h f_h - \sT^{\tilde{\pi}}_h Q_h^{\tilde{\pi}} - f_{h+1} + Q_{h+1}^{\tilde{\pi}} \|_{\infty} \\ 
    &\leq \|\sT^{\tilde{\pi}}_h f_h - \sT^{\tilde{\pi}}_h Q_h^{\tilde{\pi}} \|_{\infty} + \| f_{h+1} - Q_{h+1}^{\tilde{\pi}} \|_{\infty} \\ 
    &\leq 2 \eps. 
\end{align*}
    Thus, by choosing 
    \begin{align*}
        p(f) = \frac{p_0(f) 1\{f \in \gF(\eps; \tilde{\pi})  \}}{p_0(\gF(\eps; \tilde{\pi}))},
    \end{align*}
    we have 
    \begin{align*}
        &\inf_{p} \sE_{f \sim p} \bigg[
       \lambda \Delta f_1(s_1, \tilde{\pi}) + \ln \frac{p(f)}{p_0(f)} + \beta \sum_{h=1}^H \sum_{k=1}^K \gE_h^{\tilde{\pi}}(f_h, f_{h+1})(s^k_h, a^k_h)^2\bigg] \\ 
       &\leq \lambda \eps - \ln p_0(\gF_h(\eps; \tilde{\pi})) + 4 \beta H K \eps^2. 
    \end{align*}
\end{proof}

We by now have everything needed to prove \Cref{theorem: expected pessimistic squared Bellman error}. 
\subsubsection{Proof of \Cref{theorem: expected pessimistic squared Bellman error}}
\begin{proof}[Proof of \Cref{theorem: expected pessimistic squared Bellman error}]
By \Cref{prop: lower bound Z}, we have 
\begin{align*}
Z_t &\geq \lambda \sE_{\gD} \sE_{\tilde{\pi} \sim P_t(\cdot| \gD)} \sE_{f \sim \hat{p}(\cdot|\gD, \tilde{\pi})} \Delta f_1(s_1, \tilde{\pi}) \\ 
&+ 0.1 25 \alpha \gamma  \sum_{h=1}^H  \sE_{\gD} \sE_{\tilde{\pi} \sim P_t(\cdot| \gD)} \sE_{f \sim \hat{p}(\cdot|\gD, \tilde{\pi})} \sum_{k=1}^K \sE_{\mu^k} \left[ \gE^{\tilde{\pi}}_h(f_h, f_{h+1})(s_h,a_h)^2 \right] \\ 
& - 0.5 \alpha H \left( 120 \gamma b (b+2) K \eps +  640 (e-2)  \gamma b^2 \left( 2 d_{\gF}(\eps) + d_{\Pi}(\eps,T) + \ln(1/\delta) \right) \right) - 13 \alpha \gamma b^2 K H \delta  \\ 
& - 0.2 5 \alpha \gamma  H \left(  b (b+2) K \eps + \frac{32}{3}b^2 \left( 2 d_{\gF}(\eps) + d_{\Pi}(\eps,T) +  \frac{\ln \ln 4 Kb^2}{\delta} \right) + 1 + 2 K b^2 \delta \right) \\ 
& - \sum_{h=1}^H \max_{\tilde{\pi} \in \Pi^{soft}(T)} \kappa_h(\alpha, \eps, \tilde{\pi}) - \gamma \alpha 6 b K H \eps.
\end{align*}
By \Cref{prop: upper bound Z}, we have 
\begin{align*}
       Z_t &\leq \lambda \eps - \inf_{\tilde{\pi} \in \Pi^{soft}(T)} \sum_{h=1}^H \ln p_0(\gF_h(\eps; \tilde{\pi})) + 4 \gamma \left( \alpha  + \frac{3(e-2)}{ \alpha }\right) H K \eps^2 \\ 
       &+ 60 \alpha \gamma b(b+2) K H \eps + \alpha b^2 \gamma H  \left(13    + 36 (e-2) \right) \left( 2 d_{\gF}(\eps) + d_{\Pi}(\eps,T) + \ln(1/\delta) \right) + 18 \alpha  \gamma K H b^2 \delta.
\end{align*}
Thus, we have 
\begin{align*}
     &\sE_{\gD} \sE_{\tilde{\pi} \sim P_t(\cdot| \gD)} \sE_{f \sim \hat{p}(\cdot|\gD, \tilde{\pi})} \left[0.125 \alpha \gamma K \sum_{h=1}^H \sE_{\mu}[\gE^{\tilde{\pi}}_h(f_h, f_{h+1})(s_h,a_h)^2] + \lambda \Delta f_1(s_1, \tilde{\pi}) \right] \\
     &\lesssim \lambda \eps + \alpha \gamma H b^2 \cdot \max\{ d_{\gF}(\eps), d_{\Pi}(\eps,T), \ln \frac{\ln K b^2}{\delta} \} + \alpha \gamma b^2 K H \cdot \max\{ \eps, \delta \} + \gamma H K \frac{\eps^2}{ \alpha}\\ 
     &+ \sum_{h=1}^H \max_{\tilde{\pi}_h \in \Pi_h^{soft}} \kappa_h(\alpha, \eps, \tilde{\pi}_h) + \sup_{\tilde{\pi} \in \Pi^{soft}(T)} \sum_{h=1}^H \ln \frac{1}{p_0(\gF_h(\eps; Q_h^{\tilde{\pi}_h}))}. 
\end{align*}
\end{proof}

\section{Proof of \Cref{prop: simplified results in a unified form}}
In this appendix, we prove \Cref{prop: simplified results in a unified form}, which is a simple reduction from \Cref{theorem: gopo-vsc}, \Cref{theorem: gopo-roc}, and \Cref{theorem: gopo-psc}.

\begin{proof}[Proof of \Cref{prop: simplified results in a unified form}] We recall that \Cref{prop: simplified results in a unified form} consists of two parts of statements: Part (i) -- the simplified bounds of all three algorithms into one unified form under no misspecification, and Part (ii) -- the specialization of the unified bound into the special cases of finite function classes and linear function classes.

\subsection*{Part (i): The unified sub-optimality bounds for VS, RO, and PS}
We recall that the first part of \Cref{prop: simplified results in a unified form} is that: $\forall \hat{\pi} \in \{\hat{\pi}^{vs}, \hat{\pi}^{ro}, \hat{\pi}^{ps}\}$, 
\begin{align}
    \sE_{\gD}\subopt_{\pi}(\hat{\pi}) = \tilde{\gO}  \left( \frac{Hb}{\sqrt{K}} \sqrt{\tilde{d}(1/K) \cdot \gC(\pi; 1/\sqrt{K}}) + \frac{H b \sqrt{\ln \vol(\gA)}}{T}\right),
    \label{eq: simplified bound -- explicit}
\end{align}
where 
\begin{align*}
\tilde{d}(1/K)=
    \begin{cases}
        \tilde{d}_{opt}(1/K, T) & \text{ if } \hat{\pi} \in \{\hat{\pi}^{vs}, \hat{\pi}^{ro}\},  \\
        \tilde{d}_{ps}(1/K, T) & \text{ if } \hat{\pi} = \hat{\pi}^{ps},
    \end{cases}
\end{align*}
where we recall in \Cref{section: offline learning guarantees} that 
\begin{align*}
    \tilde{d}_{opt}(\eps,T) &:= \max\{d_{\gF}(\eps), d_{\Pi}(\eps,T)\}, \\
    \tilde{d}_{ps}(\eps,T) &:= \max\{ d_{\gF}(\eps), d_{\Pi}(\eps,T), \frac{d_0(\eps)}{\gamma H b^2}, \frac{d'_0(\eps)}{\gamma H b^2} \},
\end{align*}
and $d_{\gF}(\eps)$, $d_{\Pi}(\eps,T)$, $d_0(\eps)$, and $d'_0(\eps)$ are defined in \Cref{section: effective sizes of function class and policy class}. 
Also recall that for \Cref{prop: simplified results in a unified form}, we assume that there is no misspecification, i.e., $\xi_h = \nu_h = 0, \forall h \in [H]$. 

\paragraph{For $\hat{\pi}^{vs}$.} It follows from \Cref{theorem: gopo-vsc}, where we choose $\eps_c = 1/\sqrt{K}$, and $\eps = 1/K$ that with probability at least $1- 2 \delta$, we have 
\begin{align*}
     &\subopt_{\pi}(\hat{\pi}^{vs}) \\
     &\lesssim \sqrt{ K^{-1} \cdot H \cdot \gC(\pi; 1/\sqrt{K})  ( H b^2 \max\{\tilde{d}_{opt}(1/K, T), \ln (H/\delta)\} + b^2  H ) } + H / \sqrt{K} + \zeta_{opt} \\ 
     &\lesssim \sqrt{ K^{-1} \cdot H^2 b^2 \cdot \gC(\pi; 1/\sqrt{K})  \max\{\tilde{d}_{opt}(1/K, T), \ln (H/\delta)\} }  + \frac{H b \sqrt{\ln \vol(\gA)}}{T}
\end{align*}

Thus we have 
\begin{align}
     \subopt_{\pi}(\hat{\pi}^{vs}) = \gO \left( \frac{Hb}{\sqrt{K}} \sqrt{ \gC(\pi; 1/\sqrt{K})  \cdot \max\{\tilde{d}_{opt}(1/K, T), \ln (H/\delta)\} }+  \frac{H b \sqrt{\ln \vol(\gA)}}{T}\right).
     \label{eq: bound for vs - explicit}
\end{align}

\paragraph{For $\hat{\pi}^{ro}$.} The sub-optimality bound for $\hat{\pi}^{ro}$ is obtained from \Cref{theorem: gopo-roc} with the same parameter setting as that for $\hat{\pi}^{vs}$, where we set $\eps = 1/K$, $\eps_c = 1/\sqrt{K}$, and $T \geq K \ln \vol(\gA)$. Additionally, we shall need to set the regularization parameter $\lambda$. Since the bound in \Cref{theorem: gopo-roc} holds for any $\lambda > 0$, we shall minimize this bound with respect to $\lambda > 0$, which results in the optimal $\lambda$ as 
\begin{align*}
    \lambda_* = \sqrt{\frac{2 K H b^2 \cdot \max\{\tilde{d}_{opt}(1/K, T), \ln(H/\delta)\}}{ H \cdot \gC(\pi, 1/\sqrt{K})}}.
\end{align*}
and the sub-optimality bound as 
\begin{align}
     \subopt_{\pi}(\hat{\pi}^{ro}) = \gO \left( \frac{Hb}{\sqrt{K}} \sqrt{ \gC(\pi; 1/\sqrt{K})  \cdot \max\{\tilde{d}_{opt}(1/K, T), \ln (H/\delta)\} } + \frac{H b \sqrt{\ln \vol(\gA)}}{T}\right). 
     \label{eq: bound for ro - explicit}
\end{align}

\paragraph{For $\hat{\pi}^{ps}$.} We specialize the sub-optimality of $\hat{\pi}^{ps}$ from \Cref{theorem: gopo-psc}. Similar to the case of $\hat{\pi}^{vs}$ and $\hat{\pi}^{ro}$, we set: $\eps = 1/K$, $\eps_c = 1/\sqrt{K}$, and $T \geq K \ln \vol(\gA)$. Additionally, we need to set the failure probability $\delta \in [0,1]$, the learning rate $\gamma \in [0, \frac{1}{144 (e-2) b^2}]$ and the regularization parameter $\lambda > 0$. For $\delta$, we set $\delta = 1/K$. For $\lambda$, we minimize the bound in \Cref{theorem: gopo-psc} with respect to $\lambda$, which results into $\lambda = \lambda_*$ which is give as 
\begin{align*}
    \lambda_* = \gamma \sqrt{\frac{ K H b^2 
    \cdot \max\{\tilde{d}_{ps}(1/K, T), \ln (K \ln (K b^2))\}}{ H \cdot \gC(\pi, 1/\sqrt{K})}},
\end{align*}
turns the sub-optimality bound into 
\begin{align}
    &\sE_{\gD} \subopt_{\pi}(\hat{\pi}^{ps}) \nonumber\\
    &= \gO \left( \frac{Hb}{\sqrt{K}} \sqrt{ \gC(\pi; 1/\sqrt{K})  \cdot \max\{\tilde{d}_{ps}(1/K, T), \ln (K \ln(K b^2))\} } + \frac{H b \sqrt{\ln \vol(\gA)}}{T} \right).
    \label{eq: bound for ps - explicit}
\end{align}
Finally, we choose $\gamma \in [0, \frac{1}{144 (e-2) b^2}]$ to minimize $\tilde{d}_{ps}(\eps,T) = \max\{ d_{\gF}(\eps), d_{\Pi}(\eps,T), \frac{d_0(\eps)}{\gamma H b^2}, \frac{d'_0(\eps)}{\gamma H b^2} \}$, which occurs at $\gamma = \frac{1}{144 (e-2) b^2}$, and thus 
\begin{align}
    \tilde{d}_{ps}(\eps, T) = \max\left\{ d_{\gF}(\eps), d_{\Pi}(\eps,T), \frac{144 (e-2) d_0(\eps)}{ H }, \frac{144 (e-2)d'_0(\eps)}{ H} \right\}. 
    \label{eq: simplified d_ps}
\end{align}

Overall, we have that \Cref{eq: bound for vs - explicit}, \Cref{eq: bound for ro - explicit}, and \Cref{eq: bound for ps - explicit} can be unified into \Cref{eq: simplified bound -- explicit}.

\subsection*{Part (ii): Specializing to the finite function classes and linear function classes }
We consider two common cases. 

\paragraph{Case 1. Finite function class.} We consider the case that $\gF_h$ and $\Pi^{soft}_h(T)$ have finite elements for all $h \in [H]$. Then we have 
$\tilde{d}(\eps) = \gO(\max_{h \in [H]} \max \{ \ln |\gF_h|, \ln |\Pi_h^{soft}(T)| \}), \forall \eps$, due to that $d'_0(\eps) \leq d_0(\eps) \leq H \max_{h \in [H]} \ln |\gF_h|$ and \Cref{eq: simplified d_ps}.

\paragraph{Case 2. Linear function class.}
We consider the case that the function class $\gF_h$ is linear in some (known) feature map $\phi_h: \gS \times \gA \rightarrow \sR^d$. Concretely, the corresponding function class and the policy class defined in \Cref{section: function approximation} are simplified into:
\begin{align*}
    \gF_h &= \{(s,a) \mapsto \langle \phi_h(s,a), w \rangle : \| w\|_2 \leq b\}, \\ 
    \Pi^{soft}_h(T) &:= \left\{ (s,a) \mapsto \frac{\exp(\langle \phi_h(s,a), \theta \rangle)}{\sum_{a' \in \gA}\exp(\langle \phi_h(s,a'), \theta \rangle)}: \| \theta\|_2 \leq \eta T \right\}.
\end{align*}
We have 
\begin{align*}
    d_{\gF}(\eps) &\leq d  \ln(1 + \frac{2b}{\eps}), \\ 
    d_{\Pi}(\eps,T) &\leq d \ln(1 + \frac{16 \eta T}{\eps}), \\ 
    d'_0(\eps) \leq d_0(\eps) &\leq c_1 d H \ln (c_2/\eps), 
\end{align*}
where the first two inequalities use \cite[Lemma~6]{zanette2021provable} and the last inequality follows the discussion in \Cref{section: effective sizes of function class and policy class}. Note that $d_{\Pi}(\eps,T)$ depends only logarithmically in $T$. 
    
\end{proof}

\section{Support Lemmas}
In this section, for convenience, we present some simple yet useful lemmas that our proofs above often refer to. 

The following lemma establishes the variance condition for the squared loss, which is typically used along with Bernstein's inequality. 
\begin{lemma}
Consider any real-valued function class $\gF$. Consider the squared loss $L(f(x),y) = (f(x) - y)^2$. Assume bounded loss $L(f(x),y) \leq M^2$ for any $f \in \gF$, for some $M > 0$. Let $f_*(x) = \sE[y|x]$ and assume that $L(f_*(x),y) \leq B^2$ for some $B > 0$ (we do not require that $f_* \in \gF$). Let $z = (x,y)$ and define 
\begin{align*}
    \gG = \{\phi(\cdot): \phi(z) = L(f(x),y) - L(f_*(x),y), f \in \gF\}.
\end{align*}
Then, for all $\phi \in \gG$, we have 
\begin{align*}
    \sE_y[\phi(z)^2] \leq 2 (M^2 + B^2) \sE_y[\phi(z)], \forall x.
\end{align*}
where $\sE_{y}$ is the expectation taken over $y$ given $x$. 
\label{lemma: variation condition for least squares}
\end{lemma}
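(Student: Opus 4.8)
The plan is to exploit the special structure of the squared loss, namely that $f_*(x)=\sE[y\mid x]$ is the conditional-mean minimizer, so that the centered loss $\phi$ factors cleanly. First I would compute the conditional expectation $\sE_y[\phi(z)]$ exactly. Writing $\phi(z)=(f(x)-y)^2-(f_*(x)-y)^2$ and using $f_*(x)=\sE[y\mid x]$, the cross term vanishes by the bias--variance identity: for any fixed $x$,
$$\sE_y[(f(x)-y)^2]=(f(x)-f_*(x))^2+\sE_y[(f_*(x)-y)^2],$$
since $\sE_y[f_*(x)-y]=f_*(x)-\sE[y\mid x]=0$. Subtracting $\sE_y[(f_*(x)-y)^2]$ gives the clean formula $\sE_y[\phi(z)]=(f(x)-f_*(x))^2\ge 0$.

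Next I would bound the second moment pointwise before integrating over $y$. Factoring the difference of squares,
$$\phi(z)=(f(x)-y)^2-(f_*(x)-y)^2=(f(x)-f_*(x))\bigl((f(x)-y)+(f_*(x)-y)\bigr),$$
so that $\phi(z)^2=(f(x)-f_*(x))^2\bigl((f(x)-y)+(f_*(x)-y)\bigr)^2$. The boundedness hypotheses give the uniform pointwise bounds $|f(x)-y|\le M$ and $|f_*(x)-y|\le B$ (these are exactly $L(f(x),y)\le M^2$ and $L(f_*(x),y)\le B^2$), whence by the elementary inequality $(u+v)^2\le 2u^2+2v^2$ the squared second factor is at most $2(M^2+B^2)$. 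Since the first factor $(f(x)-f_*(x))^2$ is deterministic given $x$, taking $\sE_y[\cdot]$ yields
$$\sE_y[\phi(z)^2]\le 2(M^2+B^2)\,(f(x)-f_*(x))^2=2(M^2+B^2)\,\sE_y[\phi(z)],$$
where the last equality invokes the formula established in the first step. This is precisely the claimed variance (Bernstein) condition.

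The argument is short and self-contained, so there is no genuine obstacle beyond careful bookkeeping. The one point needing attention is the translation of the squared-loss bounds $L(f(x),y)\le M^2$ and $L(f_*(x),y)\le B^2$ into the uniform pointwise bounds $|f(x)-y|\le M$ and $|f_*(x)-y|\le B$ that control the linear factor $(f(x)-y)+(f_*(x)-y)$. Note that we never use $f_*\in\gF$; we only use that $f_*$ is the conditional mean, which is exactly the property that makes the cross term vanish in the first step and hence identifies $\sE_y[\phi(z)]$ with $(f(x)-f_*(x))^2$.
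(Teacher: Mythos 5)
Your proof is correct and follows essentially the same route as the paper: both identify $\sE_y[\phi(z)]=(f(x)-f_*(x))^2$ via the conditional-mean property, factor $\phi(z)=(f(x)-f_*(x))\bigl((f(x)-y)+(f_*(x)-y)\bigr)$, and bound the squared second factor by $2(M^2+B^2)$ using $(u+v)^2\le 2(u^2+v^2)$ together with the boundedness hypotheses. Your remark that $f_*\in\gF$ is never needed is accurate (and in fact cleans up a loose phrase in the paper's own proof, which cites ``$f_*\in\gF$'' where it only uses $L(f_*(x),y)\le B^2$).
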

\begin{proof}[Proof of \Cref{lemma: variation condition for least squares}]
Consider any $\phi \in \gG$ (with the corresponding $f \in \gF$). For any $x$, we have 
\begin{align*}
    \sE_y[\phi(z)] = (f(x) - f_*(x))^2.
\end{align*}
Thus, we have
\begin{align*}
    \phi(z)^2 &= (f(x) - f_*(x))^2 (f(x) + f_*(x) - 2y)^2 \\ 
    &\leq (f(x) - f_*(x))^2 2 [(f(x) - y)^2 + (f_*(x) - y)^2] \\
    &\leq 2 (M^2 + B^2) (f(x) - f_*(x))^2. 
\end{align*}
The first inequality uses Cauchy-Schwartz. The second inequality uses that $f_* \in \gF$ and $L(f(x),y) \leq M^2, \forall f \in \gF$. Thus, for any $x$, we have 
\begin{align*}
    \sE_y[\phi(z)^2] \leq 2 (M^2 + B^2) (f(x) - f_*(x))^2 = 2 (M^2 + B^2) \sE_y[\phi(z)].
\end{align*}
The equation uses that $ \sE_y[\phi(z)] = (f(x) - f_*(x))^2$.
\end{proof}

The following lemma is a simple decomposition of the value gap in the initial state, typically known as the performance difference lemma in the RL literature.
\begin{lemma}[Performance difference lemma]
For any policy $\pi, \widetilde{\pi}$, we have 
\begin{align*}
    V_1^{\pi}(s_1) - V_1^{\widetilde{\pi}}(s_1) = \sum_{h=1}^H \sE_{\pi} \left[ Q^{\widetilde{\pi}}_h(s_h,a_h) - V^{\widetilde{\pi}}_h(s_h) \right],
\end{align*}
where $\sE_{\pi}$ denotes the expectation over the random trajectory $(s_1,a_1, \ldots, s_h, a_h)$ generated by $\pi$ (and the underlying MDP).
\label{lemma: performance difference}
\end{lemma}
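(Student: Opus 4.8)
The plan is to prove this identity by a telescoping argument on the value function $V^{\widetilde{\pi}}$ evaluated along a trajectory generated by $\pi$. First I would write the left-hand value directly from its definition as the expected cumulative reward, $V_1^{\pi}(s_1) = \sE_{\pi}[\sum_{h=1}^H r_h(s_h,a_h)]$. For the comparator term, I would adopt the convention $V_{H+1}^{\widetilde{\pi}} \equiv 0$ and observe that the telescoping sum $\sum_{h=1}^H \big(V_h^{\widetilde{\pi}}(s_h) - V_{h+1}^{\widetilde{\pi}}(s_{h+1})\big)$ collapses pointwise to $V_1^{\widetilde{\pi}}(s_1)$ along any trajectory. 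Since $V_1^{\widetilde{\pi}}(s_1)$ is a deterministic constant (recall $s_1$ is assumed deterministic), taking $\sE_{\pi}$ leaves it unchanged, so $V_1^{\widetilde{\pi}}(s_1) = \sE_{\pi}\big[\sum_{h=1}^H (V_h^{\widetilde{\pi}}(s_h) - V_{h+1}^{\widetilde{\pi}}(s_{h+1}))\big]$.

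Subtracting the two expressions gives
$$V_1^{\pi}(s_1) - V_1^{\widetilde{\pi}}(s_1) = \sE_{\pi}\Big[\sum_{h=1}^H \big(r_h(s_h,a_h) - V_h^{\widetilde{\pi}}(s_h) + V_{h+1}^{\widetilde{\pi}}(s_{h+1})\big)\Big].$$
The heart of the argument is to recognize each per-stage summand as an advantage term. I would condition on $(s_h,a_h)$ and use that under the $\pi$-generated trajectory the next state still satisfies $s_{h+1}\sim P_h(\cdot\,|\,s_h,a_h)$, since the MDP transition kernel does not depend on the policy used to select actions. Combined with the Bellman equation $Q_h^{\widetilde{\pi}}(s,a) = r_h(s,a) + \sE_{s'\sim P_h(\cdot|s,a)}[V_{h+1}^{\widetilde{\pi}}(s')]$, this yields $\sE_{\pi}[r_h(s_h,a_h) + V_{h+1}^{\widetilde{\pi}}(s_{h+1})\mid s_h,a_h] = Q_h^{\widetilde{\pi}}(s_h,a_h)$. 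Applying the tower rule stage by stage, together with linearity of expectation, converts each summand into $\sE_{\pi}[Q_h^{\widetilde{\pi}}(s_h,a_h) - V_h^{\widetilde{\pi}}(s_h)]$, which is exactly the claimed identity.

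The only point requiring genuine care—and the one I would flag as the main obstacle—is the conditioning step: one must verify that the next-state law appearing inside $\sE_{\pi}$ is precisely the MDP transition $P_h(\cdot\,|\,s_h,a_h)$, independent of the sampling policy, so that the Bellman recursion for $Q^{\widetilde{\pi}}$ can be invoked even though the outer expectation is over $\pi$-generated data. This is where the mismatch between the trajectory-generating policy $\pi$ and the evaluated policy $\widetilde{\pi}$ is cleanly absorbed: $\widetilde{\pi}$ enters only through the functions $Q^{\widetilde{\pi}}$ and $V^{\widetilde{\pi}}$, while $\pi$ governs only the state-action visitation distribution. Once this separation is made explicit, the remaining manipulations are routine telescoping and tower-rule steps, and no further assumptions on the relationship between $\pi$ and $\widetilde{\pi}$ are needed.
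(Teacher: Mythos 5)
Your proof is correct and is essentially the paper's argument: the paper's one-line proof (``expand $V_1^{\pi}(s_1) = \sE_{a_1,s_2|s_1,\pi}[r_1(s_1,a_1) + V_2^{\pi}(s_2)]$ and use recursion'') is exactly your telescoping sum written as an unrolled recursion, with the same use of the Bellman equation for $Q^{\widetilde{\pi}}$ and the tower rule at each stage. The conditioning point you flag---that $s_{h+1} \sim P_h(\cdot|s_h,a_h)$ regardless of which policy generated the trajectory---is the same observation that makes the paper's recursion go through, so no gap remains.
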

\begin{proof}[Proof of \Cref{lemma: performance difference}]
We simply expand $V_1^{\pi}(s_1) = \sE_{a_1, s_2 | s_1, \pi} [r_1(s_1, a_1) + V_2^{\pi}(s_2)]$ and use recursion to obtain the lemma.
\end{proof}

The following lemma presents a simple connection from a form of a log partition function to the expectation under the infimum realization of the sampling distribution. 
\begin{lemma}
For any density functions $p$ and $p_0$ and any function $f$, we have
\begin{align*}
    \inf_{p}\sE_{x \sim p(x)} \left[f(x) + \ln \frac{p(x)}{p_0(x)} \right] \geq - \ln \sE_{x \sim p_0} \exp(-f(x)).
\end{align*}
\label{lemma: log partition is lower bound for f + log p/p_0}
\end{lemma}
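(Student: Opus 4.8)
The plan is to recognize this as the Gibbs variational principle (the Donsker--Varadhan representation of relative entropy). The key object is the ``tilted'' density $q(x) \propto p_0(x)\exp(-f(x))$, normalized by the partition function $Z := \sE_{x \sim p_0}\exp(-f(x))$. First I would assume $Z \in (0,\infty)$ so that $q$ is a well-defined probability density; this is harmless in all our applications, where $f$ is bounded on a compact domain so that $Z$ is automatically finite and positive.

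The central algebraic step is to rewrite the objective in terms of a relative entropy against $q$ rather than against $p_0$. Concretely, for any density $p$,
\begin{align*}
    \sE_{x \sim p}\left[f(x) + \ln\frac{p(x)}{p_0(x)}\right]
    &= \sE_{x \sim p}\left[\ln\frac{p(x)}{p_0(x)\exp(-f(x))}\right] \\
    &= \sE_{x \sim p}\left[\ln\frac{p(x)}{Z\, q(x)}\right]
    = \KL(p \| q) - \ln Z.
\end{align*}
Here the first equality folds $f(x) = -\ln\exp(-f(x))$ into the logarithm, the second substitutes $p_0(x)\exp(-f(x)) = Z\, q(x)$, and the third splits off the constant $\ln Z$ and identifies the remaining term as $\KL(p\|q)$.

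I would then conclude by invoking non-negativity of the Kullback--Leibler divergence, $\KL(p\|q) \geq 0$ for every $p$, which yields $\sE_{x\sim p}\!\left[f + \ln(p/p_0)\right] \geq -\ln Z = -\ln \sE_{x\sim p_0}\exp(-f(x))$. Taking the infimum over $p$ on the left-hand side preserves this uniform lower bound and gives the claim. In fact equality holds, attained at $p = q$, but only the stated inequality is needed downstream.

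There is essentially no obstacle here: the only points requiring a word of care are the well-definedness of $q$ and $Z$ (finiteness and positivity of the partition function) and the measure-theoretic justification that $\KL(p\|q)\geq 0$, which is just Jensen's inequality applied to the convex map $t \mapsto t\ln t$ (equivalently, Gibbs' inequality). The lemma is deliberately stated as an inequality precisely so that one need not verify that the minimizer $q$ lies in whatever restricted family the infimum in the applications ranges over.
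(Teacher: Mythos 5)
Your proof is correct and is essentially identical to the paper's: both define the tilted density $q(x) \propto p_0(x)\exp(-f(x))$, rewrite the objective as $\KL[p\|q] - \ln Z$, and conclude by non-negativity of the KL divergence. The only difference is that you also note the regularity caveats ($Z$ finite and positive) and the equality case, which the paper omits.
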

\begin{proof}[Proof of \Cref{lemma: log partition is lower bound for f + log p/p_0}]
Define the density function 
\begin{align*}
    q(x) = \frac{p_0(x) \exp(-f(x))}{Z(f)} \text{ where } Z(f):= \sE_{x \sim p_0(x)} \exp(-f(x)). 
\end{align*}
Then, we have 
\begin{align*}
     \sE_{x \sim p(x)} \left[f(x) + \ln \frac{p(x)}{p_0(x)} \right] &= \sE_{x \sim p(x)}  \ln \frac{p(x)}{q(x)} - \ln Z(f) \\ 
     &=KL[p \| q] - \ln Z(f) \\
     &\geq - \ln Z(f).
\end{align*}
\end{proof}

\end{appendices}
\end{document}